\documentclass{article}
\usepackage{iclr2026_conference,times}
\usepackage{mathtools}
\usepackage{amssymb}
\usepackage{amsfonts}
\usepackage{amsmath}
\usepackage{booktabs} %
\usepackage[ruled]{algorithm2e} %
\usepackage{wrapfig}
\usepackage{subcaption}
\usepackage{verbatim}
\usepackage{diffcoeff}
\usepackage{adjustbox}
\usepackage{choice}
\usepackage{url}
\iclrfinaltrue
\newcommand{\choice}{\mrm{Choice}}

\newcommand{\gum}{\mrm{Gumbel}}

\DeclareMathOperator{\tv}{TV}
\DeclareMathOperator{\KL}{KL}

\DeclareRobustCommand\circled[1]{\tikz[baseline=(char.base)]{
            \node[shape=circle,inner sep=1.5pt,fill={black!15}] (char) {\small\sffamily #1};}}

\newcommand{\gauss}{\mathcal{N}}
\newcommand{\C}{\mathcal{C}}
\newcommand{\cR}{\mathcal{R}}
\newcommand{\softmax}{\text{sm}}

\title{Learning Correlated Reward Models:\\ Statistical Barriers and Opportunities}

\author{Yeshwanth Cherapanamjeri\\MIT \And Constantinos Daskalakis\\MIT \AND Gabriele Farina\\MIT \And Sobhan Mohammadpour\\MIT\AND}

\begin{document}

\maketitle
\vspace{-3em}
\begin{abstract}
    Random Utility Models (RUMs) are a classical framework for modeling user preferences and play a key role in reward modeling for Reinforcement Learning from Human Feedback (RLHF). However, a crucial shortcoming of many of these techniques is the Independence of Irrelevant Alternatives (IIA) assumption, which collapses \emph{all} human preferences to a universal underlying utility function, yielding a coarse approximation of the range of human preferences. On the other hand, statistical and computational guarantees for models avoiding this assumption are scarce. In this paper, we investigate the statistical and computational challenges of learning a \emph{correlated} probit model, a fundamental RUM that avoids the IIA assumption. First, we establish that the classical data collection paradigm of pairwise preference data is \emph{fundamentally insufficient} to learn correlational information, explaining the lack of statistical and computational guarantees in this setting. Next, we demonstrate that \emph{best-of-three} preference data provably overcomes these shortcomings, and devise a statistically and computationally efficient estimator with near-optimal performance. These results highlight the benefits of higher-order preference data in learning correlated utilities, allowing for more fine-grained modeling of human preferences. Finally, we validate these theoretical guarantees on several real-world datasets, demonstrating improved personalization of human preferences.
\end{abstract}

\section{Introduction}
\label{sec:introduction}
Random Utility Models (RUMs) \citep{train2009discrete} are the leading framework for modelling human preferences. Classical applications dating back to the 1960s include mathematical psychology \citep{luce1959individual}, transportation science \citep{ben1973structure}, econometrics \citep{mcfadden1980econometric}, and marketing \citep{rust1993customer}. More recently, they are a core component of the Reinforcement Learning from Human Feedback (RLHF) pipeline \citep{christiano2017deep}, used to align Large Language Models (LLMs) with human preferences. Unfortunately, many classical RUMs adopt the Independence of Irrelevant Alternatives (IIA) assumption. In the context of LLMs, the IIA posits a universal underlying utility function for all users of a system. A growing body of work highlights the limits of this approach \citep{constitutional_ai,open_problems_rlhf,pluralistic_alignment,foundational_challenges_rlhf,social_choice_rlhf} in capturing the full range of human preferences.

Formally, a RUM models utilities over a set of $n$ items as a high-dimensional random vector, $X$.\footnote{For RLHF, the items correspond to (prompt, response) pairs} A classical choice is the \emph{logit} model, where each component $X_i$ is independent and distributed according to a Gumbel distribution $X_i \ts \gum (\mu_i)$. The independence of the Gumbel-distributed utility shocks implies IIA; in particular, for any unobserved item (or response), the model assigns the same average utility irrespective of the user's revealed preferences. 

In this paper, we investigate the statistical challenges of learning correlated utility models, focusing in particular on the correlated probit model where $X \ts \mc{N} (\mu, \Sigma)$. By explicitly modeling correlations, this model  allows past user behavior to inform future responses, providing a more accurate representation of human preferences. However, in contrast to the logit model, little is known about the statistical and computational challenges of learning these models: i) \emph{What data is needed to learn them?} and ii) \emph{How many samples do we need?}

We make key advancements on several fronts: 1) We show that the conventional paradigm of pairwise preference data is fundamentally insufficient to learn these models, 2) On the other hand, three-way-preference data is both necessary and sufficient and finally, 3) Validate the efficacy of higher-order preference data in accurately modelling correlated user preferences. Our results highlight fundamental drawbacks in conventional data collection pipelines and suggest deliberate amendments to address these shortcomings. Prior to our work, even \emph{identifiability} results, disregarding any statistical considerations, did not exist. Hence, we establish the \emph{first} identifiability guarantees and complement them with a near-optimal polynomial-time estimator.

\section{Related work}

A wealth of ideas exists for modeling rewards. However, we focus on choice modeling, where subjective \emph{preferences} between items are expressed. Consequently, we find a treatment of reward models like step-wise reward models \citep{havrilla2024glore} and process reward models \citep{luo2023wizardmath} beyond the scope of this work. We are interested in models that can make \emph{counterfactual} choices from preference data. Outside of machine learning, choice models have been successfully applied to planning problems like public transport scheduling \citep{wei2022transit}, effect of price markups \citep{gallego2014multiproduct}, airline scheduling \citep{wei2020airline}, EV charging station placement \citep{lamontagne2023optimising}, and toll placement \citet{osorio2021efficient}.

\emph{Random Utility Models (RUMs)} are among the most common models of human behavior in mathematical psychology, econometrics, transportation, and marketing \citep[e.g., see][a Nobel Prize lecture on RUMs]{mcfadden2001economic}. RUMs impose structure on the choice model by assuming that the agents are rational, meaning that when confronted with a finite set of \emph{alternatives}, they choose the one that maximizes their (latent) utility. The agent's unobserved utility is commonly modeled as a random variable to acknowledge the heterogeneity of utilities. Formally, an agent is assumed to have a random utility vector $X$, where $X_i$ is the utility of choosing some alternative $i$ from the set of all possible alternatives $\mathcal{C}$, and, when presented with a subset $\cR\subseteq\C$ of alternatives, the agent chooses the alternative $I_\cR$ satisfying
\begin{equation*}
I_\cR \in \arg \max_{i\in\cR} X_i.
\end{equation*}
Given observations of the form $(\cR^t,I_{\cR^t}^t)$, $t=1,\ldots,T$, a common challenge is to \emph{learn} the distribution of the latent $X$. Obtaining an estimate $\tilde{X}$ of $X$ not only gives us an estimate $\tilde{I}_\cR$ of $I_\cR$ for any subset of alternatives $\cR$, but it also allows making predictions about the welfare of $\cR$,  $G(\cR):=\max_{i\in\cR} X_i$. It is important to note that learning the distribution of $X$ from preference data is inherently underconstrained as adding the same potentially stochastic value to all components of $X$ yields no change in the distribution of $I$.

The most popular RUM is the \emph{logit}~\citep[Chapter 3]{train2009discrete} or also sometimes referred to as  Luce -Shephard-McFadden model, or, in the binary case (i.e. $|\mathcal{C}|=2$), the Bradley-Terry model. The logit is the only RUM that satisfies \emph{independence of irrelevant alternatives (IIA)}~\citep{luce1959individual}, that is, the ratio of the probabilities of choosing two alternatives is independent of the presence of other alternatives. The logit enjoys many useful properties, for instance, it can be learned using pairwise comparisons, it can learn any distribution over the set of full alternatives,\footnote{by setting the utilities to the log probabilities} and is easy to calculate via a softmax. Unfortunately, if the utilities are correlated, the logit cannot be used to make correct counterfactual predictions. A classical example showcasing this impossibility is the red bus/blue bus problem introduced by \cite{debreu1960individual}. If half of the populations prefer red buses over blue buses and half of the population prefers red buses made by company A over those made by company B, then IIA implies that 2/3 of the population prefers red buses made by A or B over blue buses, a contradiction. In practice, as shown by \citet{benson2016relevance}, many datasets do not satisfy the IIA assumption. We continue our discussion on choice modeling in Appendix~\ref{sec:prior_work}

\section{Problem Setup, and the Need for Best-Of-Three Observations}

\label{sec:prob_setup}

Here, we avoid the restrictive IIA assumption and assume, instead, that the utilities follow the classical probit model with $X \ts \mc{N} (\mu, \Sigma)$. We aim to recover $(\mu, \Sigma)$ from only the \emph{choices} $\arg\max_\lbrb{i\in\cR} X_i$ made for various subsets $\cR \subset \lsrs{n}$. For binary choice, $\cR$ is pairs of elements $(i, j) \in \lsrs{n}^2$. Similarly, for the three-way choice setting, $\cR$ is triplets $(i, j, k)\in \lsrs{n}^3$. Note that $(\mu, \Sigma)$ are not fully identifiable in this setting: for any $X \ts \mc{N} (\mu, \Sigma)$, $X'$ defined as follows, induces the same ranking probabilities for \emph{any} $t > 0$, $t\lprp{X - \frac{1}{n}\sum_i X_i}.$
Hence, we adopt the following necessary normalization $\mu, \Sigma$. 
\begin{assumption}
    \label{def:universal_symmetries}
    For $X\thicksim\gauss(\mu,\Sigma)$, we assume, without changing the choice distributions, that
        \begin{equation*}
            \inp{\mu}{\Ind}=0, \quad \Sigma\Ind = 0\text{, and} \quad \Tr(\Sigma)=n.
        \end{equation*}
    Consequently, $X$ lives on the hyperplane $\Ind^\top X=0$. Furthermore, we assume $\Sigma$ has rank $n-1$.
\end{assumption}
Unfortunately, the classical paradigm of pairwise comparisons ($|\cR|=2$) is insufficient for recovering $(\mu, \Sigma)$, even accounting for \cref{def:universal_symmetries}. Its proof is deferred to \cref{ssec:proof_two_not_enough}.

\begin{restatable}{theorem}{twonotenough}
    \label{thm:two_not_enough}
    For any $n \geq 3$ and $\mu, \Sigma$ satisfying \cref{def:universal_symmetries}, there exists an infinite set $\mathcal{S}$:
    \begin{equation*}
        \forall i, j \in [n], \mu', \Sigma' \in \mathcal{S}: \Pr_{X \ts \mc{N} (\mu, \Sigma)} \lbrb{X_i \geq X_j} = \Pr_{X \ts \mc{N} (\mu', \Sigma')} \lbrb{X_i \geq X_j}.
    \end{equation*}
\end{restatable}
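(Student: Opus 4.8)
The plan is to notice that pairwise comparison probabilities retain only $\binom n2$ scalars about $(\mu,\Sigma)$, and then to exhibit, for any normalized $(\mu,\Sigma)$, a positive‑dimensional family of normalized $(\mu',\Sigma')$ matching those scalars. For $X\sim\gauss(\mu,\Sigma)$ the difference $X_i-X_j$ is Gaussian with mean $\mu_i-\mu_j$ and variance $d_{ij}:=\Sigma_{ii}-2\Sigma_{ij}+\Sigma_{jj}$, and $d_{ij}>0$ because $\ker\Sigma=\mathrm{span}(\Ind)$ (rank $n-1$ plus \cref{def:universal_symmetries}) while $e_i-e_j\notin\mathrm{span}(\Ind)$. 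Hence $\Pr[X_i\ge X_j]=\Phi(r_{ij})$ with $r_{ij}:=(\mu_i-\mu_j)/\sqrt{d_{ij}}$, and $\Pr[X_j\ge X_i]=1-\Pr[X_i\ge X_j]$; so it suffices to find infinitely many normalized $(\mu',\Sigma')$ with $r'_{ij}=r_{ij}$ for all $i<j$.

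The next step is a change of coordinates that makes the normalization automatic. Fix $P\in\mathbb R^{n\times(n-1)}$ with orthonormal columns spanning $\Ind^\perp$; then normalized pairs are exactly $\mu=Pb,\ \Sigma=PAP^\top$ with $A\succ0,\ \Tr A=n$ (write the original as $(b_0,A_0)$), and, setting $u_{ij}:=P_i-P_j$ (differences of rows of $P$), one has $\mu_i-\mu_j=u_{ij}^\top b$ and $d_{ij}=u_{ij}^\top A u_{ij}=\langle A,u_{ij}u_{ij}^\top\rangle$. The key algebraic input is a basis lemma: since the rows of $P$ sum to $0$, the $n-1$ vectors $u_{in}$ form a basis of $\mathbb R^{n-1}$, and from $u_{ij}u_{ij}^\top=u_{in}u_{in}^\top+u_{jn}u_{jn}^\top-(u_{in}u_{jn}^\top+u_{jn}u_{in}^\top)$ one deduces that the $\binom n2$ matrices $\{u_{ij}u_{ij}^\top:i<j\}$ form a \emph{basis} of the space of symmetric $(n-1)\times(n-1)$ matrices.

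Consider first the main case, in which all $\mu_i$ are distinct, so every $u_{ij}^\top b_0=\mu_i-\mu_j\neq0$. For $b'$ near $b_0$ let $A'(b')$ be the unique symmetric matrix with $\langle A'(b'),u_{ij}u_{ij}^\top\rangle=\dfrac{(u_{ij}^\top b')^2}{(\mu_i-\mu_j)^2}\,d_{ij}$ for all $i<j$ (well defined by the basis lemma, with $A'(b_0)=A_0$). Any such pair has $r'_{ij}=\mathrm{sign}(u_{ij}^\top b')\cdot|\mu_i-\mu_j|/\sqrt{d_{ij}}$, which equals $r_{ij}$ once $\mathrm{sign}(u_{ij}^\top b')=\mathrm{sign}(\mu_i-\mu_j)$ --- automatic for $b'$ near $b_0$. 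The sole remaining constraint is $\Tr A'(b')=n$; expanding $I$ in the basis gives $\Tr A'(b')=\phi(b')$ for a quadratic form $\phi$ on $\mathbb R^{n-1}$ with $\phi(b_0)=\Tr A_0=n$. Because $\phi(b_0)=n\neq0$, its gradient $\nabla\phi(b_0)$ is nonzero, so $\{\phi=n\}$ is a smooth manifold of dimension $(n-1)-1=n-2\ge1$ near $b_0$; intersecting it with the open region where $A'(b')\succ0$ and all signs match and pushing forward by $b'\mapsto(Pb',PA'(b')P^\top)$ yields the desired infinite set $\mathcal S$, each member of which satisfies \cref{def:universal_symmetries} (immediate from $P^\top\Ind=0$, $P^\top P=I$) and has the same pairwise probabilities as $(\mu,\Sigma)$.

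The remaining case --- some $\mu_i$ equal, in particular $\mu=0$ --- is handled analogously: a pair with $\mu_i=\mu_j$ imposes only the linear constraint $u_{ij}^\top b'=0$ and no constraint on $\Sigma'$, so one restricts $b'$ to the subspace $\{u_{ij}^\top b'=0:\mu_i=\mu_j\}$ (whose dimension is one less than the number of distinct values of $\mu$) and exploits the extra freedom this leaves in $A'$; when $\mu=0$, \emph{every} normalized $\Sigma'$ works, and these form an infinite family exactly because $n\ge3$. The main obstacle, and the reason the argument is organized this way, is getting the dimension count right: the naive attempt to make $(\mu',\Sigma')\mapsto\big((r'_{ij})_{i<j},\ \Tr\Sigma'\big)$ a submersion fails because the trace component is untouched by perturbing $\mu'$, so that map is never onto. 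The fix is to solve the $\binom n2$ ratio‑matching equations exactly --- by the basis lemma this forces $\Sigma'$ from $\mu'$, leaving $b'$ as the only free parameter --- which collapses the trace equation to the single quadratic $\phi(b')=n$, whose level set through $b_0$ is automatically $(n-2)$‑dimensional for $n\ge3$. The usual worry that the perturbed $\Sigma'$ might leave the PSD cone or drop rank is dispatched for free in the $(b,A)$ coordinates, where $A'(b')\succ0$ is an open condition holding near $A_0$.
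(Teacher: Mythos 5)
Your proposal is correct in substance but follows a genuinely different route from the paper. The paper first passes to an alternative normalization ($\mu_1=0$, $\Sigma_{11}=0$, $\Tr(\Sigma)=n-1$), builds explicit perturbation families in three separate cases depending on the tie structure of $\mu$ (adding $\delta(e_ie_j^\top+e_je_i^\top)$ when $\mu_i=\mu_j$, rescaling a row/column when some $\mu_i=0$, and a hand-crafted joint perturbation of $(\mu,\Sigma)$ when all means are distinct), and then transfers the conclusion back to \cref{def:universal_symmetries} via an explicit ranking-preserving bijection between the two normalizations (\cref{clm:assumptions_mapping}). You instead work directly in the normalization of \cref{def:universal_symmetries} via coordinates $\mu=Pb$, $\Sigma=PAP^\top$ on $\Ind^\perp$, prove that the $\binom n2$ matrices $u_{ij}u_{ij}^\top$ form a basis of the symmetric matrices on $\mathbb{R}^{n-1}$ (correct, and in fact $I=\frac1n\sum_{i<j}u_{ij}u_{ij}^\top$), solve the ratio-matching equations exactly so that $A'$ is determined by $b'$, and reduce everything to the level set of one quadratic form, where Euler's identity gives $\nabla\phi(b_0)\neq 0$ and hence an $(n-2)$-dimensional family. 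This buys a uniform, coordinate-free treatment of the generic case and dispenses with the normalization-transfer step; the paper's construction is more explicit but case-heavier. The one place you are thinner than the paper is the partially-tied case (some but not all $\mu_i$ equal): you restrict $b'$ to the tie-compatible subspace and leave the tied-pair components of $A'$ free, but you do not verify that the single trace equation still leaves an infinite solution set through $(b_0,A_0)$. This is easily completed with your own machinery: since $I=\frac1n\sum_{i<j}u_{ij}u_{ij}^\top$, each free tied-pair component enters $\Tr A'$ linearly with coefficient $1/n\neq 0$, so the trace equation is a submersion in those variables and the solution manifold has dimension $(k-1)+\#\{\text{tied pairs}\}-1\geq 1$ whenever $n\geq 3$ (with the $\mu=0$ subcase trivial, as you note). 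With that sentence added, the argument is complete.
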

\cref{thm:two_not_enough} indicates that pairwise comparison data is fundamentally unable to account for correlations in the probit model. While the full proof of \cref{thm:two_not_enough} is somewhat involved, the particular setting with $\mu = 0$ is instructive. Here, observe that for \emph{any} pair of distinct alternatives, $i, j$, the probability that $X_i \geq X_j$ is $1/2$ \emph{irrespective} of the value of $\Sigma$. Hence, correlational information is fundamentally impossible to learn given only access to pairwise comparisons.

\section{Identifiability}
\label{sec:ident}
\cref{thm:two_not_enough} prompts the natural question: \emph{can higher order preference data help?}
We answer the question in the affirmative and show that best-of-three observations (\cref{thm:mu_sigma_ident_multiple_items}) are sufficient \emph{and} from \cref{thm:two_not_enough}, necessary to recover $\mu, \Sigma$. Observing that:
\begin{equation*}
        \Pr \lbrb{X_i \geq X_j \geq X_k} = \Pr \lbrb{X_j \geq X_k} - \Pr \lbrb{X_j \geq X_i,  X_k}, \tag{RANK-PROB} \label{eq:rank_probs}
\end{equation*}
we may assume access to three-way \emph{ranking} probabilities.

\subsection{The Case of Three Alternatives}
\label{ssec:ident_three_comp}

As a stepping stone towards identifying probit models with an arbitrary number $n$ of alternatives, we focus on the case $n=3$ and establish the following theorem.
\begin{theorem}
    \label{thm:mu_sigma_ident_three_items}
    $(\mu, \Sigma)$ are uniquely identifiable from the three-way observation probabilities.
\end{theorem}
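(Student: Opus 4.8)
The plan is to transform the $3$-alternative problem into one about a single nondegenerate \emph{bivariate} Gaussian and then peel off its parameters one at a time, each time inverting a strictly monotone probability functional. \emph{Reduction to the plane.} Let $A\colon\mathbb{R}^3\to\mathbb{R}^2$ send $X$ to the vector of adjacent differences $Y:=(X_1-X_2,\,X_2-X_3)$, so $Y\sim\gauss(\nu,\Gamma)$ with $\nu=A\mu$ and $\Gamma=A\Sigma A^\top$. Under \cref{def:universal_symmetries} the correspondence $(\mu,\Sigma)\mapsto(\nu,\Gamma)$ is injective: $\nu$ together with $\inp{\mu}{\Ind}=0$ determines $\mu$, and since $\Sigma\Ind=0$ while $A$ restricts to an isomorphism of $\Ind^\perp$ onto $\mathbb{R}^2$, one recovers $\Sigma$ from $\Gamma$; the rank-$(n-1)$ assumption forces $\Gamma\succ0$. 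The three ``tie'' lines $\{X_i=X_j\}$, viewed on the hyperplane $\Ind^\top X=0$, map to the three distinct lines $\{Y_1=0\}$, $\{Y_2=0\}$, $\{Y_1+Y_2=0\}$ through the origin of $\mathbb{R}^2$, which partition the plane into six sectors, one per ranking. Since each sector is a cone, $\Pr[Y\in S]$ is unchanged under $(\nu,\Gamma)\mapsto(\lambda\nu,\lambda^2\Gamma)$ for $\lambda>0$, so it suffices to recover $(\nu,\Gamma)$ \emph{up to this scaling} from the six sector probabilities, after which $\Tr(\Sigma)=n$ fixes the scale and returns $(\mu,\Sigma)$.

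\emph{Parameter recovery.} Write $\alpha=\nu_1/\sqrt{\Gamma_{11}}$, $\beta=\nu_2/\sqrt{\Gamma_{22}}$, $r=\Gamma_{12}/\sqrt{\Gamma_{11}\Gamma_{22}}$, and $\tau=\sqrt{\Gamma_{22}/\Gamma_{11}}$; these four scale-invariant numbers determine $(\nu,\Gamma)$ up to scale. Summing sectors yields the univariate marginals $\Pr[X_1\ge X_2]=\Phi(\alpha)$, $\Pr[X_2\ge X_3]=\Phi(\beta)$, and $\Pr[X_1\ge X_3]=\Phi(\gamma)$, where $\gamma$ is the standardized mean of $Y_1+Y_2$; inverting $\Phi$ recovers $\alpha,\beta,\gamma$. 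For a bivariate normal with standardized means $a,b$ and correlation $\rho$ one has $\Pr[U>0,V>0]=\Phi_2(a,b;\rho)$, which is \emph{strictly increasing} in $\rho$ (its $\rho$-derivative is the strictly positive bivariate density). Applying this to the single sector $\{Y_1>0,\,Y_2>0\}=\{X_1\succ X_2\succ X_3\}$ recovers $r$ from $\alpha,\beta$; applying it to the two-sector event $\{Y_1>0,\,Y_1+Y_2>0\}=\{X_1=\max\}$ recovers $\rho':=\mathrm{Corr}(Y_1,\,Y_1+Y_2)$ from $\alpha,\gamma$. A short computation gives $\rho'=(1+r\tau)/\sqrt{1+2r\tau+\tau^2}$, and differentiating shows that $\tau\mapsto\rho'$ is strictly decreasing on $(0,\infty)$ for every fixed $r\in(-1,1)$, so $\tau$ is determined. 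This yields $(\alpha,\beta,r,\tau)$, hence $(\nu,\Gamma)$ up to scale, hence $(\mu,\Sigma)$.

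\emph{Main obstacle.} The crux is the monotone inversions of the previous step: they are precisely what allows the genuinely three-way probabilities $\Pr[X_1\succ X_2\succ X_3]$ and $\Pr[X_1=\max]$ to supply the correlational information that pairwise data provably cannot (\cref{thm:two_not_enough}). One must establish cleanly the strict monotonicity of $\rho\mapsto\Phi_2(a,b;\rho)$ and of $\tau\mapsto(1+r\tau)/\sqrt{1+2r\tau+\tau^2}$, and check that $\Gamma\succ0$ keeps all relevant variances — in particular $\mathrm{Var}(Y_1+Y_2)=\Gamma_{11}+2\Gamma_{12}+\Gamma_{22}$ — strictly positive so that the standardizations are well defined; degenerate configurations (e.g.\ $\alpha=0$, or correlations tending to $\pm1$) should be dispatched along the way.
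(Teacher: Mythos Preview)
Your argument is correct, and it is a genuinely different (and somewhat more classical) route than the paper's. Both proofs begin by passing to a nondegenerate bivariate Gaussian on the plane and recovering the three standardized means $\alpha,\beta,\gamma$ from pairwise probabilities via $\Phi^{-1}$; from there the two arguments diverge. The paper first \emph{whitens} the 2D Gaussian, turning the unknown covariance into an unknown linear map of three fixed halfspace normals $c_1,c_2,c_3$; it then proves a bespoke monotonicity lemma (\cref{lem:rec_ang_conv}) for a cone probability as a function of the angle between two transformed normals, recovers all pairwise angles $\alpha_{ij}=\langle\tilde c_i,\tilde c_j\rangle$, and finally exploits the linear relation $c_1+c_2=c_3$ to solve a $2\times 2$ linear system for the remaining scale parameters. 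You instead stay in the correlation parameterization $(\alpha,\beta,r,\tau)$ and invoke Plackett's identity twice --- once on $\{X_1\succ X_2\succ X_3\}$ to get $r$, once on $\{X_1=\max\}$ to get $\rho'=\mathrm{Corr}(Y_1,Y_1{+}Y_2)$ --- and then invert the explicit map $\tau\mapsto(1+r\tau)/\sqrt{1+2r\tau+\tau^2}$, whose derivative works out to $-\tau(1-r^2)(1+2r\tau+\tau^2)^{-3/2}<0$. What you gain is economy: Plackett's $\partial_\rho\Phi_2=\phi_2>0$ is off-the-shelf, and the $\tau$-inversion is a one-line calculus check, so you avoid the paper's isotropization step and its Lemma~4.3. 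What the paper's parameterization buys is a cleaner bridge to the finite-sample analysis in \cref{sec:est_three_items}, where the angle variables and the linear system for $(s,t)$ are reused with explicit conditioning bounds; your $\tau$-inversion would need a separate quantitative stability argument there. For pure identifiability, though, your proof is complete once you record the derivative computation for $\rho'(\tau)$ and note that $\Gamma\succ0$ (hence $r\in(-1,1)$ and all three variances positive) handles the degenerate cases you flag.
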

This result will serve as the basis for the general case in \cref{ssec:ident_multiple_comp}.
A probit model satisfying the constraints in \cref{def:universal_symmetries} defines a \emph{bivariate} normal distribution that lies in the plane defined by $\bm 1^\top X = 0$. We start by projecting the utilities onto a lower-dimensional space:
\begin{equation}\label{eq:projection}
  \begin{bmatrix}
      V_1 \\ V_2
  \end{bmatrix} \coloneqq \underbrace{\begin{bmatrix}
      1/\sqrt{6} & 1/\sqrt{6} & -2/\sqrt{6}\\
      1/\sqrt{2} & -1/\sqrt{2} & 0
  \end{bmatrix}}_{=:\ P} \begin{bmatrix}
      X_1 \\ X_2 \\ X_3
  \end{bmatrix} \quad \leftrightarrow \quad
\begin{bmatrix}
      X_1 \\ X_2 \\ X_3
  \end{bmatrix} = \underbrace{\begin{bmatrix}
    1 / {\sqrt 6} &  1 / {\sqrt 2} \\
    1 / {\sqrt 6} & - 1 / {\sqrt 2} \\
    -2/{\sqrt 6} & 0\\
  \end{bmatrix}}_{=:\ P^\top}\begin{bmatrix}V_1 \\ V_2\end{bmatrix}.
\end{equation}

Under the transformation $P$, the probit model $X \sim \mc N (\mu, \Sigma)$ is mapped to some bivariate normal distribution $V \sim \mc N (\dot{\mu}, \dot{\Sigma})$. 
Upon identification of the parameters $\dot{\mu}$ and $\dot{\Sigma}$ in two dimensions, we  recover the original parameters $\mu$ and $\Sigma$ through the transformation:
\[
    \mu = P^\top \dot{\mu}, \qquad \text{and}\quad \Sigma = P^\top \dot{\Sigma} P.
\]
Defining,
\[
    c_1 := \begin{bmatrix}
        0 \\ 1
    \end{bmatrix},\qquad
    c_2 := \begin{bmatrix}
        \sqrt3/2 \\ -1/2
    \end{bmatrix},\qquad
    c_3 := \begin{bmatrix}
        \sqrt3/2 \\ 1/2
    \end{bmatrix},
\]
the probability of the events $\{X_i \geq X_j\}$, are satisfy the following:
\[
    \Pr\{X_1 \ge X_2\} = \Pr\{c_1^\top V \ge 0\}, \quad
    \Pr\{X_2 \ge X_3\} = \Pr\{c_2^\top V \ge 0\}, \quad
    \Pr\{X_1 \ge X_3\} = \Pr\{c_3^\top V \ge 0\},
\]
with each three-way permutation corresponding to the intersections of the corresponding halfspaces. The projection and the partitioning of the 2-dimensional space are illustrated in \cref{fig:cones}.
\begin{figure}[t]
    \centering
    \includegraphics[width=.7\linewidth]{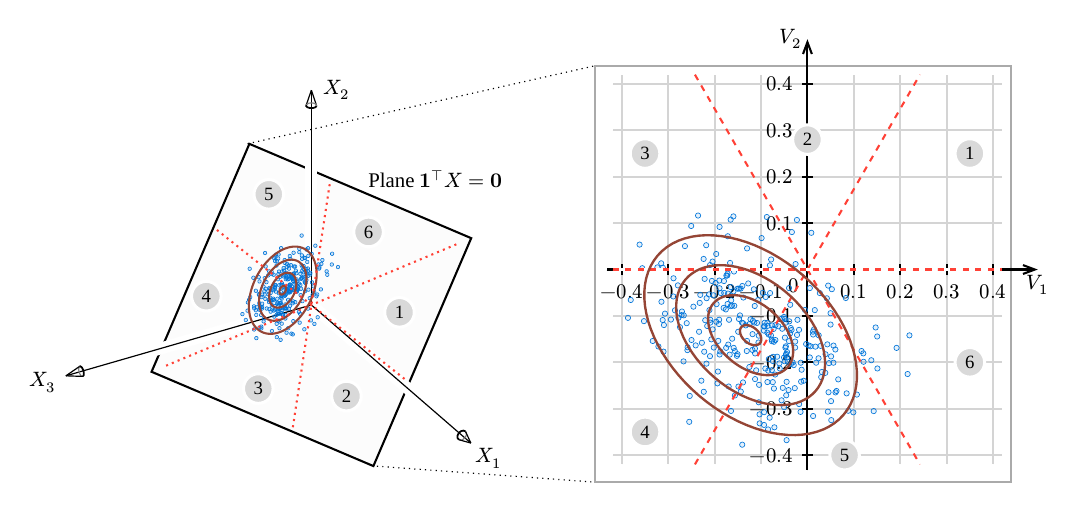}
    \raisebox{2.4cm}{\scalebox{.8}{\renewcommand{\arraystretch}{1.4}\begin{tabular}{ll}
         & \bf Event \\
        \toprule
        \circled{1} & $X_1 \ge X_2 \ge X_3$ \\
        \circled{2} & $X_1 \ge X_3 \ge X_2$ \\
        \circled{3} & $X_3 \ge X_1 \ge X_2$ \\
        \circled{4} & $X_3 \ge X_2 \ge X_1$ \\
        \circled{5} & $X_2 \ge X_3 \ge X_1$ \\
        \circled{6} & $X_2 \ge X_1 \ge X_3$ \\
    \end{tabular}}}
    \caption{The probabilities $\Pr \lbrb{X_i \geq X_j \geq X_k}$, for permutations, $(i,j,k)$, of $\{1,2,3\}$, correspond to the probability mass in each of the six slices of the plane denoted \circled{1} through \circled{6}.}
    \label{fig:cones}
\end{figure}

We will further transform $V$, rendering the distribution \emph{isotropic}. Define $\dot{\Sigma}^{-1/2}$ as \emph{a} solution of
\begin{equation*}
    \dot{\Sigma}^{-1/2} \dot{\Sigma} (\dot{\Sigma}^{-1/2})^\top = I.
\end{equation*}
Note that $\dot{\Sigma}^{-1/2}$ is unique up to an orthonormal transformation on the left. We will fix a convenient choice subsequently. For now, observe that $\dot{\Sigma}^{-1/2}$ and $\dot{\Sigma}^{1/2} \coloneqq ((\dot{\Sigma}^{-1/2})^{-1})^\top$ satisfy:
\begin{equation*}
    \forall x_1, x_2 \in \R^2: \inp{x_1}{x_2} = \inp{\dot{\Sigma}^{1/2} x_1}{\dot{\Sigma}^{-1/2} x_2}.
\end{equation*}  
Setting $x_1 = c_i$ and $x_2 = V$, we now observe:
\begin{equation*}
    \inp{\dot{\Sigma}^{1/2} c_i}{\dot{\Sigma}^{-1/2} V} \text{ with } \dot{\Sigma}^{-1/2} V \thicksim \mc{N} (\wt{\mu}, I) \text{ where } \wt{\mu} \coloneqq \dot{\Sigma}^{-1/2} \dot{\mu}.
\end{equation*}
Lastly, note that:
\begin{equation*}
    \forall i \in [3]: \inp{c_i}{V} \geq 0 \iff \inp{\wt{c}_i}{\dot{\Sigma}^{-1/2} V} \geq 0 \text{ where } \wt{c}_i \coloneqq \frac{\dot{\Sigma}^{1/2} c_i}{\norm{\dot{\Sigma}^{1/2} c_i}}. 
\end{equation*}
Note that $\dot{\Sigma}^{-1/2} V$ is isotropic. Defining,
\begin{equation*}
    \alpha_i \coloneqq \wt{c}_i^\top \wt{\mu} \text{ and } \alpha_{ij} \coloneqq \wt{c}_i^\top \wt{c}_j,
\end{equation*}
Our first technical result identifies $\alpha_i$ from observational data.
\begin{lemma}
    \label{lem:rec_ct_proj}
    The quantities $\alpha_i$ are identifiable from three-way ranking probabilities.
\end{lemma}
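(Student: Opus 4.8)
The plan is to reduce the claim to a one-dimensional Gaussian tail computation in the isotropic coordinates already set up above. First I would observe that the three-way ranking probabilities determine the three pairwise probabilities $\Pr\{X_1 \ge X_2\}$, $\Pr\{X_2 \ge X_3\}$, $\Pr\{X_1 \ge X_3\}$ by marginalization: each pairwise event is a disjoint union of three of the six ranking events depicted in \cref{fig:cones} (for instance $\{X_1 \ge X_2\}$ is the union of the slices \circled{1}, \circled{2}, and \circled{3}), so its probability is the corresponding sum of slice masses; cf.\ \eqref{eq:rank_probs}. Hence it suffices to recover each $\alpha_i$ from these three pairwise probabilities.

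Next I would invoke the chain of equivalences built above: $\{c_i^\top V \ge 0\} = \{\wt{c}_i^\top Z \ge 0\}$, where $Z \coloneqq \dot{\Sigma}^{-1/2} V \sim \mc N(\wt{\mu}, I)$ and $\wt{c}_i$ is a \emph{unit} vector. Because $Z$ is isotropic and $\norm{\wt{c}_i} = 1$, the scalar $\wt{c}_i^\top Z$ is univariate Gaussian with mean $\wt{c}_i^\top \wt{\mu} = \alpha_i$ and variance $1$, so, writing $\Phi$ for the standard normal CDF,
\[
    \Pr\{c_i^\top V \ge 0\} \;=\; \Phi(\alpha_i), \qquad i \in [3].
\]
By the first step the left-hand side is a known function of the three-way ranking probabilities, and since $\Phi$ is a strictly increasing bijection $\R \to (0,1)$, we conclude that $\alpha_i = \Phi^{-1}\bigl(\Pr\{c_i^\top V \ge 0\}\bigr)$ is identifiable.

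Two points deserve care, though neither is a serious obstacle. First, $\alpha_i$ must not depend on the non-unique choice of $\dot{\Sigma}^{-1/2}$: replacing it by $Q\dot{\Sigma}^{-1/2}$ for an orthogonal $Q$ sends $\wt{\mu} \mapsto Q\wt{\mu}$ and $\wt{c}_i \mapsto Q\wt{c}_i$, leaving the inner product $\wt{c}_i^\top \wt{\mu}$ unchanged, so $\alpha_i$ is well-defined. Second, $\Phi^{-1}$ is only applicable on $(0,1)$; this is guaranteed because $\dot{\Sigma}$ is positive definite (the nondegeneracy in \cref{def:universal_symmetries} is preserved by the projection $P$), so $c_i^\top \dot{\Sigma} c_i > 0$ and hence $\Pr\{c_i^\top V \ge 0\} \in (0,1)$. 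I therefore expect this lemma to be an essentially bookkeeping argument; the substantive work — disentangling the angular quantities $\alpha_{ij}$ together with $\wt{\mu}$ from the full six slice masses — is what the lemmas building on this one will have to address.
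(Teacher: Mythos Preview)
Your proposal is correct and follows essentially the same route as the paper: compute $\Pr\{c_i^\top V \ge 0\} = \Phi(\alpha_i)$ in the isotropic coordinates and invert via the strict monotonicity of $\Phi$. You are simply more explicit than the paper about marginalizing the six slice masses to obtain the pairwise probabilities and about the well-definedness checks, but the argument is the same.
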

\begin{proof}
    Note that
    \begin{equation*}
        \P \lbrb{c_i^\top V \geq 0} = 
        \P \lbrb{\inp{\Sigma^{-1/2} V}{\wt{c}_i} \geq 0} = 
        \P_{g \thicksim \mc{N} (0, 1)} \lbrb{g + \inp{\wt{c}_i}{\wt{\mu}} \geq 0} = \Phi (\inp{\wt{c}_i}{\wt{\mu}}).
    \end{equation*}
    As $\Phi(\cdot)$ is a strictly increasing function, $\inp{\wt{c}_i}{\wt{\mu}}$ is recoverable from the observables.
\end{proof}

    \NewDocumentCommand\deins{}{\alpha_i}
    \NewDocumentCommand\dzwei{}{\alpha_j}
In the next lemma, we use the values $\alpha_i$ from \cref{lem:rec_ct_proj} to establish a monotonic relationship between $\inp{\wt{c}_i}{\wt{c}_j}$ and the probability of the events in \cref{fig:cones}. However, the precise choice of events depends on the $\alpha_i$. Note that for any distinct $i, j \in [3]$, there exist signings $s_i, s_j \in \{\pm 1\}$ such that $s_i \inp{\wt{c}_i}{\wt{\mu}}, s_j \inp{\wt{c}_j}{\wt{\mu}} \geq 0$ and the events $\lbrb{s_i \inp{\wt{c}_i}{\dot{\Sigma}^{-1/2} V}, s_j \inp{\wt{c}_j}{\dot{\Sigma}^{-1/2} V} \leq 0}$ correspond to combinations of one or more events in \cref{fig:cones} and is hence, observable. By instantiating the subsequent lemma with the signed vectors $s_i \wt{c}_i, s_j \wt{c}_j$, we recover the $\alpha_{ij}$ by relating it to the probability of the event $\lbrb{s_i \inp{\wt{c}_i}{\dot{\Sigma}^{-1/2} V}, s_j \inp{\wt{c}_j}{\dot{\Sigma}^{-1/2} V} \leq 0}$. Its proof is deferred to \cref{ssec:proof_recangconv}.
\begin{restatable}{lemma}{recangconv}
    \label{lem:rec_ang_conv}
    Let $v_1, v_2 \in \R^2$ be two independent unit vectors and $\xi \in \R^2$ be such that $d_1 \coloneqq \inp{v_1}{\xi}, d_2 \coloneqq \inp{v_2}{\xi} \geq 0$. Then, 
    \begin{equation*}
        \alpha_{12} \coloneqq \inp{v_1}{v_2}
    \end{equation*}
    is identifiable from $d_1, d_2,$ and 
    \begin{equation*}
        \gamma_{12} \coloneqq \P \lbrb{\inp{X}{v_1}, \inp{X}{v_2} \leq 0} \text{ for } X \ts \mc{N} (\xi, I).
    \end{equation*}
\end{restatable}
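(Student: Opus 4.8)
The plan is to recognize $\gamma_{12}$ as a reflected orthant probability of a bivariate Gaussian whose correlation coefficient is exactly $\alpha_{12}$, and then to invoke the classical fact that such an orthant probability is strictly monotone in the correlation. Concretely: since $X \ts \mc{N}(\xi, I)$ and $v_1, v_2$ are unit vectors, $(\inp{X}{v_1}, \inp{X}{v_2})$ is jointly Gaussian with mean $(d_1, d_2)$, unit marginal variances, and covariance $v_1^\top v_2 = \alpha_{12}$. Writing $\inp{X}{v_i} = d_i + Z_i$ with $(Z_1, Z_2)$ centered bivariate Gaussian of correlation $\alpha_{12}$, and using that $(Z_1, Z_2)$ and $(-Z_1, -Z_2)$ have the same law,
\begin{equation*}
    \gamma_{12} \;=\; \P\lbrb{Z_1 \le -d_1,\ Z_2 \le -d_2} \;=\; \P\lbrb{Z_1 \ge d_1,\ Z_2 \ge d_2}.
\end{equation*}
For $\rho \in (-1,1)$, let $F(\rho)$ denote the upper-orthant probability at the fixed (known) point $(d_1, d_2)$ of the standard bivariate normal with correlation $\rho$, so that $\gamma_{12} = F(\alpha_{12})$. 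Since $v_1, v_2$ are (linearly) independent, $\alpha_{12} \in (-1,1)$, so it suffices to prove that $F$ is injective on $(-1,1)$; then $\alpha_{12}$ is recovered as the unique preimage of $\gamma_{12}$ under $F$.

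\textbf{Step 2: strict monotonicity of $F$.} Let $f_\rho$ be the density of the standard bivariate normal with correlation $\rho \in (-1,1)$. The key identity is $\partial_\rho f_\rho = \partial_x\partial_y f_\rho$, which can be checked directly from the Gaussian formula or, more transparently, in the Fourier domain: the characteristic function $\exp\!\big(-\tfrac{1}{2}(s^2 + 2\rho s t + t^2)\big)$ has $\rho$-derivative equal to $-st$ times itself, and $-st$ times the characteristic function is precisely the Fourier transform of $\partial_x\partial_y f_\rho$. Differentiating under the integral sign (legitimate on compact subintervals of $(-1,1)$ by smoothness in $\rho$ and Gaussian tail bounds), then integrating the mixed partial over the quadrant with the boundary terms vanishing by rapid decay of $f_\rho$ and its first derivatives, gives
\begin{equation*}
    F'(\rho) \;=\; \int_{d_1}^{\infty}\!\!\int_{d_2}^{\infty} \partial_\rho f_\rho(x,y)\, dy\, dx \;=\; \int_{d_1}^{\infty}\!\!\int_{d_2}^{\infty} \partial_x\partial_y f_\rho(x,y)\, dy\, dx \;=\; f_\rho(d_1, d_2) \;>\; 0
\end{equation*}
for every $\rho \in (-1,1)$. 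Hence $F$ is strictly increasing, in particular injective on $(-1,1)$, which completes the argument.

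\textbf{Expected main obstacle.} The only substantive point is Step 2: establishing the identity $\partial_\rho f_\rho = \partial_x\partial_y f_\rho$ (this is Plackett's, or Price's, theorem) and carefully justifying both the differentiation under the integral sign and the vanishing of the boundary terms that collapses the double integral of the mixed partial to a single point evaluation. Note that the hypothesis $d_1, d_2 \ge 0$ plays no role in this identifiability argument — it merely records the regime in which the lemma is invoked (after the sign choices $s_i, s_j$ from the preceding discussion are fixed) and, incidentally, forces $F(\rho) \le 1/4$.
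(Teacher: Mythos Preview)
Your proof is correct and takes a genuinely different route from the paper's. You observe that the joint law of $(\inp{X}{v_1},\inp{X}{v_2})$ depends on $(d_1,d_2,\alpha_{12})$ only, recast $\gamma_{12}$ as a bivariate Gaussian orthant probability with correlation $\alpha_{12}$, and then invoke Plackett's identity $\partial_\rho f_\rho=\partial_x\partial_y f_\rho$ to get $F'(\rho)=f_\rho(d_1,d_2)>0$. The paper instead fixes $v_1=e_2$, parameterizes $v_2=v(\theta)$ by the angle, lets the center $\xi(\theta)$ vary so as to keep $d_1,d_2$ fixed, and differentiates $\gamma(\theta)$ directly in polar coordinates via the Leibniz rule; the derivative splits into a boundary term (manifestly positive) and an interior term whose sign is controlled by showing $\wt\xi'(\theta)\le 0$, which is where the hypothesis $d_1,d_2\ge 0$ enters. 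Your argument is shorter, more conceptual, and --- as you note --- does not use the sign assumption at all. The paper's more explicit $\theta$-parameterization does buy something later: in the finite-sample section (\cref{lem:est_angle}) the same computation is reused to obtain a quantitative lower bound on $\gamma'(\theta)$ in a neighborhood of $\theta^*$, which drives the stability of the angle estimate. Your formula $F'(\rho)=f_\rho(d_1,d_2)$ would serve equally well for that purpose after the change of variables $\rho=-\cos\theta$.
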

\begin{proof}[Proof of \cref{thm:mu_sigma_ident_three_items}]
    We will show that $\dot{\Sigma}^{1/2}$ is recoverable from the observation probabilities. $\dot{\Sigma}$ is then obtainable from the fact that $\dot{\Sigma} = (\dot{\Sigma}^{1/2})^\top \dot{\Sigma}^{1/2}$. From \cref{lem:rec_ct_proj}, we recover $\alpha_i =\inp{\wt{c}_i}{\wt{\mu}}$ for $i \in [3]$ and from \cref{lem:rec_ang_conv}, we recover $\alpha_{ij}=\inp{\wt{c_i}}{\wt{c_j}}$ for $i\neq j \in [3]$.
    Without loss of generality, we assume that for some $s, t > 0$ by relaxing the scale constraint $\Tr \dot{\Sigma}=1$:
    \begin{equation}
        \label{eq:wtc_wlog_rep}
        \dot{\Sigma}^{1/2} c_1 = 
        \begin{bmatrix}
            1 \\
            0
        \end{bmatrix},\quad
        \dot{\Sigma}^{1/2} c_2 = s
        \begin{bmatrix}
            \alpha_{12} \\
            \sqrt{1 - \alpha^2_{12}}
        \end{bmatrix},\quad
        \dot{\Sigma}^{1/2} c_3 = t
        \begin{bmatrix}
            \alpha_{13} \\
            \sqrt{1 - \alpha_{13}^2}
        \end{bmatrix}.
    \end{equation}
    The signs of the second component of the vector are determined by fixing the sign of $\dot{\Sigma}^{1/2} c_2$ (this is an orthonormal transformation) and for $\dot{\Sigma}^{1/2} c_3$, follows from the fact that $c_3 = c_1 + c_2$. Noting that $\alpha_{13} \neq \alpha_{12}$ (as otherwise, we obtain permutation observations with $0$ probability), we obtain:
    \begin{equation*}
        \begin{bmatrix}
            1 \\
            0
        \end{bmatrix} + 
        s
        \begin{bmatrix}
            \alpha_{12} \\
            \sqrt{1 - \alpha^2_{12}}
        \end{bmatrix} = t
        \begin{bmatrix}
            \alpha_{13} \\
            \sqrt{1 - \alpha_{13}^2}
        \end{bmatrix}
        \implies 
        \begin{bmatrix}
            1 \\
            0
        \end{bmatrix} = 
        \begin{bmatrix}
            \alpha_{13} & \alpha_{12} \\
            \sqrt{1 - \alpha_{13}^2} & \sqrt{1 - \alpha^2_{12}}
        \end{bmatrix}
        \begin{bmatrix}
            t \\
            -s
        \end{bmatrix}.
    \end{equation*}
    Since the above forms an invertible system, we obtain the values of $s, t > 0$. This now enables the recovery of $\dot{\Sigma}$ (and consequently, $\Sigma$) via the invertible system:
    \begin{equation*}
        \dot{\Sigma}^{1/2} \cdot  
        \begin{bmatrix}
            c_1 & c_2
        \end{bmatrix}
        =
        \begin{bmatrix}
            1 & s\alpha_{12} \\
            0 & s \sqrt{1 - \alpha_{12}^2}
        \end{bmatrix}
    \end{equation*}
    To recover $\dot{\mu}$, note that $\wt{c}_1$ and $\wt{c}_2$ form a basis for $\R^2$ and observing that these are determined by \cref{eq:wtc_wlog_rep}, we identify $\wt{\mu}$. Noting that $\dot{\Sigma}^{-1/2} \dot{\mu} = \wt{\mu}$, we recover $\dot{\mu}$ and $\mu$ by our previous discussion.
\end{proof}

\subsection{Identifiability: More Than Three Alternatives}
\label{ssec:ident_multiple_comp}
\newcommand{\fmu}{\bar{\mu}}
\newcommand{\fSigma}{\bar{\Sigma}}
We now utilize \cref{thm:mu_sigma_ident_three_items} to establish identifiability for an arbitrary number of alternatives, $n$. \cref{thm:mu_sigma_ident_three_items} recovers $\mu, \Sigma$ restricted to tuples of $3$ alternatives up to the symmetries in \cref{def:universal_symmetries}. However, the sub-matrices of $\mu, \Sigma$ (or more precisely, $\mu$ and $\Sigma$ restricted to these alternatives) could potentially violate these assumptions. Hence, we establish that there is a \emph{unique} pair, $\mu, \Sigma$, consistent with the equivalence class of parameters identified by \cref{thm:mu_sigma_ident_three_items} for all choices of $3$ alternatives. In what follows, we will use $\fmu_{ijk}$ and $\fSigma_{ijk}$ to denote the parameters $\mu$ and $\Sigma$ restricted to the tuple $i, j$ and $k$, projected onto the two-dimensional subspace orthogonal to $\bm{1}$.

\begin{theorem}
    \label{thm:mu_sigma_ident_multiple_items}
    $\mu, \Sigma$ are uniquely identifiable from the three-way observation probabilities.
\end{theorem}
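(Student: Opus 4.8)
The plan is to apply \cref{thm:mu_sigma_ident_three_items} to every triple of alternatives and then \emph{stitch} the per-triple outputs into one globally normalized pair $(\mu,\Sigma)$, using the overlaps between triples together with the constraints of \cref{def:universal_symmetries}. The case $n=3$ is exactly \cref{thm:mu_sigma_ident_three_items}, so assume $n\ge 4$.

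First, consider what one triple $T=\{i,j,k\}$ reveals. The three-way ranking probabilities involving only $i,j,k$ coincide with those of the $3$-item probit whose parameters are $(\mu^{(T)},\Sigma^{(T)})$, the restriction of the global $(\mu,\Sigma)$ to coordinates $T$; this restriction will in general violate \cref{def:universal_symmetries}. As in the development preceding \cref{thm:mu_sigma_ident_three_items}, these probabilities depend only on the law of the difference vector $(X_i-X_j,\ X_j-X_k)$, and only on that law up to a positive scaling (passing from $\mc N(m,C)$ to $\mc N(tm,t^2C)$ with $t>0$ leaves every cone probability fixed). Hence \cref{thm:mu_sigma_ident_three_items}, applied to the ranking probabilities of $T$, yields $\rho_T$ times the centered mean vector $(\mu_i-\bar\mu_T,\ \mu_j-\bar\mu_T,\ \mu_k-\bar\mu_T)$ (with $\bar\mu_T$ the average of $\mu_i,\mu_j,\mu_k$) and $\rho_T^2$ times the covariance matrix of $(X_i-X_j,\ X_j-X_k)$, for some unknown $\rho_T>0$. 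Writing $\gamma_{ab}:=\Sigma_{aa}+\Sigma_{bb}-2\Sigma_{ab}$ for the variogram, that $2\times 2$ covariance is a fixed invertible linear image of the triple $(\gamma_{ij},\gamma_{jk},\gamma_{ik})$: its diagonal is $(\gamma_{ij},\gamma_{jk})$, and since $\gamma_{ik}=\gamma_{ij}+\gamma_{jk}+2\,\mathrm{Cov}(X_i-X_j,\ X_j-X_k)$ its off-diagonal entry is $\tfrac{1}{2}(\gamma_{ik}-\gamma_{ij}-\gamma_{jk})$. So triple $T$ pins down $(\gamma_{ij},\gamma_{jk},\gamma_{ik})$ up to the common factor $\rho_T^2$ and the centered mean on $T$ up to $\rho_T$. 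The rank-$(n-1)$ part of \cref{def:universal_symmetries} forces $\ker\Sigma=\mathrm{span}(\bm 1)$; since for $n\ge 3$ no difference $e_a-e_b$ of standard basis vectors, and indeed no nonzero vector supported on three coordinates, lies in $\mathrm{span}(\bm 1)$, every $\gamma_{ab}>0$ and every such difference covariance is nondegenerate, which is what makes $\rho_T$ well defined and lets \cref{thm:mu_sigma_ident_three_items} apply in the first place.

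Next, I would tie the scalars together. Whenever two triples share a pair $\{a,b\}$, each reports $\gamma_{ab}$ divided by its own scalar; since $\gamma_{ab}>0$, the ratio of those two reported numbers equals the ratio of the two scalars. The graph on triples whose edges join triples sharing a pair is connected for $n\ge 4$, so all $\rho_T^2$ are determined up to one global positive constant $c$; equivalently we recover $\gamma_{ab}/c$ for every pair $a\ne b$. The final ingredient is that the linear map $\Sigma\mapsto(\gamma_{ab})_{a\ne b}$ is injective on symmetric matrices with $\Sigma\bm 1=0$: summing $\gamma_{ab}=\Sigma_{aa}+\Sigma_{bb}-2\Sigma_{ab}$ over $b$ and using $\Sigma\bm 1=0$ yields $\sum_b\gamma_{ab}=n\Sigma_{aa}+\Tr\Sigma$; summing once more yields $\Tr\Sigma=\tfrac{1}{2n}\sum_{a,b}\gamma_{ab}$; and then $\Sigma_{aa}$ and $\Sigma_{ab}=\tfrac{1}{2}(\Sigma_{aa}+\Sigma_{bb}-\gamma_{ab})$ are forced. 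Feeding $\gamma/c$ into these formulas writes $\Sigma/c$ explicitly in terms of the observables, and the normalization $\Tr\Sigma=n$ then determines $c$, hence $\Sigma$, hence every $\rho_T$. With the $\rho_T$ known, the per-triple outputs give every centered mean $(\mu_a-\bar\mu_T)_{a\in T}$, hence every difference $\mu_a-\mu_b$, and $\langle\mu,\bm 1\rangle=0$ gives $\mu_a=\tfrac{1}{n}\sum_b(\mu_a-\mu_b)$. This recovers $(\mu,\Sigma)$.

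I expect the main obstacle to be the combination of this stitching argument with the injectivity claim for $\Sigma\mapsto(\gamma_{ab})$: one has to notice that overlapping triples make the a priori independent per-triple scalings mutually consistent, and then prove that a zero-row-sum covariance is pinned down by its variogram up to the single degree of freedom killed by $\Tr\Sigma=n$. The accompanying point that needs care is verifying that the rank condition of \cref{def:universal_symmetries} excludes all the degeneracies --- vanishing $\gamma_{ab}$, rank-deficient difference covariances --- that would otherwise invalidate the per-triple reduction. Once the scalings are available, the recovery of $\mu$ from pairwise mean-differences is routine.
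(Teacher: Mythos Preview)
Your proposal is correct and follows essentially the same route as the paper: apply \cref{thm:mu_sigma_ident_three_items} to every triple, link the per-triple scale factors through shared pairs (you phrase this as connectivity of the triple graph, the paper writes out the explicit chain $t_{123}\to t_{12k}\to t_{1jk}\to t_{ijk}$), invert the variogram map $\Sigma\mapsto(c_{ij}^\top\Sigma c_{ij})$ on the hyperplane $\Sigma\bm 1=0$ via the same double-sum identities, fix the global scale with $\Tr\Sigma=n$, and then recover $\mu$ from the pairwise differences together with $\langle\mu,\bm 1\rangle=0$. The only cosmetic difference is your slightly more abstract packaging of the stitching step; the mathematical content coincides.
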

\begin{proof}
    First, note for any distinct indices $i, j, k$, \cref{thm:mu_sigma_ident_three_items} yields $\wt{\mu}_{ijk}, \wt{\Sigma}_{ijk}$ satisfying:
    \begin{gather*}
        t_{ijk} \wt{\mu}_{ijk} = \fmu_{ijk} \text{ and }
        t_{ijk}^2 \wt{\Sigma}_{ijk} = \fSigma_{ijk}
    \end{gather*}
    for some scaling $t_{ijk} > 0$. Denoting $c_{ij} = e_i - e_j$, we have:
    \begin{equation*}
        c_{ij}^\top t_{ijk}^2 \wt{\Sigma}_{ijk} c_{ij} = c_{ij}^\top \fSigma_{ijk} c_{ij} = c_{ij}^\top \Sigma c_{ij} \implies t_{ijk}^2 = \frac{c_{ij}^\top \Sigma c_{ij}}{c_{ij}^\top \wt{\Sigma}_{ijk} c_{ij}}.
    \end{equation*}
    The first step uses the fact that $c_{ij}^\top\fSigma_{ijk}c_{ij}=c_{ij}^\top\Sigma_{ijk}c_{ij}$. Since scaling $\Sigma$ (and $\mu$) produces the same solution, we may assume without loss of generality that $t_{123} = 1$. Noting that this allows the recovery of $c_{12}^\top \Sigma c_{12}$, the above equality enables determination of the $t_{12k}$ for all $k \in [n]$. This further enables the determination of $t_{1jk}$ for all distinct $k, j \in [n]$. Another application of the argument finally yields the values of $t_{ijk}$ for all distinct $i, j, k \in [n]$. 

    Furthermore, observe that for all distinct $i, j, k \in [n]$ with $\sigma_{ij} \coloneqq \Sigma_{ij}$:
    \begin{equation*}
        c_{ij}^\top t_{ijk}^2 \wt{\Sigma}_{ijk} c_{ij} = c_{ij}^\top \Sigma c_{ij} = \sigma_{ii} + \sigma_{jj} - 2 \sigma_{ij}.
    \end{equation*}

    Furthermore, noting that $c_{ii} = \bm{0}$, we determine $\alpha_{ij} = c_{ij}^\top \Sigma c_{ij}$ for all $i, j \in [n]$. Observe that 
    \begin{equation*}
        \sum_{j = 1}^n c_{ij}^\top \Sigma c_{ij} = n \sigma_{ii} + \Tr (\Sigma) - 2 \sum_{j = 1}^n \sigma_{ij} = n \sigma_{ii} + \Tr (\Sigma).
    \end{equation*}
    where the last step is due to fact that $\bm{1}^\top\mu = 0, \Sigma \bm{1} = \bm{0}$. By further summing over $i$, we get
    \begin{equation*}
        \sum_{i = 1}^n \sum_{j = 1}^n c_{ij}^\top \Sigma c_{ij} = n \Tr (\Sigma) + n \Tr (\Sigma)
    \end{equation*}
    which now enables recovery of $\Tr (\Sigma)$ and from the above discussion $\sigma_{ii}$ and consequently, $\sigma_{ij}$ for all $i, j$. Hence, we have identified $\Sigma$. For the mean, observe again the following:
    \begin{equation*}
        c_{ij}^\top (t_{ijk} \wt{\mu}_{ijk}) = c_{ij}^\top \fmu_{ijk} = \mu_{i} - \mu_{j}.
    \end{equation*}
    The last step is valid because all of the elements of $\mu_{ijk}$ were shifted by the same amount to obtain $\fmu_{ijk}$, and consequently, their difference did not change.
    Since $t_{ijk}$ are determined, this determines $\mu_1 - \mu_j$ for all $j$ (note the choice of $\mu_1$ is arbitrary). Letting $\beta_j = \mu_1 - \mu_j$, we have:
    \begin{equation*}
        \sum_{j = 1}^n \beta_j = n \mu_1 - \sum_{j = 1}^n \mu_j = n \mu_1.
    \end{equation*}
    This determines $\mu_1$ and, consequently, the remaining $\mu_j$. This concludes the proof as $(\mu, \Sigma)$ have been recovered up to the universal symmetries previously described. 
\end{proof}

\section{Finite-sample Guarantees: Upper and Lower Bounds}
\label{sec:finite_sample}

In this section, we extend our \emph{identifiability} results \cref{sec:ident} to \emph{estimation}. The key conceptual advancement over \cref{sec:ident} is in the aggregation procedure which combines the standardized estimators of the $3$ item setting (\cref{ssec:ident_three_comp}) into a solution for the $n$ item setting (\cref{ssec:ident_multiple_comp}). The procedure described in \cref{ssec:ident_multiple_comp} is wasteful as it requires estimating \emph{all} $3 \times 3$ sub-matrices of $\Sigma$, resulting in a total of $O(n^3)$ total $3 \times 3$ matrices. We show that this can be substantially improved upon using a much smaller number ($\wt{O}(n^2)$) of sub-matrices in \cref{thm:est_multiple_items}. We then establish a statistical \emph{lower bound} demonstrating the near-optimality of our estimator. We start with our estimation guarantees. In addition to \cref{def:universal_symmetries}, we require the following assumption:
\begin{restatable}{assumption}{obserabilityestimation}{(Observability)}
    \label{as:observability_estimation}
    For any three alternatives $i, j, k \in [n]$ and $X \ts \mc{N} (\mu, \Sigma)$:
    \begin{equation*}
        \Pr \lbrb{i > j > k} \geq \gamma
    \end{equation*}
    for some $\gamma > 0$.
\end{restatable}
Intuitively, \cref{as:observability_estimation} avoids worst-case scenarios where an overwhelming difference in utilities renders a choice unobservable, making some parameters impossible to estimate. 
We now state our main estimation guarantee for the general $n$-item setting.
\begin{restatable}{theorem}{estmultitems}
    \label{thm:est_multiple_items}
    Let $(\mu^*, \Sigma^*)$ satisfy \Cref{def:universal_symmetries,as:observability_estimation} with $\Tr (\Sigma^*) = n$. Then, for any $\eps, \delta \in (0, 1/2]$, given there is a polynomial time algorithm which when allowed $N$ observations of rank-$3$ permutations from $\choice (\mu^*, \Sigma^*)$ of its choice, returns estimates, $(\wb{\mu}, \wb{\Sigma})$, satisfying:
    \begin{gather*}
        \norm{\wb{\mu} - \mu^*}_\infty \leq \eps \text{ and } \norm{\wb{\Sigma} - \Sigma^*}_\infty \leq \eps
    \end{gather*}
    with probability at least $1 - \delta$ as long as $N \geq C n^2 \eps^{-2} \gamma^{-24} \log (n / \delta) \log^6 (n / (\gamma\eps))$.
\end{restatable}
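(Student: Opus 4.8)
The plan is to convert the identifiability arguments of \cref{thm:mu_sigma_ident_three_items,thm:mu_sigma_ident_multiple_items} into a robust procedure and, at the same time, shrink the family of triples that must be inspected. Fix element $1$ as an anchor and let the algorithm query only the $\binom{n-1}{2}=O(n^2)$ triples $\mathcal T=\{(1,j,k):1<j<k\le n\}$, allocating $m:=\lceil N/|\mathcal T|\rceil$ i.i.d.\ rank-$3$ observations to each; from each batch we form the empirical frequencies of the constantly-many events that appear in the proof of \cref{thm:mu_sigma_ident_three_items}. A Hoeffding bound together with a union bound over the $O(n^2)$ triples shows that, with probability $\ge 1-\delta$, all of these empirical probabilities are within $\eps_0$ of their true values as soon as $m\gtrsim \eps_0^{-2}\log(n/\delta)$, i.e.\ $N\gtrsim n^2\eps_0^{-2}\log(n/\delta)$. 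It then remains to choose $\eps_0=\eps\cdot\gamma^{\Theta(1)}/\mathrm{polylog}(n/(\gamma\eps))$ so that the downstream reconstruction incurs final $\ell_\infty$ error $\le\eps$; note every step of that reconstruction is polynomial time (empirical counting, $O(1)$-size linear algebra per triple, and one monotone one-dimensional inversion to obtain each $\alpha_{ij}$).

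The heart of the analysis is a quantitative, Lipschitz version of the three-alternative recovery, using \cref{as:observability_estimation} to stay uniformly away from degeneracies. I would revisit each step of the proof of \cref{thm:mu_sigma_ident_three_items}: (i) \cref{lem:rec_ct_proj} inverts $\Phi$ on the interval $[\gamma,1-\gamma]$, where $\Phi^{-1}$ is $O\!\big(1/\phi(\Phi^{-1}(\gamma))\big)=O\!\big((\gamma\sqrt{\log(1/\gamma)}\,)^{-1}\big)$-Lipschitz, so the $\alpha_i$ are obtained to accuracy $\mathrm{poly}(1/\gamma)\cdot\eps_0$; (ii) a quantitative form of \cref{lem:rec_ang_conv}, namely that for fixed $d_1,d_2$ the map $\alpha_{12}\mapsto\gamma_{12}$ has derivative bounded below in magnitude by $\mathrm{poly}(\gamma)$, making its inverse stable --- by Plackett's identity this derivative equals the value of a bivariate Gaussian density at a point whose coordinates are controlled by $d_1,d_2$, which observability keeps bounded; (iii) the $2\times 2$ systems used to solve for $s,t$ in \eqref{eq:wtc_wlog_rep} and then for $\dot\Sigma^{1/2}$ are well-conditioned, since their determinants are proportional to $\sin$ of angles between the $\wt c_i$, and a near-degenerate angle would drive some rank-$3$ event below probability $\gamma$. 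Composing (i)--(iii), the map from a triple's rank-$3$ probabilities to the standardized estimates $(\wt\mu_{ijk},\wt\Sigma_{ijk})$ is $\mathrm{poly}(1/\gamma)$-Lipschitz.

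The third ingredient is stable aggregation across the anchored triples. With $\mathcal T$ as above, the dependency graph of the scalings $t_{1jk}$ from the proof of \cref{thm:mu_sigma_ident_multiple_items} has depth $O(1)$: fixing $t_{123}=1$ (and rescaling at the very end to enforce $\Tr\Sigma^*=n$), we read off $c_{12}^\top\Sigma c_{12}$, use it and the triples $(1,2,k)$ to recover every $t_{12k}$ and hence every $c_{1k}^\top\Sigma c_{1k}$, and then use the triples $(1,j,k)$ to recover $t_{1jk}$ and hence $c_{jk}^\top\Sigma c_{jk}$ for all $j,k$. Each such step is a division $t^2=(c_{ij}^\top\Sigma c_{ij})/(c_{ij}^\top\wt\Sigma_{ijk}c_{ij})$ whose denominator equals $c_{ij}^\top\bar\Sigma_{ijk}c_{ij}=\mathrm{Var}(X_i-X_j)$, bounded below by $\mathrm{poly}(\gamma)$ (again by observability: the projected Gaussian cannot be nearly degenerate in the $e_i-e_j$ direction without collapsing a rank-$3$ event), so errors blow up only by $\mathrm{poly}(1/\gamma)$ per step and the bounded depth prevents exponential accumulation. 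The closing identities in the proof of \cref{thm:mu_sigma_ident_multiple_items} then recover $\{\sigma_{ii}\}$, $\{\sigma_{ij}\}$, and $\{\mu_i\}$ as averages over $n$ quantities each known to accuracy $\mathrm{poly}(1/\gamma)\cdot\eps_0$, so (using $\Tr\Sigma^*=n$ in place of the estimated trace to pin the global scale) the $\ell_\infty$ errors of $\wb\mu$ and $\wb\Sigma$ remain $\mathrm{poly}(1/\gamma)\cdot\eps_0$. Tracking all the $\mathrm{poly}(1/\gamma)$ and $\mathrm{polylog}$ factors and solving for $\eps_0$ gives $N\ge C\,n^2\eps^{-2}\gamma^{-24}\log(n/\delta)\log^6(n/(\gamma\eps))$.

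The step I expect to be the main obstacle is (ii): the uniform lower bound $\mathrm{poly}(\gamma)$ on $|\partial\gamma_{12}/\partial\alpha_{12}|$. The inner product $\alpha_{12}=\wt c_i^\top\wt c_j$ enters the quadrant probability only through the relative geometry of two rotated halfplanes against a shifted isotropic Gaussian, and establishing that this dependence is non-degenerate with an explicit polynomial-in-$\gamma$ modulus forces us to convert the bare hypothesis $\Pr\{i>j>k\}\ge\gamma$ into quantitative control of every relevant offset and angle --- either via an exact orthant-probability formula with a clean derivative, or via a geometric argument that the boundary density picked up when differentiating in $\alpha_{12}$ is not exponentially small. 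A secondary, purely bookkeeping nuisance is reconciling the composed $\mathrm{poly}(1/\gamma)$ and $\mathrm{polylog}$ factors so that they land on the exponents $24$ and $6$.
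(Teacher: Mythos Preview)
Your outline is sound and tracks the paper's strategy at the high level (quantitative three-item recovery, then aggregation over $O(n^2)$ triples), but two of the three ingredients are implemented differently from the paper, and the comparison is worth spelling out.

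\emph{Aggregation.} You anchor every triple at item $1$ and chain the scalings $t_{1jk}$ directly, exactly as in the identifiability argument; this gives depth $O(1)$ and no $\log n$ blow-up in the error. The paper instead takes a more symmetric route: for each $i$ it lays down a binary tree on $[n]\setminus\{i\}$ and uses only the resulting $\le n^2$ triples (\cref{lem:sub_graph_connectivity}), giving a graph of diameter $O(\log n)$; it then recovers $\Sigma$ not by chaining but by solving a single linear program that minimizes the maximum ratio violation across all chosen triples (\cref{lem:sigma_est_multiple_items}), and likewise for $\mu$ (\cref{lem:mu_est_multiple_items}). Your anchored chain is simpler and in fact avoids one $\log n$ factor; the LP formulation is more symmetric (no distinguished item) and makes the error analysis cleaner since it reduces to bounding $r_{\max}/r_{\min}$ along paths.

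\emph{The derivative bound in step (ii).} Your Plackett-identity route is correct and in fact cleaner than the paper's polar-coordinate computation: writing $(Y_1,Y_2)=(\langle X,v_1\rangle,\langle X,v_2\rangle)\sim\mathcal N((d_1,d_2),\Sigma_\rho)$ with $\rho=\alpha_{12}$, one has $\partial\gamma_{12}/\partial\rho=\phi_2(-d_1,-d_2;\rho)$ exactly. The one subtlety your sketch understates is that bounding $d_1,d_2=O(\sqrt{\log(1/\gamma)})$ alone is \emph{not} enough: $\phi_2(-d_1,-d_2;\rho)\propto\exp(-\|\wt\mu\|^2/2)$ where $\|\wt\mu\|^2=(d_1^2-2\rho d_1d_2+d_2^2)/(1-\rho^2)$, and this blows up if $\rho$ is close to $\pm 1$ in the wrong way relative to $d_1,d_2$. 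The paper handles this by bounding $\|\wt\mu\|$ directly from observability (each axis-aligned half-plane contains an observable cone, so neither coordinate of the whitened mean can be large; see \cref{clm:mutd_theta_star_bnd}), obtaining $\gamma'(\theta)\ge\gamma^3/16$. Once you add that step, your argument goes through.

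Bookkeeping aside, your proposal would yield the stated bound (and plausibly shave a log factor you are not claiming).
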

Next, we present a lower bound that shows that \cref{thm:est_multiple_items} is near-optimal. We introduce some notation before proceeding. An estimator, $T$, is parameterized by a set of tuples $\{((i, j, k), N_{ijk}): i, j, k \in [n],\, i \neq j \neq k \neq i\}$. The estimator then obtains $N_{ijk}$ samples from $\choice (\mu^*_{ijk}, \Sigma^*_{ijk})$ for each $i, j, k$ and outputs estimates $(\wb{\mu}, \wb{\Sigma})$. Furthermore, we denote:
\begin{equation*}
    N_{ij} \coloneqq \sum_{k} N_{ijk}.
\end{equation*}
As before, we adopt \cref{def:universal_symmetries} for normalization and state our main lower bound below.
\begin{restatable}{theorem}{statlbkl}
    \label{thm:stat_lb_kl}
    Let $n \in \N$, $\delta, \eps \in (0, 2^{-4})$. For any estimator $T$ parameterized by $\{((i, j, k), N_{ijk})\}_{i \neq j \neq k \neq i}$ such that $N_{i^*, j^*} \leq \eps^{-2} \log (1 / \delta) / 4$ for \emph{some} $i^* \neq j^* \in [n]$, there exist two choice models, parameterized by $(\bm{0}, \Sigma^1)$ and $(\bm{0}, \Sigma^2)$ with:
    \begin{equation*}
        \Tr (\Sigma^1) = \Tr (\Sigma^2) = n \text{ and } \norm{\Sigma^1 - \Sigma^2}_\infty \geq \eps,
    \end{equation*}
    satisfying:
    \begin{equation*}
        \max_{\Sigma \in \{\Sigma^1, \Sigma^2\}} \P_{\bm{X} \ts (T, \choice (\bm{0}, \Sigma))} \lbrb{\norm{T(\bm{X}) - \Sigma}_\infty \geq \frac{\eps}{4}} \geq \delta.
    \end{equation*}
\end{restatable}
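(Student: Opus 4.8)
The plan is to reduce the lower bound to a standard two-point (Le Cam) argument for distinguishing two probit choice models whose covariance matrices differ by $\eps$ in the $\infty$-norm, but which are \emph{statistically indistinguishable} when one is only allowed few samples involving the distinguished pair $(i^\ast, j^\ast)$. The key is to choose $\Sigma^1$ and $\Sigma^2$ so that (a) they agree on every $3\times 3$ sub-matrix $\Sigma_{ijk}$ that does \emph{not} involve \emph{both} $i^\ast$ and $j^\ast$, so those samples carry zero information, and (b) on sub-matrices involving both $i^\ast$ and $j^\ast$ the two models are close in total variation, so each such sample carries at most $O(\eps^2)$ bits. Given the budget constraint $N_{i^\ast j^\ast} \le \eps^{-2}\log(1/\delta)/4$, a tensorization/chain-rule bound on the KL divergence between the two induced data distributions then gives $\KL \le N_{i^\ast j^\ast}\cdot O(\eps^2) \le O(\log(1/\delta))$, and Le Cam's inequality (or the Bretagnolle--Huber inequality) converts this into the claimed $\ge \delta$ lower bound on the max error probability.

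Concretely, first I would construct the two covariances. Work with $\mu = \bm 0$, so that all pairwise probabilities are $1/2$ regardless of $\Sigma$ (as noted after Theorem~\ref{thm:two_not_enough}); the only information is in the genuine three-way ranking probabilities. Take $\Sigma^1$ to be a base matrix satisfying Assumption~\ref{def:universal_symmetries} and Assumption~\ref{as:observability_estimation} with $\gamma$ bounded below by a universal constant (e.g.\ close to a scaled identity on the hyperplane $\bm 1^\perp$), and let $\Sigma^2 = \Sigma^1 + \eps\, E$ where $E$ is a symmetric perturbation supported on the entries $(i^\ast,i^\ast),(i^\ast,j^\ast),(j^\ast,i^\ast),(j^\ast,j^\ast)$ and corrected minimally on the diagonal (or off the pair) to preserve $\Sigma^2 \bm 1 = \bm 0$ and $\Tr\Sigma^2 = n$; this forces $\|\Sigma^1 - \Sigma^2\|_\infty \ge \eps$ and, since the perturbation is supported on $\{i^\ast,j^\ast\}$, guarantees $\Sigma^1_{ijk} = \Sigma^2_{ijk}$ for every triple not containing both $i^\ast$ and $j^\ast$. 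For $\eps$ small the observability constant is preserved. Then any estimator with $\|T(\bm X) - \Sigma\|_\infty < \eps/4$ on both models would distinguish them, reducing the problem to hypothesis testing.

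Second, I would bound the information. Let $P_\Sigma$ denote the joint law of all samples the estimator collects when the truth is $\Sigma$. By the chain rule for KL over the (possibly adaptively chosen, but the $N_{ijk}$ are fixed by hypothesis) samples, and since samples from triples $(i,j,k)$ not containing both $i^\ast,j^\ast$ have identical law under $P_{\Sigma^1}$ and $P_{\Sigma^2}$, we get $\KL(P_{\Sigma^1}\,\|\,P_{\Sigma^2}) \le \sum_{k} N_{i^\ast j^\ast k}\cdot \KL\!\big(\choice(\bm 0,\Sigma^1_{i^\ast j^\ast k})\,\|\,\choice(\bm 0,\Sigma^2_{i^\ast j^\ast k})\big)$. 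The inner KL between two $3$-outcome (six-permutation) distributions whose parameters differ by $\eps$ must be shown to be $O(\eps^2)$: this follows from smoothness of the map $\Sigma \mapsto$ (ranking probabilities) — the ranking probabilities are differences of orthant/halfspace Gaussian probabilities, hence $C^\infty$ in $\Sigma$ on the compact observable region carved out by Assumption~\ref{as:observability_estimation}, so a Taylor expansion plus the standard bound $\KL(p\|q) \le \chi^2(p\|q) \le \sum (p_\ell-q_\ell)^2/q_\ell$ and the lower bound $q_\ell \ge \gamma$ gives $\KL \le c\,\eps^2$ for a universal $c$ (absorbing the constant-$\gamma$ base model). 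Summing, $\KL(P_{\Sigma^1}\|P_{\Sigma^2}) \le c\,\eps^2 N_{i^\ast j^\ast} \le (c/4)\log(1/\delta)$. Finally, Le Cam / Bretagnolle--Huber: $\tfrac12(\P_1(\text{err}) + \P_2(\text{err})) \ge \tfrac12 e^{-\KL} \ge \tfrac12 \delta^{c/4}$, and after rescaling constants (choosing the perturbation magnitude so the effective KL coefficient is $\le 1$, i.e.\ replacing $\eps$ by a constant multiple throughout, which is harmless since the theorem only claims existence of \emph{some} pair of models with separation $\ge\eps$) this yields $\max_\Sigma \P_\Sigma(\text{err}) \ge \delta$.

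The main obstacle I expect is step two's quantitative claim that the per-sample KL is $O(\eps^2)$ \emph{with the right constant}, uniformly over the perturbation — one has to verify that the base model $\Sigma^1$ can simultaneously (i) satisfy the normalization and a constant-order observability $\gamma$, (ii) admit an $\eps$-perturbation along $\{i^\ast,j^\ast\}$ that stays PSD with the normalization intact, and (iii) make the Fisher-information-type bound on the ranking probabilities explicit enough that the final KL coefficient can be driven below $1/4$ by only a constant rescaling of $\eps$. This is a finite, two-dimensional Gaussian-geometry computation (it suffices to analyze the projected bivariate model $V \sim \mc N(\bm 0, \dot\Sigma)$ on each relevant triple, as in Section~\ref{ssec:ident_three_comp}), but it is the place where all the constants must be nailed down.
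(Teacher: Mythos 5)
Your overall architecture is the same as the paper's: a two-point Le Cam argument with $\mu=\bm 0$, a perturbation of the covariance supported on the $\{i^*,j^*\}$ entries (so that triples not containing both indices carry zero information), tensorization of the KL over the $N_{ijk}$ experiments, and the Bretagnolle--Huber inequality to convert the KL budget $N_{i^*j^*}\eps^2 \le \log(1/\delta)/4$ into the $\ge\delta$ error bound. The paper instantiates this with $\Sigma^1$ the projected identity and $\Sigma^2$ the projected, renormalized $I+\eps(e_{i^*}e_{j^*}^\top+e_{j^*}e_{i^*}^\top)$, then checks the trace and $\infty$-norm separation by direct computation.

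The one place where your proposal diverges --- and which you yourself flag as the main obstacle --- is the per-sample bound $\KL\le O(\eps^2)$ for a single three-way observation. You propose to attack the six-outcome ranking distribution directly via smoothness of $\Sigma\mapsto(\text{ranking probabilities})$, a Taylor expansion, a $\chi^2$ bound, and the observability floor $q_\ell\ge\gamma$; this would require a genuine Fisher-information computation on the projected bivariate Gaussian and careful constant tracking, and you leave it unexecuted. The paper avoids all of this with a single application of the data-processing inequality: the observed permutation is a deterministic function of the latent utility vector, so $\KL(X^{ijk}\|Y^{ijk})\le\KL(\mc N(\bm 0,I)\,\|\,\mc N(\bm 0,I+\eps(e_{i^*}e_{j^*}^\top+e_{j^*}e_{i^*}^\top)))=\tfrac12\log\tfrac{1}{1-\eps^2}\le\eps^2$, using the closed form for the Gaussian KL. This gives the constant $1$ for free, needs no observability assumption, and no smoothness analysis. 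Your plan is salvageable as written, but the step you identify as the crux is exactly the step where you are missing the short argument; I would count that as a gap in the proposal rather than a complete alternative proof. (A minor additional point: your worries about preserving PSD-ness, observability, and adaptivity are unnecessary here --- the identity-based construction handles the first two trivially for $\eps<1$, the theorem does not require the constructed models to satisfy the observability assumption, and the $N_{ijk}$ are fixed by hypothesis.)
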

We pause for a few remarks. Firstly, note that each pair $(i, j)$ must appear in at least $\Omega(\eps^{-2})$ experiments. Since each sample only accounts for $O(1)$ pairs, this result implies a lower bound of $\Omega (n^2)$ on the number of unique experiments for any successful estimator matching \cref{thm:est_multiple_items}. Finally, note that this argument also implies that the total number of samples is at least $\Omega (n^2 \eps^{-2} \log (1 / \delta))$ again matching \cref{thm:est_multiple_items} in terms of $n, d, \text{ and } \delta$.

\begin{figure}
    \centering
    
    \includegraphics[width=0.18\linewidth]{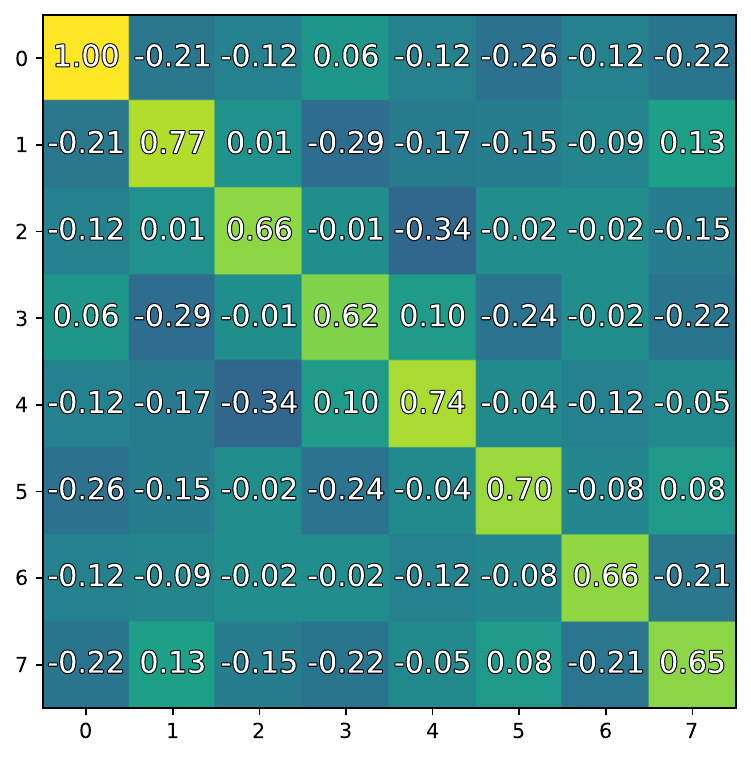}
    \includegraphics[width=0.18\linewidth]{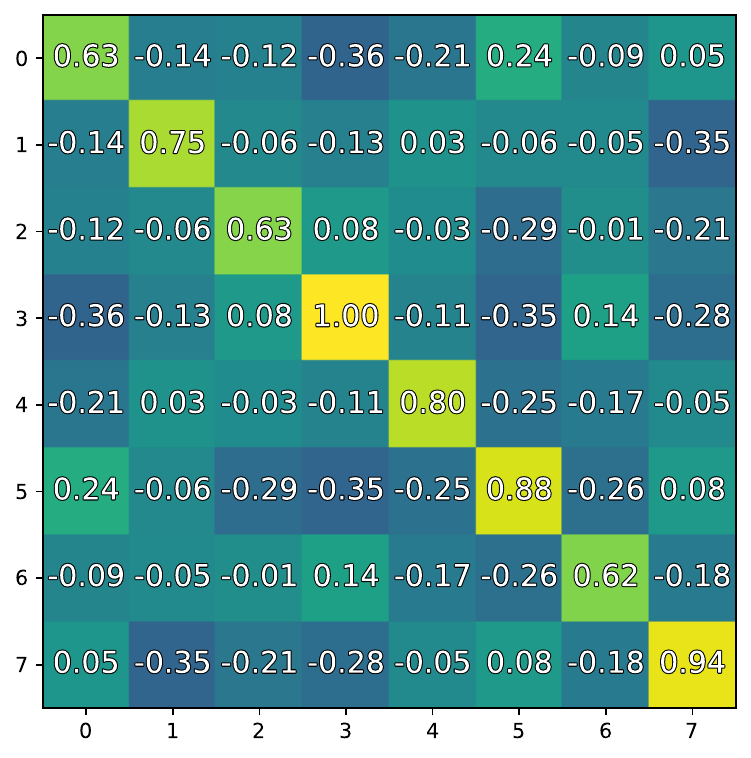}
    \includegraphics[width=0.18\linewidth]{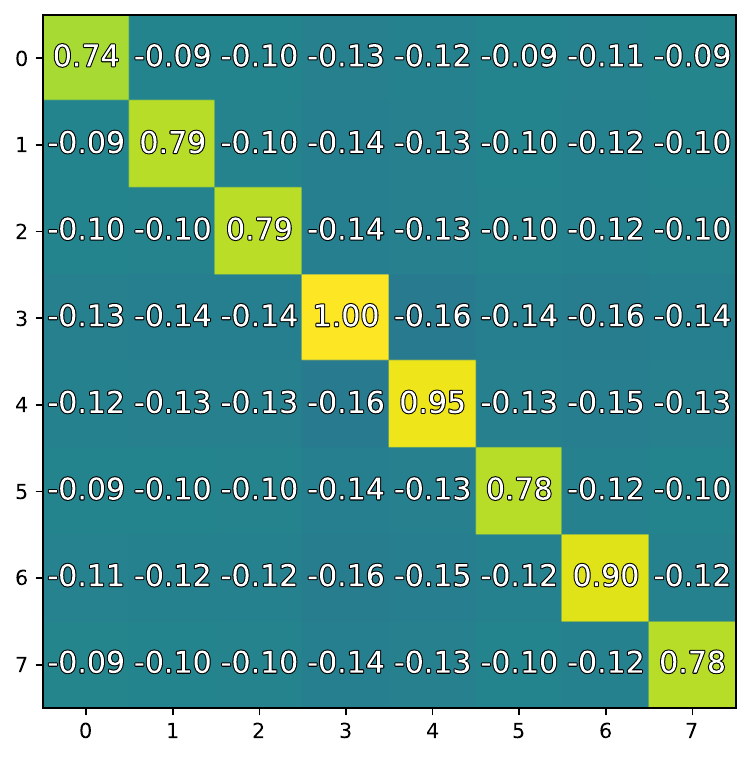}
    \includegraphics[width=0.18\linewidth]{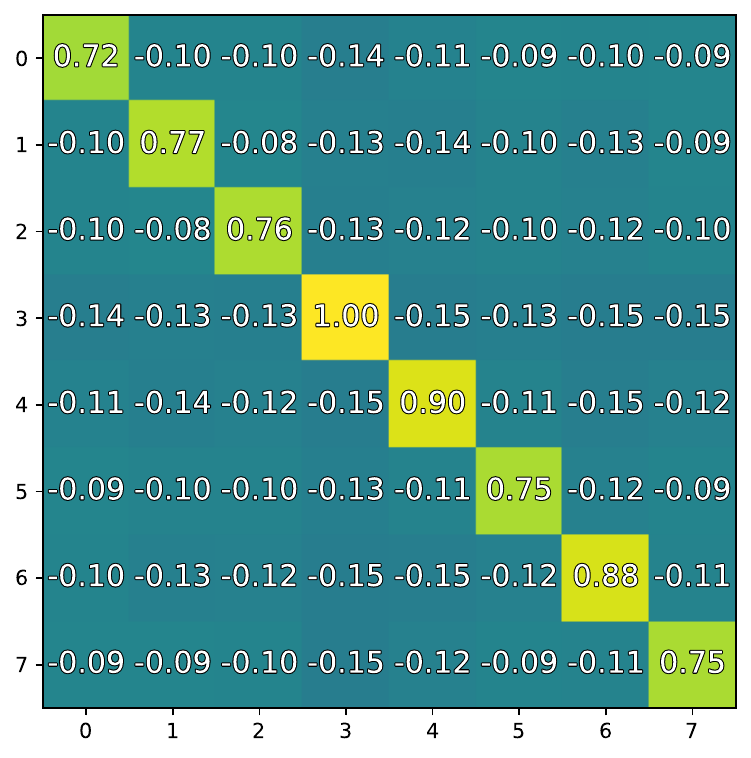}
    \includegraphics[width=0.18\linewidth]{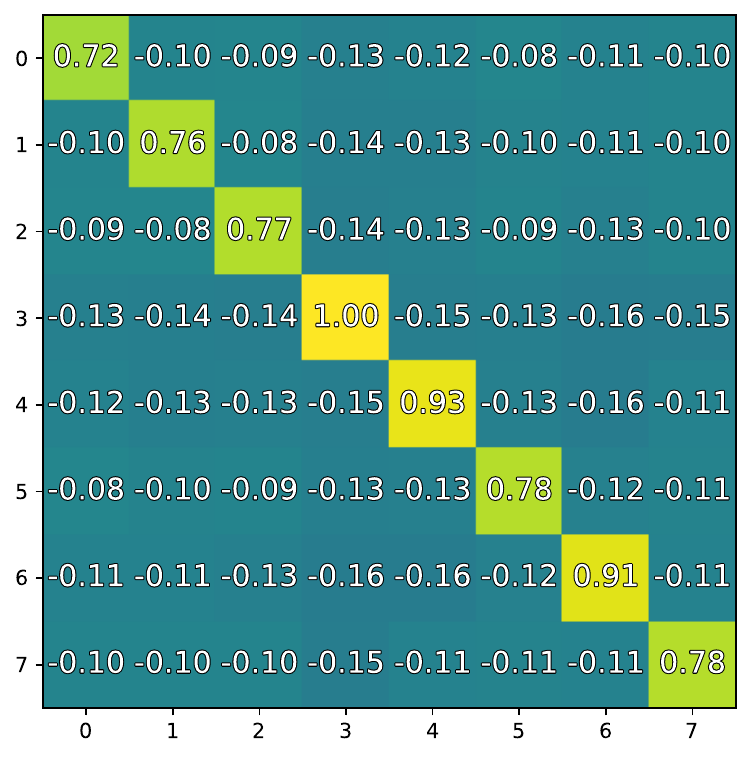}

    \includegraphics[width=0.18\linewidth]{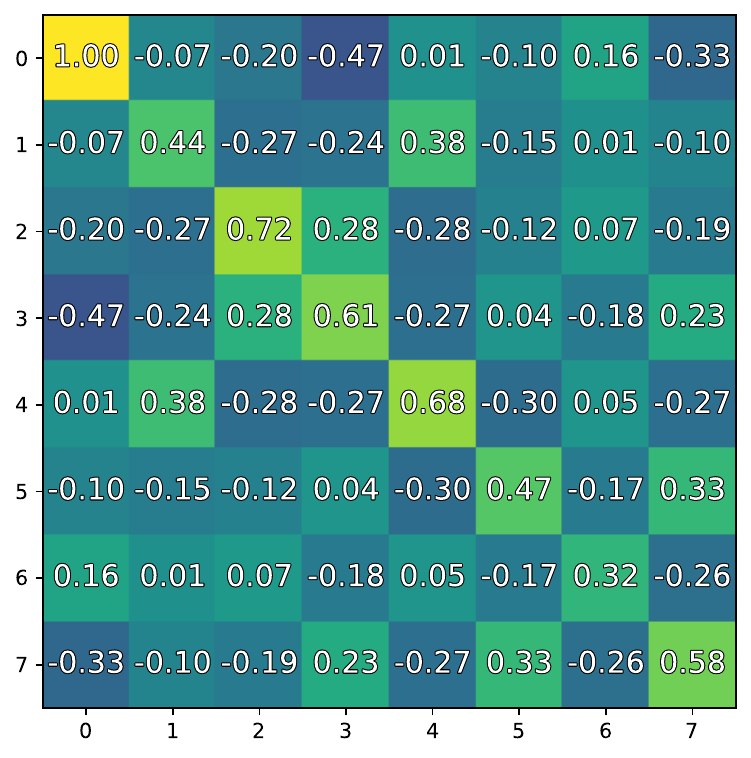}
    \includegraphics[width=0.18\linewidth]{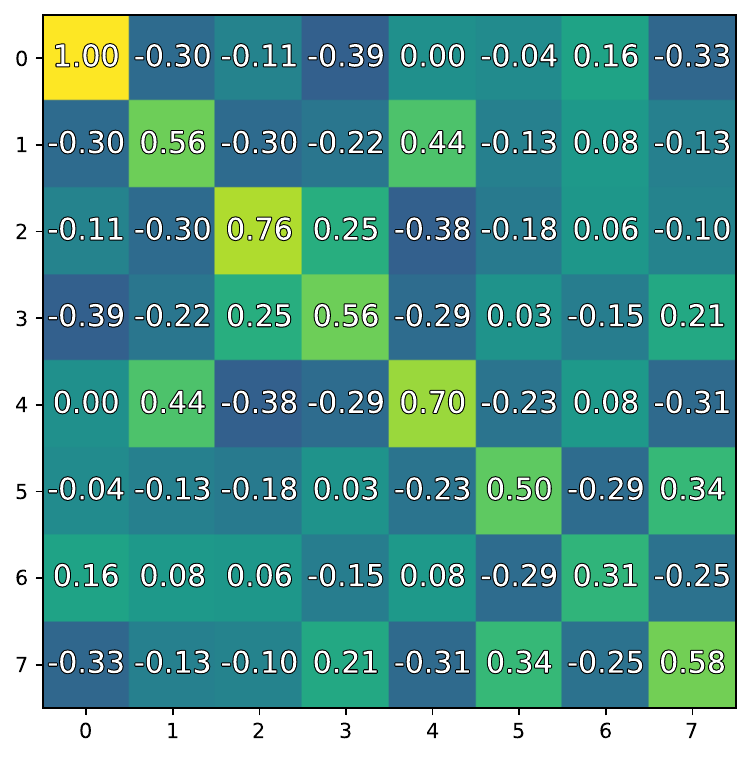}
    \includegraphics[width=0.18\linewidth]{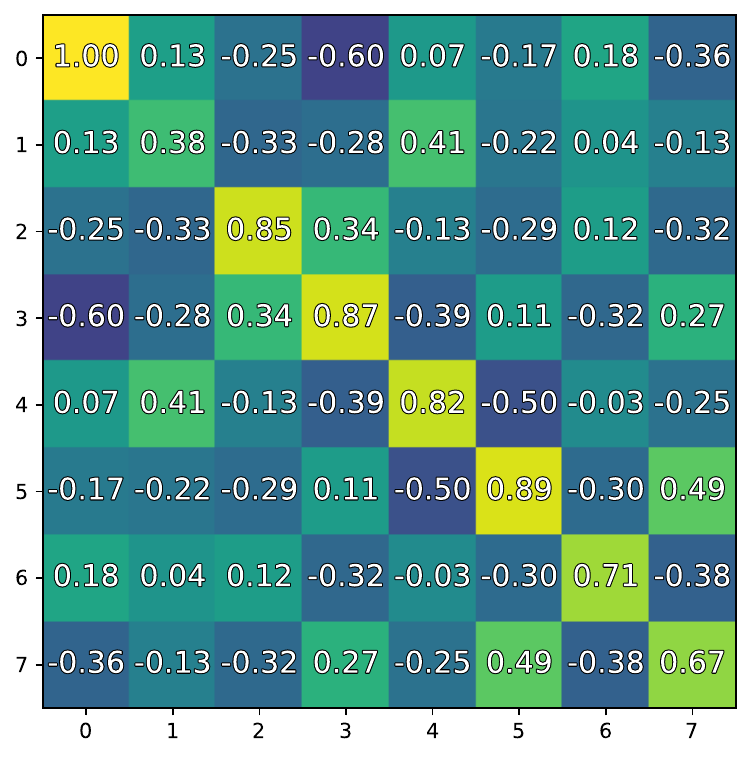}
    \includegraphics[width=0.18\linewidth]{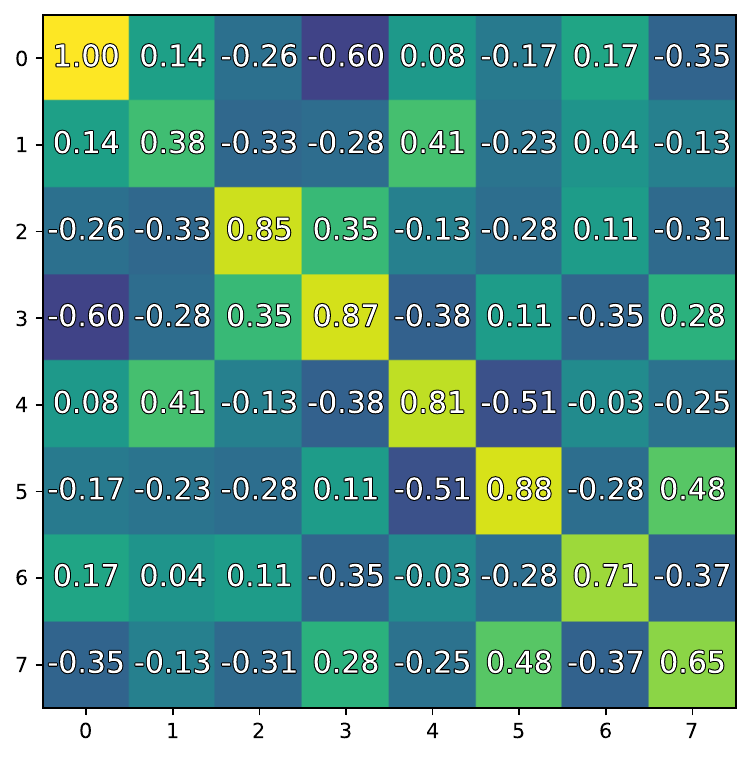}
    \includegraphics[width=0.18\linewidth]{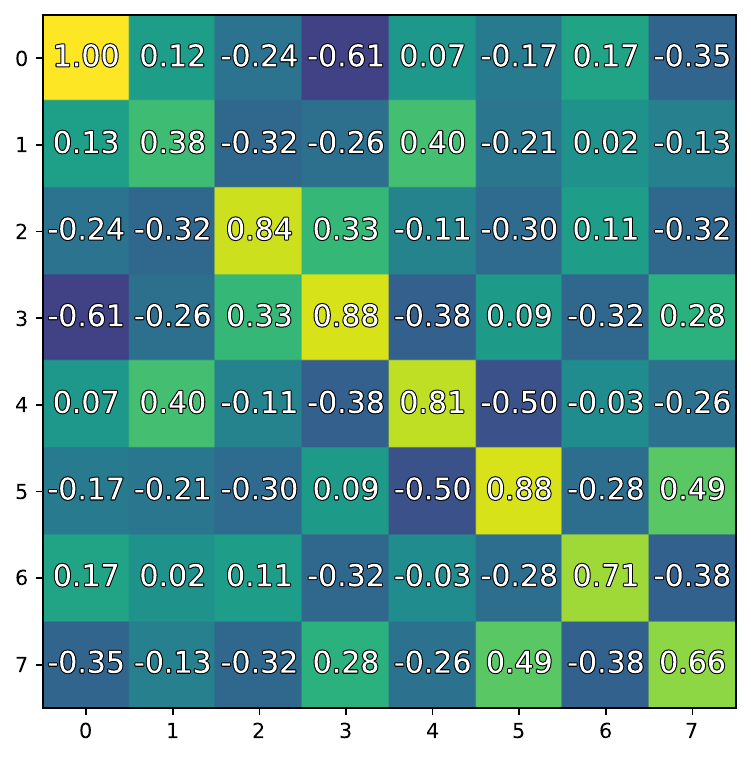}
    \caption{Center: ground truth covariance matrix, two figures on the right, covariance matrix learned from best-of-three observations, two figure on the left, covariance matrix learned from pairwise comparisons. We reproduce larger version these plots in Appendix~\ref{sec:impl}.}
    \label{fig:syn}
\end{figure}
\section{Experiments}

We empirically evaluate modeling correlations. We use datasets of ratings and rankings. In rating datasets users are asked to assign a numerical value to alternatives. In contrast, ranking datasets contain the ordered list of preferences.  We convert rankings to ratings and vice versa by using the index as rating and assuming alternatives with higher ratings are preferred over alternatives with lower ratings. The data gathering process can either be organic i.e., users rate movies they watch, or structured, every user ranks all food items on the menu.

We use two types of models on these datasets: RUMs and matrix completion models, both trained with gradient based methods. For RUMs, we use logit and probit models learned from two way comparisons and probit models learned from three way comparisons. In all cases we use the maximum likelihood estimate. RUMs do not take the user as input and directly create a model for the population. Matrix completion \citep{koren2009matrix} learns the ratings for all users in the training set, we then use the counts of users in the training set to approximate the probability of certain events. For sanity checks, we include a model we refer to as direct, which is similar to matrix completion, except we compute the desired quantities directly on the training set. We argue that this model is unrealistic in some scenarios, for instance where the set of alternative and user is large or when the ratings are grouped but anonymous. In particular, we find that matrix completion needs many observations per user to learn the distribution of choices. Still, since these matrix completion models the whole population, we expect them to be able to learn correlated preferences.

In the first set of experiments we sample six choices from a user, we present the ranking of four of the alternatives as context and predict the preference over the remaining two alternatives. %
We start with synthetic data sampled from probit model as we can compare our results to the ground truth. In Figure~\ref{fig:syn}, we show the recovered covariance matrices with pairwise and best-of-three observations using a probit RUM. We can see that the model learned from pairwise comparisons learns nonexistent correlations in the uncorrelated case while in the correlated case, initialization can play an important role as apparent by the change in value in the first cell of the second row. We report the performance in Table~\ref{tab:syn}. The location $\mu$ can either be zero or sampled randomly, the covariance matrix can either be the identity matrix I, diagonal random rI, plus minus one bin, and random r. In all cases the best-of-three probit matches the performance direct method (with simulation).

\begin{table}[]
    \centering

\tiny
\begin{tabular*}{\linewidth}{@{\extracolsep{\fill}}llrrrrrrrrrrrr}
\toprule
 \multicolumn{2}{c}{structure} & 
 \multicolumn{3}{c}{logit} & 
 \multicolumn{3}{c}{matrix completion} & 
 \multicolumn{3}{c}{probit (pairwise)} & 
 \multicolumn{3}{c}{probit (best-of-three)} \\ 
\multicolumn{2}{c}{} &
\multicolumn{3}{c}{accuracy quantile} &
\multicolumn{3}{c}{accuracy quantile} &
\multicolumn{3}{c}{accuracy quantile} &
\multicolumn{3}{c}{accuracy quantile} \\
\cmidrule(lr){3-5} 
\cmidrule(lr){6-8} 
\cmidrule(lr){9-11} 
\cmidrule(lr){12-14} 
$\mu$ & $\Sigma$ & 0.25 & 0.50 & 0.75 & 0.25 & 0.50 & 0.75 & 0.25 & 0.50 & 0.75 & 0.25 & 0.50 & 0.75 \\ 
\midrule\addlinespace[2.5pt]
0 & I & 0.50 & 0.50 & 0.50 & 0.50 & 0.50 & 0.50 & 0.50 & 0.50 & 0.50 & 0.50 & 0.50 & 0.50 \\
0 & bin & 0.50 & 0.50 & 0.50 & 0.78 & 0.78 & 0.78 & 0.45 & 0.50 & 0.54 & 0.79 & 0.79 & 0.79 \\
0 & rI & 0.50 & 0.50 & 0.50 & 0.50 & 0.50 & 0.51 & 0.50 & 0.50 & 0.50 & 0.50 & 0.50 & 0.50 \\
0 & r & 0.50 & 0.50 & 0.50 & 0.67 & 0.67 & 0.67 & 0.47 & 0.51 & 0.55 & 0.67 & 0.67 & 0.67 \\
r & I & 0.70 & 0.70 & 0.71 & 0.68 & 0.70 & 0.70 & 0.66 & 0.66 & 0.67 & 0.68 & 0.70 & 0.70 \\
r & bin & 0.79 & 0.79 & 0.80 & 0.94 & 0.95 & 0.95 & 0.95 & 0.95 & 0.95 & 0.94 & 0.95 & 0.95 \\
r & rI & 0.72 & 0.72 & 0.72 & 0.71 & 0.71 & 0.72 & 0.69 & 0.70 & 0.71 & 0.70 & 0.70 & 0.72 \\
r & r & 0.63 & 0.63 & 0.64 & 0.71 & 0.71 & 0.71 & 0.67 & 0.68 & 0.69 & 0.71 & 0.71 & 0.71 \\
\bottomrule
\end{tabular*}

\caption{Accuracy of different methods on the synthetic dataset. Notice how the direct method matches best-of-three probit.}
\label{tab:syn}

\centering
\small
\begin{adjustbox}{width=1\textwidth}
\begin{tabular}{lrlrrrrrrrrrrrrrrr}
\toprule
 \multicolumn{3}{c}{} & \multicolumn{3}{c}{logit} & \multicolumn{3}{c}{matrix completion} & \multicolumn{3}{c}{direct} & \multicolumn{3}{c}{probit (pairwise)} & \multicolumn{3}{c}{probit (best-of-three)} \\ 
 \multicolumn{3}{c}{} & \multicolumn{3}{c}{accuracy quantile} & \multicolumn{3}{c}{accuracy quantile} & \multicolumn{3}{c}{accuracy quantile} & \multicolumn{3}{c}{accuracy quantile} & \multicolumn{3}{c}{accuracy quantile} \\
\cmidrule(lr){4-6} \cmidrule(lr){7-9} \cmidrule(lr){10-12} \cmidrule(lr){13-15} \cmidrule(lr){16-18}

dds. & var. & feat. & 0.25 & 0.50 & 0.75 & 0.25 & 0.50 & 0.75 & 0.25 & 0.50 & 0.75 & 0.25 & 0.50 & 0.75 & 0.25 & 0.50 & 0.75 \\ 
\midrule\addlinespace[2.5pt]
jokes & onehot & onehot & 0.61 & 0.61 & 0.61 & 0.61 & 0.61 & 0.61 & 0.61 & 0.62 & 0.63 & 0.59 & 0.59 & 0.59 & 0.61 & 0.61 & 0.61 \\
ml & 1k & llm & 0.57 & 0.57 & 0.57 & 0.59 & 0.59 & 0.60 & 0.57 & 0.57 & 0.58 & 0.56 & 0.56 & 0.56 & 0.57 & 0.57 & 0.57 \\
ml & 10k & llm & 0.59 & 0.59 & 0.59 & 0.60 & 0.60 & 0.60 & 0.60 & 0.60 & 0.60 & 0.57 & 0.57 & 0.57 & 0.59 & 0.59 & 0.59 \\
ml & 50k & llm & 0.59 & 0.59 & 0.59 & 0.58 & 0.58 & 0.58 & 0.59 & 0.59 & 0.60 & 0.55 & 0.55 & 0.56 & 0.59 & 0.59 & 0.59 \\
ml & 1k & onehot & 0.62 & 0.62 & 0.62 & 0.60 & 0.60 & 0.60 & 0.57 & 0.58 & 0.58 & 0.60 & 0.60 & 0.60 & 0.61 & 0.61 & 0.62 \\
ml & 10k & onehot & 0.61 & 0.61 & 0.61 & 0.60 & 0.60 & 0.60 & 0.58 & 0.59 & 0.59 & 0.59 & 0.59 & 0.59 & 0.61 & 0.61 & 0.61 \\
ml & 50k & onehot & 0.59 & 0.59 & 0.59 & 0.58 & 0.58 & 0.58 & 0.58 & 0.58 & 0.59 & 0.55 & 0.55 & 0.56 & 0.59 & 0.59 & 0.60 \\
nf & 10k & llm & 0.59 & 0.59 & 0.59 & 0.59 & 0.59 & 0.59 & 0.56 & 0.56 & 0.57 & 0.58 & 0.58 & 0.58 & 0.59 & 0.59 & 0.59 \\
nf & 100k & llm & 0.61 & 0.61 & 0.61 & 0.61 & 0.61 & 0.61 & 0.61 & 0.61 & 0.61 & 0.59 & 0.59 & 0.59 & 0.61 & 0.61 & 0.61 \\
nf & 150k & llm & 0.61 & 0.61 & 0.61 & 0.60 & 0.60 & 0.60 & 0.60 & 0.60 & 0.61 & 0.56 & 0.56 & 0.56 & 0.61 & 0.61 & 0.61 \\
nf & 10k & onehot & 0.61 & 0.61 & 0.62 & 0.59 & 0.59 & 0.59 & 0.57 & 0.57 & 0.57 & 0.59 & 0.59 & 0.60 & 0.61 & 0.61 & 0.61 \\
nf & 100k & onehot & 0.62 & 0.62 & 0.62 & 0.61 & 0.61 & 0.61 & 0.61 & 0.61 & 0.61 & 0.59 & 0.59 & 0.59 & 0.62 & 0.62 & 0.62 \\
nf & 150k & onehot & 0.61 & 0.61 & 0.61 & 0.60 & 0.60 & 0.61 & 0.59 & 0.60 & 0.60 & 0.56 & 0.56 & 0.56 & 0.61 & 0.61 & 0.61 \\
sushi & B & default & 0.65 & 0.65 & 0.65 & 0.66 & 0.66 & 0.66 & 0.51 & 0.51 & 0.51 & 0.64 & 0.64 & 0.64 & 0.65 & 0.65 & 0.65 \\
sushi & A & onehot & 0.65 & 0.66 & 0.66 & 0.67 & 0.67 & 0.68 & 0.68 & 0.68 & 0.68 & 0.65 & 0.65 & 0.65 & 0.68 & 0.68 & 0.68 \\
sushi & B & onehot & 0.67 & 0.67 & 0.68 & 0.67 & 0.68 & 0.68 & 0.50 & 0.50 & 0.50 & 0.66 & 0.66 & 0.66 & 0.67 & 0.67 & 0.68 \\
 
\bottomrule
\end{tabular}
\end{adjustbox}
\caption{Prediction accuracy for different model and datasets. The column labeled ``var.'' denotes the variants. For movie datasets, nf (Netflix) and ml (MovieLens) the number denotes the number of ratings a movie needed to be included. For the sushi dataset, the dataset is split into 2 categories. The column labeled ``feat.'' denotes the features fed to the model. Rows marked with llm use the embedding of \texttt{Qwen3-Embedding-0.6B}~\citep{qwen3embedding}.}
\label{tab:res:real}
\end{table}
For real data, we use the data from a sushi preference dataset \citep{kamishima2003nantonac}, the eigen-taste joke dataset \citep{goldberg2001eigentaste}, Netflix prize \citep{netflixprize_data}, and the MovieLens dataset \citep{harper2015movielens}. The sushi dataset is the only ranking dataset. It's A variant is the only dataset where every user rank every alternative. The results are presented in Table~\ref{tab:res:real}. We observe that training with best-of-three observations lead to great improvement over probit models trained with best-of-two observations. The mean effect are strong in these datasets and the improvement over the logit is modest. Table~\ref{tab:nf:cor} lists some movies with strong correlations in the Netflix and MovieLense datasets. In particular we note that we find sequels like Spider-Man (2002) and Spider-Man 2 (2004) or Kill Bill Vol. 1 (2003) and Kill Bill Vol. 2 (2004) to have strong positive correlations while we find a strong negative correlation between more critically acclaimed movies like Memento (2000) and Eternal Sunshine of the Spotless Mind (2004) compared to Hollywood comedy and blockbusters like Perl Harbor (2001) or Cheaper by the Dozen (2003). In Figure~\ref{fig:sushi-probit}, we see that sea urchins are very divisive and that a preference for cucumber sushi negatively correlates with preference for toro (fatty tuna) sushi.

\begin{table}[]
    \centering
    \tiny
    \begin{adjustbox}{width=1.1\textwidth,center=\textwidth}
        \begin{tabular}{llrllr}
             \toprule
             \multicolumn{6}{c}{Netflix} \\\midrule
 & & cor. & & & cor.\\ 
\midrule
Independence Day (1996) & Lost in Translation (2003) & -0.45 & Spider-Man (2002) & Spider-Man 2 (2004) & 0.56 \\
Maid in Manhattan (2002) & The Royal Tenenbaums (2001) & -0.42 & Adaptation (2002) & Lost in Translation (2003) & 0.57 \\
Miss Congeniality (2000) & The Matrix (1999) & -0.41 & Kill Bill: Vol. 1 (2003) & Kill Bill: Vol. 2 (2004) & 0.57 \\
Memento (2000) & Pearl Harbor (2001) & -0.41 & Fight Club (1999) & The Usual Suspects (1995) & 0.59 \\
Double Jeopardy (1999) & Eternal Sunshine of the Spo... & -0.40 & Lord of the Rings: The Retu... & Lord of the Rings: The Two ... & 0.64 \\
Cheaper by the Dozen (2003) & Memento (2000) & -0.40 & Finding Nemo (Widescreen) (... & Shrek (Full-screen) (2001) & 0.65 \\
Independence Day (1996) & The Royal Tenenbaums (2001) & -0.40 & American Beauty (1999) & Being John Malkovich (1999) & 0.66 \\
Anchorman: The Legend of Ro... & Speed (1994) & -0.40 & Lord of the Rings: The Fell... & Lord of the Rings: The Two ... & 0.66 \\
\midrule
             \multicolumn{6}{c}{MovieLens}\\
\midrule
Much Ado About Nothing (1993) & Truth About Cats \& Dogs, Th... & -0.46 & Braveheart (1995) & Little Women (1994) & 0.53 \\
Much Ado About Nothing (1993) & What's Eating Gilbert Grape... & -0.42 & Crow, The (1994) & Muriel's Wedding (1994) & 0.53 \\
Dangerous Minds (1995) & Snow White and the Seven Dw... & -0.42 & Clerks (1994) & Father of the Bride Part II... & 0.54 \\
Mrs. Doubtfire (1993) & Nightmare Before Christmas,... & -0.39 & Johnny Mnemonic (1995) & Robin Hood: Men in Tights (... & 0.55 \\
Little Women (1994) & Seven (a.k.a. Se7en) (1995) & -0.39 & Congo (1995) & Speed (1994) & 0.55 \\
Ace Ventura: When Nature Ca... & Little Women (1994) & -0.39 & City Slickers II: The Legen... & Little Women (1994) & 0.59 \\
Congo (1995) & Get Shorty (1995) & -0.38 & Father of the Bride Part II... & GoldenEye (1995) & 0.61 \\
Johnny Mnemonic (1995) & Much Ado About Nothing (1993) & -0.38 & Hudsucker Proxy, The (1994) & True Romance (1993) & 0.61 \\

\bottomrule
        \end{tabular}
        \end{adjustbox}

    \caption{Left to right, movies with the lowest and highest correlations in a probit learned from the Netflix  (top) and MovieLens (bottom) datasets. A table with more movie pair is produced in Appendix~\ref{sec:impl}.}
    \label{tab:nf:cor}
\end{table}

We finish this section with a welfare maximization experiment. We evaluate the welfare maximizing subset of alternatives given a fixed size, i.e., solve the following equation for some fixed $N$, $\max_{\cR\subseteq \mathcal{C}: |\cR| = N} \mathbb{E}\left[\max_{i\in \cR }X_i\right].$  We use the models from the previous experiments to generate welfare maximizing subsets of size 1 to 3 in the sushi-A dataset. Figure~\ref{fig:optimal_sushi} illustrates what percentage of the test population have their $i$th preferred alternative in the set, e.g., the bar with $x=4$ corresponds to the ratio of people who have at least their third favorite alternative in the set. We acknowledge that this quantity does not directly correlate with welfare as welfare is sensitive to the magnitude of utility change whereas this plot is not. It might not make significant utility difference for someone if their second or third choice are present but it affects this plot. Nonetheless, we see that for subsets of size 2 and 3, more people have their preferred sushi on the menu. The optimal size one menu, for all RUMs, is toro (fatty tuna), the optimal size 2 menu according to the logit is toro (fatty tuna) and maguro (tuna) while our probit suggests uni (sea urchin) and maguro (fatty tuna) instead. Looking at the output of the model in Figure~\ref{fig:sushi-probit} we see that the utility of uni (sea urchin) has high variance and correlates negatively with the utility of toro (fatty tuna) and maguro (tuna). This difference in menu highlights the problem with not modeling correlations properly, a menu of two alternatives that are high utility is not much better than one with only one alternative if they are very correlated. The probit learned from pairwise comparisons instead suggests anago (eel) and toro (fatty tuna) instead.

\begin{figure}
    \centering
    \vspace{-1em}
    \includegraphics[width=\linewidth]{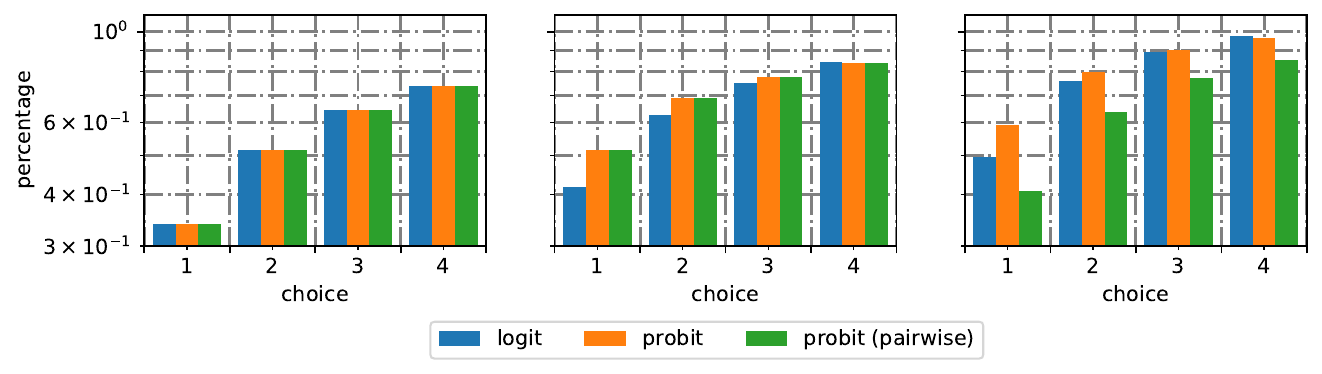}
    \vspace{-2em}
    \caption{Preference of the welfare maximizing choices, left to right for $N=1,2,3$.}
    \vspace{-1em}
    \label{fig:optimal_sushi}
\end{figure}

\section{Conclusions and Future Directions}
\label{sec:conc}

In this paper, we investigated the methodological shortcomings of learning correlated reward models from simple preference data. We established the statistical deficiency of pairwise comparisons for identifying correlation and showed that best-of-three preferences are provably sufficient. Our analysis culminated in a statistically and computationally efficient estimator for this setting, complete with finite-sample guarantees. Ultimately, our findings show that modeling preference correlations is not only feasible but essential for enabling more advanced applications like in-context learning and the strategic sampling of alternatives.

\section*{Acknowledgments}

CD is supported by a Simons Investigator Award, a Simons Collaboration on Algorithmic Fairness, ONR MURI grant N00014-25-1-2116, ONR grant N00014-25-1-2296. GF is supported in part by NSF Award CCF-2443068, ONR grant N00014-25-1-2296, and an AI2050 Early Career Fellowship. SM is supported by the Natural Sciences and Engineering Research Council of Canada (NSERC) PSG-D Fellowship 599271-2025.

\bibliographystyle{ACM-Reference-Format}
\bibliography{refs}

\appendix
\section{Prior Work}
\label{sec:prior_work}

There has been considerable interest in scaling probit estimation, with recent techniques extending the method from ten alternatives \citep[e.g.,][]{natarajan2000monte} to one hundred \citep[e.g.,][]{ding2024computationally}. As we show in the next section these methods cannot actually recover the correlations properly.

The nested logit (NL) relaxes the IIA axiom by grouping the alternatives in the groups and assuming IIA between the elements of the same group or the groups themselves. In the cat, golden retriever, and borzoi example, this can mean that first, the agents choose whether they prefer cats or dots and then choose between gold retrievers and borzois. In this structure, IIA holds between cats and dogs and between golden retrievers and borzois, but not between borzois and cats. In its generality, the NL is described by a tree whose leaves are the alternatives. To choose an alternative, an agent starts at the root and must select one of the current node's children using an IIA model. The agent repeats this process until they reach a leaf (alternative). \citet{ben1973structure} showed that the welfare and choice probabilities of the NL can be calculated in a bottom-up fashion as the value of each node is the log-sum-exp of the values of its children. \citet{benson2016relevance} show that at least $\Omega(n^2)$ statistical tests are required to recover the structure of the NL. 

The mixed logit \citet{mcfadden2000mixed} relaxes the IIA axioms by assuming that there exists an unobserved utility that would validate IIA. Since we do not observe these utilities, we have to marginalize them. The mixed logit is defined as 
$$
\Pr\lbrb{I_\cR=i} =\Pr\lbrb{\forall_j u_i + \epsilon_i +A_i \geq u_i + \epsilon_i +A_j} = \int \softmax_\tau(u + A)_i \dl \Pr\lbrb{A}.
$$
In practice, the integral is evaluated using numerical methods, and the mixed logit is learned using MLE. \citet{mcfadden2000mixed} showed that if they take the limit of $\tau$ to 0, this model converges to a RUM with utility $A$ because the softmax converges to the indicator function, returning one when $A_i \geq A_j$ for all $j$ and zero otherwise. 

Over the years, many choice models have been developed and RUMs are by no means the only class of choice models. We can characterize them by
\begin{enumerate}
    \item representation power, for instance, the logit cannot represent correlated choices,
    \item number of parameters, while the logit has $n=|\C|$ parameters, the probit has $n(n+3)/2$ parameters, and
    \item whether they provide some notion of welfare.
\end{enumerate}

\emph{Representative agents models (RAM)}, are defined as 
$$\Pr\lbrb{I_\cR = i} = \lbrb{\arg\max_{\pi\in\Delta(\cR)} \inp{\pi}{u_\cR} - \Omega_\cR(\pi)}_i,$$
for some utility vector $u$ restricted the set of alternative $\cR$, some strictly convex regularizer $\Omega_\cR$, and the probability simplex over $\cR$ denoted by $\Delta(\cR)$.  RAMs also provide an expected welfare function, but learning a regularizer for all possible sets of alternatives may not be trivial. Taking $\Omega$ to be the negative entropy yields the same model as the logit \citep{anderson1988representative}. Generally, an equivalent RAM exists for every RUM but not vice-versa \citep{hofbauer2002global}. As noted by works like \citet{sorensen2022mcfadden}, working with RAMs may be more convenient for mathematical purposes. We refer to \citet{feng2017relation} for a class halfway between RAMs and RUMs. 

\citet{tversky1972choice} introduces a different class of models, where each agent \emph{splits} the set of alternatives in a desirable and an undesirable subset. While the model has an exponential number of parameters (in terms of the number of alternatives) and requires a different type of data, it is subset-consistent for all possible choice models.

A wealth of work exists on approximations for the probit model starting with  \citet{clark1961greatest}. Clark approximates $\max(X, Y)$ with a normal distribution. We refer to \citet{kamakura1989estimation} and \citet{bunch1991probit} for references to other approximation methods. We note that \citet{bunch1991probit} finds that none of the approximations consistently outperform others, and all have critical failure modes. Compared to simulation-based methods, one of the most significant weaknesses of this type of work is the inability to trade compute for accuracy.

\citet[Chapter 5]{train2009discrete} and \citet{bunch1991estimability} mention several ways of removing the symmetries in estimating probit models and how to verify that the parametrization is ``correct''. Interestingly, they omit our reparametrization. \citet{mcfadden1980econometric} claim that the primary issue with the probit is evaluating and differentiating the choice probabilities.

\paragraph{Notation.} We use $\R$ and $\N$ to denote the set of real and natural numbers respectively. For $n \in \N$, $\R^n$ and $\R^{n \times n}$ denote the set of $n$-dimensional vectors and $n \times n$ dimensional matrices respectively. Furthermore, $\mb{S}^n$ and $\mc{S}^{n \times n}$ correspond to the set of unit vectors and positive semi-definite matrices of dimension $n$. For $n \in \N$, $[n]$ is shorthand for $\{1,\ldots,n\}$. We denote the pdf and cdf of a standard Gaussian as $\phi$ and $\Phi$, respectively. For a matrix $A$, $\Tr(A)$ and $\det(A)$ denote the trace and determinant of $A$. For two vectors $x$ and $y$ in $\R^n$, $\inp{x}{y}$ is their inner product. We use $\tv$ and $\KL$ for the total variation and Kullback–Leibler divergence. Lastly, $\choice(\mu,\Sigma)$, for $\mu\in\R^{n}$ and $\Sigma\in\R^{n\times n}$, refers to the choice model defined by a RUM whose utilities follow a multivariate normal distribution $\gauss(\mu,\Sigma)$ whose mean is $\mu$ and covariance matrix is $\Sigma$.

\section{Estimation with Three Alternatives - Proof of \cref{thm:est_three_items}}
\label{sec:est_three_items}

Here, we extend the arguments of \cref{thm:mu_sigma_ident_three_items} presented in \cref{ssec:ident_three_comp} to the setting of estimation, where one aims to. Recall, our observability assumption from \cref{sec:finite_sample}:
\obserabilityestimation*
The main theorem of the section is stated below:
\begin{restatable}{theorem}{estthreeitems}
    \label{thm:est_three_items}
    Let $(\mu, \Sigma) \in \R^3 \times \mc{S}^{3 \times 3}$ satisfy \cref{as:observability_estimation} for some $\gamma > 0$ and $\eps, \delta \in (0, 1/2]$. Then, there is an algorithm which when given $n$ i.i.d samples from $\choice (\mu, \Sigma)$, returns estimates, $(\wh{\mu}, \wh{\Sigma})$ satisfying for some $t > 0$:
    \begin{gather*}
        \norm{t\mu - \wh{\mu}} \leq \eps \\
        (1 - \eps) t^2 \Sigma \preccurlyeq \wh{\Sigma} \preccurlyeq (1 + \eps) t^2 \Sigma \\
        \Tr (t^2 \Sigma), \Tr (\wh{\Sigma}) \geq \frac{1}{2}
    \end{gather*}
    with probability at least $1 - \delta$ as long as $n \geq C \eps^{-2} \gamma^{-20} \log (1 / \delta) \log^2 (1 / (\gamma\eps))$.
\end{restatable}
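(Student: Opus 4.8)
The plan is to turn the constructive argument behind \cref{thm:mu_sigma_ident_three_items} into an estimator and then run a perturbation analysis, keeping every condition number along the way polynomially bounded in $1/\gamma$ via \cref{as:observability_estimation}. \emph{First}, I would estimate the six ordering probabilities of \cref{fig:cones}: each sample reveals the ordering of $(X_1,X_2,X_3)$ (equivalently, best-of-three plus best-of-two winner frequencies combined through \cref{eq:rank_probs}), so writing $\wh{p}_\pi$ for the empirical frequency of ordering $\pi$, Hoeffding and a union bound over the six orderings give $\max_\pi|\wh{p}_\pi-p_\pi|\le\eta$ with probability $\ge1-\delta$ for $\eta=O(\sqrt{\log(1/\delta)/n})$. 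Every probability used downstream --- the pairwise $\Pr\{X_i\ge X_j\}$ of \cref{lem:rec_ct_proj} and the two-halfspace intersection probabilities feeding \cref{lem:rec_ang_conv} --- is a sum of $O(1)$ of the $\wh{p}_\pi$, since each event $\{\pm c_i^\top V\ge0,\,\pm c_j^\top V\ge0\}$ is a union of slices of \cref{fig:cones}; so all are estimated to additive error $O(\eta)$, and by \cref{as:observability_estimation} each of them lies in $[\gamma,1-\gamma]$, staying safely inside $(0,1)$ once $\eta\le\gamma/2$.

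\emph{Second}, I would recover the geometric quantities $\alpha_i$ and $\alpha_{ij}$. \cref{lem:rec_ct_proj} gives $\alpha_i=\Phi^{-1}(\Pr\{X_i\ge X_j\})$, and since the argument lies in $[\gamma,1-\gamma]$ --- where $\Phi^{-1}$ is Lipschitz with constant $\mathrm{poly}(\gamma^{-1},\log(1/\gamma))$ and takes values of size $O(\sqrt{\log(1/\gamma)})$ --- we get $|\wh{\alpha}_i-\alpha_i|\le\eta\cdot\mathrm{poly}(\gamma^{-1},\log(1/\gamma))$. For $\alpha_{ij}$ the essential new ingredient is a \emph{quantitative} form of \cref{lem:rec_ang_conv}: one must show the map $\alpha_{12}\mapsto\gamma_{12}(\alpha_{12};d_1,d_2)$ is not merely strictly monotone, but has $|\partial\gamma_{12}/\partial\alpha_{12}|$ bounded below by a fixed power of $\gamma$, and is jointly Lipschitz in $(d_1,d_2)$, uniformly over all configurations compatible with \cref{as:observability_estimation} --- in particular $|\alpha_{12}|\le1-\mathrm{poly}(\gamma)$, since $|\alpha_{12}|=1$ would make some slice in \cref{fig:cones} vanish. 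Inverting this map at the estimated inputs (with $\wh{d}_i=|\wh{\alpha}_i|$ and halfspace sign $s_i=\operatorname{sign}(\wh{\alpha}_i)$; the only subtle case $\alpha_i\approx0$ is harmless because then $\wh{d}_i$ and the true $d_i$ are both $O(|\wh{\alpha}_i-\alpha_i|)$ and the derivative lower bound persists near $0$) yields $|\wh{\alpha}_{ij}-\alpha_{ij}|\le\eta\cdot\mathrm{poly}(\gamma^{-1},\log(1/\gamma))$.

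\emph{Third}, I would run the closed-form reconstruction of \cref{eq:wtc_wlog_rep} on the estimated $\wh{\alpha}_i,\wh{\alpha}_{ij}$ and propagate errors. The $2\times2$ system determining $(s,t)$ there has determinant $\alpha_{13}\sqrt{1-\alpha_{12}^2}-\alpha_{12}\sqrt{1-\alpha_{13}^2}$, i.e.\ the sine of the angle between $\wt{c}_2$ and $\wt{c}_3$, which observability keeps $\ge\mathrm{poly}(\gamma)$ in absolute value (a near-singular system would force $\dot{\Sigma}$ so anisotropic that some slice of \cref{fig:cones} loses its mass), and the remaining system uses the fixed invertible matrix $[\,c_1\;c_2\,]$; hence the reconstructed square root $\wh{M}$ (the analogue of $\dot{\Sigma}^{1/2}$) agrees with its ideal infinite-data counterpart up to $\eta\cdot\mathrm{poly}(\gamma^{-1},\log(1/\gamma))$. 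Since the normalization fixes $\wh{M}c_1=e_1$ exactly and $c_1=e_2$, the estimate $\wh{\dot{\Sigma}}:=\wh{M}^\top\wh{M}\succeq0$ has $(\wh{\dot{\Sigma}})_{22}=1$, giving $\Tr(\wh{\Sigma})=\Tr(\wh{\dot{\Sigma}})\ge1\ge1/2$ (and likewise for the ideal output). Setting $\wh{\Sigma}=P^\top\wh{\dot{\Sigma}}P$ and recovering $\wh{\mu}$ from $\wh{M}$ and the $\wh{\alpha}_i$ (which pin $\wt{\mu}$ against the reconstructed basis $\wt{c}_1,\wt{c}_2$) exactly as in \cref{thm:mu_sigma_ident_three_items}, one checks that the infinite-data run of this procedure returns precisely $(t\mu,t^2\Sigma)$ with $t=(c_1^\top\dot{\Sigma} c_1)^{-1/2}$, so the whole claim reduces to a perturbation bound between the finite- and infinite-data outputs. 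Using $\lambda_{\min}(t^2\dot{\Sigma})\ge\mathrm{poly}(\gamma)$ (from the $\mathrm{poly}(\gamma^{-1})$ condition number of $\dot{\Sigma}$ that observability forces), an operator-norm error $O(\eps\cdot\mathrm{poly}(\gamma))$ on $\wh{\dot{\Sigma}}$ upgrades to $(1-\eps)t^2\Sigma\preccurlyeq\wh{\Sigma}\preccurlyeq(1+\eps)t^2\Sigma$ and $\|\wh{\mu}-t\mu\|\le\eps$. Chasing $\eta\cdot\mathrm{poly}(\gamma^{-1},\log(1/\gamma))\le\eps\cdot\mathrm{poly}(\gamma)$ back through $\eta=O(\sqrt{\log(1/\delta)/n})$ gives $n\ge C\eps^{-2}\gamma^{-20}\log(1/\delta)\log^2(1/(\gamma\eps))$, with the exponent $20$ and the $\log^2$ absorbing all the $\gamma^{-O(1)}$ and logarithmic factors from $\Phi^{-1}$, from the inversion of \cref{lem:rec_ang_conv}, from the conditioning of the two linear systems, and from the $\lambda_{\min}$ bound. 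The estimator --- empirical counting, a few $\Phi^{-1}$ evaluations, one univariate monotone root-find, and constant-size linear algebra --- is clearly polynomial time.

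\emph{Main obstacle.} The crux is the quantitative strengthening of \cref{lem:rec_ang_conv}: the identifiability lemma only requires strict monotonicity of a Gaussian two-halfspace mass in the angular parameter, whereas the estimator needs a $\gamma$-quantified lower bound on its derivative together with joint Lipschitzness in $(d_1,d_2)$, and one must verify that \cref{as:observability_estimation} genuinely rules out \emph{every} degeneracy --- $|\alpha_{ij}|\to1$, $\alpha_{12}\to\alpha_{13}$, an ill-conditioned $\dot{\Sigma}$, $\lambda_{\min}(t^2\dot{\Sigma})\to0$ --- that would make the inverse map non-Lipschitz. The unknown-scale bookkeeping (tracking $t$) and the halfspace-sign bookkeeping are routine by comparison.
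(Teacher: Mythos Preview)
Your plan is correct and tracks the paper's own proof closely: empirical concentration via Hoeffding, recovery of the $\alpha_i$ through a Lipschitz bound on $\Phi^{-1}$, a quantitative version of \cref{lem:rec_ang_conv} with an explicit $\mathrm{poly}(\gamma)$ lower bound on the angular derivative, and finally a perturbation analysis of the two linear systems from \cref{eq:wtc_wlog_rep}. You have also correctly singled out the derivative lower bound for the two-halfspace mass as the main technical work; the paper establishes exactly this (obtaining $\gamma'(\theta)\ge\gamma^3/16$ in a neighborhood of $\theta^*$) after first using observability and \cref{lem:max_prob_gaussian} to pin $\theta^*$ away from $\{0,\pi\}$ and to bound $|\wt{\mu}(\theta^*)|$.

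Two minor differences worth noting. First, the paper formulates the angle estimator as an $\argmin$ over $\theta$ of the worst-case discrepancy across all four signed events $\{\pm v_1,\pm v_2\}$, rather than a single monotone root-find; this sidesteps any delicacy around choosing signs when some $\alpha_i$ is near zero, though your remark that this case is harmless is also correct. Second, rather than arguing directly that observability bounds the condition number of $\dot{\Sigma}$ (and hence that both $s$ and $t$ are $\mathrm{poly}(\gamma)$-bounded below, as you do), the paper only proves $\max\{s,t\}\ge1/2$ and then branches on whether $\wh{s}\ge\wh{t}$ or $\wh{t}>\wh{s}$, using $\{c_1,c_2\}$ or $\{c_1,c_3\}$ as the reconstruction basis accordingly. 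Your route is arguably cleaner and avoids the case split; the paper's route avoids proving the condition-number bound explicitly. Both lead to the same $\gamma^{-20}\log^2$ sample complexity.
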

We begin with some useful technical results. The first concerns the maximizer of the minimum of the Gaussian masses of two symmetric cones defined by the intersection of two half-spaces.
\begin{lemma}
    \label{lem:max_prob_gaussian}
    Let $v_1, v_2 \in \mb{S}^{2}$. Then, we have:
    \begin{equation*}
        \argmax_{\mu} \lprp{\min \lbrb{\P_{X \ts \mc{N} (\mu, I)} \lprp{\inp{X}{v_1}, \inp{X}{v_2} \geq 0}, \P_{X \ts \mc{N} (\mu, I)} \lprp{\inp{X}{v_1}, \inp{X}{v_2} \leq 0}}} = \bm{0}.
    \end{equation*}
\end{lemma}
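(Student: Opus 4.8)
The plan is to rewrite the two events as a planar wedge and its reflection through the origin, and then exploit the reflection symmetries of this configuration together with two elementary monotonicity statements. Set $C^{+}:=\{x\in\R^{2}:\langle x,v_{1}\rangle\ge 0,\ \langle x,v_{2}\rangle\ge 0\}$ and $C^{-}:=-C^{+}$, and write $P_{\mu}(A):=\Pr_{X\sim\gauss(\mu,I)}(X\in A)$, so the quantity to maximize is $h(\mu):=\min\{P_{\mu}(C^{+}),P_{\mu}(C^{-})\}$. I will assume $v_{1}\neq\pm v_{2}$; otherwise $C^{+}$ is a half-plane or a line and the claim degenerates (for $v_{1}=v_{2}$ the objective is $\min\{\Phi(t),1-\Phi(t)\}$ with $t=\langle\mu,v_{1}\rangle$, whose maximizer set is the whole line $t=0$). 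Under this assumption $C^{+}$ is a wedge of half-angle $\theta\in(0,\pi/2)$, and after a rotation I may take $C^{+}=\{x:x_{1}\ge 0,\ |x_{2}|\le x_{1}\tan\theta\}$ with $C^{-}$ its reflection through the origin. The coordinate reflections $\rho_{1}:(x_{1},x_{2})\mapsto(x_{1},-x_{2})$ and $\rho_{2}:(x_{1},x_{2})\mapsto(-x_{1},x_{2})$ preserve the standard Gaussian and satisfy $\rho_{1}(C^{\pm})=C^{\pm}$, $\rho_{2}(C^{\pm})=C^{\mp}$; a change of variables then yields $h(\mu_{1},\mu_{2})=h(\mu_{1},-\mu_{2})=h(-\mu_{1},\mu_{2})$, so it suffices to show $h(a,b)<h(\bm 0)$ for every $(a,b)\neq\bm 0$ with $a,b\ge 0$.

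The first ingredient is that $a\mapsto P_{(a,b)}(C^{-})$ is strictly decreasing on $[0,\infty)$: since $C^{-}\subseteq\{x_{1}\le 0\}$ and $\partial_{a}\phi(x_{1}-a)=(x_{1}-a)\phi(x_{1}-a)<0$ for a.e.\ $x\in C^{-}$ whenever $a\ge 0$, differentiating under the integral gives a strictly negative derivative (and symmetrically $a\mapsto P_{(a,b)}(C^{+})$ is strictly increasing). The second ingredient is that $\rho_{2}$ fixes the point $(0,b)$ while swapping $C^{+}$ and $C^{-}$, whence $P_{(0,b)}(C^{+})=P_{(0,b)}(C^{-})$ and so $h(0,b)=P_{(0,b)}(C^{+})$. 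Combining, for $a>0$ I obtain
\[
  h(a,b)\ \le\ P_{(a,b)}(C^{-})\ <\ P_{(0,b)}(C^{-})\ =\ h(0,b).
\]
Thus the problem reduces to showing $h(0,b)=P_{(0,b)}(C^{+})$ is strictly decreasing in $b$ on $[0,\infty)$: granting this, for $a>0$ we get $h(a,b)<h(0,b)\le h(0,0)=h(\bm 0)$, and for $a=0$ and $b>0$ we get $h(0,b)<h(0,0)=h(\bm 0)$, so $\argmax h=\{\bm 0\}$.

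For the last step I would integrate out $x_{1}$:
\[
  P_{(0,b)}(C^{+})=\int_{0}^{\infty}\phi(x_{1})\,\bigl[\Phi(x_{1}\tan\theta-b)-\Phi(-x_{1}\tan\theta-b)\bigr]\,dx_{1},
\]
and note that for each fixed $x_{1}>0$ the bracket equals $\Pr_{Z\sim\gauss(0,1)}(|Z+b|\le x_{1}\tan\theta)$, a symmetric–unimodal window probability whose $b$-derivative $\phi(x_{1}\tan\theta+b)-\phi(x_{1}\tan\theta-b)$ is negative for $b>0$ because $(x_{1}\tan\theta+b)^{2}-(x_{1}\tan\theta-b)^{2}=4bx_{1}\tan\theta>0$ and $\phi$ is strictly decreasing in $|\cdot|$. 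Hence $\partial_{b}P_{(0,b)}(C^{+})<0$ for $b>0$, completing the proof.

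I expect the only real subtlety to be strictness — obtaining $\argmax h=\{\bm 0\}$ rather than merely ``$\bm 0\in\argmax h$.'' The weaker statement is immediate and robust: $P_{\mu}(C^{+})=(\mathbf{1}_{C^{+}}*\phi)(\mu)$ is log-concave in $\mu$ by Prékopa's theorem, hence so is $h$ as a pointwise minimum of log-concave functions, and combined with the evenness $h(-\mu)=h(\mu)$ the midpoint inequality $h(\bm 0)\ge\sqrt{h(\mu)\,h(-\mu)}=h(\mu)$ already shows $\bm 0$ is a maximizer. Upgrading this to uniqueness would require strict log-concavity of $\mu\mapsto P_{\mu}(C^{+})$ (or a real-analyticity argument excluding ``plateaus''), which is exactly what the explicit monotonicity computations above deliver directly; the remaining care is just in separating out the degenerate configurations $v_{1}=\pm v_{2}$.
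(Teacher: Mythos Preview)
Your proof is correct and follows essentially the same route as the paper's: both place the wedge symmetrically about the $x_1$-axis, first argue that moving $\mu_1$ away from zero strictly decreases the smaller of the two cone probabilities (since $C^{-}\subseteq\{x_1\le 0\}$), and then reduce the $\mu_2$-direction to the one-dimensional fact that $\mu\mapsto\int_{-t}^{t}\phi(x-\mu)\,dx$ is uniquely maximized at $\mu=0$. Your treatment is slightly more careful---you separate out the degenerate configurations $v_1=\pm v_2$ (where uniqueness of the maximizer genuinely fails) and add the Pr\'ekopa log-concavity observation as a shortcut for the weaker conclusion $\bm 0\in\argmax h$---but the core argument is identical.
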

\begin{proof}
    Consider any $\mu \neq 0$. Note, that without loss of generality, we may assume:
    \begin{equation*}
        v_1 = 
        \begin{bmatrix}
            \sin (\theta)\\
            - \cos (\theta)
        \end{bmatrix}
        ,\ 
        v_2 = 
        \begin{bmatrix}
            \sin (\theta) \\
            \cos (\theta)
        \end{bmatrix}.
    \end{equation*}
    for some $\theta \in [0, \pi / 2]$. Now, observe:
    \begin{align*}
        \xi_1 &\coloneqq \P_{X \ts \mc{N} (\mu, I)} \lprp{\inp{X}{v_1}, \inp{X}{v_2} \geq 0} = \frac{1}{2\pi} \int_{0}^\infty \int_{-x \tan \theta}^{x \tan \theta} \exp \lbrb{- \frac{(x - \mu_1)^2 + (y - \mu_2)^2}{2}} dy dx \\
        \xi_2 &\coloneqq \P_{X \ts \mc{N} (\mu, I)} \lprp{\inp{X}{v_1}, \inp{X}{v_2} \leq 0} = \frac{1}{2\pi} \int_{0}^\infty \int_{-x \tan \theta}^{x \tan \theta} \exp \lbrb{- \frac{(- x - \mu_1)^2 + (y - \mu_2)^2}{2}} dy dx
    \end{align*}
    When $\mu_1 > 0$, $(x + \mu_1)^2 > x^2$ for all $x \geq 0$ and consequently:
    \begin{align*}
        \xi_2 &\leq \frac{1}{2\pi} \int_{0}^\infty \int_{-x \tan \theta}^{x \tan \theta} \exp \lbrb{- \frac{x^2 + (y - \mu_2)^2}{2}} dy dx.
    \end{align*}
    Similarly for $\mu_1 < 0$ with $\xi_1$ taking the place of $\xi_2$. Therefore, for any $\mu_2$:
    \begin{equation*}
        \argmax_{\mu_1} \min (\xi_1, \xi_2) = 0.
    \end{equation*}
    \begin{restatable}{claim}{onedimgauprobmin}
        \label{clm:one_dim_gau_prob_min}
        We have for any $t > 0$:
        \begin{equation*}
            \argmax_\mu \int_{-t}^t \exp \lbrb{- \frac{(x - \mu)^2}{2}} dx = 0.
        \end{equation*}
    \end{restatable}
    \begin{proof}
        First, define
        \begin{equation*}
            f(\mu) \coloneqq \int_{-t}^t \exp \lbrb{- \frac{(x - \mu)^2}{2}} dx.
        \end{equation*}
        We now have:
        \begin{align*}
            \frac{d}{d\mu} f(\mu) &= \int_{-t}^t (x - \mu) \exp \lbrb{- \frac{(x - \mu)^2}{2}} dx \\
            &= \int_{(t + \mu)^2 / 2}^{(t - \mu)^2 / 2} \exp \lbrb{- u} du = \exp \lbrb{- \frac{(t + \mu)^2}{2}} - \exp \lbrb{- \frac{(t - \mu)^2}{2}}.
        \end{align*}
        Hence, we have:
        \begin{equation*}
            \frac{d}{d\mu} f(\mu) 
            \begin{cases}
                < 0 &\text{if } \mu > 0 \\
                > 0 &\text{if } \mu < 0 \\
                0  &\text{otherwise}
            \end{cases}.
        \end{equation*}
        Hence, $\mu = 0$ is the unique minimizer of $f(\mu)$ concluding the proof of the claim.
    \end{proof}
    From \cref{clm:one_dim_gau_prob_min}, we get for any $\mu_2 \neq 0$:
    \begin{align*}
        \xi_2 &< \frac{1}{2\pi} \int_{0}^\infty \int_{-x\tan \theta}^{x \tan \theta} \exp \lbrb{- \frac{(x^2 + y^2)}{2}} dy dx \\
        &= \P_{X \ts \mc{N} (0, I)} \lbrb{\inp{X}{v_1}, \inp{X}{v_2} \geq 0} = \P_{X \ts \mc{N} (0, I)} \lbrb{\inp{X}{v_1}, \inp{X}{v_2} \leq 0}
    \end{align*}
    concluding the proof of the lemma.
\end{proof}

The following lemma establishes the probabilistic concentration properties required for our result with the parameter $\wt{\eps}$ to be chosen subsequently.

\begin{lemma}
    \label{lem:conc_three_items}
    There exists a constant $C > 0$ such that:
    \begin{gather*}
        \forall v_1, v_2 \in \lbrb{\pm c_i}_{i = 1}^3 \text{ s.t } v_1 \nparallel v_2: \abs*{\frac{1}{n} \sum_{i = 1}^n \bm{1} \lbrb{\inp{X_i}{v_1}, \inp{X_i}{v_2} \geq 0} - \P_{X \ts \mc{N} (\mu, \Sigma)} \lbrb{\inp{X}{v_1}, \inp{X}{v_2} \geq 0}} \leq \wt{\eps} \\
        \forall v \in \lbrb{\pm c_i}_{i = 1}^3: \abs*{\frac{1}{n} \sum_{i = 1}^n \bm{1} \lbrb{\inp{X_i}{v} \geq 0} - \P_{X \ts \mc{N} (\mu, \Sigma)} \lbrb{\inp{X}{v} \geq 0}} \leq \wt{\eps}
    \end{gather*}
    with probability at least $1 - \delta$ if $n \geq C \log (1 / \delta) / \wt{\eps}^2$.
\end{lemma}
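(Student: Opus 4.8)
The plan is to observe that the two displayed families of inequalities range over only a \emph{constant} number of events, so the lemma reduces to a standard Hoeffding bound combined with a union bound.

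First I would enumerate the relevant events. The set $\lbrb{\pm c_i}_{i=1}^3$ has exactly six elements, so there are at most six events of the form $\lbrb{\inp{X}{v} \geq 0}$ and at most $\binom{6}{2} = 15$ events of the form $\lbrb{\inp{X}{v_1} \geq 0,\, \inp{X}{v_2} \geq 0}$ with $v_1 \nparallel v_2$ (in fact fewer, since $c_i$ is parallel to $-c_i$, but an upper bound is all that is needed). Thus there are at most $M \coloneqq 21$ events in total; call them $E_1, \dots, E_M$ and set $p_\ell \coloneqq \P_{X \ts \mc{N}(\mu,\Sigma)}(X \in E_\ell)$. Each of the two quantities whose deviation the lemma controls is then of the form $\abs*{\frac{1}{n}\sum_{i=1}^n \bm{1}\lbrb{X_i \in E_\ell} - p_\ell}$ for some $\ell \in [M]$.

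Next, I would fix $\ell$ and note that the summands $\bm{1}\lbrb{X_i \in E_\ell}$, $i = 1, \dots, n$, are i.i.d., take values in $[0,1]$, and have mean $p_\ell$; by Hoeffding's inequality,
\[
    \P\lprp{\abs*{\frac{1}{n}\sum_{i=1}^n \bm{1}\lbrb{X_i \in E_\ell} - p_\ell} > \wt{\eps}} \leq 2\exp\lprp{-2n\wt{\eps}^2}.
\]
A union bound over $\ell \in [M]$ then bounds the probability that \emph{any} of the $M$ empirical frequencies deviates from its mean by more than $\wt{\eps}$ by $2M\exp(-2n\wt{\eps}^2)$, and choosing the absolute constant $C$ large enough that $n \geq C\log(1/\delta)/\wt{\eps}^2$ forces this to be at most $\delta$. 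There is essentially no obstacle: this is a textbook concentration argument over finitely many events, and the only point to be mindful of is keeping $C$ independent of $\mu, \Sigma, \delta, \wt{\eps}$, which is automatic because $M$ is an absolute constant and Hoeffding's bound is distribution-free for $[0,1]$-valued summands.
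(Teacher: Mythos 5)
Your proposal is correct and matches the paper's argument exactly: the paper also proves this by applying Hoeffding's inequality together with a union bound over the constant number of events in the conclusion (the paper counts $18$; your crude bound of $21$ is equally serviceable). Nothing further is needed.
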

\begin{proof}
    The lemma follows from Hoeffding's inequality (\cref{thm:hoeffding}) along with a union bound over the $18$ events in the conclusion of the lemma.
\end{proof}

The next lemma shows that permutation probabilities remain stable under mild perturbations of the center of a Gaussian.

\begin{restatable}{lemma}{permprobcenterrob}
    \label{lem:permutation_prob_center_robustness}
    Let $\mu \in \R^2$ and $S$ be a set such that:
    \begin{equation*}
        \P_{X \ts \mc{N} (\mu, I)} \lbrb{X \in S} \geq \gamma.
    \end{equation*}
    Then, we have:
    \begin{equation*}
        \forall \mu' \in \mb{B} (\mu, \eps): \abs*{{\P_{X \ts \mc{N} (\mu', I)} \lbrb{X \in S} - \P_{X \ts \mc{N} (\mu, I)} \lbrb{X \in S}}} \leq 4 \eps \sqrt{\log (1 / (\gamma \eps))} \cdot \P_{X \ts \mc{N} (\mu, I)} \lbrb{X \in S}.
    \end{equation*}
\end{restatable}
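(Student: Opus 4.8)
The plan is to express the difference of the two Gaussian masses as a single expectation under $\mc{N}(\mu, I)$ via the likelihood ratio, and then control it by splitting the domain of integration according to the displacement in the direction of the perturbation. Assume $\Delta := \mu' - \mu \neq \bm{0}$ (the claim is trivial otherwise), write $\Delta = \|\Delta\| u$ with $\|u\| = 1$ and $\|\Delta\| \le \eps$, and recall the elementary identity $\phi_{\mu'}(x)/\phi_\mu(x) = \exp(\langle x - \mu, \Delta\rangle - \tfrac12\|\Delta\|^2)$ for the standard Gaussian densities $\phi_\nu$. This gives
\begin{equation*}
\Pr_{X \sim \mc{N}(\mu', I)}[X \in S] - \Pr_{X \sim \mc{N}(\mu, I)}[X \in S] = \mathbb{E}_{X \sim \mc{N}(\mu, I)}\Big[\bm{1}[X \in S]\big(e^{\langle X - \mu, \Delta\rangle - \frac12\|\Delta\|^2} - 1\big)\Big],
\end{equation*}
so it suffices to bound the absolute value of the right-hand side. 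Introduce the one-dimensional marginal $Z := \langle u, X - \mu\rangle \sim \mc{N}(0,1)$, so $\langle X - \mu, \Delta\rangle = \|\Delta\| Z$, fix the threshold $T := \sqrt{2\log(1/(\gamma\eps))}$ (a positive real since $\gamma\eps < 1$), and split $S$ into $S \cap \{|Z| \le T\}$ and $S \cap \{|Z| > T\}$.

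On the bulk event $\{|Z| \le T\}$ the exponent is at most $\eps T + \tfrac12\eps^2$ in absolute value; using $|e^t - 1| \le |t|e^{|t|}$ (and the mild fact that this quantity is bounded by a constant in the regime of interest) the integrand is pointwise $O(\eps T) = O(\eps\sqrt{\log(1/(\gamma\eps))})$, and integrating against $\bm{1}[X \in S]$ contributes a further factor $\Pr_{\mc N(\mu, I)}[X\in S]$. On the tail event $\{|Z| > T\}$ I treat the two terms separately: the ``$-1$'' part contributes at most $\Pr[|Z| > T] \le 2e^{-T^2/2} = 2\gamma\eps$, while for the exponential part I bound $e^{\|\Delta\|Z - \frac12\|\Delta\|^2} \le e^{\|\Delta\| Z}$, use $e^{\|\Delta\|Z}\le 1$ on $\{Z < -T\}$, and on $\{Z > T\}$ use the Gaussian moment identity $\mathbb{E}[\bm{1}[Z > T]e^{\|\Delta\|Z}] = e^{\frac12\|\Delta\|^2}\Pr[Z > T - \|\Delta\|] \le e^{\eps T} e^{-T^2/2}$; for $\eps$ small enough that $\eps T = O(1)$ this is again $O(\gamma\eps)$. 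Since $\Pr_{\mc N(\mu,I)}[X\in S] \ge \gamma$, every bound of the shape $c\,\gamma\eps$ is at most $c\,\eps\,\Pr_{\mc N(\mu, I)}[X \in S]$, and summing all contributions yields the stated inequality; the precise constant $4$ comes out by tuning the elementary numeric inequalities (e.g. separating $t \ge 0$ from $t < 0$ in $|e^t-1|$, so the bulk contributes $\le \tfrac43\eps\sqrt{\log(1/(\gamma\eps))}\Pr[S]$ with the tail terms absorbing the rest).

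The main obstacle is the tail region: there the likelihood ratio $e^{\langle X - \mu, \Delta\rangle}$ is unbounded, so no pointwise comparison is available and one must instead exploit Gaussian tail decay, and — crucially — convert an \emph{absolute} tail bound $e^{-T^2/2}$ into a bound proportional to $\Pr_{\mc N(\mu,I)}[X \in S]$ using only the hypothesis $\Pr_{\mc N(\mu,I)}[X \in S] \ge \gamma$. Balancing $e^{-T^2/2}$ against $\gamma\eps$ is exactly what forces $T = \Theta(\sqrt{\log(1/(\gamma\eps))})$, hence the $\sqrt{\log(1/(\gamma\eps))}$ factor in the statement. A secondary technical point is that the argument requires $\eps$ to be small relative to $\gamma$ (so that $e^{\eps T}$ and $e^{\eps T + \frac12\eps^2}$ stay bounded), which is the regime in which the lemma is applied.
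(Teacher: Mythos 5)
Your proposal is correct and follows essentially the same strategy as the paper's proof: split the domain into a bulk region (where the likelihood ratio $e^{\langle x-\mu,\Delta\rangle - \|\Delta\|^2/2}$ is within a factor $1+O(\eps\sqrt{\log(1/(\gamma\eps))})$ of one) and a Gaussian tail of mass at most $O(\gamma\eps)$, then convert the additive tail error into a relative one via $\P[X\in S]\geq\gamma$. The only cosmetic difference is that you truncate along the one-dimensional marginal in the direction of the shift while the paper truncates to a ball $\mb{B}(0,r)$ with $r=2(\sqrt{\log(1/(\gamma\eps))}+\eps)$; both require the same implicit smallness of $\eps\sqrt{\log(1/(\gamma\eps))}$ to absorb the $e^{\eps r}$-type factors into the stated constant.
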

\begin{proof}
    Let $\mu = 0$ without loss of generality and defining $r = 2(\sqrt{\log (1 / (\gamma \eps))} + \eps)$, we have:
    \begin{align*}
        \P \lbrb{X \in S} &= \frac{1}{2\pi} \int_S \exp \lbrb{- \frac{\norm{x - \mu'}^2}{2}} dx \\
        &= \frac{1}{2\pi} \lprp{\int_{S \cap \mb{B} (0, r)} \exp \lbrb{- \frac{\norm{x - \mu'}^2}{2}} dx + \int_{S \setminus \mb{B} (0, r)} \exp \lbrb{- \frac{\norm{x - \mu'}^2}{2}} dx} \\
        &\leq \frac{1}{2\pi} \int_{S \cap \mb{B} (0, r)} \exp \lbrb{- \frac{\norm{x - \mu'}^2}{2}} dx + \P_{G \ts \mc{N} (\mu', I)} \lbrb{\norm{G} \geq r} \\
        &\leq \frac{1}{2\pi} \int_{S \cap \mb{B} (0, r)} \exp \lbrb{- \frac{\norm{x - \mu'}^2}{2}} dx + \P_{G \ts \mc{N} (0, I)} \lbrb{\norm{G} \geq r - \eps} \\
        &\leq \frac{1}{2\pi} \int_{S \cap \mb{B} (0, r)} \exp \lbrb{- \frac{\norm{x}^2 + \norm{\mu}^2 -2 \inp{x}{\mu'}}{2}} dx + \P_{G \ts \mc{N} (0, I)} \lbrb{\norm{G} \geq r - \eps} \\
        &\leq \frac{1}{2\pi} \int_{S \cap \mb{B} (0, r)} \exp \lbrb{- \frac{\norm{x}^2 -2 \inp{x}{\mu'}}{2}} dx + \gamma \eps \\
        &\leq \frac{1}{2\pi} \int_{S \cap \mb{B} (0, r)} \exp\lbrb{\eps r} \exp \lbrb{- \frac{\norm{x}^2}{2}} dx + \gamma \eps \\
        &\leq \exp \lbrb{\eps r} \P_{X \ts \mc{N} (0, I)} \lbrb{X \in S} + \gamma \eps \leq (\exp (\eps r) + \eps) \cdot \P_{X \ts \mc{N} (0, I)} \lbrb{X \in S}.
         \end{align*}
    For the lower bound, we have:
    \begin{align*}
        \P \lbrb{X \in S} &= \frac{1}{2\pi} \int_S \exp \lbrb{- \frac{\norm{x - \mu'}^2}{2}} dx \\
        &= \frac{1}{2\pi} \lprp{\int_{S \cap \mb{B} (0, r)} \exp \lbrb{- \frac{\norm{x - \mu'}^2}{2}} dx + \int_{S \setminus \mb{B} (0, r)} \exp \lbrb{- \frac{\norm{x - \mu'}^2}{2}} dx} \\
        &\geq \frac{1}{2\pi} \int_{S \cap \mb{B} (0, r)} \exp \lbrb{- \frac{\norm{x - \mu'}^2}{2}} dx \\
        &= \frac{1}{2\pi} \int_{S \cap \mb{B} (0, r)} \exp \lbrb{- \frac{\norm{x}^2 + \norm{\mu}^2 -2 \inp{x}{\mu'}}{2}} dx \\
        &\geq \frac{1}{2\pi} \int_{S \cap \mb{B} (0, r)} \exp\lbrb{-\eps (\eps + r)} \exp \lbrb{- \frac{\norm{x}^2}{2}} dx\\
        &= \frac{\exp \lbrb{- \eps (\eps + r)}}{2\pi} \lprp{\int_S \exp \lbrb{- \frac{\norm{x}^2}{2}} dx - \int_{S \setminus \mb{B} (0, r)} \exp \lbrb{- \frac{\norm{x}^2}{2}} dx} \\
        &\geq \frac{\exp \lbrb{- \eps (\eps + r)}}{2\pi} \lprp{\int_S \exp \lbrb{- \frac{\norm{x}^2}{2}} dx - \int_{\R^d \setminus \mb{B} (0, r)} \exp \lbrb{- \frac{\norm{x}^2}{2}} dx} \\
        &= \exp \lbrb{- \eps (\eps + r)} \lprp{\P_{X \ts \mc{N} (0, I)} \lbrb{X \in S} - \P_{X \ts \mc{N} (0, I)} \lbrb{\norm{X} \geq r}} \\
        &\geq (1 - \eps (\eps +r)) (1 - \eps) \P_{X \ts \mc{N} (0, I)} \lbrb{X \in S} \\
        &\geq (1 - 2 \eps (\eps + r)) \P_{X \ts \mc{N} (0, I)} \lbrb{X \in S}
    \end{align*}
    which concludes the proof of the result.
\end{proof}

We also require the upper bound on the Gaussian tail.
\begin{lemma}
    \label{lem:gau_tail}
    For all $t \geq 0$, we have:
    \begin{equation*}
        \P_{X \ts \mc{N} (0, 1)} \lbrb{X \geq t} \leq \frac{1}{2} \exp \lbrb{- \frac{t^2}{2}}.
    \end{equation*}
\end{lemma}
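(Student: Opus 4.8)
The plan is to prove this elementary Gaussian tail bound by a direct monotonicity argument rather than a Chernoff-type estimate, since the claimed inequality is tight at $t = 0$ (both sides equal $1/2$) and the naive exponential-moment bound loses a constant factor in the small-$t$ regime.

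Define the gap function
\[
    g(t) \coloneqq \frac{1}{2} e^{-t^2/2} - \Pr_{X \sim \mathcal{N}(0,1)}[X \geq t], \qquad t \geq 0,
\]
and the goal becomes showing $g(t) \geq 0$ on $[0,\infty)$. First I would record the two endpoint values: $g(0) = \tfrac12 - \tfrac12 = 0$, and $\lim_{t\to\infty} g(t) = 0$ since both terms vanish. Next, differentiating and using that the density of a standard Gaussian is $\phi(t) = \tfrac{1}{\sqrt{2\pi}} e^{-t^2/2}$, one gets
\[
    g'(t) = -\frac{t}{2} e^{-t^2/2} + \frac{1}{\sqrt{2\pi}} e^{-t^2/2} = e^{-t^2/2}\left( \frac{1}{\sqrt{2\pi}} - \frac{t}{2} \right).
\]
Hence $g'(t) > 0$ for $t < t_0 \coloneqq \sqrt{2/\pi}$ and $g'(t) < 0$ for $t > t_0$, so $g$ is unimodal: strictly increasing on $[0, t_0]$ and strictly decreasing on $[t_0, \infty)$.

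The conclusion then follows by splitting into the two regions. On $[0, t_0]$, $g$ is increasing with $g(0) = 0$, so $g(t) \geq 0$ there. On $[t_0, \infty)$, $g$ is decreasing with $\lim_{t\to\infty} g(t) = 0$, so $g(t) \geq 0$ there as well. Combining the two ranges gives $g(t) \geq 0$ for all $t \geq 0$, which is exactly the stated inequality.

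There is essentially no real obstacle here; the only subtlety worth flagging is why one cannot simply integrate a pointwise exponential bound: from $e^{-x^2/2} \leq e^{-t^2/2 - t(x-t)}$ (equivalent to $(x-t)^2 \geq 0$) one only obtains $\Pr[X \geq t] \leq \tfrac{1}{\sqrt{2\pi}\,t} e^{-t^2/2}$, which is valid merely for $t \geq \sqrt{2/\pi}$ and blows up as $t \to 0$. The unimodality argument above is precisely what patches the small-$t$ regime and yields the clean constant $1/2$.
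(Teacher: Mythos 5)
Your argument is correct: the derivative computation $g'(t) = e^{-t^2/2}\bigl(\tfrac{1}{\sqrt{2\pi}} - \tfrac{t}{2}\bigr)$ is right, and the unimodality of $g$ together with $g(0)=0$ and $g(t)\to 0$ as $t\to\infty$ does force $g\ge 0$ on all of $[0,\infty)$. The paper states this lemma without proof (it is a classical bound), so there is nothing to compare against; your monotonicity argument is a clean, self-contained way to get the sharp constant $1/2$, and your side remark about why the naive Chernoff-style pointwise bound only covers $t\ge\sqrt{2/\pi}$ is accurate.
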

We will now adapt the three steps outlined in \cref{sec:ident} for estimation. These proofs are similar to those for identifiability but are substantially more technical. We will adopt a more streamlined notation for this proof. Recall that \cref{sec:ident} required two projections, the first to reduce the dimensionality to $2$ (\cref{fig:cones}) and another to reduce the $2$-dimensional Gaussian to the isotropic setting. Here, we will instead work with a single transformation. We start by defining:
\begin{equation*}
    c_1 \coloneqq 
    \begin{bmatrix}
        1 \\
        -1 \\
        0
    \end{bmatrix},\ 
    c_2 \coloneqq 
    \begin{bmatrix}
        0 \\
        1 \\
        -1
    \end{bmatrix},\ 
    c_3 \coloneqq 
    \begin{bmatrix}
        1 \\
        0 \\
        -1
    \end{bmatrix}.
\end{equation*}
For the utility vector $x \thicksim \mc{N} (\mu, \Sigma)$, we observe the permutation $u_1 \succ u_2 \succ u_3$ when $\inp{c_1}{x} > 0$ and $\inp{c_2}{x} > 0$. Similar conditions may be derived for all other permutations. Note that $c_1, c_2$ and $c_3$ are linearly dependent $c_1 + c_2 = c_3$, and any two form a basis for the subspace orthogonal to $\bm{1}$. 

Consider the linear transformations $\Sigma^{-1/2} \in \R^{2\times3}$ and $\Sigma^{1/2} \in \R^{2\times3}$ where
\begin{gather}
    \Sigma^{-1/2} \Sigma (\Sigma^{-1/2})^\top = \Id(2),\label{eq:sig_inv}\\
    \Sigma^{-1/2}(\Sigma^{1/2})^\top=\Id(2) \\
    \Sigma^{1/2} c_1 = 
    t_1 
    \begin{bmatrix}
        1 \\
        0
    \end{bmatrix}, \text{ and} \label{eq:sig_inv_rot}\\
        \Sigma^{1/2} c_2 = 
    \begin{bmatrix}
        \star \\
        t_2
    \end{bmatrix}
    \text{ where } t_2 \geq 0. \label{eq:sig_inv_mir}
\end{gather}
Since we assume that $\Sigma$ has rank 2, (\ref{eq:sig_inv}) must have a solution. Since the left-hand side of (\ref{eq:sig_inv}) is invariant to rotations and reflections, without loss of generality, (\ref{eq:sig_inv_rot}) and (\ref{eq:sig_inv_mir}) can be satisfied. Observe that this implies a similar property for $c_3$ as $c_1 + c_2 = c_3$. 

Let $\wt{\mu} \coloneqq \Sigma^{-1/2} \mu$ and consider the transformed vectors:
\begin{equation*}
    \wt{c}_1 = \frac{\Sigma^{1/2} c_1}{\norm{\Sigma^{1/2} c_1}},\ \wt{c}_2 = \frac{\Sigma^{1/2} c_2}{\norm{\Sigma^{1/2} c_2}},\ \text{ and } \wt{c}_3 = \frac{\Sigma^{1/2} c_3}{\norm{\Sigma^{1/2} c_3}}.
\end{equation*}
We now show that the quantities $\inp{\wt{c}_i}{\wt{\mu}}$ are estimable under the conclusion of \cref{lem:conc_three_items}. This extends the identification result from \cref{lem:rec_ct_proj}.
\begin{lemma}
    \label{lem:est_alpha_i}
    Defining the estimates:
    \begin{equation*}
        \wh{\alpha_i} \coloneqq \Phi^{-1} \lbrb{\sum_{j = 1}^n \bm{1} \lbrb{\inp{X_j}{c_i} \geq 0}} \text{ and } \alpha_i \coloneqq \inp{\wt{c}_i}{\wt{\mu}},
    \end{equation*}
    we have:
    \begin{gather*}
        \abs{\wh{\alpha_i} - \alpha_i} \leq \sqrt{2\pi} \cdot \frac{\wt{\eps}}{\gamma} \\
        -\sqrt{2 \log (1 / \gamma)} \leq \wh{\alpha_i}, \alpha_i \leq \sqrt{2 \log (1 / \gamma)}.
    \end{gather*}
\end{lemma}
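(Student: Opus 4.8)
\textbf{Proof proposal for \cref{lem:est_alpha_i}.}

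The plan is to mirror the argument of \cref{lem:rec_ct_proj} but track the error introduced by estimating the halfspace probability from $n$ samples rather than having it exactly. First I would establish the population identity: exactly as in \cref{lem:rec_ct_proj}, using $\langle c_i, x\rangle = \langle \wt c_i, \Sigma^{-1/2} x\rangle \cdot \|\Sigma^{1/2} c_i\|$ (the scaling is positive so it does not affect the sign) and $\Sigma^{-1/2} X \sim \mc N(\wt\mu, I)$, we get $\P\{\langle c_i, X\rangle \geq 0\} = \Phi(\langle \wt c_i, \wt\mu\rangle) = \Phi(\alpha_i)$. Hence $\alpha_i = \Phi^{-1}\big(\P\{\langle c_i, X\rangle \geq 0\}\big)$, while $\wh\alpha_i = \Phi^{-1}\big(\frac1n\sum_j \bm 1\{\langle X_j, c_i\rangle \geq 0\}\big)$ is the plug-in version with the empirical frequency in place of the true probability.

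Next I would control $|\wh\alpha_i - \alpha_i|$ via the mean value theorem applied to $\Phi^{-1}$. By \cref{lem:conc_three_items} (second display, which covers exactly the events $\{\langle X, v\rangle \geq 0\}$ for $v \in \{\pm c_i\}$), the empirical frequency $\wh p_i$ is within $\wt\eps$ of $p_i := \P\{\langle c_i, X\rangle \geq 0\}$. We have $\frac{d}{dp}\Phi^{-1}(p) = 1/\phi(\Phi^{-1}(p))$, so $|\wh\alpha_i - \alpha_i| \leq \wt\eps \cdot \sup_{q} 1/\phi(\Phi^{-1}(q))$ where the sup is over $q$ between $\wh p_i$ and $p_i$. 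To bound this supremum I need a lower bound on $\phi(\Phi^{-1}(q))$ over the relevant range, which is where \cref{as:observability_estimation} enters: since every rank-$3$ permutation has probability at least $\gamma$, and the event $\{\langle c_i, X\rangle \geq 0\}$ (e.g.\ $u_1 \succ u_2$) is a union of such permutation cells while its complement $\{\langle c_i, X\rangle \leq 0\}$ also contains such a cell, we get $\gamma \leq p_i \leq 1 - \gamma$, and likewise for $\wh p_i$ after possibly shrinking $\wt\eps$ relative to $\gamma$ (or simply noting $q \in [\gamma - \wt\eps, 1-\gamma+\wt\eps]$). On $[\gamma, 1-\gamma]$ one has $|\Phi^{-1}(q)| \leq \sqrt{2\log(1/\gamma)}$ by \cref{lem:gau_tail} (inverting the tail bound $\Phi(-t) \leq \frac12 e^{-t^2/2}$), and consequently $\phi(\Phi^{-1}(q)) = \frac{1}{\sqrt{2\pi}} e^{-\Phi^{-1}(q)^2/2} \geq \frac{1}{\sqrt{2\pi}} \cdot \gamma$ — wait, more carefully, $e^{-\Phi^{-1}(q)^2/2} \geq e^{-\log(1/\gamma)} = \gamma$, giving $\phi(\Phi^{-1}(q)) \geq \gamma/\sqrt{2\pi}$. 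Plugging in yields $|\wh\alpha_i - \alpha_i| \leq \sqrt{2\pi}\,\wt\eps/\gamma$, as claimed. The range bound $|\wh\alpha_i|, |\alpha_i| \leq \sqrt{2\log(1/\gamma)}$ then follows directly from $\wh p_i, p_i \in [\gamma, 1-\gamma]$ and the same inversion of the Gaussian tail.

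The main obstacle is purely the bookkeeping around the denominator $\phi(\Phi^{-1}(q))$: one must argue that $q$ stays bounded away from $0$ and $1$ uniformly — this is exactly the role of \cref{as:observability_estimation}, and the cleanest route is to first show $p_i \in [2\gamma, 1-2\gamma]$ (since $\{u_1\succ u_2\}$ and $\{u_2 \succ u_1\}$ each decompose into at least one permutation cell of mass $\geq \gamma$, and in fact two, though one suffices), then ensure $\wt\eps \leq \gamma$ so the empirical frequency inherits the range $[\gamma, 1-\gamma]$, and only afterwards apply the mean value theorem on this safe interval. A secondary subtlety is that $\Phi^{-1}$ is not defined at $0$ or $1$, so the empirical frequency being bounded away from the endpoints is not just convenient but necessary for $\wh\alpha_i$ to be well-defined; this is subsumed by the same argument. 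Everything else is a one-line derivative computation plus the tail bound already supplied by \cref{lem:gau_tail}.
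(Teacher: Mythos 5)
Your proposal is correct and follows essentially the same route as the paper: both establish $p_i=\Phi(\alpha_i)$ via \cref{lem:rec_ct_proj}, use \cref{as:observability_estimation} plus \cref{lem:conc_three_items} to keep the true and empirical frequencies in $[\gamma/2,\,1-\gamma/2]$, invert the tail bound of \cref{lem:gau_tail} to get $|\alpha_i|,|\wh\alpha_i|\le\sqrt{2\log(1/\gamma)}$, and then convert the $\wt\eps$-closeness of the probabilities into closeness of the $\Phi^{-1}$-values using the lower bound $\phi(x)\ge\gamma/\sqrt{2\pi}$ on that range. The only cosmetic difference is that you apply the mean value theorem to $\Phi^{-1}$ while the paper lower-bounds the integral $\int_{\alpha_i}^{\wh\alpha_i}\phi$, which is the same estimate written in the other direction.
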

\begin{proof}
    Recall from \cref{lem:rec_ct_proj}:
    \begin{equation*}
        \alpha_i = \Phi^{-1} (\P_{X \ts \mc{N} (\mu, \Sigma)} \lbrb{\inp{X}{c_i} \geq 0}).
    \end{equation*}
    Furthermore, we have from \cref{as:observability_estimation,lem:conc_three_items}:
    \begin{equation*}
        \frac{\gamma}{2} \leq \frac{1}{n} \sum_{j = 1}^n \bm{1} \lbrb{\inp{X_j}{c_i} \geq 0} \leq 1 - \frac{\gamma}{2}. 
    \end{equation*}
    Therefore, we get from \cref{lem:gau_tail}:
    \begin{equation*}
        -\sqrt{2 \log (1 / \gamma)} \leq \wh{\alpha_i}, \alpha_i \leq \sqrt{2 \log (1 / \gamma)}.
    \end{equation*}
    Using this fact, we obtain:
    \begin{align*}
        \wt{\eps} &\geq \abs{\Phi (\alpha_i) - \Phi (\wh{\alpha_i})} = \abs*{\int_{\alpha_i}^{\wh{\alpha}_i} \phi (x) dx} \geq \frac{1}{\sqrt{2 \pi}} \abs{\alpha_i - \wh{\alpha_i}} \cdot \min_{x \in [\alpha_i, \wh{\alpha_i}]} \exp \lbrb{- \frac{x^2}{2}} \geq \frac{1}{\sqrt{2\pi}} \abs{\alpha_i - \wh{\alpha}_i} \gamma.
    \end{align*}
    By re-arranging the above, we get:
    \begin{equation*}
        \abs{\alpha_i - \wh{\alpha_i}} \leq \sqrt{2\pi} \cdot \frac{\wt{\eps}}{\gamma}
    \end{equation*}
    concluding the proof of the lemma.
\end{proof}
Next, we adapt the second step of our identification argument (\cref{lem:rec_ang_conv}) to estimation. As before, this step is substantially more involved to account for the statistical noise. 
\begin{lemma}
    \label{lem:est_angle}
    Let $\mu \in \R^2$ and $v_1, v_2$ be two linearly independent unit vectors. Suppose, further that estimates $\wh{d}_1 \geq \wh{d}_2 \geq 0$ are known estimates such that:
    \begin{equation*}
        \abs{\wh{d}_1 - \inp{v_1}{\mu}} \leq \eps \text{ and } \abs{\wh{d}_2 - \inp{v_2}{\mu}} \leq \eps.
    \end{equation*}
    Furthermore, assume that the following hold:
    \begin{equation*}
        \forall u_1 \in \{\pm v_1\}, u_2 \in \{\pm v_2\}: \P_{X \ts \mc{N} (\mu, I)} \lbrb{\inp{X}{u_1}, \inp{X}{u_2} \geq 0} \geq \gamma
    \end{equation*}
    and we have access to estimates, $\lbrb{\wh{\gamma}_{s_1, s_2}}_{s_1 \in \{\pm v_1\}, s_2 \in \{\pm v_2\}}$ with:
    \begin{equation*}
        \abs*{\wh{\gamma}_{s_1, s_2} - \P_{X \ts \mc{N} (\mu, I)} \lbrb{\inp{X}{s_1}, \inp{X}{s_2} \geq 0}} \leq \eps.
    \end{equation*}
    Then, we may recover estimate $\wh{\beta}$ such that:
    \begin{equation*}
        \abs{\wh{\beta} - \inp{v_1}{v_2}} \leq C \frac{\eps}{\gamma^4} \sqrt{\log (1 / (\gamma \eps))} 
    \end{equation*}
    from $\wh{d}_1$, $\wh{d}_2$, and $\{\wh{\gamma}_{s_1, s_2}\}$.
\end{lemma}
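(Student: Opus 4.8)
Write $\beta:=\langle v_1,v_2\rangle$ and $d_i:=\langle v_i,\mu\rangle$, and note that $Y:=(\langle X,v_1\rangle,\langle X,v_2\rangle)\sim\mathcal N((d_1,d_2),G(\beta))$ with $G(b):=\begin{pmatrix}1&b\\ b&1\end{pmatrix}$. For signs $(s_1,s_2)\in\{\pm1\}^2$ the quantity $P_{s_1s_2}(\delta_1,\delta_2,b):=\Pr_{Z\sim\mathcal N((\delta_1,\delta_2),G(b))}[s_1Z_1\ge0,\ s_2Z_2\ge0]$ is a standard bivariate-normal orthant probability — a smooth, numerically evaluable function of $(\delta_1,\delta_2,b)$ — and the hypotheses hand us $\widehat d_i$ within $\eps$ of $d_i$, $\widehat\gamma_{s_1s_2}$ within $\eps$ of $P_{s_1s_2}(d_1,d_2,\beta)$, with $P_{s_1s_2}(d_1,d_2,\beta)\ge\gamma$ for every sign pattern. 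My plan is to \emph{invert}: fix one orthant, say $(+,+)$; by the classical Plackett identity $\partial_b P_{++}(\delta_1,\delta_2,b)=f_{G(b)}(\delta_1,\delta_2)>0$ (the centered density), so $b\mapsto P_{++}(\widehat d_1,\widehat d_2,b)$ is a strictly increasing bijection of $(-1,1)$, and I take $\widehat\beta$ to be the (unique, clamped if needed) solution of $P_{++}(\widehat d_1,\widehat d_2,\widehat\beta)=\widehat\gamma_{++}$. What remains is to convert the $O(\eps)$ input errors into the claimed bound, which needs (i) a lower bound on $|\partial_b P_{++}|$ near $\beta$ and (ii) enough a priori control on $(d_1,d_2,\beta)$ to make that bound a genuine polynomial in $\gamma$.

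\textbf{A priori bounds.}
Three consequences of the observability hypothesis do the work. (a) Since $\Pr[\langle X,v_i\rangle\ge0]=\Phi(d_i)$ equals the sum of two of the orthant probabilities, it lies in $[2\gamma,1-2\gamma]$, and \cref{lem:gau_tail} gives $|d_i|\le\sqrt{2\log(1/\gamma)}$, hence $|\widehat d_i|\le\sqrt{2\log(1/\gamma)}+\eps$. (b) The two ``thin'' wedges — those of opening angle $\theta:=\arccos|\beta|\le\pi/2$ — have mass $\theta/(2\pi)$ when $\mu=0$, and \cref{lem:max_prob_gaussian} says $\mu=0$ maximizes the smaller of two symmetric wedge masses; since that smaller mass is $\ge\gamma$ we get $\theta\ge2\pi\gamma$, hence $1-|\beta|\ge1-\cos(2\pi\gamma)\gtrsim\gamma^2$. (c) Writing $\|\mu\|^2$ as the sum of the squared overlap of $\mu$ with the thin-wedge bisector and with the perpendicular direction (proportional to $v_1-v_2$ or $v_1+v_2$, whichever has norm $\ge\sqrt2$, so that this overlap is a bounded linear combination of $d_1,d_2$), a wedge-mass estimate of exactly the kind computed in the proof of \cref{lem:max_prob_gaussian} — a wedge of angle $\theta$ whose bisector has signed overlap $r$ with $\mu$ has mass at most $\tfrac{\tan(\theta/2)}{\pi}e^{-r^2/2}\le\tfrac1\pi e^{-r^2/2}$ — forces the first overlap to be $\le\sqrt{2\log(1/\gamma)}$ in absolute value, and the second is $\lesssim\sqrt{\log(1/\gamma)}$ by (a); so $\|\mu\|^2\le C\log(1/\gamma)$. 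The naive route, $\|\mu\|^2=(d_1,d_2)G(\beta)^{-1}(d_1,d_2)^\top\le\|(d_1,d_2)\|^2/(1-|\beta|)\lesssim\log(1/\gamma)/\gamma^2$, is hopelessly lossy; one genuinely has to use the thin-wedge geometry to pin $\mu$ down.

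\textbf{Quantitative inversion.}
The map $(\delta_1,\delta_2)\mapsto P_{++}(\delta_1,\delta_2,b)$ has gradient whose $i$-th entry is $\phi(\delta_i)\cdot\Pr[\text{other coordinate}\ge0\mid\text{$i$-th coordinate}=0]\in[0,1]$, so replacing $d$ by $\widehat d$ moves the orthant probability by $O(\eps)$ and therefore $|P_{++}(\widehat d,\beta)-\widehat\gamma_{++}|\le O(\eps)$ (alternatively one routes this through \cref{lem:permutation_prob_center_robustness}, paying a $\sqrt{\log(1/(\gamma\eps))}$ factor). For the derivative, I would use $\delta^\top G(b)^{-1}\delta=\delta_2^2+(\delta_1-b\,\delta_2)^2/(1-b^2)$ and the identity $(d_1-\beta d_2)^2/(1-\beta^2)=\|\mu\|^2-d_2^2$ at the true parameters to show that for all $b$ in a $\gamma^{O(1)}$-neighborhood of $\beta$ — where $1-b^2\gtrsim\gamma^2$ and the displacements $|b-\beta|,\|\widehat d-d\|$ are negligible against $\sqrt{1-\beta^2}$ — one has $\widehat d^\top G(b)^{-1}\widehat d\le\|\mu\|^2+o(\log(1/\gamma))\le C\log(1/\gamma)$, whence
\[
  \bigl|\partial_b P_{++}(\widehat d,b)\bigr|=\frac{1}{2\pi\sqrt{1-b^2}}\exp\!\Bigl(-\tfrac12\widehat d^\top G(b)^{-1}\widehat d\Bigr)\gtrsim\gamma^{O(1)}
\]
there. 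A short bootstrap — the strictly monotone $P_{++}(\widehat d,\cdot)$ varies by more than $O(\eps)$ across that neighborhood once $\eps$ is polynomially small in $\gamma$ — shows $\widehat\beta$ stays inside it, so $|\widehat\beta-\beta|\le O(\eps)/\gamma^{O(1)}$, and a careful accounting of the constants gives the stated $|\widehat\beta-\langle v_1,v_2\rangle|\le C\eps\,\gamma^{-4}\sqrt{\log(1/(\gamma\eps))}$.

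\textbf{Main obstacle.}
The crux is a priori bound (c), $\|\mu\|^2=O(\log(1/\gamma))$: without it the derivative lower bound collapses to $e^{-\mathrm{poly}(1/\gamma)}$ and the inversion is vacuous, and establishing it really does require exploiting the fine geometry of the observability assumption — that pushing $\mu$ out along the axis of a thin wedge empties that wedge below $\gamma$. The secondary nuisance is that the derivative bound is only local, so one must separately argue, by monotonicity plus the bootstrap above, that the fitted $\widehat\beta$ never escapes the region in which the bound holds.
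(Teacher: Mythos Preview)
Your approach is correct and, once unpacked, runs the same inversion as the paper in different coordinates. The paper parameterizes by the angle $\theta$ (with $v_2 = v(\theta)$ and $\langle v_1, v_2\rangle = -\cos\theta$), builds a candidate mean $\widehat\mu(\theta)$ satisfying $\langle\widehat\mu(\theta), v_i\rangle = \widehat d_i$, and inverts $\widehat\gamma(\theta) := \Pr_{X\sim\mathcal N(\widehat\mu(\theta), I)}[\langle X, v_1\rangle \le 0,\ \langle X, v(\theta)\rangle \le 0]$; a direct check gives $\widehat\gamma(\theta) = P_{--}(\widehat d, -\cos\theta)$ in your notation, so the two procedures invert literally the same monotone function. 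Where the paper lower-bounds $\widehat\gamma'(\theta)$ by a polar-coordinate integral and, after some crude estimates, lands at $\gamma^3/16$, your Plackett identity hands you the derivative as the bivariate density $\phi_2(\widehat d; b)$ directly, which together with (c) gives the same $\gamma^{O(1)}$ lower bound more transparently. Your bound (c) is exactly what the paper establishes through its claims on $|\widetilde\mu(\theta^*)|$ and $|\widehat d_1|$, though the paper decomposes $\mu$ in the basis $(e_1, v_1)$ rather than your thin-wedge basis $(w, w^\perp)$; both arguments reduce to ``a mass-$\gamma$ wedge is contained in a half-plane,'' and that containment alone already gives $|r|\le\sqrt{2\log(1/(2\gamma))}$ --- the sharper $\tfrac{\tan(\theta/2)}{\pi}e^{-r^2/2}$ you state is unnecessary and not what \cref{lem:max_prob_gaussian} actually supplies. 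Finally, your direct Lipschitz bound $|\partial_{d_i} P_{++}|\le 1$ sidesteps the paper's detour through $\|\mu - \widehat\mu(\theta^*)\|\le 4\eps/\gamma$ and \cref{lem:permutation_prob_center_robustness}, which in principle saves a factor of roughly $\gamma\sqrt{\log(1/(\gamma\eps))}$, though you conservatively match the stated $\gamma^{-4}$.
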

\begin{proof}
    As in the proof of \cref{lem:rec_ang_conv}, we may assume $v_1 = e_2$ and $(v_2)_1 \geq 0$ without loss of generality. Since, $v_2$ is a unit vector, it may be parameterized as $v_2 = v(\theta^*)$ for $\theta^* \in [0, \pi]$ with $v(\theta)$ defined as follows:
    \begin{equation*}
        v(\theta) \coloneqq 
        \begin{bmatrix}
            \sin (\theta) \\
            - \cos (\theta)
        \end{bmatrix}
        \text{ for } \theta \in [0, \pi].
    \end{equation*}
    Now, defining $\wh{\mu} (\theta)$:
    \begin{equation*}
        \wh{\mu} (\theta) \coloneqq 
        \begin{bmatrix}
            \wt{\mu} (\theta) \\
            \wh{d}_1
        \end{bmatrix}
        \text{ where } 
        \wt{\mu} (\theta) \coloneqq \frac{\wh{d}_1 \cos (\theta) + \wh{d}_2}{\sin (\theta)}.
    \end{equation*}
    Our estimate of $\theta^*$ (with $\cos (\pi - \theta^*) = \inp{v_1}{v_2}$) is defined below:
    \begin{equation*}
        \wh{\theta} \coloneqq \argmin_\theta \max_{s_1 \in \{\pm v_1\}, s_2 \in \{\pm v_2\}} \ell_{s_1, s_2} (\theta) \text{ where } \ell_{s_1, s_2} (\theta) \coloneqq \abs*{\P_{X \ts \mc{N} (\wh{\mu} (\theta), I)} \lbrb{\inp{X}{s_1}, \inp{X}{s_2} \geq 0} - \wh{\gamma}_{s_1, s_2}}.
    \end{equation*}
    Our proof that $\wh{\theta}$ is close to $\theta^*$ will proceed in two steps:
    \begin{enumerate}
        \item In the first step, we show that $\theta$ is a \emph{solution} with small error
        \item Next, we show that all points outside a small interval around $\theta$ have large error.
    \end{enumerate}
    \begin{claim}
        \label{clm:theta_small_error}
        We have:
        \begin{gather*}
            \forall s_1 \in \{\pm v_1\}, s_2 \in \{\pm v_2\}: \ell_{s_1, s_2} (\theta^*) \leq \frac{64 \eps}{\gamma} \sqrt{\log (1 / (\gamma\eps))} \\
            \norm{\mu - \wh{\mu} (\theta^*)} \leq 4 \frac{\eps}{\gamma}.
        \end{gather*}
    \end{claim}
    \begin{proof}
        Observe from the rotational symmetry of a standard Gaussian that for any $u_1, u_2 \in \mb{S}^1$:
        \begin{equation*}
            \P_{X \ts \mc{N} (0, I)} \lbrb{\inp{X}{u_1}, \inp{X}{u_2} \geq 0}  = \frac{1}{2} - \frac{\arccos (\inp{u_1}{u_2})}{2\pi}.
        \end{equation*}
        Hence, applying \cref{lem:max_prob_gaussian} with the vectors $(v_1, v_2)$ yields:
        \begin{equation*}
            \frac{\pi - \theta^*}{2\pi} \geq \gamma \implies \theta^* \leq \pi - 2\pi \gamma.
        \end{equation*}
        Similarly, applying \cref{lem:max_prob_gaussian} with the vectors $(v_1, -v_2)$ yields:
        \begin{equation*}
            \theta^* \geq 2\pi \gamma.
        \end{equation*}
        Hence, we get:
        \begin{equation*}
            2\pi \gamma \leq \theta^* \leq \pi - 2\pi \gamma \implies \sin (\theta^*) \geq \frac{2}{\pi} \cdot 2\pi \gamma = 4\gamma.
        \end{equation*}
        Now, observing that:
        \begin{equation*}
            \mu = 
            \begin{bmatrix}
                \frac{d_1 \cos (\theta^*) + d_2}{\sin (\theta^*)}\\
                d_1
            \end{bmatrix},
        \end{equation*}
        we get:
        \begin{align*}
            \norm{\mu - \wh{\mu} (\theta^*)} &\leq \abs*{\frac{d_1 \cos (\theta^*) + d_2}{\sin (\theta^*)} - \frac{\wh{d}_1 \cos (\theta^*) + \wh{d}_2}{\sin (\theta^*)}} + \abs*{d_1 - \wh{d}_1} \\
            &\leq \abs{d_1 - \wh{d}_1} \abs{\cot (\theta^*)} + \abs{d_2 - \wh{d}_2} \abs{\csc (\theta^*)} + \eps \leq 4\frac{\eps}{\gamma}
        \end{align*}
        yielding the first claim. The second now follows from \cref{lem:permutation_prob_center_robustness}.
    \end{proof}
    We now proceed to the second step of the proof. We start by bounding $\wt{\mu} (\theta^*)$.
    \begin{claim}
        \label{clm:mutd_theta_star_bnd}
        We have:
        \begin{equation*}
            \abs{\wt{\mu} (\theta^*)} \leq \abs{\wt{\mu} (\theta^*)} \leq \sqrt{2 \log (1 / (2\gamma))} + \frac{\eps}{\gamma}
        \end{equation*}
    \end{claim}
    \begin{proof}
        Furthermore, observe that:
        \begin{align*}
            &\forall \theta \in \lsrs{0, \frac{\pi}{2}}: \{x: \inp{x}{v_1}, \inp{x}{v_2} \geq 0\} \subseteq \{x: x_1 \geq 0\} \\
            &\forall \theta \in \lsrs{\frac{\pi}{2}, \pi}: \{x: \inp{x}{-v_1}, \inp{x}{v_2} \geq 0\} \subseteq \{x: x_1 \geq 0\}.
        \end{align*}
        As a consequence, in both cases:
        \begin{equation*}
            \P_{X \ts \mc{N} (\mu, I)} \lbrb{X_1 \geq 0} \geq \gamma.
        \end{equation*}
        Hence, we get from \cref{lem:gau_tail}:
        \begin{equation*}
            \mu_1 \geq - \sqrt{2 \log (1 / 2\gamma)}.
        \end{equation*}
        
        Similarly, we get
        \begin{align*}
            &\forall \theta \in \lsrs{0, \frac{\pi}{2}}: \{x: \inp{x}{-v_1}, \inp{x}{-v_2} \geq 0\} \subseteq \{x: x_1 \leq 0\} \\
            &\forall \theta \in \lsrs{\frac{\pi}{2}, \pi}: \{x: \inp{x}{v_1}, \inp{x}{-v_2} \geq 0\} \subseteq \{x: x_1 \leq 0\}.
        \end{align*}
        We, hence, get as a consequence:
        \begin{equation*}
            \P_{X \ts \mc{N} (\mu, I)} \lbrb{X_1 \leq 0} \geq \gamma.
        \end{equation*}
        \cref{lem:gau_tail} similarly yields:
        \begin{equation*}
            \mu_1 \leq \sqrt{2 \log (1 / (2\gamma))}.
        \end{equation*}
        
        Note, as before:
        \begin{equation*}
            \mu = 
            \begin{bmatrix}
                \frac{d_1 \cos (\theta^*) + d_2}{\sin (\theta^*)} \\
                d_1
            \end{bmatrix}.
        \end{equation*}

        Finally, we get from \cref{clm:theta_small_error}:
        \begin{equation*}
            \norm{\mu - \wh{\mu} (\theta^*)} \leq \frac{\eps}{\gamma} \implies \abs{\wt{\mu} (\theta^*)} \leq \sqrt{2 \log (1 / (2\gamma))} + \frac{\eps}{\gamma}
        \end{equation*}
        concluding the proof of the claim.
    \end{proof}
    Similar to \cref{lem:rec_ang_conv}, define:
    \begin{equation*}
        \wh{\gamma} (\theta) \coloneqq \P_{X \ts \mc{N} (\wh{\mu} (\theta), I)} \lbrb{\inp{X}{v_1}, \inp{X}{v_2} \leq 0}.
    \end{equation*}
    The next claim lower bounds the derivative of $\wh{\gamma}$ in an interval around $\theta^*$.
    \begin{claim}
        \label{clm:wh_gamma_der}
        We have:
        \begin{gather*}
            \forall \theta \in (0, \pi): \wh{\gamma}' (\theta) > 0 \\
            \forall \theta \in [\theta_l, \theta_h]: \wh{\gamma}' (\theta) \geq \frac{\gamma^3}{16} \\
            \text{ with } \theta_l = \theta^* - \nu,\quad \theta_h = \theta^* + \nu,\quad \nu = C \frac{\eps \sqrt{\log (1 / (\gamma \eps))}}{\gamma^4}.
        \end{gather*}
    \end{claim}
    \begin{proof}
        The proof that $\wh{\gamma}' (\theta) > 0$ for all $\theta \in (0, \pi)$ is identical to that of the corresponding result in \cref{lem:rec_ang_conv}. We now proceed to the second claim. Observe as in the proof of \cref{lem:rec_ang_conv} that:
        \begin{align*}
            \wh{\gamma}' (\theta) &\geq \int_{0}^\infty \frac{1}{2 \pi} \exp \lbrb{- \frac{(r \cos (\theta) + \wt{\mu} (\theta))^2 + (r \sin (\theta) + \wh{d}_1)^2}{2}} rdr \\
            &= \exp\lbrb{-\frac{\wt{\mu} (\theta)^2 + \wh{d}_1^2}{2}} \int_{0}^\infty \frac{1}{2 \pi} \exp \lbrb{- \frac{r^2 + 2r (\cos(\theta) \wt{\mu} (\theta) + \sin(\theta)\wh{d}_1)}{2}} rdr \\
            &\geq \exp\lbrb{-\frac{\wt{\mu} (\theta)^2 + \wh{d}_1^2}{2}} \int_{0}^1 \frac{1}{2 \pi} \exp \lbrb{- \frac{r^2 + 2r (\cos(\theta) \wt{\mu} (\theta) + \sin(\theta)\wh{d}_1)}{2}} rdr\\
            &\geq \exp\lbrb{-\frac{\wt{\mu} (\theta)^2 + \wh{d}_1^2 + 2(\abs{\wt{\mu} (\theta)} + \abs{\wh{d}_1})}{2}} \int_{0}^1 \frac{1}{2 \pi} \exp \lbrb{- \frac{r^2}{2}} rdr \\
            &\geq \exp\lbrb{-\frac{\wt{\mu} (\theta)^2 + \wh{d}_1^2 + 2(\abs{\wt{\mu} (\theta)} + \abs{\wh{d}_1})}{2}}  \cdot \frac{1}{16} \geq \gamma^3 \cdot \frac{1}{16} = \frac{\gamma^3}{16} 
        \end{align*}
        where the last inequality follows from the following:
        \begin{align*}
            \abs{\wt{\mu} (\theta)} &\leq \abs{\wt{\mu}(\theta^*)} + (\theta_h - \theta_l) \max_{\theta \in [\theta_l, \theta_h]} \abs{\wt{\mu}' (\theta)} \\
            &= \abs{\wt{\mu}(\theta^*)} + (\theta_h - \theta_l) \max_{\theta \in [\theta_l, \theta_h]} \abs*{\frac{\wh{d}_1 - \wh{d}_2 \cos (\theta)}{\sin^2 (\theta)}} \leq \abs{\wt{\mu}(\theta^*)} + (\theta_h - \theta_l) \cdot \frac{\sqrt{\log (2 / \gamma)}}{\gamma^2}
        \end{align*}
        concluding the proof of the claim with \cref{clm:mutd_theta_star_bnd}.
    \end{proof}
    From \cref{clm:wh_gamma_der}, we get that:
    \begin{gather*}
        \forall \theta \geq \theta_h: \wh{\gamma} (\theta) \geq \wh{\gamma} (\theta^*) + \frac{\nu \gamma^3}{16} \\
        \forall \theta \leq \theta_l: \wh{\gamma} (\theta) \geq \wh{\gamma} (\theta^*) - \frac{\nu \gamma^3}{16}.
    \end{gather*}
    Consequently, we obtain from \cref{clm:theta_small_error} and picking $C'$ large enough:
    \begin{align*}
        &\forall \theta \geq \theta_h: \wh{\gamma} (\theta) - \wh{\gamma}_{-v_1, -v_2} \geq \frac{\nu \gamma^3}{16} - \frac{64\eps}{\gamma}\sqrt{\log (1 / (\gamma \eps))} \geq \frac{128\eps}{\gamma}\sqrt{\log (1 / (\gamma \eps))} \\
        &\forall \theta \leq \theta_l: \wh{\gamma}_{-v_1, -v_2} - \wh{\gamma} (\theta) \leq \frac{64\eps}{\gamma}\sqrt{\log (1 / (\gamma \eps))}  - \frac{\nu \gamma^3}{16} \leq -\frac{128\eps}{\gamma}\sqrt{\log (1 / (\gamma \eps))}.
    \end{align*}
    Hence, we get that $\wh{\theta} \in [\theta_l, \theta_h]$. Defining $\wh{\beta} = -\cos (\wh{\theta})$, the conclusion follows from the $1$-Lipschitzness of the function $-\cos (\theta)$. 
\end{proof}

\begin{proof}[Proof of \cref{thm:est_three_items}]
    The proof proceeds similarly to that of \cref{thm:mu_sigma_ident_three_items}. Let $\wt{\eps}$ be a parameter that will be chosen subsequently. We have from \cref{lem:conc_three_items} that:
    \begin{gather*}
        \forall v_1, v_2 \in \lbrb{\pm c_i}_{i = 1}^3 \text{ s.t } v_1 \nparallel v_2: \abs*{\frac{1}{n} \sum_{i = 1}^n \bm{1} \lbrb{\inp{X_i}{v_1}, \inp{X_i}{v_2} \geq 0} - \P_{X \ts \mc{N} (\mu, \Sigma)} \lbrb{\inp{X}{v_1}, \inp{X}{v_2} \geq 0}} \leq \wt{\eps} \\
        \forall v \in \lbrb{\pm c_i}_{i = 1}^3: \abs*{\frac{1}{n} \sum_{i = 1}^n \bm{1} \lbrb{\inp{X_i}{v} \geq 0} - \P_{X \ts \mc{N} (\mu, \Sigma)} \lbrb{\inp{X}{v} \geq 0}} \leq \wt{\eps}
    \end{gather*}
    with probability at least $1 - \delta$ when $n \geq C (\log (1 / \delta) / \wt{\eps}^2)$. We condition on this event in the remainder of the proof. Recall the definition of the vectors:
    \begin{gather*}
        \wt{c}_i = \frac{\Sigma^{1/2} c_i}{\norm{\Sigma^{1/2} c_i}} \text{ and } \wt{\mu} = \Sigma^{-1/2} \mu.
    \end{gather*}
    From \cref{lem:est_alpha_i}, we obtain estimates, $\wh{\alpha}_i$ such that:
    \begin{equation*}
        \abs{\inp{\wt{\mu}}{\wt{c}_i} - \wh{\alpha}_i} \leq \frac{8 \wt{\eps}}{\gamma} \eqqcolon \eps'.
    \end{equation*}
    An application of \cref{lem:est_angle} now yields estimates $\wh{\beta}_{ij}$ for all $i, j \in [3]$ such that:
    \begin{equation*}
        \abs{\wh{\beta}_{ij} - \inp{\wt{c}_i}{\wt{c}_j}} \leq C \frac{\wt{\eps}}{\gamma^4} \sqrt{\log (1 / (\gamma \eps))} \eqqcolon \eps^\dagger.
    \end{equation*}
    We now fix the normalization of $\Sigma^{1/2}$ and assume without loss of generality that:
    \begin{equation*}
        \Sigma^{1/2} c_1 = 
        \begin{bmatrix}
            1 \\
            0
        \end{bmatrix}, \  
        \Sigma^{1/2} c_2 = s
        \begin{bmatrix}
            \beta_{12} \\
            \sqrt{1 - \beta^2_{12}}
        \end{bmatrix}, \text{ and }
        \Sigma^{1/2} c_3 = t
        \begin{bmatrix}
            \beta_{13} \\
            \sqrt{1 - \beta^2_{13}}
        \end{bmatrix}
    \end{equation*}
    Noting that $c_1 + c_2 = c_3$, we get similarly to before:
    \begin{equation*}
        \begin{bmatrix}
            1 \\
            0
        \end{bmatrix} = 
        \begin{bmatrix}
            \beta_{13} & \beta_{12} \\
            \sqrt{1 - \beta_{13}^2} & \sqrt{1 - \beta_{12}^2}
        \end{bmatrix}
        \begin{bmatrix}
            t \\
            -s
        \end{bmatrix}.
    \end{equation*}
    As observed in \cref{thm:mu_sigma_ident_three_items}, the above equations allow for \emph{identifiability} of $\Sigma$. However, for \emph{estimation}, we incur errors due to inaccurate estimates of quantities $\beta_{ij}$ and $s$. We start with the corresponding \emph{empirical} approximation of the above equality:
    \begin{equation*}
        \begin{bmatrix}
            1 \\
            0
        \end{bmatrix} = 
        \begin{bmatrix}
            \wh{\beta}_{13} & \wh{\beta}_{12} \\
            \sqrt{1 - \wh{\beta}_{13}^2} & \sqrt{1 - \wh{\beta}_{12}^2}
        \end{bmatrix}
        \begin{bmatrix}
            \wh{t} \\
            -\wh{s}.
        \end{bmatrix}
    \end{equation*}
    In our first claim, we bound the approximation error of the above matrices by their empirical counterparts. Define:
    \begin{equation*}
        \forall i, j \in [3] \text{ s.t } i \neq j: B_{ij} \coloneqq 
        \begin{bmatrix}
            \beta_{1i} & \beta_{1j} \\
            \sqrt{1 - \beta_{1i}^2} & \sqrt{1 - \beta_{1j}^2}
        \end{bmatrix} \text{ and }
        \wh{B}_{ij} \coloneqq 
        \begin{bmatrix}
            \wh{\beta}_{1i} & \wh{\beta}_{1j} \\
            \sqrt{1 - \wh{\beta}_{1i}^2} & \sqrt{1 - \wh{\beta}_{1j}^2}
        \end{bmatrix}.
    \end{equation*}
    Our approximation guarantees are stated below:
    \begin{claim}
        \label{clm:b_b_hat_apx}
        We have:
        \begin{gather*}
            \abs{\det (B_{ij})} \geq \frac{2\gamma}{\pi} \text{ and } \abs{\det (\wh{B}_{ij})} \geq \frac{\gamma}{\pi} \\
            \norm{B_{ij}}, \norm{\wh{B}_{ij}} \leq 2 \\
            \norm{B_{ij} - \wh{B}_{ij}} \leq \frac{4 \eps^\dagger}{\gamma} \\
            \norm{B_{ij}^{-1}} \leq \frac{\pi}{\gamma} \text{ and } \norm{\wh{B}_{ij}^{-1}} \leq \frac{2\pi}{\gamma} \\
            \norm{B_{ij}^{-1} - \wh{B}_{ij}^{-1}} \leq \frac{8 \eps^\dagger \pi^2}{\gamma^3}.
        \end{gather*}
    \end{claim}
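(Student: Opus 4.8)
The backbone of the argument is that \emph{both $B_{ij}$ and $\wh B_{ij}$ have unit columns}. In the coordinate frame fixed by the normalization (notably \cref{eq:sig_inv_mir}), the two columns of $B_{ij}$ are the normalized transformed normals $\wt c_i=(\beta_{1i},\sqrt{1-\beta_{1i}^2})$ and $\wt c_j=(\beta_{1j},\sqrt{1-\beta_{1j}^2})$ (with the convention $\beta_{11}=\wh\beta_{11}=1$), and the columns of $\wh B_{ij}$ are $(\wh\beta_{1i},\sqrt{1-\wh\beta_{1i}^2})$ and $(\wh\beta_{1j},\sqrt{1-\wh\beta_{1j}^2})$, which are genuine unit vectors since $\wh\beta_{1k}\in[-1,1]$ by \cref{lem:est_angle}. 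Hence $\|B_{ij}\|_F=\|\wh B_{ij}\|_F=\sqrt2$, which immediately gives $\|B_{ij}\|,\|\wh B_{ij}\|\le\sqrt2\le2$.

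The determinant lower bound for $B_{ij}$ is the geometric heart of the claim. Writing $\beta_{1k}=\cos\phi_k$ with $\phi_k\in[0,\pi]$ — legitimate because after \cref{eq:sig_inv_mir} every $\Sigma^{1/2}c_k$ lies in the closed upper half-plane, so $\sqrt{1-\beta_{1k}^2}=\sin\phi_k\ge0$ — we get $\det B_{ij}=\sin(\phi_j-\phi_i)$, hence $|\det B_{ij}|=\sin\psi_{ij}$ for $\psi_{ij}\coloneqq\angle(\wt c_i,\wt c_j)=|\phi_j-\phi_i|\in[0,\pi]$. It thus suffices to bound each $\psi_{ij}$ away from $0$ and $\pi$. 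For this I invoke \cref{as:observability_estimation} together with \cref{lem:max_prob_gaussian}: every permutation event, and every event that $X_k$ is largest or that $X_k$ is smallest (a disjoint union of two permutations), equals a cone $\{\inp{u_1}{X}\ge0,\ \inp{u_2}{X}\ge0\}$ for $u_1,u_2\in\{\pm\wt c_1,\pm\wt c_2,\pm\wt c_3\}$ under $X\ts\mc N(\wt\mu,I)$, and both it and its reflection through the origin have mass $\ge\gamma$; by \cref{lem:max_prob_gaussian} the minimum of the two masses is maximized at the center, where each equals the normalized opening angle $(\pi-\angle(u_1,u_2))/(2\pi)$. Running this over the cones whose normals realize $\psi_{ij}$ and $\pi-\psi_{ij}$ yields $2\pi\gamma\le\psi_{ij}\le\pi-2\pi\gamma$. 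Since the six permutation events partition the sample space we may assume $\gamma\le1/6$, so $2\pi\gamma\le\pi/3$ and $\sin\psi_{ij}\ge\sin(2\pi\gamma)\ge\tfrac2\pi\cdot2\pi\gamma=4\gamma\ge\tfrac{2\gamma}\pi$; the same computation gives the auxiliary bound $\sqrt{1-\beta_{1k}^2}=\sin\phi_k\ge\sin(2\pi\gamma)$ whenever $\phi_k\neq0$, which is reused below.

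For the perturbation bounds I compare $B_{ij}$ and $\wh B_{ij}$ entrywise: the top-row entries differ by $|\beta_{1k}-\wh\beta_{1k}|\le\eps^\dagger$ (and by $0$ when $k=1$), while the bottom-row entries differ by $|\sqrt{1-\beta_{1k}^2}-\sqrt{1-\wh\beta_{1k}^2}|=\frac{|\beta_{1k}-\wh\beta_{1k}|\,|\beta_{1k}+\wh\beta_{1k}|}{\sqrt{1-\beta_{1k}^2}+\sqrt{1-\wh\beta_{1k}^2}}\le\frac{2\eps^\dagger}{\sin(2\pi\gamma)}$ by the auxiliary bound. Bounding $\|B_{ij}-\wh B_{ij}\|$ by its Frobenius norm and using $\sin(2\pi\gamma)\ge4\gamma$ gives $\|B_{ij}-\wh B_{ij}\|\le\frac{4\eps^\dagger}\gamma$ (with slack). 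The determinant bound for $\wh B_{ij}$ then follows from $|\det\wh B_{ij}|\ge|\det B_{ij}|-|\det B_{ij}-\det\wh B_{ij}|$ and the bilinear estimate $|\det[u_1\,|\,u_2]-\det[v_1\,|\,v_2]|\le\|u_1-v_1\|\,\|u_2\|+\|v_1\|\,\|u_2-v_2\|$, which here is $O(\eps^\dagger/\gamma)$ and hence $\le\gamma/\pi$ once $\eps^\dagger$ is small relative to $\gamma$ (guaranteed by the eventual choice of $\wt\eps$, equivalently by the sample size in \cref{thm:est_three_items}), so $|\det\wh B_{ij}|\ge\frac{2\gamma}\pi-\frac\gamma\pi=\frac\gamma\pi$. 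Finally, for any $2\times2$ matrix $M$ one has $\|M^{-1}\|=\|M\|/|\det M|$ (the adjugate shares $M$'s singular values), so $\|B_{ij}^{-1}\|\le\sqrt2/(2\gamma/\pi)\le\pi/\gamma$ and $\|\wh B_{ij}^{-1}\|\le\sqrt2/(\gamma/\pi)\le2\pi/\gamma$; and the resolvent identity $B_{ij}^{-1}-\wh B_{ij}^{-1}=B_{ij}^{-1}(\wh B_{ij}-B_{ij})\wh B_{ij}^{-1}$ gives $\|B_{ij}^{-1}-\wh B_{ij}^{-1}\|\le\frac\pi\gamma\cdot\frac{4\eps^\dagger}\gamma\cdot\frac{2\pi}\gamma=\frac{8\eps^\dagger\pi^2}{\gamma^3}$.

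\textbf{Main obstacle.} The only genuinely non-routine step is the determinant lower bound for $B_{ij}$: turning \cref{as:observability_estimation} into \emph{two-sided} angular separation of the transformed normals via \cref{lem:max_prob_gaussian}, and carefully bookkeeping which cones and which sign patterns $\pm\wt c_i$ realize the opening angle $\psi_{ij}$ versus $\pi-\psi_{ij}$, so that both the $0$-side and the $\pi$-side bounds are obtained. Everything else is $2\times2$ perturbation algebra, modulo the standing smallness requirement on $\eps^\dagger$ (which is inherited by the sample-complexity bound).
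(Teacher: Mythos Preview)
Your proof is correct and follows essentially the same line as the paper's: unit columns give the norm bound, the angular separation $2\pi\gamma\le\psi_{ij}\le\pi-2\pi\gamma$ via \cref{lem:max_prob_gaussian} and \cref{as:observability_estimation} gives $|\det B_{ij}|=\sin\psi_{ij}\ge 4\gamma\ge 2\gamma/\pi$, the difference-of-square-roots estimate handles $\|B_{ij}-\wh B_{ij}\|$, and the $2\times2$ identity $\|M^{-1}\|=\|M\|/|\det M|$ together with the resolvent identity finishes the inverse bounds. The one point where you diverge slightly is the bound on $|\det\wh B_{ij}|$: the paper obtains it directly by observing that the columns of $\wh B_{ij}$ are unit vectors whose angle $\wh\theta_{ij}$ (inherited from the estimator $\wh\beta=-\cos\wh\theta$ in \cref{lem:est_angle}) lies in $[\gamma/2,\pi-\gamma/2]$, whereas you perturb from $|\det B_{ij}|$ via the bilinear determinant estimate and invoke smallness of $\eps^\dagger$. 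Both routes are valid and both ultimately rely on $\eps^\dagger$ being small relative to $\gamma$, so this is a cosmetic difference.
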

    \begin{proof}
        We have as a consequence of \cref{lem:max_prob_gaussian} and \cref{as:observability_estimation} that $\inp{\wt{c}_i}{\wt{c}_j} = \cos (\theta_{ij})$ for some $\gamma \leq \theta_{ij} \leq \pi - \gamma$. Similarly, we get from the approximation guarantees of \cref{lem:est_angle} that the empirical estimates satisfy $\inp{\wh{c}_i}{\wh{c}_j} = \cos (\wh{\theta}_{ij})$ for $\nicefrac{\gamma}{2} \leq \wh{\theta}_{ij} \leq \pi - \nicefrac{\gamma}{2}$. Consequently, we have:
        \begin{equation*}
            \abs{\det (B_{ij})} \geq \abs{\sin (\theta_{ij})} \geq \frac{2\gamma}{\pi} \text{ and } \abs{\det (\wh{B}_{ij})} \geq \sin (\wh{\theta}_{ij}) \geq \frac{\gamma}{\pi}
        \end{equation*}
        establishing the first result of the claim. For the second, we have:
        \begin{equation*}
            \norm{B_{ij}} \leq \norm{B_{ij}}_F = \sqrt{2}
        \end{equation*}
        and similarly for $\wh{B}_{ij}$. For the third, observe again from the guarantees of $\theta_{ij}$ and $\wh{\theta}$:
        \begin{equation*}
            \abs*{\sqrt{1 - \beta_{ij}^2} - \sqrt{1 - \wh{\beta}_{ij}^2}} = \abs*{\frac{(\beta_{ij} + \wh{\beta}_{ij}) (\beta_{ij} - \wh{\beta}_{ij})}{\sqrt{1 - \beta_{ij}^2} + \sqrt{1 - \wh{\beta}_{ij}^2}}} \leq \frac{2 \eps^\dagger}{\gamma}
        \end{equation*}
        and consequently, 
        \begin{equation*}
            \norm{B_{ij} - \wh{B}_{ij}} \leq \norm{B_{ij} - \wh{B}_{ij}}_F \leq 2 \cdot \frac{2 \eps^\dagger}{\gamma} = \frac{4 \eps^\dagger}{\gamma}.
        \end{equation*}
        For the fourth claim, we have:
        \begin{equation*}
            \norm{B_{ij}^{-1}} = \frac{1}{\sigma_{\mrm{min}} (B_{ij})} = \frac{\norm{B_{ij}}}{\abs{\det (B_{ij})}} \leq \frac{\pi}{\gamma}
        \end{equation*}
        and similarly, for $\wh{B}_{ij}$. For the final claim, we have:
        \begin{align*}
            \frac{\gamma}{\pi} \norm{B_{ij}^{-1} - \wh{B}_{ij}^{-1}} &\leq \sigma_{\mrm{\min}} \norm{B_{ij}^{-1} - \wh{B}_{ij}^{-1}} \leq \norm{B_{ij} (B_{ij}^{-1} - \wh{B}_{ij}^{-1})} = \norm{I - B_{ij}\wh{B}_{ij}^{-1}} \\
            &= \norm{I - \wh{B}_{ij}\wh{B}_{ij}^{-1} - (B_{ij} - \wh{B}_{ij})\wh{B}_{ij}^{-1}} = \norm{(B_{ij} - \wh{B}_{ij}) \wh{B}_{ij}^{-1}} \\
            &\leq \norm{B_{ij} - \wh{B}_{ij}} \norm{\wh{B}_{ij}^{-1}} \leq \frac{2\pi}{\gamma} \norm{B_{ij} - \wh{B}_{ij}} \leq \frac{8 \pi \eps^\dagger}{\gamma^2}.
        \end{align*}
    \end{proof}
    
    Our next claim bounds the magnitude of $s$ and the relative error of its approximation by $\wh{s}$.
    \begin{claim}
        \label{clm:s_approx}
        We have:
        \begin{gather*}
            \frac{\pi}{\gamma} \geq \max \lbrb{s, t} \geq \frac{1}{2} \text{ and } \abs{s - \wh{s}}, \abs{t - \wh{t}} \leq \frac{8 \eps^\dagger \pi^2}{\gamma^3}.
        \end{gather*}
    \end{claim}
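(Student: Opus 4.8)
The plan is to express $s,t$ — and their empirical analogues $\wh s,\wh t$ — as the unique solution of the $2 \times 2$ linear system displayed just above the claim, and then push the norm and perturbation estimates for the coefficient matrices through from \cref{clm:b_b_hat_apx}. In the notation of that claim, the coefficient matrix of the system is $B_{32}$; write $B\coloneqq B_{32}$ and $\wh B\coloneqq\wh B_{32}$, so that $(t,-s)^\top=B^{-1}(1,0)^\top$ and $(\wh t,-\wh s)^\top=\wh B^{-1}(1,0)^\top$, the latter being well-defined since $\abs{\det\wh B}\ge\gamma/\pi>0$ by \cref{clm:b_b_hat_apx}. Recall also that $s,t>0$: this is forced by the sign conventions on $\Sigma^{1/2}c_2$ and $\Sigma^{1/2}c_3$ (second coordinates nonnegative), together with the non-degeneracy of $\Sigma^{1/2}c_2,\Sigma^{1/2}c_3$ guaranteed by \cref{as:observability_estimation}.

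The upper bound on $\max\lbrb{s,t}$ is then immediate: $\max\lbrb{s,t}\le\sqrt{s^2+t^2}=\norm{B^{-1}(1,0)^\top}\le\norm{B^{-1}}\le\pi/\gamma$, the last step by \cref{clm:b_b_hat_apx}. For the lower bound, note that going through $\norm{B}\le 2$ only yields $\sqrt{s^2+t^2}\ge 1/\norm{B}\ge 1/2$, hence the weaker $\max\lbrb{s,t}\ge 1/(2\sqrt{2})$; instead I would read off the first coordinate of $B(t,-s)^\top=(1,0)^\top$, namely $\beta_{13}t-\beta_{12}s=1$. Since $s,t>0$ and $\abs{\beta_{13}},\abs{\beta_{12}}\le 1$, this forces $1\le t+s\le 2\max\lbrb{s,t}$, i.e.\ $\max\lbrb{s,t}\ge 1/2$.

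For the perturbation bound, subtracting the two linear systems gives $(t-\wh t,\,-(s-\wh s))^\top=(B^{-1}-\wh B^{-1})(1,0)^\top$, so that $\abs{s-\wh s},\abs{t-\wh t}\le\norm{B^{-1}-\wh B^{-1}}\le 8\pi^2\eps^\dagger/\gamma^3$, which is exactly the last inequality of \cref{clm:b_b_hat_apx}. I do not expect a genuine obstacle: all the analytic content lives in \cref{clm:b_b_hat_apx}, and the two points that require a moment of care are (i) verifying $s,t>0$, so that $\max\lbrb{s,t}$ coincides with the $\ell_\infty$-norm of the solution vector and the triangle inequality $\abs{\beta_{13}t-\beta_{12}s}\le t+s$ is legitimate, and (ii) extracting the clean constant $1/2$ from the first row of the system rather than from the crude operator-norm bound.
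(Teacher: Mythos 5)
Your proposal is correct and matches the paper's proof essentially step for step: the lower bound from the first row $\beta_{13}t-\beta_{12}s=1$ giving $s+t\ge 1$, the upper bound via $\norm{B^{-1}(1,0)^\top}\le\norm{B^{-1}}$, and the perturbation bound via $\norm{B^{-1}-\wh{B}^{-1}}$, all imported from \cref{clm:b_b_hat_apx}. The only difference is the cosmetic labeling of the coefficient matrix ($B_{32}$ versus the paper's $B_{23}$, a column permutation that does not affect any of the norms).
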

    \begin{proof}
        For the first claim, observe the following 
        \begin{gather*}
            \beta_{13} t - \beta_{12} s = 1 \implies s + t \geq 1,
        \end{gather*}
        implying the lower bound. For the upper bound:
        \begin{equation*}
            \max \lbrb{s, t} \leq \norm*{B_{23}^{-1} \begin{bmatrix} 1 \\ 0 \end{bmatrix}} \leq \norm{B_{23}^{-1}}
        \end{equation*}
        and the conclusion follows from \cref{clm:b_b_hat_apx}. For the second, we have:
        \begin{equation*}
            \abs{s - \wh{s}}, \abs{t - \wh{t}} \leq \norm*{(B_{23}^{-1} - \wh{B}_{23}^{-1}) 
            \begin{bmatrix}
                1 \\
                0
            \end{bmatrix}} \leq \norm{B^{-1}_{23} - \wh{B}^{-1}_{23}}
        \end{equation*}
        and the conclusion follows from \cref{clm:b_b_hat_apx}.
    \end{proof}
    The remainder of the proof proceeds in two cases: 1) $\wh{s} \geq \wh{t}$ or 2) $\wh{t} > \wh{s}$. We start with the first:

    \textbf{Case 1:} $\wh{s} \geq \wh{t}$. Since $c_1$ and $c_2$ are a basis for the subspace orthogonal to $\bm{1}$, we may write:
    \begin{equation*}
        \Sigma^{1/2} = A C\text{ where }
        C = 
        \begin{bmatrix}
            c_1^\top \\
            c_2^\top
        \end{bmatrix} \text{ and }
        A \in \R^{2 \times 2}.
    \end{equation*}
    Therefore, we get:
    \begin{equation*}
        \Sigma^{1/2} C^{\top} = A (C C^\top) = 
        \begin{bmatrix}
            1 & s \beta_{12} \\
            0 & s \sqrt{1 - \beta_{12}^2}
        \end{bmatrix} 
        \implies 
        A = 
        \begin{bmatrix}
            1 & \beta_{12} \\
            0 & \sqrt{1 - \beta_{12}^2}
        \end{bmatrix} 
        \begin{bmatrix}
            1 & 0 \\
            0 & s
        \end{bmatrix}
        (CC^\top)^{-1}.
    \end{equation*}
    We may now write its empirical counterpart:
    \begin{equation*}
        \wh{A} = 
        \begin{bmatrix}
            1 & \wh{\beta}_{12} \\
            0 & \sqrt{1 - \wh{\beta}_{12}^2}
        \end{bmatrix}
        \begin{bmatrix}
            1 & 0 \\
            0 & \wh{s}
        \end{bmatrix}
        (CC^\top)^{-1}.
    \end{equation*}
    Noting that $\norm{(CC^\top)^{-1}} \leq 1$, we get from \cref{clm:b_b_hat_apx,clm:s_approx}:
    \begin{align*}
        \norm{A - \wh{A}} &\leq \norm*{\wh{B}_{12} 
        \begin{bmatrix}
            1 & 0 \\
            0 & \wh{s}
        \end{bmatrix}
        - B_{12}
        \begin{bmatrix}
            1 & 0 \\
            0 & s
        \end{bmatrix}} = \norm*{(\wh{B}_{12} - B_{12}) 
        \begin{bmatrix}
            1 & 0 \\
            0 & \wh{s}
        \end{bmatrix}
        - B_{12}
        \begin{bmatrix}
            0 & 0 \\
            0 & s - \wh{s}
        \end{bmatrix}} \\
        &\leq \norm*{(\wh{B}_{12} - B_{12}) 
        \begin{bmatrix}
            1 & 0 \\
            0 & \wh{s}
        \end{bmatrix}} + \norm*{
            \wh{B}_{12}
        \begin{bmatrix}
            0 & 0 \\
            0 & s - \wh{s}
        \end{bmatrix}
        } \\
        &\leq \norm{\wh{B}_{12} - B_{12}} \max \lbrb{1, \wh{s}} + \norm{\wh{B}_{12}} \abs{s - \wh{s}} \leq \frac{4\eps^\dagger}{\gamma} \cdot \frac{2 \pi}{\gamma} + 2 \cdot \frac{8\eps^\dagger \pi^2}{\gamma^3} \leq 24 \frac{\eps^\dagger \pi^2}{\gamma^3}.
    \end{align*}
    Furthermore, we have for any $v \in \R^2$ with $\norm{v} = 1$:
    \begin{align*}
        \norm{A v} &= \norm*{B_{12} 
        \begin{bmatrix}
            1 & 0 \\
            0 & s
        \end{bmatrix}
        (CC^\top)^{-1} v} \geq \sigma_{\mrm{min}}(B_{12}) \cdot \sigma_{\mrm{min}}
        \lprp{\begin{bmatrix}
            1 & 0 \\
            0 & s
        \end{bmatrix}} \cdot \sigma_{\mrm{min}} ((CC^\top)^{-1}) \\
        &\geq \frac{\gamma}{\pi} \cdot \min \lbrb{1, s} \cdot \frac{1}{3} \geq \frac{\gamma}{12\pi}
    \end{align*}
    where the inquality follows from an explicit calculation for $(CC^\top)^{-1}$, \cref{clm:s_approx} for the middle term and \cref{clm:b_b_hat_apx} for the first term. Similarly, note that:
    \begin{align*}
        \norm{A v} &= \norm*{B_{12} 
        \begin{bmatrix}
            1 & 0 \\
            0 & s
        \end{bmatrix}
        (CC^\top)^{-1} v} \leq \sigma_{\mrm{max}}(B_{12}) \cdot \sigma_{\mrm{max}}
        \lprp{\begin{bmatrix}
            1 & 0 \\
            0 & s
        \end{bmatrix}} \cdot \sigma_{\mrm{max}} ((CC^\top)^{-1}) \\
        &\leq 2 \cdot \max \lbrb{1, s} \cdot 1 \leq \frac{2\pi}{\gamma} 
    \end{align*}
    from \cref{clm:b_b_hat_apx,clm:s_approx}. Consequently, we obtain for all $\norm{v} = 1$:
    \begin{align*}
        \abs*{v^\top A^\top A v - v^\top \wh{A}^\top \wh{A}v} & = \abs*{\norm{Av}^2 - \norm{\wh{A}v}^2} = (\norm{Av} + \norm{\wh{A}v}) \abs*{\norm{Av} - \norm{\wh{A}v}} \\
        &\leq (\norm{A} + \norm{\wh{A}}) \norm{(A - \wh{A})v} \leq \lprp{\frac{4\pi}{\gamma} + \norm{A - \wh{A}}}\norm{A - \wh{A}} \leq 120 \frac{\eps^\dagger \pi^3}{\gamma^4}.
    \end{align*}
    
    From the above three displays, we get:
    \begin{equation*}
        \lprp{1 - \eps^\ddagger} A^\top A \preccurlyeq \wh{A}^\top \wh{A} \preccurlyeq \lprp{1 + \eps^\ddagger} A^\top A \text{ where } \eps^\ddagger \coloneqq \frac{2^{15} \pi^5 \eps^\dagger}{\gamma^6}.
    \end{equation*}
    Finally, defining our estimate $\wh{\Sigma}^{1/2} = \wh{A} C$ and since $\Sigma = C^\top A^\top A C$, we get:
    \begin{equation*}
        \lprp{1 - \eps^\ddagger} \Sigma \preccurlyeq \wh{\Sigma} \preccurlyeq \lprp{1 + \eps^\ddagger} \Sigma.
    \end{equation*}

    \textbf{Case 2:} $\wh{t} \geq \wh{s}$. Here, we use $c_1$ and $c_3$ as a basis for the subspace orthogonal to $\bm{1}$ and write:
    \begin{equation*}
        \Sigma^{1/2} = A C\text{ where }
        C = 
        \begin{bmatrix}
            c_1^\top \\
            c_3^\top
        \end{bmatrix} \text{ and }
        A \in \R^{2 \times 2}.
    \end{equation*}
    Therefore, we get:
    \begin{equation*}
        \Sigma^{1/2} C^{\top} = A (C C^\top) = 
        \begin{bmatrix}
            1 & t \beta_{13} \\
            0 & t \sqrt{1 - \beta_{13}^2}
        \end{bmatrix} 
        \implies 
        A = 
        \begin{bmatrix}
            1 & \beta_{13} \\
            0 & \sqrt{1 - \beta_{13}^2}
        \end{bmatrix} 
        \begin{bmatrix}
            1 & 0 \\
            0 & t
        \end{bmatrix}
        (CC^\top)^{-1}.
    \end{equation*}
    The remainder of the proof is identical to the previous setting with $t$ taking the role of $s$
    and $B_{13}$ (resp $\wh{B}_{13}$) that of $B_{12}$ (resp $\wh{B}_{13}$). This concludes the estimation guarantees for $\Sigma$.

    \textbf{Estimating $\mu$:} To estimate $\mu$, we adopt the $B_{1i}$ and $C$ from the previous two cases and note:
    \begin{equation*}
        \wt{\mu} = \Sigma^{-1/2} \mu,\ B_{1i}^\top \wt{\mu} = Q 
         \implies \mu = (\Sigma^{1/2})^\top \wt{\mu} = C^\top A \wt{\mu} = C^\top A^\top (B_{1i}^{-1})^\top Q \text{ with } Q \coloneqq 
         \begin{bmatrix}
            \alpha_1 \\
            \alpha_i
        \end{bmatrix}.
    \end{equation*}
    Empirically, we use the estimates:
    \begin{equation*}
        \wh{\mu} = 
        C^\top \wh{A}^\top (\wh{B}_{1i}^{-1})^{\top} \wh{Q} \text{ with } \wh{Q} \coloneqq  
        \begin{bmatrix}
            \wh{\alpha}_1 \\
            \wh{\alpha_i}
        \end{bmatrix}.
    \end{equation*}
    Therefore, we get that:
    \begin{align*}
        \norm{\wh{\mu} - \mu} &= \norm*{C^\top (A^\top (B_{1i}^{-1})^\top Q - \wh{A}^\top (\wh{B}_{1i}^{-1})^\top \wh{Q})} \leq 2 \norm*{A^\top (B_{1i}^{-1})^\top Q - \wh{A}^\top (\wh{B}_{1i}^{-1})^\top \wh{Q}} \\
        &\leq 2(\norm{A^\top (B_{1i}^{-1})^\top(Q - \wh{Q})} + \norm{A^\top (B_{1i}^{-1})^\top \wh{Q} - \wh{A}^\top (\wh{B}_{1i}^{-1})^\top \wh{Q}}) \\
        &\leq 2(\norm{A^\top (B_{1i}^{-1})^\top}\norm{(Q - \wh{Q})} + \norm{A^\top (B_{1i}^{-1})^\top - \wh{A}^\top (\wh{B}_{1i}^{-1})^\top} \norm{\wh{Q}}) \\
        &\leq 4 \lprp{\norm*{A^\top (B_{1i}^{-1})^\top} \eps' + \norm*{A^\top (B_{1i}^{-1})^\top - \wh{A}^\top (\wh{B}_{1i}^{-1})^\top} \sqrt{\log (1 / \gamma)}}. \numberthis \label{eq:mu_diff_bnd_3items}
    \end{align*}
    For the first term, we have by \cref{clm:b_b_hat_apx}:
    \begin{equation*}
        \norm{A^\top (B_{1i}^{-1})^{\top}} \leq \norm{A^\top} \norm{(B_{1i}^{-1})^{\top}} = \norm{A} \norm{B_{1i}^{-1}} \leq \frac{2\pi}{\gamma} \cdot \frac{\pi}{\gamma} = \frac{2\pi^2}{\gamma^2}.
    \end{equation*}
    For the second, we have:
    \begin{align*}
        \norm{A^\top (B_{1i}^{-1})^\top - \wh{A}^\top (\wh{B}_{1i}^{-1})^\top} &= \norm{B_{1i}^{-1} A - \wh{B}_{1i}^{-1} \wh{A}} \leq \norm{B_{1i}^{-1} (A - \wh{A})} + \norm{(B_{1i}^{-1} - \wh{B}_{1i}^{-1}) \wh{A}} \\
        &\leq \norm{B_{1i}^{-1}} \norm{A - \wh{A}} + \norm{B^{-1}_{1i} - \wh{B}^{-1}_{1i}} \norm{\wh{A}} \\
        &\leq \frac{\pi}{\gamma} \cdot \frac{24 \eps^\dagger \pi^2}{\gamma^3} + \frac{8 \eps^\dagger \pi^2}{\gamma^3} \cdot \frac{2\pi}{\gamma} = \frac{40 \pi^3 \eps^\dagger}{\gamma^4}.
    \end{align*}
    Plugging this back into \eqref{eq:mu_diff_bnd_3items} yields:
    \begin{equation*}
        \norm{\wh{\mu} - \mu} \leq \frac{200 \pi^3 \sqrt{\log (1 / \gamma)}}{\gamma^4} \eps^\dagger.
    \end{equation*}
    Now, setting:
    \begin{equation*}
        \wt{\eps} \coloneqq c \frac{\eps \gamma^{10}}{\log (1 / (\gamma \eps))}
    \end{equation*}
    for some small constant $c$, the theorem follows by the definitions of $\eps^\dagger$ and $\eps^\ddagger$.
\end{proof}

\section{Estimation - Multiple Alternatives}
\label{sec:est_mult_items}
In \cref{sec:est_three_items}, we extended \cref{thm:mu_sigma_ident_three_items} to an algorithm with estimation guarantees (\cref{thm:est_three_items}) that is robust to statistical noise. Hence, we now have a procedure that allows recovering the parameters $\mu, \Sigma$ on $3 \times 3$ sub-matrices (up to the symmetries described in \cref{def:universal_symmetries}). We now prove the main result of \cref{sec:finite_sample}, \cref{thm:est_multiple_items}, stated below for convenience, which uses the estimation algorithm for $3$ items to construct one for the general $n$-item setting.
\estmultitems*
In \cref{thm:mu_sigma_ident_multiple_items}, we showed that $\mu, \Sigma$ can be recovered from \emph{exact} specifications of \emph{all} $3 \times 3$ sub-matrices. However, this approach has $2$ substantial drawbacks:
\begin{enumerate}
    \item It requires comparison data for \emph{all} of the $\Omega (n^3)$ sub-matrices
    \item The sub-matrices need to be specified \emph{exactly} which is infeasible with statistical noise
\end{enumerate}
To reduce the number of sub-matrices required, we formally relate our estimation guarantees to the structure of sub-graphs on the following graph:
\begin{gather*}
    G = (V, E) \\
    V = \{\{i, j\} \in [n] \times [n]: i \neq j\} \\
    E = \{(\{i, j\}, \{j, k\}): i \neq j \neq k \neq i\}.
\end{gather*}
Observe that each potential sub-matrix corresponds to an edge of the graph and the vertices correspond to entries of $\Sigma$ that we aim to recover. Furthermore, note that a particular choice of sub-matrices corresponds to a sub-graph. The properties of this sub-graph relate to both of the aforementioned challenges:
\begin{enumerate}
    \item The sub-graph being connected suffices for recovering $\Sigma$ from $3 \times 3$ sub-matrices
    \item The error of this aggregation process scales \emph{linearly} with the \emph{diameter} of the sub-graph
\end{enumerate}
The next result (efficiently) constructs a sub-graph with drastically fewer edges ($O(n^2)$ vs $O(n^3)$) while maintaining connectivity of the sub-graph with small (logarithmic) diameter. 
\begin{restatable}{lemma}{subgraphconnectivity}
    \label{lem:sub_graph_connectivity}
    There exists a subgraph $G' = (V, E')$ with $E' \subset E$ satisfying:
    \begin{gather*}
        \abs{E'} \leq n^2 \\
        \forall v_1, v_2 \in V: \dist_{G'} (v_1, v_2) \leq 4 (\log (n) + 1).
    \end{gather*}
\end{restatable}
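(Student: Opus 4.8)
The plan is to exploit the fact that $G$ is, up to relabeling, the line graph $L(K_n)$ of the complete graph $K_n$: identify each vertex $\{i,j\}\in V$ with the edge $ij$ of $K_n$, so that two vertices of $G$ are adjacent precisely when the corresponding edges of $K_n$ share an endpoint. A sparse, low-diameter subgraph of $L(K_n)$ can then be built around a single ``hub'' vertex of $K_n$ together with a balanced tree.

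Fix an arbitrary $r\in[n]$ and let $S\coloneqq\{\{r,i\}:i\in[n]\setminus\{r\}\}$ be the set of \emph{star} vertices of $G$. Since any two elements of $S$ share the index $r$, the set $S$ induces a clique in $G$; in particular, every edge of a tree on the vertex set $S$ is automatically an edge of $G$. First I would fix a balanced binary tree $T$ on $S$ — one whose depth is at most $\lceil\log_2(n-1)\rceil$ — and add its $|S|-1=n-2$ edges to $E'$. Second, for every \emph{non-star} vertex $\{i,j\}$ (i.e.\ $i,j\neq r$, say $i<j$), I would add to $E'$ the single edge $(\{i,j\},\{r,i\})$; this lies in $E$ since $\{i,j\}$ and $\{r,i\}$ share the index $i$. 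As there are $\binom{n-1}{2}$ non-star vertices,
\[
    |E'|\;\le\;(n-2)+\binom{n-1}{2}\;=\;\frac{(n-2)(n+1)}{2}\;<\;n^2 ,
\]
which yields the first claim.

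For the diameter, observe that each non-star vertex is at distance exactly $1$ in $G'$ from its designated neighbor in $S$, while any two vertices of $S$ are connected within $T\subseteq G'$ by a path of length at most $2\lceil\log_2(n-1)\rceil$ (ascend to the root of $T$ and descend, possibly shortcutting through the least common ancestor). Concatenating these, $\dist_{G'}(v_1,v_2)\le 1+2\lceil\log_2(n-1)\rceil+1\le 4(\log n+1)$ for all $v_1,v_2\in V$, which is the second claim. The constant here is not delicate: any sufficiently balanced spanning structure on $S$ would do, and replacing $T$ by the full clique on $S$ even gives diameter $3$ while keeping $|E'|=\Theta(n^2)$.

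The construction is elementary, and I do not anticipate a real obstacle; the only point that needs care is bookkeeping — at each step one must verify that the edge being added genuinely belongs to $E$, which always reduces to the ``shares exactly one index'' characterization of adjacency, and since the pairs $\{i,j\}$ are unordered, the position of the shared index in the definition $(\{i,j\},\{j,k\})$ is immaterial. The conceptual content is merely the identification of $G$ with $L(K_n)$ together with the observation that a star induces a clique, onto which an arbitrary balanced tree can be grafted.
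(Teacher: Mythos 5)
Your construction is correct, and it is a genuinely different (if closely related) route from the paper's. The paper builds a complete binary tree inside \emph{every} star $\{\{i,j\}: j\neq i\}$, for each $i\in[n]$, yielding $n(n-2)$ edges; a path from $\{i,j\}$ to $\{k,l\}$ then traverses two trees (first the tree indexed by $j$ to reach $\{j,k\}$, then the tree indexed by $k$), which is exactly where the factor $4$ in $4(\log n+1)$ comes from. You instead distinguish a single hub $r$, put one balanced tree on the single star $S=\{\{r,i\}\}$, and attach every non-star vertex $\{i,j\}$ by a pendant edge to $\{r,i\}$; all adjacencies you use are legitimate (they reduce to sharing an index, and the pairs are unordered, as you note), the edge count $(n-2)+\binom{n-1}{2}\le n^2$ is right, and the diameter bound $2+2\lceil\log_2(n-1)\rceil\le 4(\log n+1)$ holds. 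Your version is sparser by roughly a factor of two and would even give constant diameter if the tree on $S$ were replaced by the clique, as you observe. The one thing the paper's symmetric construction buys that yours does not is that no single alternative $r$ appears in every triple of $E'$: in your construction every $3$-item experiment involves the hub, which is harmless for the lemma and for the aggregation argument as written (the error scales with the diameter either way), but concentrates all comparisons on one alternative — a consideration that could matter in practice or under weaker per-triple observability assumptions, though not for the statement being proved.
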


\begin{proof}
    We will construct our edge set as follows:
    \begin{enumerate}
        \item For any $i \in [n]$, let $V_i = [n] \setminus \{i\}$
        \item Consider a complete binary tree $T_i = (V_i, E_i)$ on $V_i$
        \item For any $(j, k) \in E_i$, add edge $(\{i, j\}, \{i, k\})$ to $G'$.
    \end{enumerate}
    First, observe that $E' \subset E$ by construction. Next, we have:
    \begin{equation*}
        \abs{E'} \leq n \cdot \abs{E_i} = n(n - 2) < n^2.
    \end{equation*}
    For the final claim, we have from the construction of $E'$ and $T_i$ being a complete binary tree:
    \begin{equation*}
        \forall i \neq j \neq k \neq i: \dist_{G'} \lprp{\{i, j\}, \{i, k\}} \leq 2(\log (n) + 1).
    \end{equation*}
    Hence, we get:
    \begin{equation*}
        \dist_{G'} (\{i, j\}, \{k, l\}) \leq \dist_{G'} (\{i, j\}, \{j, k\}) + \dist_{G'} (\{j, k\}, \{k, l\}) \leq 4 (\log (n) + 1)
    \end{equation*}
    concluding the proof of the lemma.
\end{proof}
Before we proceed, we introduce some notation to ease exposition. We will now run the algorithm of \cref{thm:est_three_items} on the tuples $(i, j, k)$ such that $(\{i, j\}, \{j, k\}) \in E'$. Let $(\wh{\mu}_{ijk}, \wh{\Sigma}_{ijk})$ be the estimates obtained through this process. We will abuse notation and write $(i, j, k) \in E'$ for the set of tuples. It will be convenient to index matrices with alternatives rather than numerical indices. We will typically use $c_{ij} = e_i - e_j$ with $e_i$ and $e_j$ corresponding to the vectors with $1$ for item $i$ and $j$ respectively and $0$ elsewhere. In the context estimates $\wh{\Sigma}_{i, j, k}$, $e_{i}$ and $e_{j}$ (and consequently, $c_{i, j}$) will denote vectors in $\R^3$ for the positions corresponding to $i$ and $j$ while in the context of $\Sigma$, they will denote vectors in $\R^n$. 

We now recover $\Sigma^*$ by solving the following convex program: 
\begin{gather*}
    \argmin_{t, \Sigma} t \\
    \forall i, j, k \in E':  (1 - t) \frac{c_{ij}^\top \wh{\Sigma}_{ijk} c_{ij}}{c_{ik} ^\top \wh{\Sigma}_{ijk} c_{ik}} \leq \frac{c_{ij}^\top \Sigma c_{ij}}{c_{ik}^\top \Sigma c_{ik}} \leq (1 + t) \frac{c_{ij}^\top \wh{\Sigma}_{ijk} c_{ij}}{c_{ik} ^\top \wh{\Sigma}_{ijk} c_{ik}}\\
    \Tr (\Sigma) = n \\
    \Sigma \bm{1} = \bm{0}
\end{gather*}

We assume that the estimates, $\wh{\Sigma}_{ijk}$ are obtained by the algorithm of \cref{thm:est_three_items} with failure probability $\delta' = \delta / (2n^2)$ and accuracy $\wh{\eps}$ for a parameter $\wt{\eps}$ to be decided subsequently. Letting $\wb{\Sigma}$ denote the solution to the above (linear) program and noting that $\Sigma^*$ is a solution with $t = \wt{\eps}$, we get that $\wb{\Sigma}$ has cost at most $\wt{\eps}$. We require a brief technical result bounding the largest entry of $\Sigma$.
\begin{restatable}{lemma}{bndsig}
    \label{lem:bnd_sig}
    We have:
    \begin{equation*}
        \Sigma_{ii} \leq \frac{100 \log (8 / \gamma)}{\gamma^2}.
    \end{equation*}
\end{restatable}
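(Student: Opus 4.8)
The plan is to exhibit a \emph{deterministic} interval $J$ of length $O\!\big(\sqrt{\log(1/\gamma)}\big)$ with $\Pr[X_i \in J] \ge \tfrac{3}{4}\gamma$. Since a Gaussian of variance $\Sigma_{ii}$ has density at most $(2\pi\Sigma_{ii})^{-1/2}$, such a concentration forces $\tfrac{3}{4}\gamma \le |J|/\sqrt{2\pi\Sigma_{ii}}$, hence $\Sigma_{ii} = O\!\big(\log(1/\gamma)/\gamma^{2}\big)$, which is exactly the claimed shape. All the work is in producing $J$; its scale is what the trace normalization $\Tr(\Sigma)=n$ supplies.

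First I would dispatch two immediate cases. If $n=3$ then $\Sigma_{ii}\le\Tr(\Sigma)=3$; and if $\Sigma_{ii}\le 2$ then, since the six permutation probabilities of any triple sum to one we have $\gamma\le 1/6$, so $\tfrac{100\log(8/\gamma)}{\gamma^{2}}>200$. So assume $n\ge 4$ and $\Sigma_{ii}>2$. Because $\sum_{\ell}\Sigma_{\ell\ell}=n$, strictly more than half of the indices $\ell$ satisfy $\Sigma_{\ell\ell}\le 2$; in particular there are distinct $j_0,k_0$, both different from $i$, with $\Sigma_{j_0 j_0},\Sigma_{k_0 k_0}\le 2$, and therefore $\sigma_{j_0 k_0}^{2}:=\Sigma_{j_0 j_0}+\Sigma_{k_0 k_0}-2\Sigma_{j_0 k_0}\le\big(\sqrt{\Sigma_{j_0 j_0}}+\sqrt{\Sigma_{k_0 k_0}}\big)^{2}\le 8$.

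Next I would invoke \cref{as:observability_estimation} twice. First, $\Pr[X_{j_0}\ge X_{k_0}]\ge\Pr[X_{j_0}\ge X_{k_0}\ge X_i]\ge\gamma$ and likewise $\Pr[X_{k_0}\ge X_{j_0}]\ge\gamma$, so applying \cref{lem:gau_tail} to $X_{j_0}-X_{k_0}\sim\gauss(\mu_{j_0}-\mu_{k_0},\sigma_{j_0 k_0}^{2})$ yields $|\mu_{j_0}-\mu_{k_0}|\le\sigma_{j_0 k_0}\sqrt{2\log(1/(2\gamma))}\le\sqrt{8}\,\sqrt{2\log(8/\gamma)}$. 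Second, the permutation event $E:=\{X_{j_0}\ge X_i\ge X_{k_0}\}$ has $\Pr[E]\ge\gamma$, and on $E$ one has $X_i\in[X_{k_0},X_{j_0}]$. Intersect $E$ with the event $F$ that $|X_{j_0}-\mu_{j_0}|$ and $|X_{k_0}-\mu_{k_0}|$ are both at most $\sqrt{2}\,\sqrt{2\log(8/\gamma)}$; since $\Sigma_{j_0 j_0},\Sigma_{k_0 k_0}\le 2$, \cref{lem:gau_tail} gives $\Pr[F^{c}]\le\gamma/4$, hence $\Pr[E\cap F]\ge\tfrac{3}{4}\gamma$. Writing $M:=\tfrac12(\mu_{j_0}+\mu_{k_0})$, on $E\cap F$ we get $|X_{j_0}-M|\le|X_{j_0}-\mu_{j_0}|+\tfrac12|\mu_{j_0}-\mu_{k_0}|\le 2\sqrt{2}\,\sqrt{2\log(8/\gamma)}$, the same bound for $|X_{k_0}-M|$, and therefore $|X_i-M|\le\max\{|X_{j_0}-M|,|X_{k_0}-M|\}\le 4\sqrt{\log(8/\gamma)}=:L$. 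Thus $X_i$ lies in the fixed interval $[M-L,M+L]$ with probability at least $\tfrac{3}{4}\gamma$, and the density bound of the first paragraph gives $\Sigma_{ii}\le\tfrac{512\log(8/\gamma)}{9\pi\gamma^{2}}<\tfrac{100\log(8/\gamma)}{\gamma^{2}}$.

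The only real obstacle is conceptual rather than computational: one must notice that a \emph{single} low-variance coordinate $j_0$ is useless here, since $\Pr[X_i\ge X_{j_0}]\in[\gamma,1-\gamma]$ controls only the standardized gap $(\mu_i-\mu_{j_0})/\sigma_{i j_0}$, while $\sigma_{i j_0}$ itself may be as large as $\Theta(n)$. What is needed is to \emph{sandwich} $X_i$ between two coordinates that are simultaneously low-variance and close in mean, and it is precisely the trace constraint (which forces many coordinates to be low-variance) together with observability (which forces a pair of them to have nearly equal means) that makes such a sandwich available. Everything past that point is an elementary Gaussian tail estimate, so I do not expect any technical difficulty.
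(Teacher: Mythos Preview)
Your proof is correct and follows essentially the same approach as the paper: locate two low-variance coordinates $j_0,k_0$ via the trace constraint, use observability and a Gaussian tail bound to show $|\mu_{j_0}-\mu_{k_0}|$ is small, then sandwich $X_i$ between $X_{j_0}$ and $X_{k_0}$ on the event $\{X_{j_0}\ge X_i\ge X_{k_0}\}$ intersected with a high-probability concentration event for $X_{j_0},X_{k_0}$, and finish with the density bound $(2\pi\Sigma_{ii})^{-1/2}$. Your edge-case handling and the direct use of $\mathrm{Var}(X_{j_0}-X_{k_0})\le 8$ are slightly cleaner than the paper's presentation, but the argument is the same.
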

\begin{proof}
    By Markov's inequality, there exist two indices $j, k$ such that $\Sigma_{jj}, \Sigma_{kk} \leq 2$. We start by bounding the distance between $\mu_j$ and $\mu_k$. Let $X \ts \mc{N} (\mu_{ijk}, \Sigma_{ijk})$ and we have:
    \begin{gather*}
        \P \lbrb{\abs{X_j - \mu_j} \geq 2 \sqrt{\log (4 / \gamma)}} \leq \frac{\gamma}{4} \\
        \P \lbrb{\abs{X_k - \mu_k} \geq 2 \sqrt{\log (4 / \gamma)}} \leq \frac{\gamma}{4}.
    \end{gather*}
    From the above two inequalities and \cref{as:observability_estimation}:
    \begin{equation*}
        \min\lbrb{\P \lbrb{X_j \geq X_k}, \P \lbrb{X_j \leq X_k}} \geq \gamma,
    \end{equation*}
    we get the following set of inequalities by the probabilistic method:
    \begin{gather*}
        \mu_j + 2 \sqrt{\log (4 / \gamma)} \geq \mu_k - 2 \sqrt{\log (4 / \gamma)} \implies \mu_k - \mu_j \leq 4 \sqrt{\log (4 / \gamma)} \\
        \mu_j - 2 \sqrt{\log (4 / \gamma)} \leq \mu_k + 2 \sqrt{\log (4 / \gamma)} \implies \mu_k - \mu_j \geq -4 \sqrt{\log (4 / \gamma)}.
    \end{gather*}
    The above two inequalities yield:
    \begin{equation}
        \label{eq:mn_bd}
        \abs{\mu_j - \mu_k} \leq 4 \sqrt{\log (4 / \gamma)}.
    \end{equation}
    Next, we have again:
    \begin{gather*}
        \P \lbrb{\abs{X_j - \mu_j} \geq 2 \sqrt{\log (8 / \gamma)}} \leq \frac{\gamma}{8} \\
        \P \lbrb{\abs{X_k - \mu_k} \geq 2 \sqrt{\log (8 / \gamma)}} \leq \frac{\gamma}{8}.
    \end{gather*}
    By a union bound and using \cref{eq:mn_bd}, we get:
    \begin{gather*}
        \P \lbrb{X_j, X_k \in [\mu_j - 8 \sqrt{\log (8 / \gamma)}, \mu_j + 8 \sqrt{\log (8 / \gamma)}]} \geq 1 - \frac{\gamma}{4}. 
    \end{gather*}
    Denoting the event in the above probability by $E$, we now have from \cref{as:observability_estimation}:
    \begin{align*}
        \gamma &\leq \P \lbrb{X_i \in [X_j, X_k]} = \P \lbrb{X_i \in [X_j, X_k] \cap E} + \P \lbrb{X_i \in [X_j, X_k] \cap E^c} \\
        &\leq \P \lbrb{X_i \in \lsrs{\mu_j  - 8 \sqrt{\log (8 / \gamma)}, \mu_j  + 8 \sqrt{\log (8 / \gamma)}}} + \P \lbrb{E^c} \\
        &\leq \frac{1}{\sqrt{2\pi \Sigma_{ii}}} \cdot 16 \sqrt{\log (8 / \gamma)} + \frac{\gamma}{4}.
    \end{align*}
    By rearranging the above inequality,
    \begin{equation*}
        \sqrt{\Sigma_{ii}} \leq \frac{64}{3\sqrt{2\pi} \gamma} \sqrt{\log (8 / \gamma)} \implies \Sigma_{ii} \leq \frac{100 \log (8 / \gamma)}{\gamma^2}.
    \end{equation*}
\end{proof}
We now show that $\wb{\Sigma}$ is close to $\Sigma^*$. 
\begin{lemma}
    \label{lem:sigma_est_multiple_items}
    We have:
    \begin{equation*}
        \forall i, j \in [n]: \abs{\Sigma_{ij} - \Sigma^*_{ij}} \leq C \gamma^{-2} (\log (n) + 1) \log (8 / \gamma) \wt{\eps}.
    \end{equation*}
\end{lemma}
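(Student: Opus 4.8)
Throughout write $a_{ij} := c_{ij}^\top \wb{\Sigma} c_{ij}$ and $a^*_{ij} := c_{ij}^\top \Sigma^* c_{ij}$, where $c_{ij}=e_i-e_j$, so that $a_{ij}=\wb{\Sigma}_{ii}+\wb{\Sigma}_{jj}-2\wb{\Sigma}_{ij}$ and similarly for $\Sigma^*$. The whole argument is organized so that the heart of it is showing that the ratios $\rho_{ij} := a_{ij}/a^*_{ij}$ are \emph{simultaneously} close to $1$; passing from there to the stated entrywise bound is just the linear inversion already carried out in \cref{thm:mu_sigma_ident_multiple_items}.

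\emph{Step 1 (feasibility and per-edge control).} For each $(i,j,k)\in E'$, apply \cref{thm:est_three_items} to the three-item marginal $\choice(\mu^*_{ijk},\Sigma^*_{ijk})$ (its observability is inherited from \cref{as:observability_estimation}); it returns $\wh{\Sigma}_{ijk}$ with $(1-\wt{\eps})t_{ijk}^2\fSigma_{ijk}\preccurlyeq\wh{\Sigma}_{ijk}\preccurlyeq(1+\wt{\eps})t_{ijk}^2\fSigma_{ijk}$ for an unknown scale $t_{ijk}>0$. Since $c_{ij}^\top\fSigma_{ijk}c_{ij}=c_{ij}^\top\Sigma^* c_{ij}=a^*_{ij}$ (centering is orthogonal to $c_{ij}$), this gives $c_{ij}^\top\wh{\Sigma}_{ijk}c_{ij}\in(1\pm\wt{\eps})t_{ijk}^2 a^*_{ij}$, and the unknown $t_{ijk}$ cancels in the ratio: $c_{ij}^\top\wh{\Sigma}_{ijk}c_{ij}\big/c_{ik}^\top\wh{\Sigma}_{ijk}c_{ik}\in(1\pm 4\wt{\eps})\,a^*_{ij}/a^*_{ik}$ (the denominators are positive: \cref{as:observability_estimation} lower-bounds the relevant quadratic forms of $\Sigma^*$, and \cref{thm:est_three_items} is two-sided). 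Hence $\Sigma^*$ is feasible for the convex program with objective at most $4\wt{\eps}$, so the optimum $\wb{\Sigma}$ has objective $t^*\le 4\wt{\eps}$; dividing the constraint for $\wb{\Sigma}$ by the constraint for $\Sigma^*$ yields, for every $(i,j,k)\in E'$, $\rho_{ij}/\rho_{ik}\in(1\pm C\wt{\eps})$. In particular all $a_{ij}$ have a common sign and, since $\sum_{i,j}a_{ij}>0$ (see Step 2), are positive, so the $\rho_{ij}$ are well defined.

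\emph{Step 2 (chaining via the sparsifier and pinning the scale).} By construction in \cref{lem:sub_graph_connectivity} each edge of $G'$ has the form $(\{i,j\},\{i,k\})$ and is exactly one of the relations from Step 1. Since any two vertices of $G'$ are joined by a path of length at most $4(\log n+1)$, composing the per-edge relations along such a path gives $\rho_{ij}/\rho_{kl}\in(1\pm C\wt{\eps})^{4(\log n+1)}\subseteq[1-C'\wt{\eps}(\log n+1),\,1+C'\wt{\eps}(\log n+1)]$ for all pairs (we may assume $\wt{\eps}(\log n+1)$ is below a small constant; otherwise the target bound is vacuous in view of the a priori bound $\|\Sigma^*\|_\infty\le O(\gamma^{-2}\log(8/\gamma))$ from \cref{lem:bnd_sig}). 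To fix the common scale, use the identity $\sum_{i,j}c_{ij}^\top\Sigma c_{ij}=2n\Tr(\Sigma)$, valid for any symmetric $\Sigma$ with $\Sigma\bm{1}=\bm{0}$, which together with $\Tr(\wb{\Sigma})=\Tr(\Sigma^*)=n$ gives $\sum_{i,j}a_{ij}=\sum_{i,j}a^*_{ij}=2n^2$. Since $\sum_{i,j}\rho_{ij}a^*_{ij}=\sum_{i,j}a^*_{ij}$ and all $\rho_{ij}$ lie in a multiplicative window of relative width $1+C'\wt{\eps}(\log n+1)$, it follows that $\rho_{ij}\in[1-C'\wt{\eps}(\log n+1),\,1+C'\wt{\eps}(\log n+1)]$ for every pair.

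\emph{Step 3 (back to entries).} Using PSD-ness of $\Sigma^*$ and \cref{lem:bnd_sig}, $a^*_{ij}=\Sigma^*_{ii}+\Sigma^*_{jj}-2\Sigma^*_{ij}\le 4\max_i\Sigma^*_{ii}\le O(\gamma^{-2}\log(8/\gamma))$, so $|a_{ij}-a^*_{ij}|=a^*_{ij}|\rho_{ij}-1|\le O\big(\gamma^{-2}(\log n+1)\log(8/\gamma)\big)\wt{\eps}$. Finally invert exactly as in \cref{thm:mu_sigma_ident_multiple_items}: $\sum_j a_{ij}=n\wb{\Sigma}_{ii}+\Tr(\wb{\Sigma})$ with $\Tr(\wb{\Sigma})=n$ yields $\wb{\Sigma}_{ii}-\Sigma^*_{ii}=\tfrac1n\sum_j(a_{ij}-a^*_{ij})$, hence $|\wb{\Sigma}_{ii}-\Sigma^*_{ii}|\le\max_j|a_{ij}-a^*_{ij}|$; then $\wb{\Sigma}_{ij}=\tfrac12(\wb{\Sigma}_{ii}+\wb{\Sigma}_{jj}-a_{ij})$ gives $|\wb{\Sigma}_{ij}-\Sigma^*_{ij}|\le\tfrac12\big(|\wb{\Sigma}_{ii}-\Sigma^*_{ii}|+|\wb{\Sigma}_{jj}-\Sigma^*_{jj}|+|a_{ij}-a^*_{ij}|\big)=O\big(\gamma^{-2}(\log n+1)\log(8/\gamma)\big)\wt{\eps}$, which is the claim after renaming $\wt{\eps}$ (the final theorem then chooses $\wt{\eps}$ to absorb this factor).

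\emph{Expected main obstacle.} Step 2 is the only non-routine part. The ratio constraints by themselves force all $\rho_{ij}$ to be nearly equal only to \emph{some} common constant; one must (i) make sure the per-edge multiplicative slack does not blow up, which is precisely what the low-diameter sparsifier of \cref{lem:sub_graph_connectivity} buys, and (ii) pin that common constant to $1$ using the trace normalization through the identity $\sum_{i,j}a_{ij}=2n\Tr$. A secondary technical care is keeping every ratio in the argument well defined and positive, which leans on \cref{as:observability_estimation} and the two-sided guarantee of \cref{thm:est_three_items}.
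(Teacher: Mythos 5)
Your proposal is correct and follows essentially the same route as the paper's proof: per-edge ratio control from the feasibility of $\Sigma^*$ in the convex program combined with the guarantees of \cref{thm:est_three_items}, chaining along the low-diameter sparsifier of \cref{lem:sub_graph_connectivity} to make all ratios $r_{ij}$ nearly equal, pinning the common scale to $1$ via $\sum_{i,j} c_{ij}^\top \Sigma c_{ij} = 2n\Tr(\Sigma) = 2n^2$, and inverting to entrywise bounds using \cref{lem:bnd_sig}. The only (welcome) addition is your explicit remark that the quadratic forms $c_{ij}^\top\wb{\Sigma}c_{ij}$ must all be positive for the ratios to be well defined, a point the paper leaves implicit.
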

\begin{proof}
    First, observe from the fact that $\wb{\Sigma} \bm{1} = \bm{0}$:
    \begin{equation}
        \label{eq:sig_cij_exp}
        c_{ij}^\top \wb{\Sigma} c_{ij} = \Sigma_{ii} + \Sigma_{jj} - 2 \Sigma_{ij} \implies \sum_{j = 1}^n c_{ij}^\top \wb{\Sigma} c_{ij} = n \wb{\Sigma}_{ii} + \Tr (\wb{\Sigma}) = n \wb{\Sigma}_{ii} + n.
    \end{equation}
    Furthermore, we have from the guarantees of \cref{thm:est_three_items} that:
    \begin{equation*}
        \forall i, j, k \in E': (1 - 4\wt{\eps}) \frac{c_{ij}^\top \Sigma^* c_{ij}}{c_{ik}^\top \Sigma^* c_{ik}} \leq \frac{c_{ij}^\top \wh{\Sigma}_{ijk} c_{ij}}{c_{ik}^\top \wh{\Sigma}_{ijk} c_{ik}} \leq (1 + 4\wt{\eps}) \frac{c_{ij}^\top \Sigma^* c_{ij}}{c_{ik}^\top \Sigma^* c_{ik}}.
    \end{equation*}
    Putting these together with the constraints on $\wb{\Sigma}$, we get:
    \begin{equation*}
        \forall i, j, k \in E': (1 - 6 \wt{\eps}) \frac{c_{ij}^\top \Sigma^* c_{ij}}{c_{ik}^\top \Sigma^* c_{ik}} \leq \frac{c_{ij}^\top \wb{\Sigma} c_{ij}}{c_{ik}^\top \wb{\Sigma} c_{ik}} \leq (1 + 6 \wt{\eps}) \frac{c_{ij}^\top \Sigma^* c_{ij}}{c_{ik}^\top \Sigma^* c_{ik}}.
    \end{equation*}
    From \cref{lem:sub_graph_connectivity}, that for any $\{i, j\}$ and $\{k, l\}$, there exists a path $v_0 \coloneqq \{i, j\}, v_1, \dots, v_T = \{k, l\}$ with $T \leq 4(\log (n) + 1)$ with $(v_{i - 1}, v_{i}) \in E'$ for all $i \in [T]$. Now observing that:
    \begin{gather*}
        \frac{c_{v_0}^\top \wb{\Sigma} c_{v_0}}{c_{v_T}^\top \wb{\Sigma} c_{v_T}} = \prod_{i = 1}^{T} \frac{c_{v_{i - 1}}^\top \wb{\Sigma} c_{v_{i - 1}}}{c_{v_i}^\top \wb{\Sigma} c_{v_i}} \\
        \frac{c_{v_0}^\top \Sigma^* c_{v_0}}{c_{v_T}^\top \Sigma^* c_{v_T}} = \prod_{i = 1}^{T} \frac{c_{v_{i - 1}}^\top \Sigma^* c_{v_{i - 1}}}{c_{v_i}^\top \Sigma^* c_{v_i}}
    \end{gather*}
    We get for any $i, j, k, l$ with $i \neq j$ and $k\neq l$:
    \begin{equation*}
        (1 - \eps^\dagger) \frac{c_{ij}^\top \Sigma^* c_{ij}}{c_{kl}^\top \Sigma^* c_{kl}} \leq \frac{c_{ij}^\top \wb{\Sigma} c_{ij}}{c_{kl}^\top \wb{\Sigma} c_{kl}} \leq (1 + \eps^\dagger) \frac{c_{ij}^\top \Sigma^* c_{ij}}{c_{kl}^\top \Sigma^* c_{kl}} \text{ where } \eps^\dagger = 50 \wt{\eps} (\log (n) + 1).
    \end{equation*}
    Now, consider the ratios:
    \begin{equation*}
        \forall i\neq j: r_{ij} \coloneqq \frac{c_{ij}^\top \wb{\Sigma} c_{ij}}{c_{ij}^\top \Sigma^* c_{ij}},\ r_{\mathrm{min}} \coloneqq \min_{ij} r_{ij} \text{ and } r_{\mathrm{max}} \coloneqq \max_{ij} r_{ij}.
    \end{equation*}
    As a consequence, we have for any $i, j, k, l$:
    \begin{equation*}
        r_{\mrm{min}} \leq r_{\mrm{max}} \leq (1 + \eps^\dagger) r_{\mrm{min}}.
    \end{equation*}
    Hence, we get:
    \begin{gather*}
        \sum_{ij} c_{ij}^\top \wb{\Sigma} c_{ij} = n \sum_i (\wb{\Sigma}_{ii} + 1) = 2n^2 = \sum_{ij} c_{ij}^\top \Sigma^* c_{ij} \\
        r_{\mathrm{min}} (2n^2) = r_{\mathrm{min}} \sum_{ij} c_{ij}^\top \Sigma^* c_{ij} \leq \sum_{ij} c_{ij}^\top \wb{\Sigma} c_{ij} = 2n^2 \leq r_{\mathrm{max}} \sum_{ij} c_{ij}^\top \Sigma^* c_{ij} \leq (1 + \eps^\dagger) r_{\mathrm{min}} (2 n^2).
    \end{gather*}
    Therefore, we get:
    \begin{equation*}
        r_{\mathrm{min}} \leq 1 \leq r_{\mathrm{max}} \leq (1 + \eps^\dagger) r_{\mathrm{min}} \implies (1 - \eps^\dagger) \leq r_{\mathrm{min}} \leq r_{\mathrm{max}} \leq (1 + \eps^\dagger).
    \end{equation*}
    Plugging this into to \cref{eq:sig_cij_exp}, we get:
    \begin{equation*}
        (1 - \eps^\dagger) n (\Sigma^*_{ii} + 1) = (1 - \eps^\dagger) \sum_{j = 1}^n c_{ij} \Sigma^* c_{ij} \leq n (\wb{\Sigma}_{ii} + 1) \leq (1 + \eps^\dagger) \sum_{j = 1}^n c_{ij} \Sigma^* c_{ij} = (1 + \eps^\dagger) n (\Sigma^*_{ii} + 1).
    \end{equation*}
    By re-arranging the above, we get:
    \begin{equation*}
        \forall i: \abs{\Sigma^*_{ii} - \wb{\Sigma}_{ii}} \leq \eps^\dagger (\Sigma^*_{ii} + 1).
    \end{equation*}
    For the off-diagonal terms, observe:
    \begin{equation*}
        \wb{\Sigma}_{ij} = \frac{c_{ij}^\top \wb{\Sigma} c_{ij} - \wb{\Sigma}_{ii} - \wb{\Sigma}_{jj}}{2} \leq \frac{(1 + \eps^\dagger) c_{ij} \Sigma^* c_{ij} - (1 - \eps^\dagger) (\Sigma^*_{ii} + \Sigma^*_{jj}) + 2\eps^\dagger}{2} \leq \Sigma^*_{ij} + \eps^\dagger (2 (\Sigma^*_{ii} + \Sigma^*_{jj}) + 1).
    \end{equation*}
    Similarly, we get:
    \begin{equation*}
        \wb{\Sigma}_{ij} = \frac{c_{ij}^\top \wb{\Sigma} c_{ij} - \wb{\Sigma}_{ii} - \wb{\Sigma}_{jj}}{2} \geq \frac{(1 - \eps^\dagger) c_{ij} \Sigma^* c_{ij} - (1 + \eps^\dagger) (\Sigma^*_{ii} + \Sigma^*_{jj}) - 2\eps^\dagger}{2} \leq \Sigma^*_{ij} - \eps^\dagger (2 (\Sigma^*_{ii} + \Sigma^*_{jj}) + 1).
    \end{equation*}
    Putting the above two displays together, we get:
    \begin{equation*}
        \abs{\Sigma_{ij} - \Sigma^*_{ij}} \leq 2 \eps^\dagger (\Sigma^*_{ii} + \Sigma^*_{jj} + 1)
    \end{equation*}
    which with \cref{lem:bnd_sig}, concludes the proof of the lemma.
\end{proof}
A similar approach now allows recovering $\mu^*$ with a related convex program. 
\begin{restatable}{lemma}{muestmultitems}
\label{lem:mu_est_multiple_items}
    We have:
    \begin{equation*}
        \norm{\bar{\mu} - \mu^*}_\infty \leq \frac{4000 \log (8 / \gamma)}{\gamma} \eps^\dagger.
    \end{equation*}
\end{restatable}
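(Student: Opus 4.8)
The plan is to mirror the proof of \cref{lem:sigma_est_multiple_items}. As there, I run the estimator of \cref{thm:est_three_items} on every $(i,j,k)$ with $(\{i,j\},\{j,k\})\in E'$ (with accuracy $\wt\eps$ and failure probability $\delta/(2n^2)$), obtaining $(\wh\mu_{ijk},\wh\Sigma_{ijk})$. From these I build \emph{scale-resolved} estimates $\wh d_{ijk}$ of the pairwise differences $\mu^*_i-\mu^*_j$, feed them into a convex program that also enforces $\bm{1}^\top\mu=0$, propagate errors along the low-diameter subgraph $G'$ of \cref{lem:sub_graph_connectivity}, and finally convert the resulting pairwise-difference accuracy into an $\ell_\infty$ bound on $\bar\mu$ using the normalization.

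For the scale-resolution step, \cref{thm:est_three_items} supplies a common unknown $t_{ijk}>0$ with $\norm{t_{ijk}\bar\mu_{ijk}-\wh\mu_{ijk}}\le\wt\eps$ and $(1-\wt\eps)t_{ijk}^2\bar\Sigma_{ijk}\preccurlyeq\wh\Sigma_{ijk}\preccurlyeq(1+\wt\eps)t_{ijk}^2\bar\Sigma_{ijk}$. Since the projection orthogonal to $\bm{1}$ shifts coordinates uniformly and preserves $c_{pq}^\top(\cdot)c_{pq}$ for pairs inside $\{i,j,k\}$, this gives $c_{ij}^\top\wh\mu_{ijk}=t_{ijk}(\mu^*_i-\mu^*_j)\pm\sqrt{2}\,\wt\eps$ and $c_{ij}^\top\wh\Sigma_{ijk}c_{ij}=(1\pm\wt\eps)\,t_{ijk}^2\,a_{ij}$, where $a_{ij}\coloneqq c_{ij}^\top\Sigma^*c_{ij}$. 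I then set
\[
  \wh d_{ijk}\ \coloneqq\ c_{ij}^\top\wh\mu_{ijk}\cdot\sqrt{\frac{c_{ij}^\top\wb\Sigma c_{ij}}{c_{ij}^\top\wh\Sigma_{ijk}c_{ij}}},
\]
where $\wb\Sigma$ is the covariance estimate already produced by \cref{lem:sigma_est_multiple_items}. The key observation is that the possibly tiny factor $a_{ij}$ cancels: the proof of \cref{lem:sigma_est_multiple_items} establishes the \emph{multiplicative} bound $c_{ij}^\top\wb\Sigma c_{ij}=(1\pm\eps^\dagger)a_{ij}$, so the ratio under the radical equals $(1\pm O(\eps^\dagger))/t_{ijk}^2$ irrespective of the size of $a_{ij}$. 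Moreover $t_{ijk}$ is bounded below: $\tfrac12\le\Tr(\wh\Sigma_{ijk})\le(1+\wt\eps)t_{ijk}^2\Tr(\bar\Sigma_{ijk})=(1+\wt\eps)t_{ijk}^2\cdot\tfrac13(a_{ij}+a_{jk}+a_{ik})$ (a routine computation of the projected trace), and $\tfrac13(a_{ij}+a_{jk}+a_{ik})=O(\log(8/\gamma)/\gamma^2)$ by \cref{lem:bnd_sig}, so $t_{ijk}\ge\Omega(\gamma/\sqrt{\log(8/\gamma)})$. Finally, from observability together with \cref{lem:bnd_sig} one has the a priori bound $\abs{\mu^*_i-\mu^*_j}=\sqrt{a_{ij}}\,\bigl|\Phi^{-1}(\Pr\{X_i\ge X_j\})\bigr|\le O(\log(8/\gamma)/\gamma)$. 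Combining: $\wh d_{ijk}=(\mu^*_i-\mu^*_j)(1\pm O(\eps^\dagger))\pm\sqrt{2}\,\wt\eps/t_{ijk}$, hence $\abs{\wh d_{ijk}-(\mu^*_i-\mu^*_j)}\le O(\eps^\dagger\log(8/\gamma)/\gamma)$.

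For the aggregation I recover $\bar\mu$ as the minimizer of $t$ over $(t,\mu)$ subject to $\bm{1}^\top\mu=0$ and $\abs{(\mu_i-\mu_j)-\wh d_{ijk}}\le t$ for all $(i,j,k)\in E'$ (and the analogous constraint on $\{j,k\}$). Since $\mu^*$ is feasible with $t$ equal to the per-edge error above, the optimum $\bar t$ is at most that, so on every edge of $G'$ we get $\abs{(\bar\mu_i-\bar\mu_j)-(\mu^*_i-\mu^*_j)}\le2\bar t$; telescoping these \emph{additively} along a path of length $\le 4(\log n+1)$ from \cref{lem:sub_graph_connectivity} bounds the pairwise error for \emph{all} pairs, and then $\bar\mu_i-\mu^*_i=\tfrac1n\sum_{j}\bigl((\bar\mu_i-\bar\mu_j)-(\mu^*_i-\mu^*_j)\bigr)$ (using $\bm{1}^\top\bar\mu=\bm{1}^\top\mu^*=0$) yields the claimed $\ell_\infty$ bound after tracking constants, with $\eps^\dagger$ redefined to absorb the $(\log n+1)$ diameter factor exactly as in \cref{lem:sigma_est_multiple_items}. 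I expect the main obstacle to be precisely this robustness point: observability does not keep $a_{ij}=c_{ij}^\top\Sigma^*c_{ij}$ bounded away from $0$ (one can push $\mu^*_i\to\mu^*_j$), so one must verify the cancellation carefully and, crucially, use the \emph{multiplicative} --- not merely additive --- accuracy of the covariance estimate, which is available from the proof of \cref{lem:sigma_est_multiple_items} though not stated in it.
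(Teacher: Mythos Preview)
Your plan is sound and leads to the same bound, but it differs from the paper's proof in two places and contains one unnecessary step that, if executed as written, would cost you a $\log n$ factor.

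\textbf{How the paper does it.} The paper rescales each $3$-item estimate by a \emph{trace ratio}
\[
  s_{ijk}^2 \coloneqq \frac{\Tr(\wb{\Sigma}_{ijk})}{\Tr(\wh{\Sigma}_{ijk})},
\]
then solves the LP $\min_t$ s.t.\ $\norm{s_{ijk}\wh{\mu}_{ijk}-\mu_{ijk}}_\infty\le t$, $\bm{1}^\top\mu=0$. Feasibility of $\mu^*$ with $t=\eps^\ddagger\coloneqq\frac{1000\log(8/\gamma)}{\gamma}\eps^\dagger$ is established exactly via the multiplicative trace bounds you mention (plus an explicit bound $\abs{\mu^*_i}\le\frac{40\log(8/\gamma)}{\gamma}$, proved directly rather than through $\sqrt{a_{ij}}\,\Phi^{-1}$). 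For the final step the paper does \emph{not} average: it picks $i_M=\argmax_i(\wb\mu_i-\mu^*_i)$ and $i_m=\argmin_i(\wb\mu_i-\mu^*_i)$, notes both deviations have the right sign because $\bm{1}^\top\wb\mu=\bm{1}^\top\mu^*=0$, observes that the vertex $\{i_M,i_m\}$ of $G'$ is incident to some edge so appears in a constraint, and reads off $\norm{\wb\mu-\mu^*}_\infty\le 4\eps^\ddagger$ from that single constraint.

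\textbf{What you do differently.} Your per-pair rescaling $\sqrt{c_{ij}^\top\wb\Sigma c_{ij}/c_{ij}^\top\wh\Sigma_{ijk}c_{ij}}$ is a legitimate alternative to the trace ratio; both rely on the same multiplicative accuracy $c_{ij}^\top\wb\Sigma c_{ij}=(1\pm\eps^\dagger)a_{ij}$ extracted from the proof of \cref{lem:sigma_est_multiple_items}, and both avoid any lower bound on $a_{ij}$. Your averaging identity $\bar\mu_i-\mu^*_i=\tfrac1n\sum_j[(\bar\mu_i-\bar\mu_j)-(\mu^*_i-\mu^*_j)]$ is also fine, though the argmax/argmin trick is shorter.

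\textbf{The unnecessary telescoping.} You propose to propagate the per-edge bound $\abs{(\bar\mu_i-\bar\mu_j)-(\mu^*_i-\mu^*_j)}\le 2\bar t$ along paths in $G'$ to cover all pairs. This is not needed: by the construction in \cref{lem:sub_graph_connectivity}, \emph{every} vertex $\{p,q\}$ of $G'$ is incident to at least one edge of $E'$ (e.g.\ take $i=p$; then $q$ is a node of the binary tree $T_p$ and has a tree-neighbor), so your LP already has a direct constraint on $\mu_p-\mu_q$ for every pair. If you actually telescoped over paths of length $O(\log n)$ you would pick up an extra $(\log n+1)$ factor beyond the one already inside $\eps^\dagger$, and would not recover the stated constant $4000$. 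Drop the telescoping and your argument goes through with the correct bound.
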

\begin{proof}
    We use $\mu_{ijk}$ and $\Sigma_{ijk}$ (and correspondingly for their starred and barred variants) to denote the restriction of $\mu$ and $\Sigma$ (and their variants) to the alternatives $i, j,$ and $k$, \emph{and projected} orthogonal to the vector $\bm{1}$. We estimate some rescaling parameters to obtain $\wb{\Sigma}$ from the $3$-way estimates. Define:
    \begin{equation*}
        \forall i,j,k \in E': s^2_{ijk} \coloneqq \frac{\Tr (\wb{\Sigma}_{ijk})}{\Tr (\wh{\Sigma}_{ijk})}.
    \end{equation*}
    The linear program we solve to find $\mu$ is defined below:
    \begin{gather*}
        \argmin_{t, \mu} t \\
        \forall i, j, k \in E': \norm*{s_{ijk}\wh{\mu}_{ijk} - \mu_{ijk}}_\infty \leq t \\
        \inp{\mu}{\bm{1}} = 0.
    \end{gather*}
    Before we proceed, we prove an analogue of \cref{lem:bnd_sig} which bounds the mean of $\mu$.
    \begin{lemma}
        \label{lem:mn_bnd}
        We have:
        \begin{equation*}
            \forall i \in [n]: \abs{\mu_i} \leq \frac{40 \log (8 / \gamma)}{\gamma}.
        \end{equation*}
    \end{lemma}
    \begin{proof}
        First, define:
        \begin{equation*}
            i_{M} \coloneqq \max_i \mu_i \text{ and } i_m = \min_i \mu_i.
        \end{equation*}
        Consider the random variable:
        \begin{equation*}
            Y = X_{i_M} - X_{i_m} \text{ for } X \ts \mc{N} (\mu, \Sigma).
        \end{equation*}
        We have from \cref{lem:bnd_sig}:
        \begin{equation*}
            \E [Y] = \mu_{i_M} - \mu_{i_m} \text{ and } \Var (Y) \leq \frac{400 \log (8 / \gamma)}{\gamma^2} \eqqcolon \sigma^2.
        \end{equation*}
        However, we also have:
        \begin{equation*}
            \P \lbrb{Y \leq 0} \geq \gamma \implies \P_{Z \ts \mc{N} (0, \sigma^2)} \lbrb{Z \leq -(\mu_{i_M} - \mu_{i_m})} \geq \gamma.
        \end{equation*}
        Hence, we get from \cref{lem:gau_tail}:
        \begin{equation*}
            \norm{\mu}_\infty \leq \mu_{i_M} - \mu_{i_m} \leq \sigma \sqrt{2 \log (1 / \gamma)},
        \end{equation*}
        concluding the proof of the lemma.
    \end{proof}
    First, let:
    \begin{equation*}
        \Sigma^*_{ijk} = \wt{s}_{ijk}^2 \wt{\Sigma}^*_{ijk},\text{ and } \mu^*_{ijk} = \wt{s}_{ijk} \wt{\mu}^*_{ijk}
    \end{equation*}
    be such that:
    \begin{gather*}
        (1 - \wt{\eps}) \wt{\Sigma}^*_{ijk} \preccurlyeq \wh{\Sigma}_{ijk} \preccurlyeq (1 + \wt{\eps}) \wt{\Sigma}^*_{ijk} \\
        \norm{\wt{\mu}^*_{ijk} - \wh{\mu}_{ijk}} \leq \wt{\eps}
    \end{gather*}
    guaranteed by \cref{thm:est_three_items}. Note that:
    \begin{equation*}
        \forall l, m \in \{i, j, k\}: (1 - \eps^\dagger) c_{lm}^\top \Sigma^*_{ijk} c_{lm} \leq c_{lm}^\top \wb{\Sigma}_{ijk} c_{lm} \leq (1 + \eps^\dagger) c_{lm}^\top \Sigma^*_{ijk} c_{lm}.
    \end{equation*}
    As a consequence, we have:
    \begin{align*}
        (1 - \eps^\dagger) \sum_{lm} c_{lm}^\top \Sigma^*_{ijk} c_{lm} &= (1 - \eps^\dagger) 6 \Tr (\Sigma^*_{ijk}) \leq \sum_{lm} c_{lm}^\top \wb{\Sigma}_{ijk} c_{lm} = 6\Tr (\wb{\Sigma}_{ijk})\\
        &\leq (1 + \eps^\dagger) \sum_{lm} c_{lm}^\top \Sigma^*_{ijk} c_{lm} = (1 + \eps^\dagger) 6\Tr (\Sigma^*_{ijk}).
    \end{align*}
    As a consequence, we get:
    \begin{equation*}
        (1 - \eps^\dagger) \Tr (\Sigma^*_{ijk}) \leq \Tr (\bar{\Sigma}_{ijk}) \leq (1 + \eps^\dagger) \Tr (\Sigma^*_{ijk}).
    \end{equation*}
    Similarly, we get that:
    \begin{equation*}
        (1 - \wt{\eps}) \Tr (\wt{\Sigma}^*_{ijk}) \leq \Tr (\wh{\Sigma}_{ijk}) \leq (1 + \wt{\eps}) \Tr (\wt{\Sigma}^*_{ijk}).
    \end{equation*}
    Consequently, we get that:
    \begin{equation*}
        (1 - 4\eps^\dagger) \wt{s}^2_{ijk} = (1 - 4\eps^\dagger) \frac{\Tr (\Sigma^*_{ijk})}{\Tr (\wt{\Sigma}^*_{ijk})} \leq s_{ijk}^2 \leq (1 + 4\eps^\dagger) \frac{\Tr (\Sigma^*_{ijk})}{\Tr (\wt{\Sigma}^*_{ijk})} = (1 + 4\eps^\dagger) \wt{s}^2_{ijk}.
    \end{equation*}
    Hence, we obtain:
    \begin{align*}
        \norm{s_{ijk} \wh{\mu}_{ijk} - \mu^*_{ijk}} &\leq \norm{\wt{s}_{ijk} \wh{\mu}_{ijk} - \mu^*_{ijk}} + \abs{s_{ijk} - \wt{s}_{ijk}} \norm{\wh{\mu}_{ijk}} \\
        &\leq \norm{\wt{s}_{ijk} \wt{\mu}^*_{ijk} - \mu^*_{ijk}} + \wt{s}_{ijk} \norm{\wt{\mu}^*_{ijk} - \wh{\mu}_{ijk}} + 4\eps^\dagger \wt{s}_{ijk} \norm{\wh{\mu}_{ijk}} \\
        &\leq \norm{\mu^*_{ijk} - \mu^*_{ijk}} + \wt{s}_{ijk} \wt{\eps} + 4\eps^\dagger \wt{s}_{ijk} \norm{\wh{\mu}_{ijk}} \\
        &\leq 0 + \wt{s}_{ijk} \eps^\dagger (1 + 4 \norm{\wh{\mu}_{ijk}}). \numberthis{} \label{eq:mu_err_s_bnd}
    \end{align*}
    For the final term, we get from the guarantees of \cref{thm:est_three_items} that:
    \begin{equation*}
        \Tr (\wh{\Sigma}_{ijk}) \geq \frac{1}{2} \implies \Tr (\wt{\Sigma}^*_{ijk}) \geq \frac{1}{4} \implies \wt{s}^2_{ijk} \leq \frac{1200 \log (8 / \gamma)}{\gamma^2}
    \end{equation*}
    and consequently, we get from \cref{lem:bnd_sig,lem:mn_bnd}:
    \begin{align*}
        \wt{s}_{ijk} \norm{\wh{\mu}_{ijk}} &\leq \wt{s}_{ijk} (\norm{\wt{\mu}^*_{ijk}} + \wt{\eps}) = \norm{\mu_{ijk}^*} + \wt{s}_{ijk} \wt{\eps} \leq \frac{160 \log (8 / \gamma)}{\gamma} + \frac{40 \sqrt{\log (8 / \gamma)}}{\gamma} \leq \frac{200 \log (8 / \gamma)}{\gamma}.
    \end{align*}
    Plugging this into \cref{eq:mu_err_s_bnd}, we get:
    \begin{equation*}
        \norm{s_{ijk} \wh{\mu}_{ijk} - \mu^*_{ijk}} \leq \frac{1000 \log (8 / \gamma)}{\gamma} \eps^\dagger \eqqcolon \eps^\ddagger. 
    \end{equation*}
    To conclude the proof of the lemma, let:
    \begin{equation*}
        i_M \coloneqq \argmax_i (\wb{\mu}_i - \mu^*_i) \text{ and } i_m \coloneqq \argmin_i (\wb{\mu}_i - \mu^*_i).
    \end{equation*}
    Note now that:
    \begin{equation*}
        \sum_{i = 1}^n \wb{\mu}_i = 0 = \sum_{i = 1}^n \mu^*_i \implies (\wb{\mu}_{i_M} - \mu^*_{i_M}) \geq 0 \text{ and } (\wb{\mu}_{i_m} - \mu^*_{i_m}) \leq 0.
    \end{equation*}
    Furthermore, there exists $k$ such that $i_M, i_m, k \in E'$. Therefore, we have by the triangle inequality:
    \begin{align*}
        \norm{\wh{\mu}_{i_M, i_m, k} - \mu^*_{i_M, i_m, k}} \leq 2\eps^\ddagger \implies 4\eps^\ddagger &\geq c_{i_M, i_m}^\top (\wh{\mu}_{i_M, i_m, k} - \mu^*_{i_M, i_m, k}) \\
        &= (\wb{\mu}_{i_M} - \mu^*_{i_M}) - (\wb{\mu}_{i_m} - \mu^*_{i_m}) \geq \norm{\wb{\mu} - \mu^*}_\infty
    \end{align*}
    where the last inequality follows from the fact that each of the terms in the sum are positive and one of them is equal to $\norm{\wb{\mu} - \mu^*}_\infty$. 
\end{proof}
\begin{proof}[Proof of \cref{thm:est_multiple_items}]
    The theorem follows from \cref{lem:mu_est_multiple_items,lem:sigma_est_multiple_items} by setting:
    \begin{equation*}
        \wt{\eps} \coloneqq c\frac{\gamma^2 \eps}{\log (8 / \gamma) \log (n)} \text{ and } \delta^\prime = \frac{\delta}{n^2}
    \end{equation*}
    and applying \cref{thm:est_three_items} with parameters $\wt{\eps}$ and $\delta'$ for the elements $i, j, k \in E'$ from \cref{lem:sub_graph_connectivity}. A union bound yields the corresponding guarantees on failure probability failure probability. 
\end{proof}

\section{Proof of Lower Bound - \cref{thm:stat_lb_kl}}
\label{sec:lower_bound_appendix}

Here, we provide the proof of \cref{thm:stat_lb_kl} restated below for convenience which shows that the guarantees of \cref{thm:est_multiple_items} are almost tight. 

\statlbkl*
\begin{proof}
    Let $T$ be an estimator with $N_{i^*, j^*} \leq \eps^{-2} \log (1 / \delta) / 4$ for some $i^* \neq j^*$. We define:
    \begin{gather*}
        \Sigma^1 = \frac{n}{n - 1} \lprp{I - \frac{\bm{1}\bm{1}^\top}{n}} I \lprp{I - \frac{\bm{1}\bm{1}^\top}{n}} \\ 
        \Sigma^2 = \frac{n}{(n - 1) - 2 (\eps / n)} \lprp{I - \frac{\bm{1}\bm{1}^\top}{n}} (I + \eps e_{i^*}e_{j^*}^\top + \eps e_{j^*}e_{i^*}^\top) \lprp{I - \frac{\bm{1}\bm{1}^\top}{n}}.
    \end{gather*}
    From the following calculations:
    \begin{align*}
        \Tr (\Sigma^1) &= \frac{n}{n - 1} \Tr \lprp{\lprp{I - \frac{\bm{1}\bm{1}^\top}{n}} I \lprp{I - \frac{\bm{1}\bm{1}^\top}{n}}} = \frac{n}{n - 1} \Tr \lprp{I - \frac{\bm{1}\bm{1}^\top}{n}} = n \\
        \Tr (\Sigma^2) &= \frac{n}{(n - 1) - 2 (\eps / n)} \Tr \lprp{\lprp{I - \frac{\bm{1}\bm{1}^\top}{n}} (I + \eps e_{i^*}e_{j^*}^\top + \eps e_{j^*}e_{i^*}^\top) \lprp{I - \frac{\bm{1}\bm{1}^\top}{n}}} \\
        &= \frac{n}{(n - 1) - 2 (\eps / n)} \Tr \lprp{(I + \eps e_{i^*}e_{j^*}^\top + \eps e_{j^*}e_{i^*}^\top) \lprp{I - \frac{\bm{1}\bm{1}^\top}{n}}} \\
        &= \frac{n}{(n - 1) - 2(\eps / n)} \lprp{n - 1 + \eps \Tr \lprp{(e_{i^*} e_{j^*}^\top + e_{j^*}e_{i^*}^\top)\lprp{I - \frac{\bm{1}\bm{1}^\top}{n}}}} \\
        &= \frac{n}{(n - 1) - 2(\eps / n)} \lprp{n - 1 - \eps \Tr \lprp{(e_{i^*} e_{j^*}^\top + e_{j^*}e_{i^*}^\top)\lprp{\frac{\bm{1}\bm{1}^\top}{n}}}} \\
        &= \frac{n}{(n - 1) - 2(\eps / n)} \lprp{n - 1 - 2 \frac{\eps}{n}} = n,
    \end{align*}
    we see that $\Sigma^1$ and $\Sigma^2$ are normalized. Furthermore, we have:
    \begin{align*}
        &\norm{\Sigma^2 - \Sigma^1}_\infty \\
        &\geq e_{i^*}^\top (\Sigma^2 - \Sigma^1) e_{j^*} = e_{i^*}^\top \Sigma^2 e_{j^*} - \frac{n}{n - 1} e_{i^*}^\top \lprp{I - \frac{\bm{1}\bm{1}^\top}{n}} I \lprp{I - \frac{\bm{1}\bm{1}^\top}{n}}e_{j^*} \\
        &= e_{i^*}^\top \Sigma^2 e_{j^*} - \frac{n}{n - 1} e_{i^*}^\top \lprp{I - \frac{\bm{1}\bm{1}^\top}{n}} e_{j^*} = e_{i^*}^\top \Sigma^2 e_{j^*} + \frac{1}{n - 1} \\
        &= \frac{n}{(n - 1) - 2 (\eps / n)} e_{i^*}^\top \lprp{I - \frac{\bm{1}\bm{1}^\top}{n}} (I + \eps e_{i^*}e_{j^*}^\top + \eps e_{j^*}e_{i^*}^\top) \lprp{I - \frac{\bm{1}\bm{1}^\top}{n}} e_{j^*} + \frac{1}{n - 1} \\
        &= \frac{n}{(n - 1) - 2 (\eps / n)} \lprp{-\frac{1}{n} + e_{i^*}^\top \lprp{I - \frac{\bm{1}\bm{1}^\top}{n}} (\eps e_{i^*}e_{j^*}^\top + \eps e_{j^*}e_{i^*}^\top) \lprp{I - \frac{\bm{1}\bm{1}^\top}{n}} e_{j^*}} + \frac{1}{n - 1} \\
        &= \frac{n\eps}{(n - 1) - 2 (\eps / n)} \lprp{e_{i^*}^\top \lprp{I - \frac{\bm{1}\bm{1}^\top}{n}} (e_{i^*}e_{j^*}^\top + e_{j^*}e_{i^*}^\top) \lprp{I - \frac{\bm{1}\bm{1}^\top}{n}} e_{j^*}} - \frac{2 \eps / n}{(n - 1) ((n - 1) - 2 \eps / n)} \\
        &= \frac{n\eps}{(n - 1) - 2 (\eps / n)} \lprp{\lprp{1 - \frac{1}{n}}^2 + \lprp{e_{i^*}^\top \lprp{I - \frac{\bm{1}\bm{1}^\top}{n}} e_{j^*}}^2} - \frac{2 \eps / n}{(n - 1) ((n - 1) - 2 \eps / n)} \\
        &= \frac{\eps (n^2 -2n + 2)}{n ((n - 1) - 2(\eps / n))} - \frac{2\eps}{n (n - 1) ((n - 1) - 2\eps / n)} = \frac{\eps (n(n - 1)(n - 2) + 2 (n - 1) - 2)}{n (n - 1) ((n - 1) - 2\eps / n)} \\
        &\geq \frac{\eps n(n - 1)(n - 2)}{n (n - 1) ((n - 1) - 2\eps / n)} \geq \frac{\eps (n - 2)}{(n - 1 - 2 \eps / n)} \geq \frac{\eps}{2}.
    \end{align*}
    We will now compute the KL-divergence between $(T, \choice (\bm{0}, \Sigma^1))$ and $(T, \choice (\bm{0}, \Sigma^2))$. We have by the definition of $T$:
    \begin{align*}
        \KL ((T, \choice (\bm{0}, \Sigma^1)) \| (T, \choice (\bm{0}, \Sigma^2))) &= \KL \lprp{\prod_{ijk} \prod_{l = 1}^{N_{ijk}} X^{i, j, k}_l \| \prod_{ijk} \prod_{l = 1}^{N_{ijk}} Y^{i, j, k}_l}
    \end{align*}
    where 
    \begin{equation*}
        X^{i, j, k}_l \ts \choice ((\bm{0}, \Sigma^1_{ijk})) \text{ and } Y^{i, j, k}_l \ts \choice ((\bm{0}, \Sigma^2_{ijk}))
    \end{equation*}
    and from the tensorization of the KL-divergence, we have:
    \begin{align*}
        \KL ((T, \choice (\bm{0}, \Sigma^1)) \| (T, \choice (\bm{0}, \Sigma^2))) &= \sum_{ijk} \KL \lprp{\prod_{l = 1}^{N_{ijk}} X^{i, j, k}_l \| \prod_{l = 1}^{N_{ijk}} Y^{i, j, k}_l} \\
        &= \sum_{ijk} N_{ijk} \KL \lprp{X^{i, j, k} \| Y^{i, j, k}}.
    \end{align*}
    We now observe from the definitions of $\Sigma^1$ and $\Sigma^2$ for all $i, j, k$:
    \begin{gather*}
        X^{i, j, k} \ts \choice (\bm{0}, I) \\ 
        Y^{i, j, k} \ts \choice (\bm{0}, \Sigma^{i, j, k}) \\
        \text{ where } \Sigma^{i, j, k} = 
        \begin{cases}
            I &\text{if } \{i^*, j^*\} \not\subset \{i, j, k\} \\
            I + \eps (e_{i^*}e_{j^*}^\top + e_{j^*}e_{i^*}^\top) &\text{otherwise}
        \end{cases}
    \end{gather*}
    Since $\Sigma^{i, j, k} = I$ for all $i, j, k$ such that $\{i^*, j^*\} \not\subset \{i, j, k\}$, we get:
    \begin{equation*}
        \KL ((T, \choice (\bm{0}, \Sigma^1)) \| (T, \choice (\bm{0}, \Sigma^2))) = \sum_{\substack{i, j, k \\ \{i^*, j^*\} \subset \{i, j, k\}}} N_{ijk} \KL \lprp{X^{i, j, k} \| Y^{i, j, k}}.
    \end{equation*}
    Now, we have by the data-processing inequality and utilizing the closed form solution for the KL-divergence between two multivariate Gaussians \cite[Exercise 11 in Section 1.6]{pardo} for $W \ts \mc{N} (\bm{0}, I)$ and $Z \ts \mc{N} (0, \Sigma^{i, j, k})$:
    \begin{align*}
        \KL \lprp{X^{i, j, k} \| Y^{i, j, k}} &\leq \KL \lprp{W \| Z} = \frac{1}{2} \lprp{\Tr (I \cdot \Sigma^{i, j, k} - I) + \log \lprp{\frac{\det (I)}{\det (\Sigma^{i, j, k})}}} \\
        &= \frac{1}{2} \lprp{0 + \log \lprp{\frac{1}{1 - \eps^2}}} \leq \eps^2
    \end{align*}
    for $\eps \leq 1/2$. By substituting this back, we get:
    \begin{align*}
        \KL ((T, \choice (\bm{0}, \Sigma^1)) \| (T, \choice (\bm{0}, \Sigma^2))) &\leq \sum_{\substack{i, j, k \\ \{i^*, j^*\} \subset \{i, j, k\}}} N_{ijk} \KL \lprp{X^{i, j, k} \| Y^{i, j, k}} \\
        &\leq \sum_{\substack{i, j, k \\ \{i^*, j^*\} \subset \{i, j, k\}}} N_{ijk} \eps^2 = N_{i^*, j^*} \eps^2. 
    \end{align*}
    We now get as a consequence of the Bretagnolle–Huber inequality (\cref{lem:bh_inequality}):
    \begin{equation*}
        \tv ((T, \choice (\bm{0}, \Sigma^1)), (T, \choice (\bm{0}, \Sigma^2))) \leq 1 - \frac{1}{2} \exp\lbrb{- N_{i^*, j^*} \eps^2} \leq 1 - \frac{1}{2} \sqrt{\delta} \leq 1 - 2\delta.
    \end{equation*}
    We now have by the minimax characterization of the Bayes risk:
    \begin{equation*}
        \max_{\Sigma \in \{\Sigma^1, \Sigma^2\}} \P_{\bm{X} \ts (T, \choice (\bm{0}, \Sigma))} \lbrb{\norm{T(\bm{X}) - \Sigma}_\infty \geq \frac{\eps}{4}} \geq \frac{1}{2} \lprp{1 - (1 - 2\delta)} = \delta
    \end{equation*}
    by an application of Le Cam's method (\cite[Section 15.2]{wainwright}), concluding the proof.
\end{proof}

\section{Miscellaneous Technical Results}
\label{sec:misc}

Here, we include technical results used in the proofs of our theorems. We start with Hoeffding's inequality.

\begin{theorem}{\cite{boucheron2013concentration}}
    \label{thm:hoeffding}
    Let $X_1, \dots, X_n$ be independent random variables such that $X_i$ takes values in $[a_1, b_i]$ almost surely for all $i \leq n$. Let
    \begin{equation*}
        S = \sum_{i = 1}^n (X_i - \E [X_i]).
    \end{equation*}
    Then for every $t \geq 0$:
    \begin{equation*}
        \P \lbrb{S \geq t} \leq \exp \lprp{- \frac{2t^2}{\sum_{i = 1}^n (b_i - a_i)^2}}.
    \end{equation*}
\end{theorem}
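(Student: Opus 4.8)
The plan is to use the classical Chernoff--Cram\'er exponential moment method. First I would fix $\lambda > 0$ and apply Markov's inequality to the nonnegative random variable $e^{\lambda S}$:
\[
    \P \lbrb{S \geq t} = \P \lbrb{e^{\lambda S} \geq e^{\lambda t}} \leq e^{-\lambda t} \E \lsrs{e^{\lambda S}}.
\]
Writing $Y_i \coloneqq X_i - \E[X_i]$, independence of the $X_i$ (and hence of the $Y_i$) gives $\E[e^{\lambda S}] = \prod_{i = 1}^n \E[e^{\lambda Y_i}]$, so it suffices to bound each factor in the product.

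The key ingredient is \emph{Hoeffding's lemma}: for any mean-zero random variable $Y$ with $Y \in [a, b]$ almost surely and any $\lambda \in \R$, one has $\E[e^{\lambda Y}] \leq \exp\lprp{\lambda^2 (b - a)^2 / 8}$. I would establish this by first using convexity of $y \mapsto e^{\lambda y}$ to write $e^{\lambda y} \leq \frac{b - y}{b - a}\, e^{\lambda a} + \frac{y - a}{b - a}\, e^{\lambda b}$ for $y \in [a, b]$, then taking expectations and using $\E[Y] = 0$ to obtain $\E[e^{\lambda Y}] \leq e^{\psi(\lambda)}$ for an explicit function $\psi$ of $\lambda, a, b$. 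A short computation shows $\psi(0) = \psi'(0) = 0$, while $\psi''(\lambda)$ can be identified with the variance of a random variable supported on $[a, b]$, hence $\psi''(\lambda) \leq (b - a)^2 / 4$; Taylor's theorem with Lagrange remainder then yields $\psi(\lambda) \leq \lambda^2 (b - a)^2 / 8$. Applying this with $Y = Y_i$, which lies in an interval of length $b_i - a_i$, gives $\E[e^{\lambda Y_i}] \leq \exp\lprp{\lambda^2 (b_i - a_i)^2 / 8}$.

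Multiplying these bounds and substituting into the Markov step gives, for every $\lambda > 0$,
\[
    \P \lbrb{S \geq t} \leq \exp \lprp{- \lambda t + \frac{\lambda^2}{8} \sum_{i = 1}^n (b_i - a_i)^2}.
\]
Optimizing the quadratic exponent over $\lambda$ — taking $\lambda = 4t / \sum_{i = 1}^n (b_i - a_i)^2$ — produces the claimed bound. The only nontrivial step is Hoeffding's lemma, and within it the uniform estimate $\psi'' \leq (b - a)^2 / 4$, which is precisely the source of the constant $2$ in the exponent; the remaining steps are routine. As this is a classical result, in the paper it is simply quoted from \citet{boucheron2013concentration}.
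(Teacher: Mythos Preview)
Your proposal is correct and is exactly the standard Chernoff--Cram\'er argument for Hoeffding's inequality; moreover, you are right that the paper does not supply its own proof but simply quotes the result from \citet{boucheron2013concentration}.
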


We will make use of the following inequality which will be crucial in obtaining the high-probability lower bounds in the respective statements:
\begin{lemma}[ \cite{bh,intrononparam,canonne2023shortnoteinequalitykl}]
    \label{lem:bh_inequality}
    We have for any two distributions, $P$ and $Q$:
    \begin{equation*}
        \tv (P, Q) \leq 1 - \frac{1}{2} \exp (-\KL(P \| Q)).
    \end{equation*}
\end{lemma}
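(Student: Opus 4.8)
The plan is to reduce the claim to a lower bound on the ``overlap'' $\int \min(p,q)\,d\nu$, where $p$ and $q$ are densities of $P$ and $Q$ with respect to a common dominating measure $\nu$. The standard identity $\tv(P,Q) = 1 - \int \min(p,q)\,d\nu$ shows the asserted inequality is equivalent to
\[
    \int \min(p,q)\,d\nu \;\geq\; \tfrac12 \exp\bigl(-\KL(P\|Q)\bigr),
\]
so it suffices to prove this. If $P$ is not absolutely continuous with respect to $Q$ then $\KL(P\|Q) = \infty$ and the bound is vacuous, so I may assume $P \ll Q$ (equivalently, $q > 0$ holds $P$-almost everywhere) and $\KL(P\|Q) < \infty$.

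The first step routes through the Hellinger/Bhattacharyya affinity $\int \sqrt{pq}\,d\nu$. Writing $\sqrt{pq} = \sqrt{\min(p,q)}\cdot\sqrt{\max(p,q)}$ and applying Cauchy--Schwarz gives
\[
    \Bigl(\int \sqrt{pq}\,d\nu\Bigr)^{\!2} \;\leq\; \Bigl(\int \min(p,q)\,d\nu\Bigr)\Bigl(\int \max(p,q)\,d\nu\Bigr).
\]
Since $\min(p,q) + \max(p,q) = p + q$, we have $\int \max(p,q)\,d\nu = 2 - \int \min(p,q)\,d\nu \leq 2$, hence $\int \min(p,q)\,d\nu \geq \tfrac12 \bigl(\int \sqrt{pq}\,d\nu\bigr)^2$. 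The second step lower bounds the affinity by the KL divergence via Jensen's inequality applied to the convex function $\exp$: since $\int \sqrt{pq}\,d\nu = \mathbb{E}_{X \sim P}\bigl[\exp\bigl(\tfrac12 \log (q(X)/p(X))\bigr)\bigr]$,
\[
    \int \sqrt{pq}\,d\nu \;\geq\; \exp\Bigl(\tfrac12\, \mathbb{E}_{X\sim P}\bigl[\log (q(X)/p(X))\bigr]\Bigr) \;=\; \exp\bigl(-\tfrac12 \KL(P\|Q)\bigr).
\]
Squaring and combining with the previous display gives $\int \min(p,q)\,d\nu \geq \tfrac12 \exp(-\KL(P\|Q))$, as required.

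I expect the main obstacle to be purely conceptual: identifying the right intermediate quantity. The inequality does not drop out of a single convexity argument relating $\tv$ and $\KL$ directly; the trick is to insert the affinity $\int\sqrt{pq}\,d\nu$, which is comparable to the overlap $\int\min(p,q)\,d\nu$ on one side (Cauchy--Schwarz together with the trivial bound $\int\max(p,q)\,d\nu\leq 2$) and to $\exp(-\tfrac12\KL)$ on the other (Jensen). The remaining points need only minor care: the separate treatment of $P \not\ll Q$ so that $\KL=\infty$ renders the bound trivial, and the observation that finiteness of $\KL(P\|Q)$ is exactly what makes the expectation $\mathbb{E}_P[\log(q/p)]$ in the Jensen step well defined. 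Everything else is routine.
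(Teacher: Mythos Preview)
Your argument is correct and is the standard proof of the Bretagnolle--Huber inequality: route through the Bhattacharyya affinity $\int\sqrt{pq}\,d\nu$, bound it below via Jensen applied to $\exp$, and relate it to the overlap $\int\min(p,q)\,d\nu$ via Cauchy--Schwarz and $\int\max(p,q)\,d\nu\leq 2$. The paper itself does not prove this lemma; it is stated with attribution to \cite{bh,intrononparam,canonne2023shortnoteinequalitykl} and used as a black box in the proof of \cref{thm:stat_lb_kl}, so there is no paper argument to compare against---your proof is precisely the one found in those references.
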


\subsection{Proof of \cref{lem:rec_ang_conv}}
\label{ssec:proof_recangconv}
\recangconv*
\begin{proof}
    Without loss of generality, due to the invariance of an isotropic Gaussian to to rotations and reflections, we may assume $d_1 \geq d_2$, $v_1 = e_2$, and $(v_2)_1 \geq 0$. Since $v_2$ is a unit vector, it may be parameterized as $v_2 = v(\theta^*)$ for $\theta^* \in [0, \pi]$ with $v(\theta)$ defined as follows:
    \begin{equation*}
        v(\theta) \coloneqq 
        \begin{bmatrix}
            \sin (\theta) \\
            - \cos (\theta)
        \end{bmatrix}
        \text{ for } \theta \in [0, \pi].
    \end{equation*}
    Now, $\xi (\theta)$ satisfying the first two constraints imposed by $d_1, d_2$ for the vectors $v_1$ and $v(\theta)$ is given by the following expression:
    \begin{equation*}
        \xi (\theta) \coloneqq 
        \begin{bmatrix}
            \wt{\xi} (\theta) \\
            d_1
        \end{bmatrix}
        \text{ where } 
        \wt{\xi} (\theta) \coloneqq \frac{d_1 \cos (\theta) + d_2}{\sin (\theta)}.
    \end{equation*}
    \begin{claim}
        \label{clm:mu_tld_monotonic}
        We have for $\theta \in (0, \pi)$:
        \begin{equation*}
            \wt{\xi}' (\theta) < 0.
        \end{equation*}
    \end{claim}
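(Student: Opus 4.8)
The plan is to prove \cref{clm:mu_tld_monotonic} by a one-line differentiation of $\wt{\xi}(\theta) = (d_1\cos(\theta) + d_2)/\sin(\theta)$ via the quotient rule, followed by an elementary sign analysis that hinges on the ordering $d_1 \geq d_2 \geq 0$ fixed at the beginning of the proof of \cref{lem:rec_ang_conv}. Concretely, differentiating gives
\begin{equation*}
    \wt{\xi}'(\theta) = \frac{(-d_1 \sin(\theta))\sin(\theta) - (d_1\cos(\theta) + d_2)\cos(\theta)}{\sin^2(\theta)},
\end{equation*}
and the numerator collapses, using $\sin^2(\theta) + \cos^2(\theta) = 1$, to $-d_1 - d_2\cos(\theta)$, so that
\begin{equation*}
    \wt{\xi}'(\theta) = -\frac{d_1 + d_2 \cos(\theta)}{\sin^2(\theta)}.
\end{equation*}

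It then remains to check that this is strictly negative on $(0,\pi)$. The denominator $\sin^2(\theta)$ is strictly positive there, so the sign is controlled by $d_1 + d_2\cos(\theta)$. Here I would invoke precisely the normalization $d_1 \geq d_2 \geq 0$ from the start of the proof of \cref{lem:rec_ang_conv} (this is exactly where that reduction earns its keep): since $\cos(\theta) > -1$ for $\theta \in (0,\pi)$, we get $d_1 + d_2\cos(\theta) > d_1 - d_2 \geq 0$ when $d_2 > 0$, while $d_1 + d_2\cos(\theta) = d_1 > 0$ when $d_2 = 0$ (recall $d_1 > 0$ unless $\xi = 0$; the degenerate case $\xi = 0$ can be set aside since then $\gamma_{12}$ depends only on $\arccos(\inp{v_1}{v_2})$, which is itself strictly monotone in the angle, so the monotonicity of $\wt\xi$ is not needed for that case). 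Either way $\wt{\xi}'(\theta) < 0$.

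I do not expect a genuine obstacle: the whole argument is a quotient-rule derivative plus a sign check. The only point worth being careful about is that the conclusion is \emph{false} without the assumption $d_1 \geq d_2$ — e.g.\ if $d_1 = 0 < d_2$ the numerator $d_2\cos(\theta)$ changes sign at $\theta = \pi/2$ — so the write-up must explicitly flag that it uses the WLOG reordering already performed in \cref{lem:rec_ang_conv}. With that in hand the claim is immediate, and (together with $\wt{\xi}$ ranging over all of $\R$ as $\theta \to 0^+, \pi^-$) this strict monotonicity is what will later let us invert the map $\theta \mapsto \gamma_{12}$ and hence identify $\inp{v_1}{v_2}$.
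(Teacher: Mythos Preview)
Your proof is correct and follows exactly the paper's approach: differentiate via the quotient rule to get $\wt{\xi}'(\theta) = -(d_1 + d_2\cos\theta)/\sin^2\theta$, then use $d_1 \geq d_2 \geq 0$ and $\cos\theta > -1$ on $(0,\pi)$ to conclude strict negativity. You are in fact slightly more careful than the paper, which simply asserts ``$< 0$'' without isolating the degenerate case $d_1 = d_2 = 0$ (i.e.\ $\xi = 0$, where $\wt{\xi}' \equiv 0$); your observation that this edge case can be handled separately---or absorbed into the downstream monotonicity of $\gamma$---is a valid refinement.
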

    \begin{proof}
        Observe from the chain rule that:
        \begin{equation*}
            {\wt{\xi}}' (\theta) = \frac{- \cos (\theta) (d_1 \cos (\theta) + d_2)}{\sin^2 \theta} - d_1 = \frac{-d_1 - d_2 \cos (\theta)}{\sin^2 (\theta)} < 0
        \end{equation*}
        for $\theta \in (0, \pi)$ concluding the proof of the claim.
    \end{proof}
    Continuing with the proof, define:
    \begin{equation*}
        \gamma (\theta) \coloneqq \P \lbrb{\inp{X}{v_1}, \inp{X}{v(\theta)} \leq 0} \text{ for } X \ts \mc{N} (\wt{\xi} (\theta), I).
    \end{equation*}
    Our next claim shows that $\gamma (\theta)$ is also monotonic in $\theta$.
    \begin{restatable}{claim}{gammamonotonic}
        \label{clm:gamma_monotonic}
        We have for $\theta \in (0, \pi)$:
        \begin{equation*}
            \gamma' (\theta) > 0.
        \end{equation*}
    \end{restatable}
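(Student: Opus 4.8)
The plan is to differentiate $\gamma(\theta)$ directly and to verify that \emph{both} contributions to the derivative are non-negative, with one of them strictly positive. Write $\gamma(\theta) = \int_{W(\theta)} f_\theta(x)\,dx$, where $f_\theta(x) \coloneqq \frac{1}{2\pi}\exp\lbrb{-\frac{1}{2}\norm{x - \xi(\theta)}^2}$ and $W(\theta) \coloneqq \lbrb{x : \inp{x}{v_1} \le 0,\ \inp{x}{v(\theta)} \le 0}$ is a planar wedge; crucially, \emph{both} the domain $W(\theta)$ and the Gaussian centre $\xi(\theta) = (\wt{\xi}(\theta), d_1)$ vary with $\theta$. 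By the Leibniz rule for an integral over a domain that moves with the parameter, $\gamma'(\theta)$ splits into a \emph{boundary term}, coming from the motion of $\partial W(\theta)$, and an \emph{interior term}, coming from the $\theta$-dependence of the integrand through $\xi(\theta)$.

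For the boundary term, note that the edge of $W(\theta)$ lying on the line $\inp{x}{v_1}=0$ is, for every $\theta\in(0,\pi)$, the ray $\lbrb{(t,0):t\le 0}$ (using $v_1 = e_2$ and $\sin\theta>0$), so it is independent of $\theta$ and contributes nothing. The other edge lies on $\inp{x}{v(\theta)}=0$ and equals $\lbrb{-r(\cos\theta,\sin\theta):r\ge 0}$; the point at arclength $r$ along it moves with velocity $r\,v(\theta)$, and since $v(\theta)$ is the outward normal to $W(\theta)$ along this edge, the edge sweeps \emph{outward}. Its contribution is therefore $\int_0^\infty f_\theta\lprp{-r(\cos\theta,\sin\theta)}\,r\,dr = \int_0^\infty \frac{1}{2\pi}\exp\lbrb{-\frac{(r\cos\theta + \wt{\xi}(\theta))^2 + (r\sin\theta + d_1)^2}{2}}\,r\,dr > 0$.

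For the interior term, since $\xi'(\theta) = (\wt{\xi}'(\theta),0)$ is horizontal, it equals $-\wt{\xi}'(\theta)\int_{W(\theta)}\partial_{x_1} f_\theta(x)\,dx$. The cross-section of $W(\theta)$ at height $x_2 \le 0$ is a left half-line in $x_1$, so integrating first in $x_1$ and applying the fundamental theorem of calculus collapses this to a line integral along the same rotating edge, with positive prefactor $-\wt{\xi}'(\theta)\sin\theta$, positive because $\sin\theta>0$ and $\wt{\xi}'(\theta)<0$ by \cref{clm:mu_tld_monotonic}. Hence the interior term is also non-negative, which both proves $\gamma'(\theta)>0$ on $(0,\pi)$ and yields the lower bound $\gamma'(\theta) \ge \int_0^\infty \frac{1}{2\pi}\exp\lbrb{-\frac{(r\cos\theta + \wt{\xi}(\theta))^2 + (r\sin\theta + d_1)^2}{2}}\,r\,dr$ reused later in the estimation argument.

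The main obstacle is purely technical: justifying the differentiation under the integral sign on the unbounded wedge $W(\theta)$ (Gaussian decay makes this routine but it should be spelled out) and correctly bookkeeping the orientation and speed of the moving edge against the defining inequality $\inp{x}{v(\theta)}\le 0$. A shorter but less reusable alternative avoids the transport theorem entirely: the event depends on $X$ only through $(\inp{X}{v_1},\inp{X}{v(\theta)})$, which is bivariate Gaussian with means $(d_1,d_2)$, unit variances, and correlation $\inp{v_1}{v(\theta)} = -\cos\theta$, so $\gamma(\theta) = \Phi_{-\cos\theta}(-d_1,-d_2)$ is a standard bivariate normal CDF evaluated at a point that does \emph{not} depend on $\theta$; Plackett's identity (strict positivity of $\partial_\rho \Phi_\rho$ on $(-1,1)$) together with $\frac{d}{d\theta}(-\cos\theta)=\sin\theta>0$ then gives $\gamma'(\theta)>0$ immediately.
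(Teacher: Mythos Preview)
Your primary argument is essentially the paper's proof in different coordinates: the paper writes $W(\theta)$ in polar form so the moving edge becomes the upper limit $\alpha=\pi+\theta$ of an ordinary integral, applies Leibniz to obtain the same strictly positive boundary term you found, and then converts the interior term back to Cartesian, integrates out $x_1$ by the fundamental theorem of calculus, and invokes $\wt{\xi}'(\theta)<0$ from \cref{clm:mu_tld_monotonic}---exactly your steps.

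Your Plackett alternative, by contrast, is a genuinely different route that the paper does not take. The key observation---that the very definition of $\xi(\theta)$ forces $\E[\inp{X}{v_1}]=d_1$ and $\E[\inp{X}{v(\theta)}]=d_2$ independently of $\theta$, so that $\gamma(\theta)=\Phi_{-\cos\theta}(-d_1,-d_2)$ depends on $\theta$ only through the correlation---reduces the claim to the classical identity $\partial_\rho\Phi_\rho(a,b)=\phi_\rho(a,b)>0$ and the positivity of $\sin\theta$. This is cleaner and avoids any care about differentiating over an unbounded domain; in fact it gives an \emph{exact} expression for $\gamma'(\theta)$ rather than a lower bound. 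The cost is only organizational: the paper's later \cref{clm:wh_gamma_der} reuses precisely the boundary-term lower bound $\gamma'(\theta)\ge\int_0^\infty \frac{1}{2\pi}\exp\lbrb{-\tfrac{1}{2}\lprp{(r\cos\theta+\wt{\xi}(\theta))^2+(r\sin\theta+d_1)^2}}\,r\,dr$, so the Leibniz computation feeds directly into the estimation argument in its present form, whereas the Plackett expression would require a small rewrite there.
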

    \begin{proof}
        First, observe the following:
        \begin{gather*}
            \{x: \inp{v_1}{x}, \inp{v(\theta)}{x} \leq 0\} \coloneqq \lbrb{(r \cos (\alpha), r \sin (\alpha)): r \geq 0, \alpha \in (\pi, \pi + \theta)} \\
            \{x: \inp{v_1}{x}, \inp{v(\theta)}{x} \leq 0\} \coloneqq \lbrb{(x, y): y \leq 0, x \leq y \cot (\theta)}
        \end{gather*}
        As a consequence, we get by switching to polar coordinates:
        \begin{equation*}
            \gamma (\theta) = \int_{0}^\infty \int_{\pi}^{\pi + \theta} \frac{1}{2\pi} \exp \lbrb{- \frac{(r \cos (\alpha) - \wt{\xi} (\theta))^2 + (r \sin (\alpha) - d_1)^2}{2}} d \alpha rdr.
        \end{equation*}
        Hence, we get by the Leibniz integral rule:
        \begin{align*}
            \gamma' (\theta) &= \int_{0}^\infty \frac{d}{d \theta} \lprp{\int_{\pi}^{\pi + \theta} \frac{1}{2\pi} \exp \lbrb{- \frac{(r \cos (\alpha) - \wt{\xi} (\theta))^2 + (r \sin (\alpha) - d_1)^2}{2}} d \alpha} rdr \\
            &= \int_{0}^\infty \frac{1}{2 \pi} \exp \lbrb{- \frac{(r \cos (\theta) + \wt{\xi} (\theta))^2 + (r \sin (\theta) + d_1)^2}{2}} rdr\ + \\
            &\quad  \int_{0}^\infty \int_{\pi}^{\pi + \theta} \frac{\partial}{\partial \theta} \lprp{\frac{1}{2\pi} \exp \lbrb{- \frac{(r \cos (\alpha) - \wt{\xi} (\theta))^2 + (r \sin (\alpha) - d_1)^2}{2}}} d \alpha rdr \\
            &= \int_{0}^\infty \frac{1}{2 \pi} \exp \lbrb{- \frac{(r \cos (\theta) + \wt{\xi} (\theta))^2 + (r \sin (\theta) + d_1)^2}{2}} rdr\ + \\
            &\quad  \int_{0}^\infty \int_{\pi}^{\pi + \theta} \frac{1}{2\pi} \exp \lbrb{- \frac{(r \cos (\alpha) - \wt{\xi} (\theta))^2 + (r \sin (\alpha) - d_1)^2}{2}} (r\cos (\alpha) - \wt{\xi} (\theta)) \wt{\xi}' (\theta) d \alpha rdr \\
            &= \int_{0}^\infty \frac{1}{2 \pi} \exp \lbrb{- \frac{(r \cos (\theta) + \wt{\xi} (\theta))^2 + (r \sin (\theta) + d_1)^2}{2}} rdr\ + \\
            &\quad  \wt{\xi}' (\theta) \int_{0}^\infty \int_{\pi}^{\pi + \theta} \frac{1}{2\pi} \exp \lbrb{- \frac{(r \cos (\alpha) - \wt{\xi} (\theta))^2 + (r \sin (\alpha) - d_1)^2}{2}} (r\cos (\alpha) - \wt{\xi} (\theta)) d \alpha rdr. \\
        \end{align*}
        For the second term, we have:
        \begin{align*}
            &\int_{0}^\infty \int_{\pi}^{\pi + \theta} \frac{1}{2\pi} \exp \lbrb{- \frac{(r \cos (\alpha) - \wt{\xi} (\theta))^2 + (r \sin (\alpha) - d_1)^2}{2}} (r\cos (\alpha) - \wt{\xi} (\theta)) d \alpha rdr \\
            &\quad = \int_{-\infty}^0 \int_{-\infty}^{y \cot (\theta)} \frac{1}{2\pi} \exp \lbrb{- \frac{(x - \wt{\xi} (\theta))^2 + (y - d_1)^2}{2}} (x - \wt{\xi} (\theta)) dx dy \\
            &\quad = \int_{-\infty}^0 \int_{\infty}^{(y \cot (\theta) - \wt{\xi} (\theta))^2 / 2} \frac{1}{2\pi} \exp \lbrb{-t + \frac{(y - d_1)^2}{2}} dt dy \\
            &\quad = \frac{1}{2\pi} \int_{-\infty}^0 \exp \lbrb{- \frac{(y - d_1)^2}{2}}  \cdot (- \exp (-t)) \Biggr|_{\infty}^{(y \cot (\theta) - \wt{\xi} (\theta))^2 / 2} dy \\
            &\quad = \frac{-1}{2\pi} \int_{-\infty}^0 \exp \lbrb{- \frac{(y - d_1)^2}{2}}  \exp \lbrb{- \frac{(y \cot (\theta) - \wt{\xi} (\theta))^2}{2}} dy.
        \end{align*}
        Noting that the above integral is negative, the previous two displays and \cref{clm:mu_tld_monotonic} now yield:
        \begin{equation*}
            \gamma' (\theta) > 0 \text{ for } \theta \in (0, \pi)
        \end{equation*}
        concluding the proof of the claim.
    \end{proof}
    From, \cref{clm:gamma_monotonic}, we get that there exists a unique solution $\theta^*$ such that $\gamma (\theta^*) = \gamma$. Observing that $\inp{v_1}{v_2} = - \cos (\theta^*)$ establishes the lemma.
\end{proof}

\subsection{Proof of \cref{thm:two_not_enough}}
\label{ssec:proof_two_not_enough}

Here, we present the proof of \cref{thm:two_not_enough}, stated below for convenience.
\twonotenough*
\begin{proof}
    We will establish the result for a mildly different normalization of the parameters, $(\mu, \Sigma)$. Formally, we will assume that:
    \begin{equation*}
        \mu_1 = 0, \quad \Sigma_{1, 1} = 0, \quad \Tr (\Sigma) = n - 1 \label{eq:diff_form} \tag{DIFF-FORM}
    \end{equation*}
    We will establish the equivalence between \ref{eq:diff_form} and \cref{def:universal_symmetries} subsequently. \ref{eq:diff_form} is motivated by the observation that for any $X \ts \mc{N} (\mu, \Sigma)$, $X'$ generated by the following transformation:
    \begin{equation*}
        \forall i \in [n]: X'_i = t(X_i - X_1)
    \end{equation*}
    for any $t > 0$ induces the same set of rankings. This results in deterministically fixing the utility of the first item to $0$ resulting in the first two constraints and the third is chosen for normalization. As before, we assume that the rank of $\Sigma$ is $n - 1$. 

    Now, consider any $(\mu, \Sigma)$ satisfying \ref{eq:diff_form}. Let $X \ts \mc{N} (\mu, \Sigma)$. First, observe that:
    \begin{equation*}
        X_i - X_j \ts \mc{N} (\underbrace{\mu_i - \mu_j}_{\mu_{ij}}, \underbrace{\Sigma_{i, i} + \Sigma_{j, j} - 2 \Sigma_{i, j}}_{\sigma^2_{ij}}).
        \label{eq:diff_prob} \tag{DIFF-PROB}
    \end{equation*}
    Note that since $\Sigma$ has rank $n - 1$ (and satisfies $\Sigma e_1 = 0$), $\sigma_{ij} > 0$. We will now establish our results in two different cases:
    \paragraph{Case 1:} $\exists i, j > 1: \mu_i = \mu_j$. In this case, observe that since $\Sigma$ is of rank $n - 1$, defining:
    \begin{equation*}
        E = e_i e_j^\top + e_j e_i^\top
    \end{equation*}
    we have for some $\nu > 0$:
    \begin{equation*}
        \forall \delta \in [0, \nu]: \Sigma + \delta E \text{ satisfies \ref{eq:diff_form} and has rank } n - 1.
    \end{equation*}
    Note, furthermore that for any $X' \ts \mc{N} (\mu, \Sigma + \delta E)$:
    \begin{gather*}
        \forall k, l \in [n], k \neq l, \{k, l\} \neq \{i, j\}: \Pr \lbrb{X_k > X_l} = \Pr \lbrb{X'_k > X'_l} \\
        \Pr \lbrb{X_i > X_j} = \frac{1}{2} = \Pr \lbrb{X'_i > X'_j}.
    \end{gather*} 
    Hence,
    \begin{equation*}
        \mc{S}' = \lbrb{(\mu, \Sigma + \delta E): \delta \in [0, \nu]}
    \end{equation*}
    consists of an infinite set of solutions with the same pairwise rankings. 
    \paragraph{Case 2:} $\exists i > 1: \mu_i = 0$. For this case, consider the matrix $\wt{\Sigma}$:
    \begin{equation*}
        \wt{\Sigma}_{kl} = 
        \begin{cases}
            \Sigma_{ii} + \nu & \text{if } k = l = i \\
            \Sigma_{il} - \frac{\nu}{2} & \text{if } k = i \text{ and } 1 < l \neq i\\
            \Sigma_{ki} - \frac{\nu}{2} & \text{if } l = i \text{ and } 1 < k \neq i \\
            \Sigma_{kl} & \text{otherwise}
        \end{cases}.
    \end{equation*}
    Note that for small enough $\nu$, $\wt{\Sigma}$ remains positive-semidefinite with rank $n - 1$. Furthermore, $X \ts \mc{N} (\mu, \Sigma)$ and $X \ts \mc{N} (\mu, \wt{\Sigma})$ induce the same distribution over pairwise rankings of the $n$ alternatives. Finally, observe that $(\mu, \wt{\Sigma})$ satisfy the conditions of \ref{eq:diff_form} except for $\Tr (\wt{\Sigma}) = n + \nu$. To restore this, we simply rescale the $(\mu, \wt{\Sigma})$ to obtain:
    \begin{equation*}
        t = \frac{n + \nu}{n},\quad \mu' = \frac{\mu}{\sqrt{t}}, \quad \Sigma' = \frac{\Sigma}{t}.
    \end{equation*}
    Now, $(\mu', \Sigma')$ induce the same distribution over pairwise preferences $(\mu, \Sigma)$. Furthermore, $(\mu', \Sigma')$ is distinct from $(\mu, \Sigma)$ -- $\mu \neq \mu'$ and $\Sigma \neq \Sigma'$. Since $(\mu', \Sigma')$ may be constructed for all small enough $\nu$, there exist infinitely many solutions satisfying \ref{eq:diff_form} with the same distribution over pairwise rankings.
    \paragraph{Case 3:} $\forall i, j \in [n]: \mu_i \neq \mu_j$. In this setting, we start by modifying the parameter $\mu$ as follows:
    \begin{equation*}
        u \neq 0, \norm{u} \leq 1, u_1 = u_n = 0: \wt{\mu} = \mu + \nu' u
    \end{equation*}
    for some $\nu'$ to be chosen subsequently. We will now construct an alternative solution from $\wt{\mu}$ by setting $\wt{\mu}_n$ and the entries of $\wt{\Sigma}$, the corresponding covariance matrix accordingly. Note from \ref{eq:diff_prob}, to ensure that the pairwise ranking probabilities do not change, we must :
    \begin{equation*}
        \forall i \in [2, n - 1]: \wt{\Sigma}_{ii} = \lprp{\frac{\wt{\mu}_i}{\mu_i}}^2 \Sigma_{ii}
    \end{equation*}
    Next, we will set:
    \begin{equation*}
        \wt{\Sigma}_{nn} = n - \sum_{i = 2}^{n - 1} \wt{\Sigma}_{ii} \text{ and } \wt{\mu}_n = \sqrt{\frac{\wt{\Sigma}_{nn}}{\Sigma_{nn}}} \mu_n.
    \end{equation*}
    Note that the above settings result in the correct pairwise probabilities as long as $\nu'$ is chosen small enough. From \ref{eq:diff_prob}, this setting preserves the pairwise ranking probabilities between the first alternative and all other alternatives in the list. To ensure the correct pairwise ranking probabilities between other pairs, $i, j$, with $i \neq j$ and $i, j \neq 1$, we will set the values of $\Sigma_{ij}$ accordingly. To do this, observe from \ref{eq:diff_prob}, it suffices to set $\Sigma_{ij}$ such that:
    \begin{equation*}
        \frac{\mu_i - \mu_j}{\sqrt{\Sigma_{ii} + \Sigma_{jj} - 2 \Sigma_{ij}}} = \frac{\wt{\mu}_i - \wt{\mu}_j}{\sqrt{\wt{\Sigma}_{ii} + \wt{\Sigma}_{jj} - 2 \wt{\Sigma}_{ij}}}.
    \end{equation*}
    We now prove that $(\wt{\mu}, \wt{\Sigma})$ is a valid solution satisfying \ref{eq:diff_form}. The first three constraints are satisfied by construction. Furthermore, $(\wt{\mu}, \wt{\Sigma})$ achieve the same pairwise observation probabilities by \ref{eq:diff_prob}. The only remaining constraint is to prove that $\wt{\Sigma}$ is positive semidefinite and has rank $n - 1$. To do this, note that $\sigma_{ij} > 0$ for all $i \neq j$, $\Sigma_{ii} > 0$ for all $i \in [n]$, $\mu_i \neq \mu_j$, and consequently, we have that:
    \begin{equation*}
        \wt{\Sigma} = \Sigma + \nu E
    \end{equation*}
    where $E$ is symmetric, satisfies $E_{1i} = 0$ for all $i \in [n]$, and $\norm{E} \leq 1$. Furthermore, $\nu$ may be made arbitrarily small by an appropriate choice of $\nu'$. Since the sub matrix corresponding to the last $n - 1$ rows and columns of $\Sigma$ is full rank and positive definite, the corresponding matrix of $\wt{\Sigma}$ is also full rank and positive definite by making $\nu$ correspondingly small. Hence, $\wt{\Sigma}$ is itself of rank $n - 1$ and positive semidefinite. Since such a solution may be constructed for arbitrary choices of $u$, this concludes the proof in this case as well.

    The theorem under \cref{def:universal_symmetries} is now a consequence of the following claim.
    \begin{claim}
        \label{clm:assumptions_mapping}
        Let
        \begin{gather*}
            \mc{V} = \lbrb{(\mu, \Sigma): (\mu, \Sigma) \text{ satisfy \cref{def:universal_symmetries}}} \\
            \mc{U} = \lbrb{(\mu, \Sigma): (\mu, \Sigma) \text{ satisfy \ref{eq:diff_form}}}.
        \end{gather*}
        Then, there exists an invertible mapping $f: \mc{U} \to \mc{V}$, such that for any $(\mu, \Sigma) \in \mc{U}$, $f(\mu, \Sigma)$ induces the same distribution over the rankings of the $n$ alternatives. 
    \end{claim}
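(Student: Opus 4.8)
The plan is to exhibit explicit maps $f\colon \mc U \to \mc V$ and $g\colon \mc V \to \mc U$, check that both preserve the induced distribution over rankings, and verify $g \circ f = \mrm{id}_{\mc U}$ and $f \circ g = \mrm{id}_{\mc V}$; this shows $f$ is the claimed bijection. Write $P \coloneqq I - \tfrac{1}{n}\, \bm 1 \bm 1^\top$ for the orthogonal projection onto $\bm 1^\perp$ (equivalently, the map $x \mapsto x - \bar x\, \bm 1$ with $\bar x = \tfrac{1}{n}\sum_i x_i$), and $M \coloneqq I - \bm 1 e_1^\top$ for the map $x \mapsto x - x_1\, \bm 1$. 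For $(\mu,\Sigma) \in \mc U$ set $f(\mu,\Sigma) \coloneqq (t\, P\mu,\; t^2\, P\Sigma P)$ with $t \coloneqq \sqrt{n/\Tr(P\Sigma P)}$, and for $(\mu',\Sigma') \in \mc V$ set $g(\mu',\Sigma') \coloneqq (s\, M\mu',\; s^2\, M\Sigma' M^\top)$ with $s \coloneqq \sqrt{(n-1)/\Tr(M\Sigma' M^\top)}$.

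First I would confirm the maps are well defined and land in the correct sets. For $f$: write $\Sigma = CC^\top$ with $C$ of rank $n-1$ and column space $e_1^\perp$ (legitimate since $\Sigma_{11}=0$ together with $\Sigma \succeq 0$ force $\Sigma e_1 = 0$). The projection $P$ restricts to an injection on $e_1^\perp$ because $\bm 1 \notin e_1^\perp$; hence $PC$ has rank $n-1$, so $P\Sigma P = (PC)(PC)^\top$ has rank $n-1$, and in particular $\Tr(P\Sigma P) > 0$, making $t$ well defined. Also $\bm 1^\top(P\mu) = 0$ and $(P\Sigma P)\bm 1 = 0$ (from $P\bm 1 = 0$), and $\Tr(t^2 P\Sigma P) = n$ by choice of $t$, so $f(\mu,\Sigma)\in\mc V$. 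The symmetric checks for $g$ use $M\bm 1 = 0$ (giving $(M\mu')_1 = 0$ and $(M\Sigma'M^\top)_{11} = 0$), the injectivity of $M$ on $\bm 1^\perp$ (as $\bm 1\notin \bm 1^\perp$), and the normalization by $s$.

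Ranking invariance is immediate: $P$ and $M$ each shift all coordinates of their argument by a common scalar, so pairwise differences $x_i - x_j$ are unchanged, and multiplication by a positive constant preserves all pairwise orderings; hence $f(\mu,\Sigma)$ and $g(\mu',\Sigma')$ induce the same ranking distributions as $(\mu,\Sigma)$ and $(\mu',\Sigma')$. The key to mutual invertibility is that each normalization pins a \emph{deterministic} linear functional of the utility vector: if $X \ts \mc N(\mu,\Sigma)$ with $(\mu,\Sigma)\in\mc U$ then $X_1 \ts \mc N(0,0)$, so $X_1 = 0$ a.s.\ and $MX = X$ a.s.; dually, if $(\mu',\Sigma')\in\mc V$ then $\bm 1^\top X \ts \mc N(0,0)$, so $PX = X$ a.s. As matrix identities these say $M\Sigma = \Sigma$ and $\Sigma M^\top = \Sigma$ on $\mc U$ (both from $\Sigma e_1 = 0$), and $P\Sigma' = \Sigma'$ and $\Sigma' P = \Sigma'$ on $\mc V$ (from $\Sigma'\bm 1 = 0$), together with the operator identities $MP = M$ and $PM^\top = M^\top$. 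Using these one gets $M(P\Sigma P)M^\top = M\Sigma M^\top = \Sigma$ and $M(P\mu) = M\mu = \mu$ for $(\mu,\Sigma)\in\mc U$, while $\Tr(t^2 P\Sigma P) = \Tr(t^2\Sigma) = t^2(n-1)$ forces the rescaling inside $g$ to be exactly by $1/t$; hence $g(f(\mu,\Sigma)) = (\mu,\Sigma)$, and the mirror computation (swapping the roles of $P$ and $M$ and of the two trace normalizations) gives $f(g(\mu',\Sigma')) = (\mu',\Sigma')$.

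The only real obstacle is the linear-algebra bookkeeping in the last step: one must verify carefully that each projection acts as the identity on the already-normalized family — so that the two rescaling factors are genuine reciprocals — and that rank $n-1$ survives the projections, which is what makes the trace normalizations legitimate. Both are short verifications, but they carry all the content; with them in hand, bijectivity and ranking-invariance of $f$ follow at once.
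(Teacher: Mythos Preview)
Your proof is correct and follows essentially the same route as the paper: you use the same two linear maps (the paper calls them $M_U = P$ and $M_V = M$), build $f$ and $g$ by applying them to $(\mu,\Sigma)$ and renormalizing the trace, verify ranking invariance from the ``common shift plus positive scaling'' structure, and check $g\circ f = \mrm{id}$ (and its mirror) via the identities $MP=M$, $PM^\top=M^\top$, $M\Sigma=\Sigma$ on $\mc U$, $P\Sigma'=\Sigma'$ on $\mc V$. One small slip: in the sentence ``$\Tr(t^2 P\Sigma P) = \Tr(t^2\Sigma) = t^2(n-1)$'' the left-hand side should be $\Tr\!\big(M(t^2 P\Sigma P)M^\top\big)$ --- the trace that actually enters $g$'s rescaling --- since $\Tr(P\Sigma P)=\Tr(\Sigma)-\tfrac1n\bm 1^\top\Sigma\bm 1$ need not equal $\Tr(\Sigma)$ for $(\mu,\Sigma)\in\mc U$; with that correction the reciprocity $s=1/t$ follows exactly as you intend.
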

    \begin{proof}
        Letting:
        \begin{equation*}
            M_U \coloneqq I - \frac{\bm{1}\bm{1}^\top}{n} \text{ and } M_V \coloneqq I - \bm{1}e_1^\top,
        \end{equation*}
        we will show that:
        \begin{equation*}
            f(\mu, \Sigma) = \frac{M_U \mu}{\sqrt{t}}, \frac{M_U \Sigma M_U}{t} \text{ where } t = \frac{\Tr (M_U \Sigma M_U)}{n} \tag{MAP} \label{eq:assump_map}
        \end{equation*}
        is the required transformation. First, observe:
        \begin{gather*}
            M_V M_U = I - \bm{1}e_i^\top - \frac{\bm{1}\bm{1}^\top}{n} + \frac{\bm{1}\bm{1}^\top}{n} = I - \bm{1} e_i^\top \\
            M_U M_V = I - \frac{\bm{1}\bm{1}^\top}{n} - \bm{1}e_1^\top + \bm{1}e_i^\top = I - \bm{1}\bm{1}^\top.
        \end{gather*}
        Consequently, we have:
        \begin{gather*}
            \forall \mu, \Sigma \in \mc{U}: M_V M_U \Sigma M_U^\top M_V^\top = \Sigma \\
            \forall \mu, \Sigma \in \mc{V}: M_U M_V \Sigma M_V^\top M_U^\top = \Sigma.
        \end{gather*}
        Hence, $t > 0$ in \ref{eq:assump_map} and the mapping $f$ is well defined. Next, observe that for any $X \ts \mc{N} (\mu, \Sigma)$, $M_U X$ has distribution $\mc{N} (M_U \mu, M_U \Sigma M_U)$ and induces the same distribution over rankings. Therefore, $f(\mu, \Sigma)$ induces the same distribution over rankings as $(\mu, \Sigma)$. Finally, to show that $f$ is invertible, define $g: \mc{V} \to \mc{U}$:
        \begin{equation*}
            g(\mu, \Sigma) = \frac{M_V \mu}{\sqrt{t}}, \frac{M_V \Sigma M_V^\top}{t} \text{ where } t = \frac{\Tr (M_V \Sigma M_V^\top)}{n}.
            \tag{INV-MAP} \label{eq:inv_map}
        \end{equation*}
        Note that $g$ is well defined as $t > 0$ due to the definition of $M_U M_V$ and its operation on $(\mu, \Sigma) \in \mc{V}$. Similarly, $g(\mu, \Sigma)$ induces the same distribution over rankings as $(\mu, \Sigma)$. Finally, we have:
        \begin{align*}
            &g \circ f (\mu, \Sigma) \\
            &\quad = \frac{M_V M_U \mu}{\sqrt{t_1 t_2}}, \frac{M_V M_U \Sigma M_U^\top M_V}{t_1t_2} \qquad \lprp{t_1 = \frac{\Tr (M_U \Sigma M_U)}{n},\ t_2 = \frac{\Tr (M_VM_U \Sigma M_U^\top M_V^\top)}{nt_1}} \\
            &\quad = \frac{M_VM_U \mu}{\sqrt{\Tr (M_VM_U \Sigma M_U^\top M_V^\top) / n}}, \frac{M_VM_U \Sigma M_U^\top M_V}{\Tr (M_VM_U \Sigma M_U^\top M_V^\top) / n} \\
            &\quad = \frac{M_VM_U \mu}{\sqrt{\Tr (\Sigma) / (n)}}, \frac{\Sigma}{\Tr (\Sigma) / (n)} = \mu, \Sigma
        \end{align*}
        where the third and fourth inequality follow from the fact that $(\mu, \Sigma) \in \mc{U}$.
    \end{proof}
    \cref{clm:assumptions_mapping} establishes that there exists a one-to-one mapping between pairs $(\mu, \Sigma)$ satisfying \cref{def:universal_symmetries} and \ref{eq:diff_form} which induce the same distribution over rankings. Hence, identifiability under one parameterization implies identifiability under the other. Therefore, the existence of infinitely many solutions satisfying \ref{eq:diff_form} with the same pairwise ranking probabilities as any $(\mu, \Sigma) \in \mc{U}$ satisfying implies the existencec of infinitely many solutions satisfying \cref{def:universal_symmetries} with the same pairwise ranking probabilities as any particlar $(\mu, \Sigma) \in \mc{V}$ concluding the proof of the theorem.
\end{proof}

\section{Implementation details}\label{sec:impl}

\begin{figure}
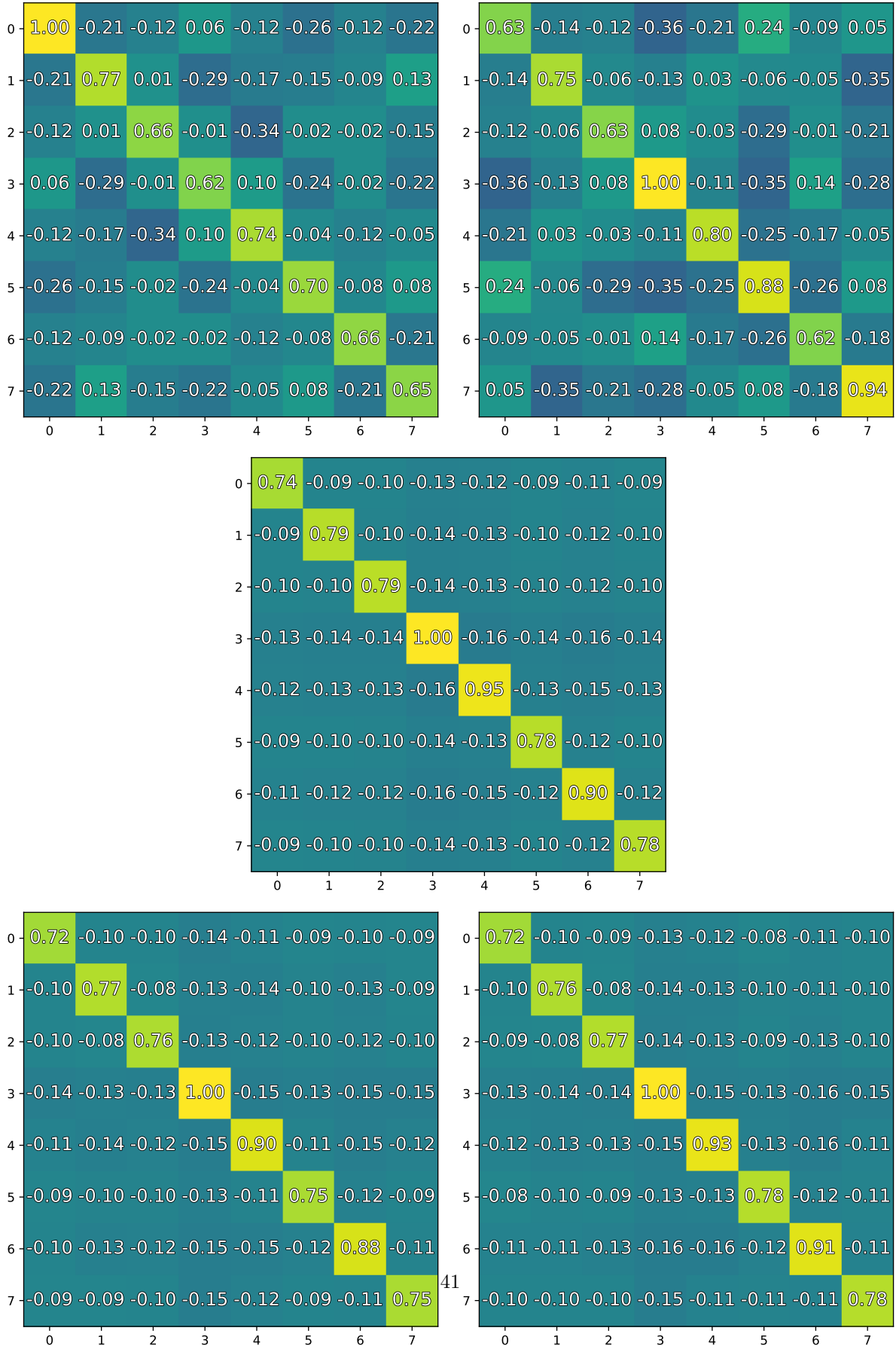

    \centering
    
    \includegraphics[width=0.48\linewidth]{figures/probit2_syn_mu_0-si_rI_8_0_2025-09-22-00_56_21-7b9d4d54-9770-11f0-8eac-83e8fce7dfe9-00003-si.pdf}
    \includegraphics[width=0.48\linewidth]{figures/probit2_syn_mu_0-si_rI_8_0_2025-09-23-13_59_13-0394cafc-98a7-11f0-b0a3-ffe7a0390291-00003-si.pdf}
    
    \includegraphics[width=0.48\linewidth]{figures/mu_0-si_rI_8_0-si.pdf}
    
    \includegraphics[width=0.48\linewidth]{figures/probit3_syn_mu_0-si_rI_8_0_2025-09-22-03_34_34-95b672a4-9786-11f0-ad8d-993a936f12e9-00003-si.pdf}
    \includegraphics[width=0.48\linewidth]{figures/probit3_syn_mu_0-si_rI_8_0_2025-09-23-04_08_12-7318ed2a-9854-11f0-9442-5dd5eabf6598-00003-si.pdf}

    \caption{Larger version of the first row of Figure~\ref{fig:syn}. Top to bottom rows: pairwise, ground truth, best-of-three.}
\end{figure}

\begin{figure}
    \centering
    \includegraphics[width=0.48\linewidth]{figures/probit2_syn_mu_r-si_r_8_0_2025-09-22-03_13_31-a506fb82-9783-11f0-a4a7-41fe62f2c2bc-00003-si.pdf}
    \includegraphics[width=0.48\linewidth]{figures/probit2_syn_mu_r-si_r_8_0_2025-09-23-01_59_00-66b0c358-9842-11f0-9e61-6d31acc2e1cb-00003-si.pdf}
    
    \includegraphics[width=0.48\linewidth]{figures/mu_r-si_r_8_0-si.pdf}
    
    \includegraphics[width=0.48\linewidth]{figures/probit3_syn_mu_r-si_r_8_0_2025-09-22-01_44_18-2e3a302a-9777-11f0-9bc8-1d3cd0a6473b-00003-si.pdf}
    \includegraphics[width=0.48\linewidth]{figures/probit3_syn_mu_r-si_r_8_0_2025-09-23-01_59_00-66affa0e-9842-11f0-8959-fdd628edd2c6-00003-si.pdf}
    \caption{Larger version of the second row of Figure~\ref{fig:syn}. Top to bottom rows: pairwise, ground truth, best-of-three.}
    \label{fig:syn:large2}
\end{figure}

\begin{figure}

    \centering
    \includegraphics[width=0.24\linewidth]{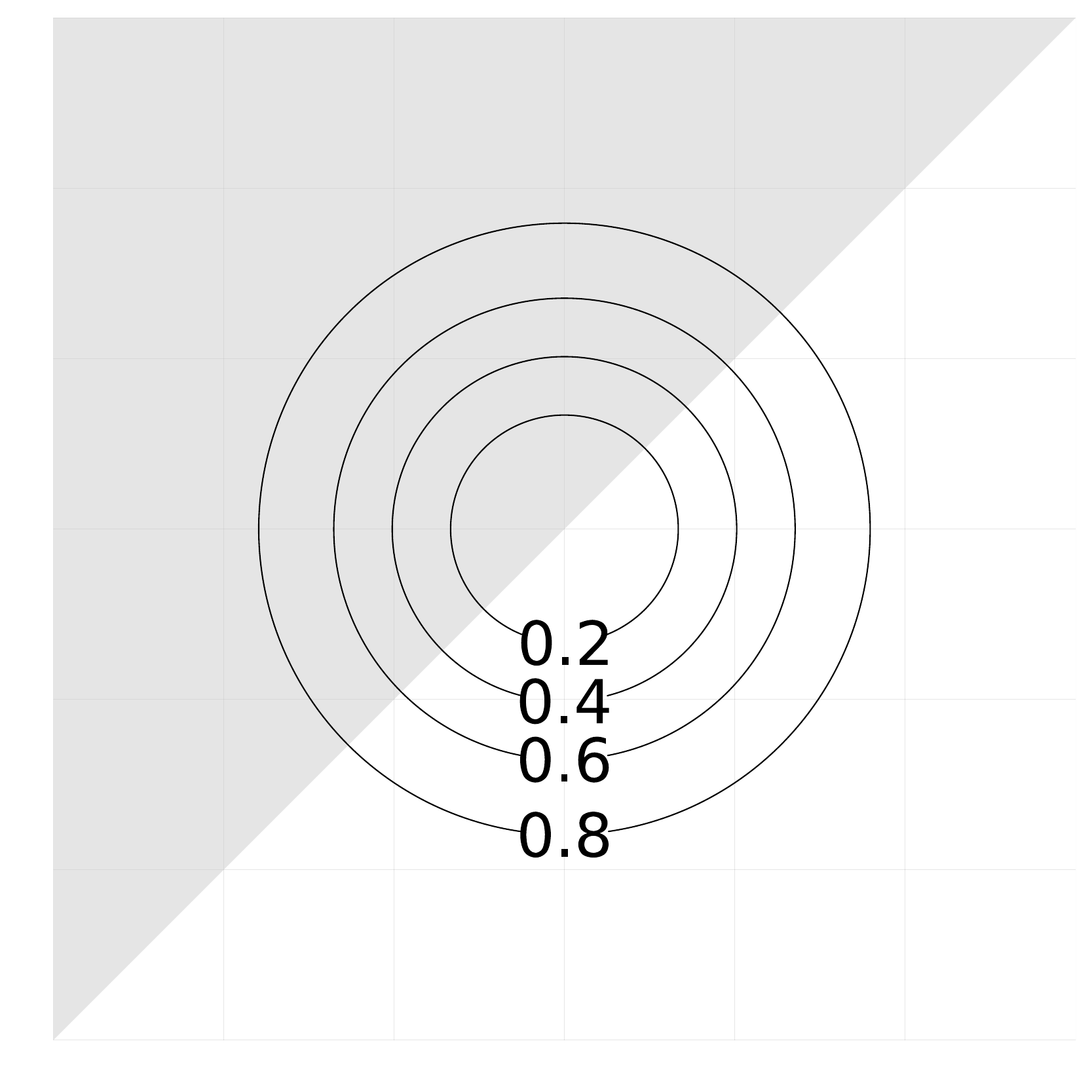}
    \includegraphics[width=0.24\linewidth]{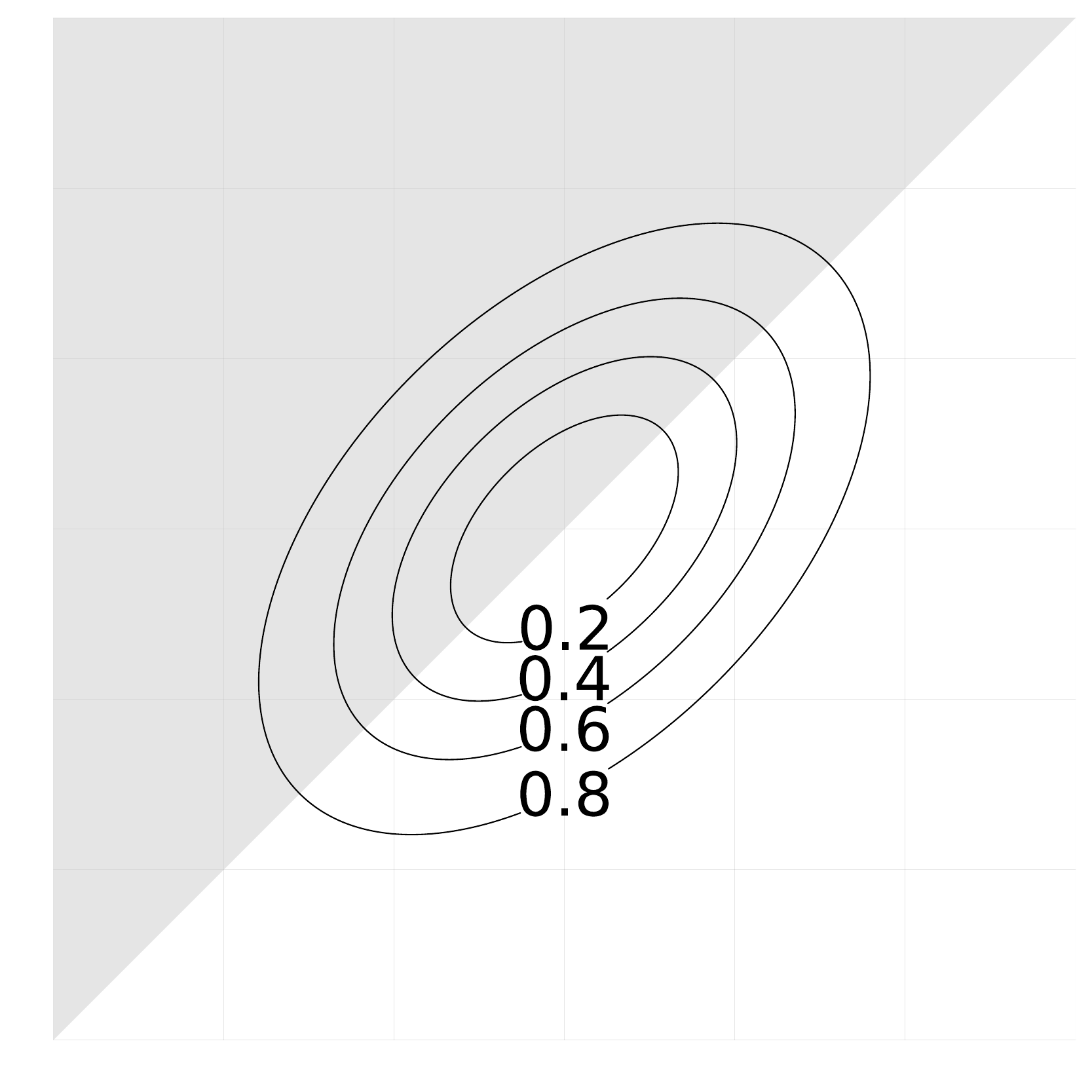}
    \includegraphics[width=0.24\linewidth]{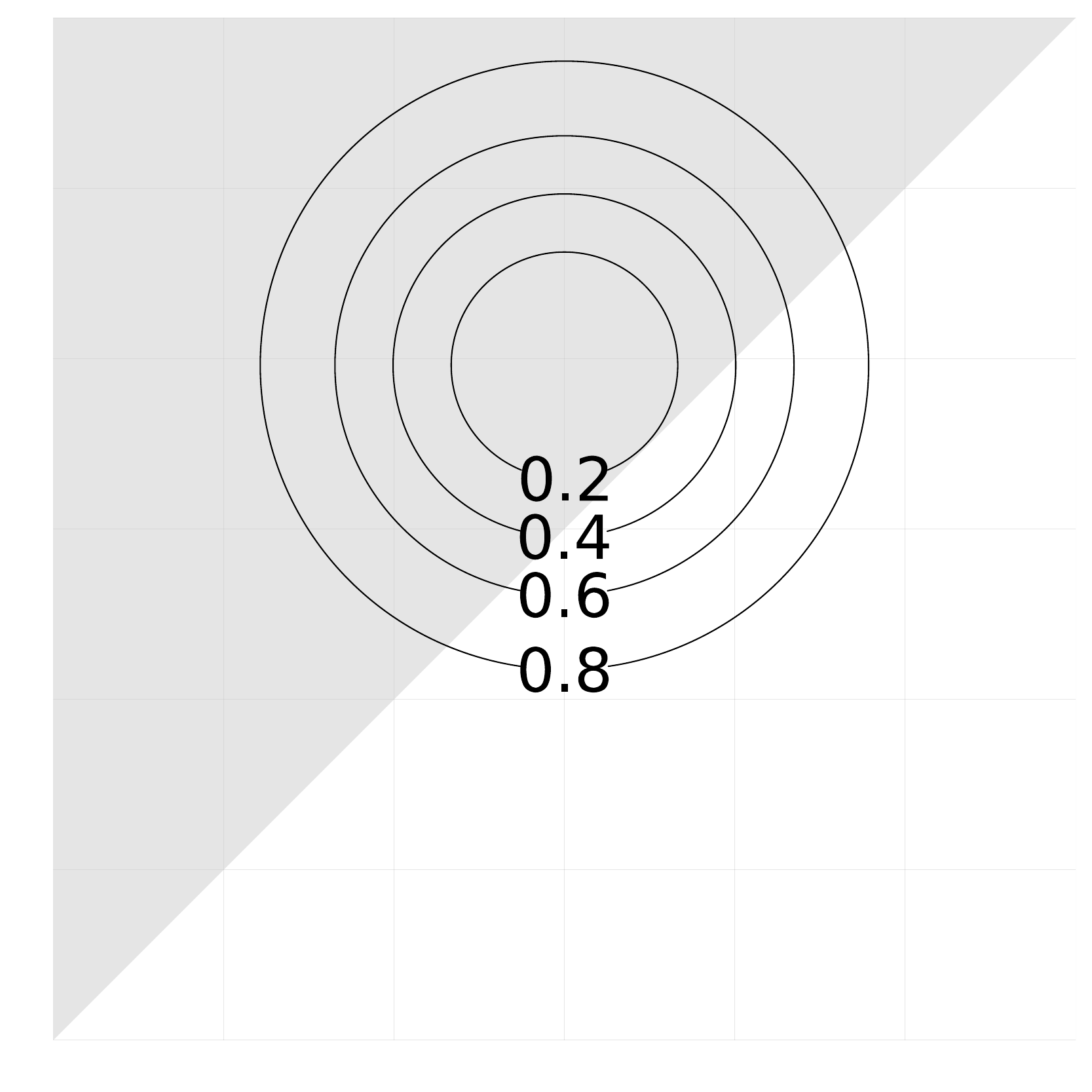}
    \includegraphics[width=0.24\linewidth]{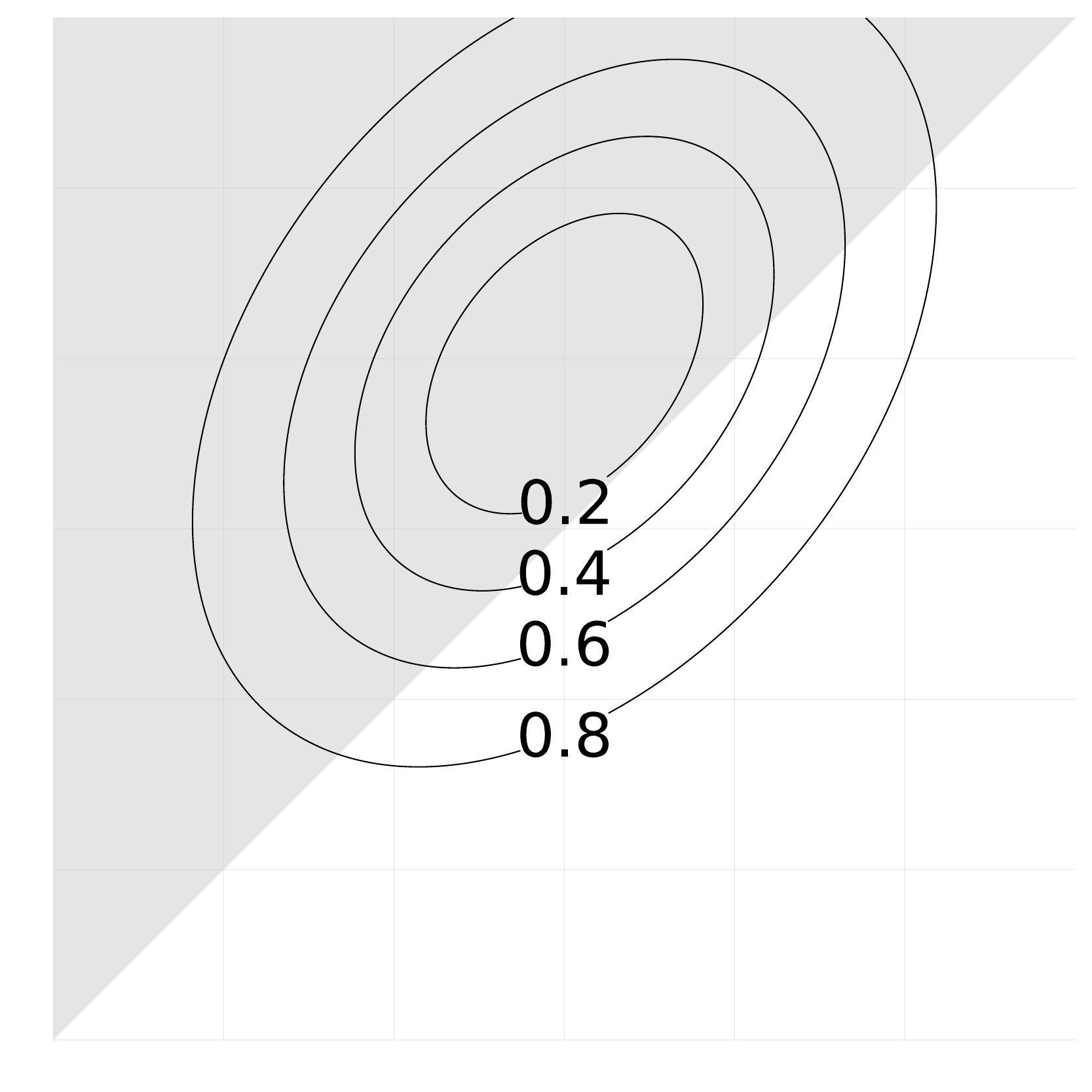}
    \caption{Two probit models where $\mathbb{P}(Y>X)=0.5$ (two leftmost) and $\mathbb{P}(Y>X)=0.75$ (two rightmost)\label{fig:probitsym}.}
\end{figure}

\begin{figure}
    \centering
    \includegraphics[width=0.5\linewidth]{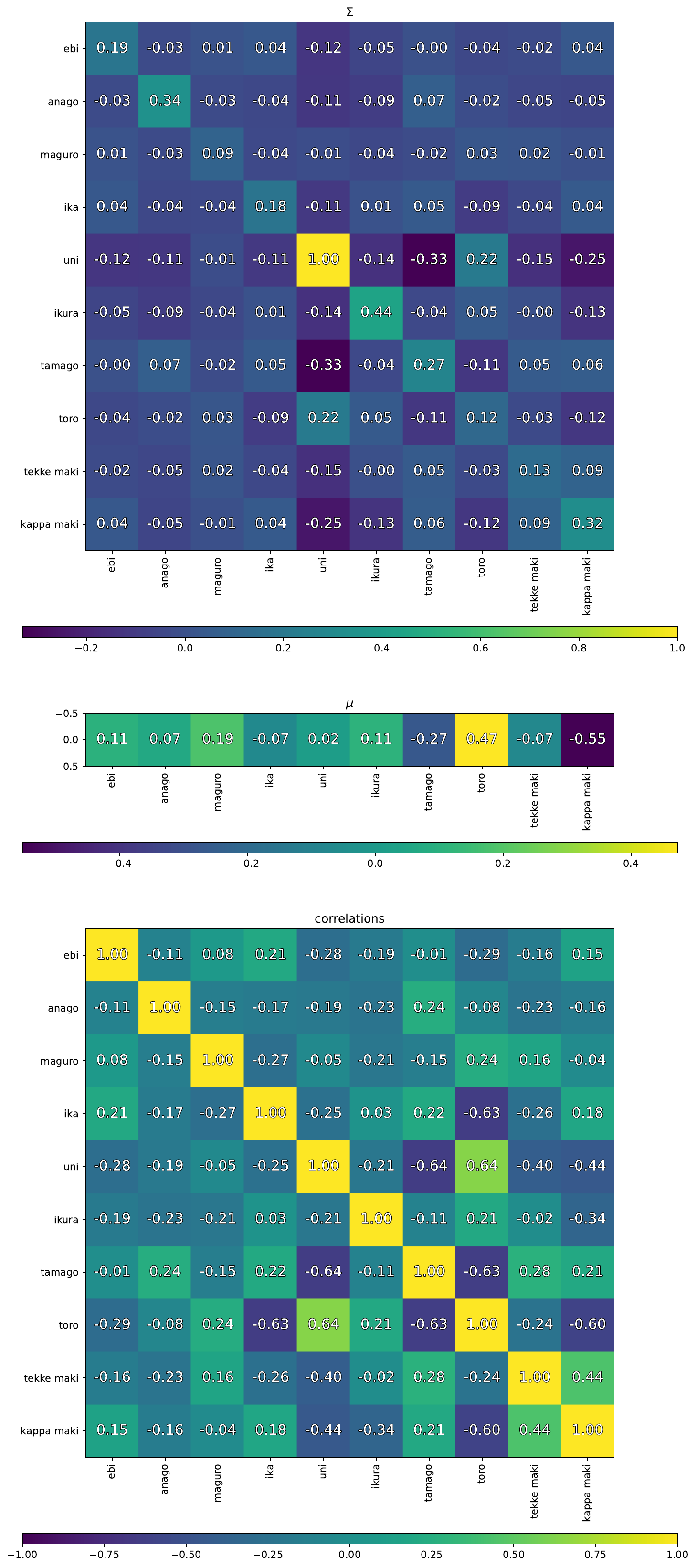}
    \caption{A probit model learned from best-of-three observation from sushi-a}
    \label{fig:sushi-probit}
\end{figure}
Figure~\ref{fig:probitsym} shows multiple Gaussian distributions with the same pairwise probabilities.

\begin{table}
    \centering
    \tiny
    \begin{adjustbox}{width=1.1\textwidth,center=\textwidth}
        \begin{tabular}{llrllr}
             \toprule
 & & cor. &  & & cor.\\ 
\midrule
Independence Day (1996) & Lost in Translation (2003) & -0.45 & Gone in 60 Seconds (2000) & S.W.A.T. (2003) & 0.40 \\
Maid in Manhattan (2002) & The Royal Tenenbaums (2001) & -0.42 & Lost in Translation (2003) & Napoleon Dynamite (2004) & 0.40 \\
Miss Congeniality (2000) & The Matrix (1999) & -0.41 & Minority Report (2002) & Spider-Man 2 (2004) & 0.40 \\
Memento (2000) & Pearl Harbor (2001) & -0.41 & Dodgeball: A True Underdog Story (2004) & Starsky \& Hutch (2004) & 0.40 \\
Double Jeopardy (1999) & Eternal Sunshine of the Spotless Mind (2004) & -0.40 & Kill Bill: Vol. 2 (2004) & Memento (2000) & 0.40 \\
Cheaper by the Dozen (2003) & Memento (2000) & -0.40 & Kill Bill: Vol. 1 (2003) & The Royal Tenenbaums (2001) & 0.40 \\
Independence Day (1996) & The Royal Tenenbaums (2001) & -0.40 & Lost in Translation (2003) & The Godfather (1972) & 0.40 \\
Anchorman: The Legend of Ron Burgundy (2004) & Speed (1994) & -0.40 & Braveheart (1995) & Cold Mountain (2003) & 0.40 \\
Adaptation (2002) & Pearl Harbor (2001) & -0.40 & Men in Black (1997) & Men in Black II (2002) & 0.40 \\
American Beauty (1999) & Maid in Manhattan (2002) & -0.39 & Enemy of the State (1998) & Men in Black II (2002) & 0.40 \\
Fight Club (1999) & Maid in Manhattan (2002) & -0.39 & Gone in 60 Seconds (2000) & The Recruit (2003) & 0.40 \\
Kill Bill: Vol. 1 (2003) & Maid in Manhattan (2002) & -0.39 & Runaway Bride (1999) & Stepmom (1998) & 0.40 \\
Air Force One (1997) & Eternal Sunshine of the Spotless Mind (2004) & -0.39 & Ghostbusters (1984) & Lord of the Rings: The Fellowship of the Ring (2001) & 0.40 \\
Fight Club (1999) & Sweet Home Alabama (2002) & -0.39 & Two Weeks Notice (2002) & What Women Want (2000) & 0.41 \\
Pearl Harbor (2001) & The Royal Tenenbaums (2001) & -0.38 & The Fast and the Furious (2001) & Tomb Raider (2001) & 0.41 \\
Lost in Translation (2003) & Pearl Harbor (2001) & -0.38 & Cheaper by the Dozen (2003) & The Notebook (2004) & 0.41 \\
Being John Malkovich (1999) & Runaway Bride (1999) & -0.38 & Speed (1994) & Top Gun (1986) & 0.41 \\
Adaptation (2002) & Independence Day (1996) & -0.37 & Adaptation (2002) & Pulp Fiction (1994) & 0.41 \\
Cheaper by the Dozen (2003) & Raiders of the Lost Ark (1981) & -0.37 & Steel Magnolias (1989) & You've Got Mail (1998) & 0.41 \\
Double Jeopardy (1999) & Kill Bill: Vol. 1 (2003) & -0.37 & Adaptation (2002) & Sideways (2004) & 0.41 \\
Men in Black II (2002) & Rain Man (1988) & -0.37 & How to Lose a Guy in 10 Days (2003) & The Wedding Planner (2001) & 0.41 \\
The Rock (1996) & The Royal Tenenbaums (2001) & -0.36 & Grease (1978) & Steel Magnolias (1989) & 0.41 \\
American Beauty (1999) & Pearl Harbor (2001) & -0.36 & Dirty Dancing (1987) & Sleepless in Seattle (1993) & 0.41 \\
Eternal Sunshine of the Spotless Mind (2004) & Gone in 60 Seconds (2000) & -0.36 & Memento (2000) & The Royal Tenenbaums (2001) & 0.42 \\
Dirty Dancing (1987) & Kill Bill: Vol. 1 (2003) & -0.36 & Adaptation (2002) & Memento (2000) & 0.42 \\
Miss Congeniality (2000) & The Royal Tenenbaums (2001) & -0.36 & The Silence of the Lambs (1991) & The Usual Suspects (1995) & 0.42 \\
Eternal Sunshine of the Spotless Mind (2004) & The General's Daughter (1999) & -0.36 & Grease (1978) & Pretty Woman (1990) & 0.42 \\
Ghostbusters (1984) & Stepmom (1998) & -0.36 & Radio (2003) & The Notebook (2004) & 0.42 \\
Cheaper by the Dozen (2003) & The Usual Suspects (1995) & -0.36 & Anchorman: The Legend of Ron Burgundy (2004) & Napoleon Dynamite (2004) & 0.42 \\
Fight Club (1999) & Two Weeks Notice (2002) & -0.35 & Pretty Woman (1990) & Runaway Bride (1999) & 0.43 \\
Kill Bill: Vol. 1 (2003) & The Rock (1996) & -0.35 & Hitch (2005) & How to Lose a Guy in 10 Days (2003) & 0.43 \\
Kill Bill: Vol. 2 (2004) & Pearl Harbor (2001) & -0.35 & Anchorman: The Legend of Ron Burgundy (2004) & Dodgeball: A True Underdog Story (2004) & 0.43 \\
Fight Club (1999) & Radio (2003) & -0.35 & Fight Club (1999) & Kill Bill: Vol. 1 (2003) & 0.43 \\
Eternal Sunshine of the Spotless Mind (2004) & Pearl Harbor (2001) & -0.34 & Mr. Deeds (2002) & Spider-Man (2002) & 0.43 \\
Maid in Manhattan (2002) & Memento (2000) & -0.34 & Along Came Polly (2004) & Sweet Home Alabama (2002) & 0.43 \\
Eternal Sunshine of the Spotless Mind (2004) & Maid in Manhattan (2002) & -0.34 & Eternal Sunshine of the Spotless Mind (2004) & Pulp Fiction (1994) & 0.43 \\
Good Will Hunting (1997) & National Treasure (2004) & -0.34 & Kill Bill: Vol. 2 (2004) & Lord of the Rings: The Fellowship of the Ring (2001) & 0.44 \\
Hitch (2005) & Pulp Fiction (1994) & -0.34 & Lord of the Rings: The Fellowship of the Ring (2001) & Lord of the Rings: The Return of the King (2003) & 0.44 \\
Armageddon (1998) & Memento (2000) & -0.34 & Eternal Sunshine of the Spotless Mind (2004) & The Royal Tenenbaums (2001) & 0.44 \\
How to Lose a Guy in 10 Days (2003) & Ray (2004) & -0.34 & Being John Malkovich (1999) & Lord of the Rings: The Fellowship of the Ring (2001) & 0.44 \\
Hitch (2005) & Memento (2000) & -0.34 & Fight Club (1999) & The Silence of the Lambs (1991) & 0.44 \\
Anchorman: The Legend of Ron Burgundy (2004) & Miss Congeniality (2000) & -0.34 & Gone in 60 Seconds (2000) & The Fast and the Furious (2001) & 0.44 \\
Adaptation (2002) & S.W.A.T. (2003) & -0.34 & Fight Club (1999) & Memento (2000) & 0.45 \\
Pulp Fiction (1994) & Two Weeks Notice (2002) & -0.33 & Lord of the Rings: The Fellowship of the Ring (2001) & The Matrix (1999) & 0.45 \\
Being John Malkovich (1999) & National Treasure (2004) & -0.33 & American Beauty (1999) & Lord of the Rings: The Fellowship of the Ring (2001) & 0.45 \\
Pearl Harbor (2001) & Pulp Fiction (1994) & -0.33 & Napoleon Dynamite (2004) & The Royal Tenenbaums (2001) & 0.45 \\
Ghostbusters (1984) & John Q (2001) & -0.33 & Napoleon Dynamite (2004) & Sideways (2004) & 0.45 \\
Armageddon (1998) & The Royal Tenenbaums (2001) & -0.33 & American Beauty (1999) & Lost in Translation (2003) & 0.45 \\
Eternal Sunshine of the Spotless Mind (2004) & Titanic (1997) & -0.33 & Maid in Manhattan (2002) & You've Got Mail (1998) & 0.45 \\
Indiana Jones and the Last Crusade (1989) & Maid in Manhattan (2002) & -0.33 & Anchorman: The Legend of Ron Burgundy (2004) & Starsky \& Hutch (2004) & 0.45 \\
Bringing Down the House (2003) & Memento (2000) & -0.33 & Sleepless in Seattle (1993) & You've Got Mail (1998) & 0.46 \\
Double Jeopardy (1999) & The Matrix (1999) & -0.33 & Kill Bill: Vol. 1 (2003) & Pulp Fiction (1994) & 0.46 \\
Entrapment (1999) & Mean Girls (2004) & -0.33 & Entrapment (1999) & Gone in 60 Seconds (2000) & 0.46 \\
Lord of the Rings: The Two Towers (2002) & The Recruit (2003) & -0.33 & Memento (2000) & Pulp Fiction (1994) & 0.46 \\
Napoleon Dynamite (2004) & Pearl Harbor (2001) & -0.33 & Gone in 60 Seconds (2000) & Tomb Raider (2001) & 0.46 \\
The Patriot (2000) & The Royal Tenenbaums (2001) & -0.33 & Adaptation (2002) & The Royal Tenenbaums (2001) & 0.46 \\
Lost in Translation (2003) & The Patriot (2000) & -0.33 & Bringing Down the House (2003) & Cheaper by the Dozen (2003) & 0.46 \\
Million Dollar Baby (2004) & The Day After Tomorrow (2004) & -0.32 & Maid in Manhattan (2002) & The Wedding Planner (2001) & 0.46 \\
Maid in Manhattan (2002) & Pulp Fiction (1994) & -0.32 & John Q (2001) & Men of Honor (2000) & 0.46 \\
Kill Bill: Vol. 2 (2004) & Maid in Manhattan (2002) & -0.32 & American Beauty (1999) & Napoleon Dynamite (2004) & 0.46 \\
Fight Club (1999) & The Wedding Planner (2001) & -0.32 & Harry Potter and the Chamber of Secrets (2002) & Harry Potter and the Sorcerer's Stone (2001) & 0.47 \\
American Beauty (1999) & Runaway Bride (1999) & -0.32 & Steel Magnolias (1989) & Stepmom (1998) & 0.47 \\
Bringing Down the House (2003) & Kill Bill: Vol. 1 (2003) & -0.32 & Adaptation (2002) & American Beauty (1999) & 0.47 \\
Stepmom (1998) & The Royal Tenenbaums (2001) & -0.32 & Good Will Hunting (1997) & Rain Man (1988) & 0.47 \\
Gone in 60 Seconds (2000) & Million Dollar Baby (2004) & -0.32 & Lost in Translation (2003) & Sideways (2004) & 0.47 \\
Ghostbusters (1984) & S.W.A.T. (2003) & -0.32 & Ghost (1990) & Grease (1978) & 0.47 \\
Hitch (2005) & Saving Private Ryan (1998) & -0.32 & Ferris Bueller's Day Off (1986) & Ghostbusters (1984) & 0.47 \\
Kill Bill: Vol. 1 (2003) & What Women Want (2000) & -0.32 & Braveheart (1995) & Independence Day (1996) & 0.47 \\
Napoleon Dynamite (2004) & Twister (1996) & -0.32 & Runaway Bride (1999) & Speed (1994) & 0.48 \\
Kill Bill: Vol. 1 (2003) & Miss Congeniality (2000) & -0.32 & American Beauty (1999) & Eternal Sunshine of the Spotless Mind (2004) & 0.48 \\
Lost in Translation (2003) & Maid in Manhattan (2002) & -0.32 & Sweet Home Alabama (2002) & Two Weeks Notice (2002) & 0.48 \\
National Treasure (2004) & The Godfather (1972) & -0.32 & Con Air (1997) & Gone in 60 Seconds (2000) & 0.49 \\
American Beauty (1999) & John Q (2001) & -0.32 & Adaptation (2002) & Eternal Sunshine of the Spotless Mind (2004) & 0.49 \\
Bringing Down the House (2003) & Raiders of the Lost Ark (1981) & -0.32 & Miss Congeniality (2000) & The Wedding Planner (2001) & 0.49 \\
Ray (2004) & Sweet Home Alabama (2002) & -0.32 & American Beauty (1999) & Pulp Fiction (1994) & 0.49 \\
Love Actually (2003) & Troy (2004) & -0.32 & Maid in Manhattan (2002) & Two Weeks Notice (2002) & 0.49 \\
Lost in Translation (2003) & Pretty Woman (1990) & -0.32 & Ocean's Eleven (2001) & Ocean's Twelve (2004) & 0.49 \\
Ghostbusters (1984) & Maid in Manhattan (2002) & -0.31 & Double Jeopardy (1999) & Entrapment (1999) & 0.50 \\
Indiana Jones and the Last Crusade (1989) & S.W.A.T. (2003) & -0.31 & Fight Club (1999) & Pulp Fiction (1994) & 0.50 \\
Monsters, Inc. (2001) & The Wedding Planner (2001) & -0.31 & Being John Malkovich (1999) & Eternal Sunshine of the Spotless Mind (2004) & 0.50 \\
S.W.A.T. (2003) & The Royal Tenenbaums (2001) & -0.31 & Lost in Translation (2003) & The Royal Tenenbaums (2001) & 0.50 \\
Double Jeopardy (1999) & Memento (2000) & -0.31 & Finding Nemo (Widescreen) (2003) & Monsters, Inc. (2001) & 0.50 \\
Mean Girls (2004) & The Rock (1996) & -0.31 & Being John Malkovich (1999) & Memento (2000) & 0.51 \\
Along Came a Spider (2001) & Memento (2000) & -0.31 & Legally Blonde (2001) & Steel Magnolias (1989) & 0.52 \\
Million Dollar Baby (2004) & Tomb Raider (2001) & -0.31 & Ghost (1990) & Pretty Woman (1990) & 0.52 \\
Pearl Harbor (2001) & The Godfather (1972) & -0.31 & Maid in Manhattan (2002) & What Women Want (2000) & 0.53 \\
Armageddon (1998) & Eternal Sunshine of the Spotless Mind (2004) & -0.31 & Maid in Manhattan (2002) & Sweet Home Alabama (2002) & 0.53 \\
Men of Honor (2000) & The Royal Tenenbaums (2001) & -0.31 & Pretty Woman (1990) & Stepmom (1998) & 0.54 \\
American Beauty (1999) & Swordfish (2001) & -0.31 & Eternal Sunshine of the Spotless Mind (2004) & Memento (2000) & 0.54 \\
Ghostbusters (1984) & Men of Honor (2000) & -0.31 & Adaptation (2002) & Being John Malkovich (1999) & 0.56 \\
Finding Neverland (2004) & Speed (1994) & -0.31 & Spider-Man (2002) & Spider-Man 2 (2004) & 0.56 \\
Big Daddy (1999) & Raiders of the Lost Ark (1981) & -0.31 & Adaptation (2002) & Lost in Translation (2003) & 0.57 \\
Air Force One (1997) & Mystic River (2003) & -0.31 & Kill Bill: Vol. 1 (2003) & Kill Bill: Vol. 2 (2004) & 0.57 \\
Man on Fire (2004) & Raiders of the Lost Ark (1981) & -0.31 & Fight Club (1999) & The Usual Suspects (1995) & 0.59 \\
Hitch (2005) & The Usual Suspects (1995) & -0.31 & Lord of the Rings: The Return of the King (2003) & Lord of the Rings: The Two Towers (2002) & 0.64 \\
Entrapment (1999) & The Royal Tenenbaums (2001) & -0.31 & Finding Nemo (Widescreen) (2003) & Shrek (Full-screen) (2001) & 0.65 \\
Indiana Jones and the Last Crusade (1989) & Sweet Home Alabama (2002) & -0.31 & American Beauty (1999) & Being John Malkovich (1999) & 0.66 \\
Lost in Translation (2003) & Runaway Bride (1999) & -0.31 & Lord of the Rings: The Fellowship of the Ring (2001) & Lord of the Rings: The Two Towers (2002) & 0.66 \\
Pulp Fiction (1994) & The League of Extraordinary Gentlemen (2003) & -0.31 & Indiana Jones and the Temple of Doom (1984) & Raiders of the Lost Ark (1981) & 0.67 \\
Radio (2003) & The Usual Suspects (1995) & -0.31 & American Beauty (1999) & Memento (2000) & 0.69 \\

\bottomrule
        \end{tabular}
        \end{adjustbox}

    \caption{Left to right, movies with the lowest and highest correlations in a probit learned from the Netflix dataset.}
\end{table}

\begin{table}
    \centering
    \tiny
    \begin{adjustbox}{width=1.1\textwidth,center=\textwidth}
        \begin{tabular}{llrllr}
             \toprule
 & & cor. & & & cor.\\ 
\midrule
Independence Day (1996) & Lost in Translation (2003) & -0.45 & Gone in 60 Seconds (2000) & S.W.A.T. (2003) & 0.40 \\
Maid in Manhattan (2002) & The Royal Tenenbaums (2001) & -0.42 & Lost in Translation (2003) & Napoleon Dynamite (2004) & 0.40 \\
Miss Congeniality (2000) & The Matrix (1999) & -0.41 & Minority Report (2002) & Spider-Man 2 (2004) & 0.40 \\
Memento (2000) & Pearl Harbor (2001) & -0.41 & Dodgeball: A True Underdog Story (2004) & Starsky \& Hutch (2004) & 0.40 \\
Double Jeopardy (1999) & Eternal Sunshine of the Spotless Mind (2004) & -0.40 & Kill Bill: Vol. 2 (2004) & Memento (2000) & 0.40 \\
Cheaper by the Dozen (2003) & Memento (2000) & -0.40 & Kill Bill: Vol. 1 (2003) & The Royal Tenenbaums (2001) & 0.40 \\
Independence Day (1996) & The Royal Tenenbaums (2001) & -0.40 & Lost in Translation (2003) & The Godfather (1972) & 0.40 \\
Anchorman: The Legend of Ron Burgundy (2004) & Speed (1994) & -0.40 & Braveheart (1995) & Cold Mountain (2003) & 0.40 \\
Adaptation (2002) & Pearl Harbor (2001) & -0.40 & Men in Black (1997) & Men in Black II (2002) & 0.40 \\
American Beauty (1999) & Maid in Manhattan (2002) & -0.39 & Enemy of the State (1998) & Men in Black II (2002) & 0.40 \\
Fight Club (1999) & Maid in Manhattan (2002) & -0.39 & Gone in 60 Seconds (2000) & The Recruit (2003) & 0.40 \\
Kill Bill: Vol. 1 (2003) & Maid in Manhattan (2002) & -0.39 & Runaway Bride (1999) & Stepmom (1998) & 0.40 \\
Air Force One (1997) & Eternal Sunshine of the Spotless Mind (2004) & -0.39 & Ghostbusters (1984) & Lord of the Rings: The Fellowship of the Ring (2001) & 0.40 \\
Fight Club (1999) & Sweet Home Alabama (2002) & -0.39 & Two Weeks Notice (2002) & What Women Want (2000) & 0.41 \\
Pearl Harbor (2001) & The Royal Tenenbaums (2001) & -0.38 & The Fast and the Furious (2001) & Tomb Raider (2001) & 0.41 \\
Lost in Translation (2003) & Pearl Harbor (2001) & -0.38 & Cheaper by the Dozen (2003) & The Notebook (2004) & 0.41 \\
Being John Malkovich (1999) & Runaway Bride (1999) & -0.38 & Speed (1994) & Top Gun (1986) & 0.41 \\
Adaptation (2002) & Independence Day (1996) & -0.37 & Adaptation (2002) & Pulp Fiction (1994) & 0.41 \\
Cheaper by the Dozen (2003) & Raiders of the Lost Ark (1981) & -0.37 & Steel Magnolias (1989) & You've Got Mail (1998) & 0.41 \\
Double Jeopardy (1999) & Kill Bill: Vol. 1 (2003) & -0.37 & Adaptation (2002) & Sideways (2004) & 0.41 \\
Men in Black II (2002) & Rain Man (1988) & -0.37 & How to Lose a Guy in 10 Days (2003) & The Wedding Planner (2001) & 0.41 \\
The Rock (1996) & The Royal Tenenbaums (2001) & -0.36 & Grease (1978) & Steel Magnolias (1989) & 0.41 \\
American Beauty (1999) & Pearl Harbor (2001) & -0.36 & Dirty Dancing (1987) & Sleepless in Seattle (1993) & 0.41 \\
Eternal Sunshine of the Spotless Mind (2004) & Gone in 60 Seconds (2000) & -0.36 & Memento (2000) & The Royal Tenenbaums (2001) & 0.42 \\
Dirty Dancing (1987) & Kill Bill: Vol. 1 (2003) & -0.36 & Adaptation (2002) & Memento (2000) & 0.42 \\
Miss Congeniality (2000) & The Royal Tenenbaums (2001) & -0.36 & The Silence of the Lambs (1991) & The Usual Suspects (1995) & 0.42 \\
Eternal Sunshine of the Spotless Mind (2004) & The General's Daughter (1999) & -0.36 & Grease (1978) & Pretty Woman (1990) & 0.42 \\
Ghostbusters (1984) & Stepmom (1998) & -0.36 & Radio (2003) & The Notebook (2004) & 0.42 \\
Cheaper by the Dozen (2003) & The Usual Suspects (1995) & -0.36 & Anchorman: The Legend of Ron Burgundy (2004) & Napoleon Dynamite (2004) & 0.42 \\
Fight Club (1999) & Two Weeks Notice (2002) & -0.35 & Pretty Woman (1990) & Runaway Bride (1999) & 0.43 \\
Kill Bill: Vol. 1 (2003) & The Rock (1996) & -0.35 & Hitch (2005) & How to Lose a Guy in 10 Days (2003) & 0.43 \\
Kill Bill: Vol. 2 (2004) & Pearl Harbor (2001) & -0.35 & Anchorman: The Legend of Ron Burgundy (2004) & Dodgeball: A True Underdog Story (2004) & 0.43 \\
Fight Club (1999) & Radio (2003) & -0.35 & Fight Club (1999) & Kill Bill: Vol. 1 (2003) & 0.43 \\
Eternal Sunshine of the Spotless Mind (2004) & Pearl Harbor (2001) & -0.34 & Mr. Deeds (2002) & Spider-Man (2002) & 0.43 \\
Maid in Manhattan (2002) & Memento (2000) & -0.34 & Along Came Polly (2004) & Sweet Home Alabama (2002) & 0.43 \\
Eternal Sunshine of the Spotless Mind (2004) & Maid in Manhattan (2002) & -0.34 & Eternal Sunshine of the Spotless Mind (2004) & Pulp Fiction (1994) & 0.43 \\
Good Will Hunting (1997) & National Treasure (2004) & -0.34 & Kill Bill: Vol. 2 (2004) & Lord of the Rings: The Fellowship of the Ring (2001) & 0.44 \\
Hitch (2005) & Pulp Fiction (1994) & -0.34 & Lord of the Rings: The Fellowship of the Ring (2001) & Lord of the Rings: The Return of the King (2003) & 0.44 \\
Armageddon (1998) & Memento (2000) & -0.34 & Eternal Sunshine of the Spotless Mind (2004) & The Royal Tenenbaums (2001) & 0.44 \\
How to Lose a Guy in 10 Days (2003) & Ray (2004) & -0.34 & Being John Malkovich (1999) & Lord of the Rings: The Fellowship of the Ring (2001) & 0.44 \\
Hitch (2005) & Memento (2000) & -0.34 & Fight Club (1999) & The Silence of the Lambs (1991) & 0.44 \\
Anchorman: The Legend of Ron Burgundy (2004) & Miss Congeniality (2000) & -0.34 & Gone in 60 Seconds (2000) & The Fast and the Furious (2001) & 0.44 \\
Adaptation (2002) & S.W.A.T. (2003) & -0.34 & Fight Club (1999) & Memento (2000) & 0.45 \\
Pulp Fiction (1994) & Two Weeks Notice (2002) & -0.33 & Lord of the Rings: The Fellowship of the Ring (2001) & The Matrix (1999) & 0.45 \\
Being John Malkovich (1999) & National Treasure (2004) & -0.33 & American Beauty (1999) & Lord of the Rings: The Fellowship of the Ring (2001) & 0.45 \\
Pearl Harbor (2001) & Pulp Fiction (1994) & -0.33 & Napoleon Dynamite (2004) & The Royal Tenenbaums (2001) & 0.45 \\
Ghostbusters (1984) & John Q (2001) & -0.33 & Napoleon Dynamite (2004) & Sideways (2004) & 0.45 \\
Armageddon (1998) & The Royal Tenenbaums (2001) & -0.33 & American Beauty (1999) & Lost in Translation (2003) & 0.45 \\
Eternal Sunshine of the Spotless Mind (2004) & Titanic (1997) & -0.33 & Maid in Manhattan (2002) & You've Got Mail (1998) & 0.45 \\
Indiana Jones and the Last Crusade (1989) & Maid in Manhattan (2002) & -0.33 & Anchorman: The Legend of Ron Burgundy (2004) & Starsky \& Hutch (2004) & 0.45 \\
Bringing Down the House (2003) & Memento (2000) & -0.33 & Sleepless in Seattle (1993) & You've Got Mail (1998) & 0.46 \\
Double Jeopardy (1999) & The Matrix (1999) & -0.33 & Kill Bill: Vol. 1 (2003) & Pulp Fiction (1994) & 0.46 \\
Entrapment (1999) & Mean Girls (2004) & -0.33 & Entrapment (1999) & Gone in 60 Seconds (2000) & 0.46 \\
Lord of the Rings: The Two Towers (2002) & The Recruit (2003) & -0.33 & Memento (2000) & Pulp Fiction (1994) & 0.46 \\
Napoleon Dynamite (2004) & Pearl Harbor (2001) & -0.33 & Gone in 60 Seconds (2000) & Tomb Raider (2001) & 0.46 \\
The Patriot (2000) & The Royal Tenenbaums (2001) & -0.33 & Adaptation (2002) & The Royal Tenenbaums (2001) & 0.46 \\
Lost in Translation (2003) & The Patriot (2000) & -0.33 & Bringing Down the House (2003) & Cheaper by the Dozen (2003) & 0.46 \\
Million Dollar Baby (2004) & The Day After Tomorrow (2004) & -0.32 & Maid in Manhattan (2002) & The Wedding Planner (2001) & 0.46 \\
Maid in Manhattan (2002) & Pulp Fiction (1994) & -0.32 & John Q (2001) & Men of Honor (2000) & 0.46 \\
Kill Bill: Vol. 2 (2004) & Maid in Manhattan (2002) & -0.32 & American Beauty (1999) & Napoleon Dynamite (2004) & 0.46 \\
Fight Club (1999) & The Wedding Planner (2001) & -0.32 & Harry Potter and the Chamber of Secrets (2002) & Harry Potter and the Sorcerer's Stone (2001) & 0.47 \\
American Beauty (1999) & Runaway Bride (1999) & -0.32 & Steel Magnolias (1989) & Stepmom (1998) & 0.47 \\
Bringing Down the House (2003) & Kill Bill: Vol. 1 (2003) & -0.32 & Adaptation (2002) & American Beauty (1999) & 0.47 \\
Stepmom (1998) & The Royal Tenenbaums (2001) & -0.32 & Good Will Hunting (1997) & Rain Man (1988) & 0.47 \\
Gone in 60 Seconds (2000) & Million Dollar Baby (2004) & -0.32 & Lost in Translation (2003) & Sideways (2004) & 0.47 \\
Ghostbusters (1984) & S.W.A.T. (2003) & -0.32 & Ghost (1990) & Grease (1978) & 0.47 \\
Hitch (2005) & Saving Private Ryan (1998) & -0.32 & Ferris Bueller's Day Off (1986) & Ghostbusters (1984) & 0.47 \\
Kill Bill: Vol. 1 (2003) & What Women Want (2000) & -0.32 & Braveheart (1995) & Independence Day (1996) & 0.47 \\
Napoleon Dynamite (2004) & Twister (1996) & -0.32 & Runaway Bride (1999) & Speed (1994) & 0.48 \\
Kill Bill: Vol. 1 (2003) & Miss Congeniality (2000) & -0.32 & American Beauty (1999) & Eternal Sunshine of the Spotless Mind (2004) & 0.48 \\
Lost in Translation (2003) & Maid in Manhattan (2002) & -0.32 & Sweet Home Alabama (2002) & Two Weeks Notice (2002) & 0.48 \\
National Treasure (2004) & The Godfather (1972) & -0.32 & Con Air (1997) & Gone in 60 Seconds (2000) & 0.49 \\
American Beauty (1999) & John Q (2001) & -0.32 & Adaptation (2002) & Eternal Sunshine of the Spotless Mind (2004) & 0.49 \\
Bringing Down the House (2003) & Raiders of the Lost Ark (1981) & -0.32 & Miss Congeniality (2000) & The Wedding Planner (2001) & 0.49 \\
Ray (2004) & Sweet Home Alabama (2002) & -0.32 & American Beauty (1999) & Pulp Fiction (1994) & 0.49 \\
Love Actually (2003) & Troy (2004) & -0.32 & Maid in Manhattan (2002) & Two Weeks Notice (2002) & 0.49 \\
Lost in Translation (2003) & Pretty Woman (1990) & -0.32 & Ocean's Eleven (2001) & Ocean's Twelve (2004) & 0.49 \\
Ghostbusters (1984) & Maid in Manhattan (2002) & -0.31 & Double Jeopardy (1999) & Entrapment (1999) & 0.50 \\
Indiana Jones and the Last Crusade (1989) & S.W.A.T. (2003) & -0.31 & Fight Club (1999) & Pulp Fiction (1994) & 0.50 \\
Monsters, Inc. (2001) & The Wedding Planner (2001) & -0.31 & Being John Malkovich (1999) & Eternal Sunshine of the Spotless Mind (2004) & 0.50 \\
S.W.A.T. (2003) & The Royal Tenenbaums (2001) & -0.31 & Lost in Translation (2003) & The Royal Tenenbaums (2001) & 0.50 \\
Double Jeopardy (1999) & Memento (2000) & -0.31 & Finding Nemo (Widescreen) (2003) & Monsters, Inc. (2001) & 0.50 \\
Mean Girls (2004) & The Rock (1996) & -0.31 & Being John Malkovich (1999) & Memento (2000) & 0.51 \\
Along Came a Spider (2001) & Memento (2000) & -0.31 & Legally Blonde (2001) & Steel Magnolias (1989) & 0.52 \\
Million Dollar Baby (2004) & Tomb Raider (2001) & -0.31 & Ghost (1990) & Pretty Woman (1990) & 0.52 \\
Pearl Harbor (2001) & The Godfather (1972) & -0.31 & Maid in Manhattan (2002) & What Women Want (2000) & 0.53 \\
Armageddon (1998) & Eternal Sunshine of the Spotless Mind (2004) & -0.31 & Maid in Manhattan (2002) & Sweet Home Alabama (2002) & 0.53 \\
Men of Honor (2000) & The Royal Tenenbaums (2001) & -0.31 & Pretty Woman (1990) & Stepmom (1998) & 0.54 \\
American Beauty (1999) & Swordfish (2001) & -0.31 & Eternal Sunshine of the Spotless Mind (2004) & Memento (2000) & 0.54 \\
Ghostbusters (1984) & Men of Honor (2000) & -0.31 & Adaptation (2002) & Being John Malkovich (1999) & 0.56 \\
Finding Neverland (2004) & Speed (1994) & -0.31 & Spider-Man (2002) & Spider-Man 2 (2004) & 0.56 \\
Big Daddy (1999) & Raiders of the Lost Ark (1981) & -0.31 & Adaptation (2002) & Lost in Translation (2003) & 0.57 \\
Air Force One (1997) & Mystic River (2003) & -0.31 & Kill Bill: Vol. 1 (2003) & Kill Bill: Vol. 2 (2004) & 0.57 \\
Man on Fire (2004) & Raiders of the Lost Ark (1981) & -0.31 & Fight Club (1999) & The Usual Suspects (1995) & 0.59 \\
Hitch (2005) & The Usual Suspects (1995) & -0.31 & Lord of the Rings: The Return of the King (2003) & Lord of the Rings: The Two Towers (2002) & 0.64 \\
Entrapment (1999) & The Royal Tenenbaums (2001) & -0.31 & Finding Nemo (Widescreen) (2003) & Shrek (Full-screen) (2001) & 0.65 \\
Indiana Jones and the Last Crusade (1989) & Sweet Home Alabama (2002) & -0.31 & American Beauty (1999) & Being John Malkovich (1999) & 0.66 \\
Lost in Translation (2003) & Runaway Bride (1999) & -0.31 & Lord of the Rings: The Fellowship of the Ring (2001) & Lord of the Rings: The Two Towers (2002) & 0.66 \\
Pulp Fiction (1994) & The League of Extraordinary Gentlemen (2003) & -0.31 & Indiana Jones and the Temple of Doom (1984) & Raiders of the Lost Ark (1981) & 0.67 \\
Radio (2003) & The Usual Suspects (1995) & -0.31 & American Beauty (1999) & Memento (2000) & 0.69 \\

\bottomrule
        \end{tabular}
        \end{adjustbox}

    \caption{Left to right, movies with the lowest and highest correlations in a probit learned from the MovieLens dataset.}
\end{table}

\begin{table}
\centering
\small
\begin{adjustbox}{width=1\textwidth}
\begin{tabular}{lrlrrrrrrrrrrrr}
\toprule
 \multicolumn{3}{c}{} & \multicolumn{3}{c}{logit} & \multicolumn{3}{c}{matrix completion}  & \multicolumn{3}{c}{probit (pairwise)} & \multicolumn{3}{c}{probit (best-of-three)} \\ 
 \multicolumn{3}{c}{} & \multicolumn{3}{c}{accuracy quantile} & \multicolumn{3}{c}{accuracy quantile} & \multicolumn{3}{c}{accuracy quantile} & \multicolumn{3}{c}{accuracy quantile} \\
\cmidrule(lr){4-6} \cmidrule(lr){7-9} \cmidrule(lr){10-12} \cmidrule(lr){13-15} 

dds. & var. & feat. & 
0.25 & 0.50 & 0.75 & 
0.25 & 0.50 & 0.75 & 
0.25 & 0.50 & 0.75 & 
0.25 & 0.50 & 0.75  \\ 
\midrule\addlinespace[2.5pt]
jokes & onehot & onehot & 0.61 & 0.61 & 0.61 & 0.61 & 0.61 & 0.61 & 0.59 & 0.59 & 0.59 & 0.61 & 0.61 & 0.61 \\
split-jokes & default & onehot & 0.55 & 0.55 & 0.56 & 0.54 & 0.54 & 0.54 & 0.53 & 0.53 & 0.56 & 0.54 & 0.54 & 0.55 \\
ml & 1k & onehot & 0.62 & 0.62 & 0.62 & 0.60 & 0.60 & 0.60 & 0.60 & 0.60 & 0.60 & 0.61 & 0.62 & 0.62 \\
split-ml & ml-1k & onehot & 0.63 & 0.63 & 0.63 & 0.59 & 0.59 & 0.59 & 0.61 & 0.61 & 0.61 & 0.62 & 0.62 & 0.63 \\
ml & 10k & onehot & 0.61 & 0.61 & 0.61 & 0.60 & 0.60 & 0.60 & 0.59 & 0.59 & 0.59 & 0.61 & 0.61 & 0.61 \\
split-ml & ml-10k & onehot & 0.62 & 0.62 & 0.63 & 0.59 & 0.59 & 0.59 & 0.60 & 0.60 & 0.60 & 0.62 & 0.62 & 0.62 \\
ml & 50k & onehot & 0.59 & 0.59 & 0.59 & 0.58 & 0.58 & 0.58 & 0.55 & 0.55 & 0.56 & 0.60 & 0.60 & 0.60 \\
split-ml & ml-50k & onehot & 0.61 & 0.61 & 0.61 & 0.56 & 0.56 & 0.56 & 0.57 & 0.57 & 0.57 & 0.61 & 0.61 & 0.61 \\
nf & 10k & onehot & 0.61 & 0.61 & 0.62 & 0.59 & 0.59 & 0.59 & 0.59 & 0.59 & 0.60 & 0.61 & 0.61 & 0.61 \\
split-nf & nf-10k & onehot & 0.64 & 0.64 & 0.64 & 0.60 & 0.60 & 0.60 & 0.62 & 0.62 & 0.62 & 0.63 & 0.64 & 0.64 \\
nf & 100k & onehot & 0.62 & 0.62 & 0.62 & 0.61 & 0.61 & 0.61 & 0.59 & 0.59 & 0.59 & 0.62 & 0.62 & 0.62 \\
split-nf & nf-100k & onehot & 0.64 & 0.64 & 0.65 & 0.61 & 0.62 & 0.62 & 0.62 & 0.62 & 0.62 & 0.64 & 0.64 & 0.64 \\
nf & 150k & onehot & 0.61 & 0.61 & 0.61 & 0.60 & 0.60 & 0.60 & 0.56 & 0.56 & 0.56 & 0.61 & 0.61 & 0.61 \\
split-nf & nf-150k & onehot & 0.64 & 0.64 & 0.64 & 0.60 & 0.61 & 0.61 & 0.62 & 0.62 & 0.62 & 0.63 & 0.63 & 0.63 \\
sushi & B & default & 0.66 & 0.66 & 0.67 & 0.67 & 0.67 & 0.68 & 0.63 & 0.64 & 0.65 & 0.65 & 0.65 & 0.67 \\
split-sushi & B & default & 0.63 & 0.63 & 0.64 & 0.54 & 0.54 & 0.54 & 0.56 & 0.57 & 0.58 & 0.60 & 0.61 & 0.62 \\
sushi & A & onehot & 0.65 & 0.65 & 0.65 & 0.67 & 0.67 & 0.67 & 0.64 & 0.65 & 0.65 & 0.67 & 0.67 & 0.67 \\
split-sushi & A & onehot & 0.64 & 0.64 & 0.65 & 0.51 & 0.53 & 0.54 & 0.58 & 0.59 & 0.59 & 0.61 & 0.62 & 0.63 \\
sushi & B & onehot & 0.67 & 0.67 & 0.67 & 0.67 & 0.67 & 0.67 & 0.65 & 0.66 & 0.66 & 0.67 & 0.67 & 0.67 \\
split-sushi & B & onehot & 0.62 & 0.62 & 0.63 & 0.53 & 0.54 & 0.54 & 0.57 & 0.58 & 0.58 & 0.58 & 0.60 & 0.60 \\
 
\bottomrule
\end{tabular}
\end{adjustbox}
\caption{Rerun of the experiments in Table~\ref{tab:res:real}. Rows starting with split have only 3 observation per customer at train and 6 at validation time. This small change tanks the performance of the matrix factorization as the model cannot accurately capture users.}
\end{table}

We split the users into a train and validation sets (90\% train, 10\%validation) for testing. For the Netflix and MovieLens datasets, we only take movies with more than $n$ ratings, e.g., $n=10^4$ for nf-10k. For datasets where we convert ratings to rankings, we assume that alternatives with the same rankings can be shuffled in the ranking. To sample the six alternatives, we first sample a user with probability proportional to the number of choices made, then we sample the choices uniformly, lastly, we sort the choices according to the ratings and query format. For the LLM experiments, we use ``the movie database'' (TMBD) to generate the input, the correctness of the TMBD ID, including mapping correctly to TV show or movie, was manually verified for both datasets. We omit alternatives that do not have clear entries in TMBD. Examples of such movies are double features like ``The Fly (1986) / The Fly II (1989) double feature'' were omitted.. We use a linear network for all experiments. For the probit experiments, we output the Cholesky decomposition of the covariance matrix. We use the Adam optimizer \citep{kingma2014adam} with a learning rate of 3e-4 for RUM and 0.01 otherwise. For the matrix factorization model we learn 128 embedding for users and alternatives with a per user and per alternative bias. We do 9000 gradient steps with a batch size of 1024 on RUMs and 1000 gradient steps with a batch size of 10240 on matrix factorization models. We train the matrix factorization models with the Huber loss.

For the probit models learned from 3-way comparisons, we project the problem in 2D using the $P$ matrix introduced in the main body of the text to both speed up the calculation, which is done via numerical integration over a coarse grid, and remove the symmetries present in the loss landscape. To calculate conditioned outcome probability, we use rejection sampling to measure the high-dimensional integrals instead of methods like \citet{gessner2020integrals}.

\end{document}